\documentclass[12pt]{article}
\usepackage[section]{placeins}
\usepackage{graphicx} % Required for inserting images
\usepackage{amsmath, amsthm, amsfonts,amssymb}
\usepackage{bm}
\usepackage{indentfirst}
\usepackage{amsmath}
\usepackage{xr}
\usepackage{mathrsfs}
\usepackage{subfiles}
\usepackage{verbatim}
\usepackage{enumitem}
\usepackage{float}
\usepackage{caption}
\usepackage{subcaption}
\usepackage{algorithm}
\usepackage{algpseudocode}
\usepackage{longtable}
\usepackage{multirow}
\usepackage{multicol}
\usepackage{xcolor}
\usepackage{latexsym}
\usepackage{amsthm}
\usepackage{amsthm,amsmath,amssymb}
\usepackage[left=1in,right=1in,top=1.1in,bottom=1.2in]{geometry}
\usepackage{mathrsfs}
\usepackage{natbib}
\setcitestyle{authoryear,round}
\usepackage[colorlinks, citecolor=blue]{hyperref}
\usepackage{enumitem}
\newtheorem{theorem}{Theorem}
\newtheorem{corollary}{Corollary}
\newtheorem{lemma}{Lemma}
\newtheorem{assumption}{Assumption}

\newcommand{\argmin}{\mathop{\rm arg\min}}
\newcommand{\argmax}{\mathop{\rm arg\max}}

\newtheorem{remark}{Remark}
\newcommand{\bE}{\mathbb{E}}
\newcommand{\cP}{\mathcal{P}}

\newcommand{\cR}{\mathcal{R}}

\newcommand{\AUC}{\mathrm{AUC}}
\newcommand{\AUPRC}{\mathrm{AUPRC}}
\newcommand{\aug}{\mathrm{aug}}
\newcommand{\raw}{\mathrm{raw}}
\newcommand{\syn}{\mathrm{syn}}
\newcommand{\Rec}{\mathrm{Rec}}
\newcommand{\Prec}{\mathrm{Prec}}
\newcommand{\BA}{\mathrm{BA}}
\newcommand{\F}{\mathrm{F}}
\newcommand{\FPR}{\mathrm{FPR}}

\newtheorem{example}{Example}

\title{When Does Synthetic Data Augmentation Improve Score-Based Imbalanced Classification?}

\author{Zhengchi Ma\thanks{Department of Electrical \& Computer Engineering, Duke University}, ~ Pengfei Lyu\thanks{Department of Biostatistics \& Bioinformatics, Duke University}, ~ and ~ Anru R. Zhang\thanks{Department of Biostatistics \& Bioinformatics and Department of Computer Science, Duke University}}
\date{}

\begin{document}
\maketitle
% main paper
\begin{abstract}
Synthetic data augmentation is widely used to mitigate class imbalance, but its
theoretical effects on score-based classification remain poorly understood. This
paper develops a framework for characterizing when synthetic minority
augmentation can improve threshold-integrated and threshold-optimized metrics,
including AUROC, AUPRC, best-threshold balanced accuracy, and best-threshold
\(\F_1\) score. We separate the effect of augmentation into two components: a
change in effective class weighting and a discrepancy between the synthetic and
true minority distributions. Under well-specified score models, the raw
estimator already targets the likelihood-ratio ordering, which is
population-optimal for the metrics considered. Consequently, augmentation cannot
provide a fundamental population-level improvement beyond possible finite-sample
variance reduction, and may introduce additional bias through synthetic
distributional error. We further establish minimax lower bounds showing that
the raw estimator already achieves the optimal metric-regret rate in the
well-specified regime. Under misspecification, however, augmentation can play a
qualitatively different role: by changing the effective class balance, it can
alter the restricted-class projection and correct ranking errors induced by the
raw imbalanced objective. We provide explicit improvement bounds quantifying the
roles of approximation error, finite-sample estimation error, and synthetic
distributional error. Simulation studies corroborate the theory, demonstrating
limited gains under well-specification and nontrivial but nonmonotone
improvements under misspecification.
\end{abstract}

\section{Introduction}\label{sec:intro}

\paragraph*{Imbalanced Classification and Synthetic Augmentation.}

A widespread challenge in modern statistics is imbalanced classification, in
which the target classes are observed at markedly different frequencies. In such
settings, standard empirical risk minimization with unweighted losses can be
inadequate because the majority class may dominate the objective, often at the
expense of minority-class performance. The consequences are especially severe
when the minority class represents rare but high-impact events, such as clinical
outcomes in medical data analysis
\citep{salmi2024handling,siddavatam2025hybrid}, financial risks
\citep{chen2024interpretable,breskuviene2024enhancing}, rare ecological
phenomena \citep{zbinden2024imbalance}, or active compounds in drug discovery
\citep{almeida2024overcoming}. These examples underscore the need for
statistically principled approaches to class imbalance, particularly methods
that move beyond treating data augmentation as a purely heuristic remedy
\citep{chen2024survey}.

Synthetic data augmentation has emerged as a practical strategy for mitigating
class imbalance. Rather than relying solely on the limited number of observed
minority-class examples, augmentation methods generate additional samples to
increase the representation of rare classes in the training data. Classical
approaches include bootstrap-type resampling and interpolation-based methods
such as SMOTE and its refinements
\citep{efron1994introduction,chawla2002smote,he2008adasyn,han2005borderline,bunkhumpornpat2009safe,bunkhumpornpat2012dbsmote}.
More recent approaches use conditional generation, Mixup, VAEs, normalizing
flows, GANs, diffusion or score-based models, and attention-based language
models to produce synthetic examples
\citep{zhang2017mixup,tian2025conditional,kingma2013auto,papamakarios2021normalizing,goodfellow2014generative,ho2020denoising,nakada2024synthetic}.
These methods range from simple resampling heuristics to flexible distributional
modeling, making synthetic augmentation an important component of modern
imbalanced learning.

Despite its widespread use, the effect of synthetic augmentation remains
unsettled. Some empirical studies report that oversampling or synthetic
augmentation can improve classification performance, while others find little or
no AUROC improvement, degraded calibration, or performance comparable
to simpler reweighting or threshold-adjustment strategies
\citep{blagus2013smote,van2022harm,piccininni2024understanding,yang2024impact}.
Recent theoretical and methodological work has begun to clarify when synthetic
samples can be useful, emphasizing generator mismatch, local class geometry,
synthetic sample size, bias correction, and inference validity
\citep{nakada2024synthetic,xia2026classification,ma2026synthetic,shen2023boosting,lyu2025bias,ahmad2025concentration,keret2025glm,raisa2025consistent,xu2026generative}.
Taken together, these findings suggest that synthetic augmentation is not merely
a computational device for enlarging training sets; it raises fundamental
questions about bias, uncertainty, distributional mismatch, and the metrics used
to evaluate predictive performance.

\paragraph*{Score-Based Classification.}

This paper studies synthetic augmentation through the lens of score-based classification, where a learned score orders observations by their evidence for the minority class and hard decisions are obtained by thresholding this score. The score-based view is especially useful in imbalanced classification because it is more informative than hard-label prediction. A score preserves the relative strength of evidence across samples and supports threshold selection, uncertainty assessment \citep{sadinle2019least}, ranking \citep{clemenccon2008ranking}, and trade-off analysis between false positives and false negatives \citep{cook2020consult}. Performance can therefore be studied both through ranking-based or threshold-integrated criteria such as AUROC and AUPRC, and through threshold-optimized decision criteria such as best-threshold balanced accuracy and best-threshold \(\F_1\) score. We focus on these four metrics because they evaluate the classification outcome without fixing a single operating threshold in advance.

Empirical evidence on whether synthetic augmentation improves these metrics is
mixed. Several studies in clinical prediction and high-dimensional
classification report that random resampling or SMOTE-type augmentation
provides little or no AUROC improvement
\citep{piccininni2024understanding,van2022harm,zheng2025using,nguyen2019predicting}.
Other applications find that synthetic augmentation can improve AUROC or AUPRC
in downstream prediction tasks
\citep{kim2024synthetic,kannan2025enhancement,li2023generating,chen2026boosting}.
These apparently contradictory findings raise two basic theoretical questions:
\begin{quote}
    {\it When can synthetic data improve threshold-integrated metrics such as
    AUROC and AUPRC, and when should no improvement be expected?}
\end{quote}
and
\begin{quote}
    {\it When is threshold tuning of the raw score sufficient, and when can
    synthetic augmentation improve the learned score beyond what threshold
    tuning alone can achieve?}
\end{quote}

These questions are distinct because threshold tuning and augmentation act at
different stages of the learning pipeline. Threshold tuning changes only the
final decision rule for a fixed score; it cannot change the ordering induced by
that score. Synthetic augmentation, by contrast, changes the training objective
and can therefore alter the fitted score itself. The central distinction developed in this paper is between well-specified settings, where augmentation cannot change the oracle score-ranking target, and misspecified settings, where changing the effective class balance can alter the restricted-class projection and correct ranking errors induced by the raw imbalanced objective.

\subsection{Our Contributions}

\begin{figure}[hbtp]
    \centering
    \includegraphics[width=\linewidth]{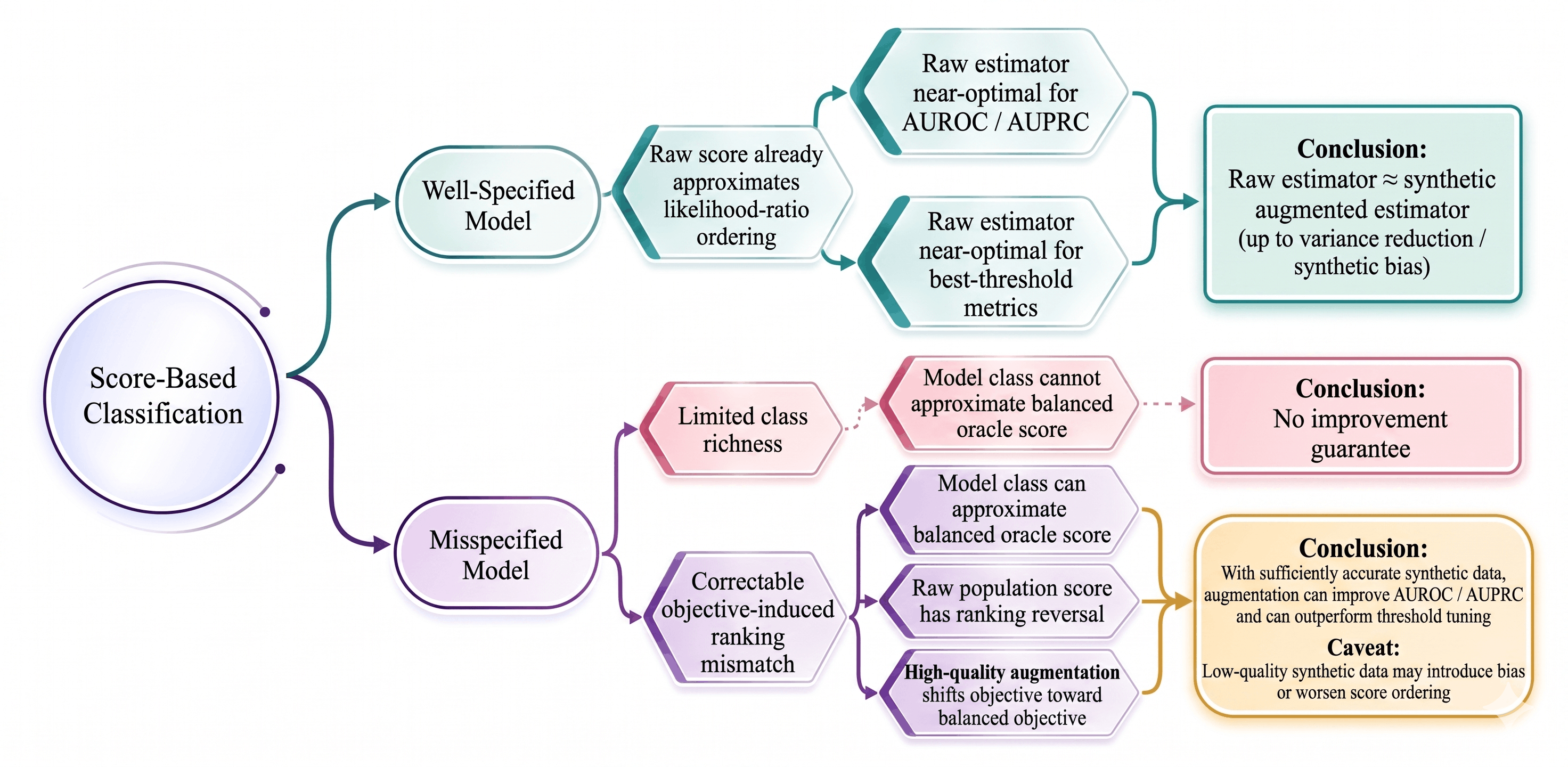}
    \caption{Flowchart of the main results.}
    \label{fig:flowchart}
\end{figure}

This paper develops a theoretical framework for understanding when synthetic
minority-class augmentation can and cannot improve score-based imbalanced
classification. A flowchart summarizing the main results is shown in
Figure~\ref{fig:flowchart}. Our main contributions are as follows.

First, we formulate synthetic data augmentation through population and empirical
risk objectives that separate two effects of augmentation: the change in
effective class weighting and the distributional discrepancy between the
synthetic minority distribution and the true minority distribution. This
formulation allows us to study augmentation for score-based criteria, including
AUROC, AUPRC, best-threshold balanced accuracy, and best-threshold \(\F_1\)
score.

Second, we show that under well-specified score models, synthetic augmentation
does not change the population-optimal ranking target. In this regime, both the
raw and ideal augmented objectives target the same likelihood-ratio ordering.
Augmentation may reduce finite-sample variance by increasing the effective
sample size, but it also introduces synthetic distributional error when
\(P_{\mathrm{syn}}\neq P_1\). We establish finite-sample metric-regret bounds
for raw and augmented estimators and prove minimax lower bounds showing that,
for AUROC and best-threshold balanced accuracy, the raw estimator already
achieves the optimal rate up to dimension-dependent factors.

Third, we identify a qualitatively different phenomenon under model
misspecification. When the model class is too restrictive, augmentation is not
guaranteed to help, because generating more data cannot remove intrinsic
approximation error. In contrast, when the model class can approximate a
better-balanced oracle score but the raw imbalanced objective selects a
misaligned ordering, high-quality synthetic augmentation can improve the learned
score itself. In this setting, augmentation can improve AUROC, AUPRC, and
best-threshold performance beyond what can be achieved by threshold tuning the
raw score alone.

Fourth, we provide explicit improvement guarantees under misspecification. These
bounds quantify how the gain from augmentation is reduced by three terms:
approximation error under the chosen effective class weight, finite-sample
estimation error, and synthetic distributional error. The results show that
synthetic augmentation yields a provable advantage when it corrects a ranking
mismatch induced by the raw imbalanced objective and when the synthetic
distribution is sufficiently accurate.

Finally, we complement the theoretical analysis with simulation studies
illustrating the regimes identified by the theory. The simulations show limited potential for fundamental gains in the well-specified regime, and nontrivial but nonmonotone
improvements under misspecification when high-quality synthetic samples correct
objective-induced ranking errors.

\paragraph*{Organization.}
The rest of this paper is organized as follows.
Section~\ref{sec:notation} introduces the notation and problem formulation for
synthetic minority augmentation and establishes the oracle optimality of the
likelihood-ratio benchmark. Section~\ref{sec:synthetic-augmentation} studies
synthetic augmentation under well-specified models, deriving metric-regret
bounds and minimax lower bounds. Section~\ref{sec:misspecified} turns to
misspecified models and characterizes when synthetic augmentation can, and
cannot, improve score ordering beyond threshold tuning. Section~\ref{sec:simulation}
presents simulation studies illustrating the theoretical findings. Finally,
Section~\ref{sec:discussion} concludes with implications and limitations.

\section{Notation and Problem Formulation}\label{sec:notation}

\paragraph*{Notation.}
We write $\mathbb P(\cdot)$ for the probability of an event and
$\mathbb E_P[\cdot]$ for expectation under a distribution $P$, omitting the
subscript when no ambiguity arises. For vectors, $\|\cdot\|$ denotes the
Euclidean norm. Gradients and Hessians with respect to $\theta$ are denoted by
$\nabla$ and $\nabla^2$, respectively. For symmetric matrices $A$ and $B$, we
write $A\succeq B$ if $A-B$ is positive semidefinite. The 1-Wasserstein distance
between probability measures $P$ and $Q$ is denoted by $W_1(P,Q)$ and is defined by
\[
W_1(P,Q)
=
\inf_{\gamma\in\Pi(P,Q)}
\int d(x,y)\,d\gamma(x,y),
\]
where $\Pi(P,Q)$ is the set of all couplings of $P$ and $Q$. Throughout,
$c$ and $C$ denote positive constants whose values may change from line to
line. We use $p$ for the feature dimension and $d$ for the parameter dimension.

\paragraph*{Score-Based Binary Classification Model.}

We consider a binary classification problem with features
$X\in\mathcal X\subseteq\mathbb R^p$ and label $Y\in\{0,1\}$, where $Y=1$
denotes the minority class and $Y=0$ denotes the majority class. The class
priors are
\[
\pi_0=\mathbb P(Y=0),
\qquad
\pi_1=\mathbb P(Y=1).
\]
Let $P_0$ and $P_1$ denote the class-conditional feature distributions:
\[
X\mid Y=i \sim P_i,
\qquad i=0,1.
\]

The observed training set consists of $n_0$ majority samples drawn from $P_0$
and $n_1$ minority samples drawn from $P_1$. To study synthetic minority
augmentation, we also consider $\tilde n$ synthetic minority samples drawn from
a synthetic distribution $P_{\mathrm{syn}}$.

A score is a real-valued function $s:\mathcal X\to\mathbb R$ assigning each
feature vector $x$ a numerical value, with larger values indicating stronger
evidence for the minority class. When the score is parameterized by
$\theta\in\Theta\subseteq\mathbb R^d$, we write it as $s_\theta$. Given training
data, the learned score is denoted by $\hat s$ in general, or by
$s_{\hat\theta}$ in the parameterized setting. A hard classifier can be obtained
by thresholding the learned score:
\[
\hat y_\tau(x)=\mathbf 1\{\hat s(x)\ge \tau\}.
\]

We do not assume that there is a unique population score that $\hat s$ estimates
pointwise. For the evaluation metrics considered in this paper, the relevant
population object is the ordering induced by the score. The posterior probability is one canonical scoring function, but it is only one representative of an
equivalence class: any strictly increasing transformation of the likelihood
ratio, when it exists, induces the same ranking and the same threshold family.
Thus, throughout the paper, a score refers to a real-valued function used to
rank observations; it need not be calibrated as a probability or constrained to
lie in a prespecified interval.

In many imbalanced classification problems, the primary object of interest is
the ranking induced by the score rather than prediction at a single threshold.
We therefore evaluate scores using population metrics that aggregate performance
over thresholds or optimize over thresholds.

\paragraph*{Population Evaluation Metrics.}

All performance metrics in this paper are evaluated with respect to the original
population distribution with class priors $(\pi_0,\pi_1)$, unless explicitly
subscripted by a data-generating parameter in minimax statements. Synthetic
samples are used only to modify the training objective and do not change the
population distribution used for evaluation. For a score function $s$, we write
$\AUC(s)$ for the population area under the ROC curve and $\AUPRC(s)$ for the
population area under the precision--recall curve.

For independent samples $X_1\sim P_1$ and $X_0\sim P_0$, the AUROC of $s$ is
\[
    \AUC(s)
    =
    \mathbb{E}_{\substack{X_1\sim P_1\\ X_0\sim P_0}}\left[
        \mathbf{1}\{s(X_1)>s(X_0)\}
        + \frac{1}{2}\mathbf{1}\{s(X_1)=s(X_0)\}
    \right].
\]
Thus, AUROC is the probability that a randomly selected minority-class sample
receives a higher score than a randomly selected majority-class sample, with
ties counted as one half.

For a threshold $\tau\in\mathbb R$, define recall and false-positive rate by
\[
\mathrm{Rec}_s(\tau)
=
\mathbb P_1(s(X)\ge \tau),
\qquad
\mathrm{FPR}_s(\tau)
=
\mathbb P_0(s(X)\ge \tau).
\]
The corresponding precision is
\[
    \mathrm{Prec}_s(\tau)
    =
    \frac{\pi_1 \mathrm{Rec}_s(\tau)}
    {\pi_1 \mathrm{Rec}_s(\tau)+\pi_0 \mathrm{FPR}_s(\tau)},
\]
whenever the denominator is positive. The AUPRC of $s$ is the area under the
precision--recall curve traced out as $\tau$ varies, with respect to the
original class priors $(\pi_0,\pi_1)$. When the relevant score distributions are
continuous, we may equivalently parameterize the precision--recall curve by
recall using a quantile threshold $t_r(s)$ satisfying
\[
\mathbb P_1(s(X)\ge t_r(s))=r.
\]
Then the precision at recall level $r$ and the AUPRC are
\[
\mathrm{Prec}_s^*(r)
=
\frac{\pi_1 r}
{\pi_1 r+\pi_0 \mathbb P_0(s(X)\ge t_r(s))},
\qquad
\AUPRC(s)
=
\int_0^1 \mathrm{Prec}_s^*(r)\,dr.
\]

We also consider two threshold-dependent metrics, optimized over the threshold.
Balanced accuracy and $\F_1$ score at threshold $\tau$ are
\[
    \BA(s,\tau)
    =
    \frac{1}{2}
    \left\{
    \mathbb P_1(s(X)\geq\tau)
    +
    \mathbb P_0(s(X)<\tau)
    \right\},
\]
and
\[
    \F_1(s,\tau)
    =
    \frac{2\pi_1 \mathrm{Rec}_s(\tau)}
    {\pi_1\{1+\mathrm{Rec}_s(\tau)\}+\pi_0 \mathrm{FPR}_s(\tau)}.
\]
For any measurable set $B$, the set version of the two scores can be defined as
\[
\BA(B)=\frac12\left(\mathbb P_1(B)+\mathbb P_0(B^c)\right),\quad \F_1(B)=\frac{2\pi_1\mathbb P_1(B)}{\pi_1(1+\mathbb P_1(B))+\pi_0 \mathbb P_0(B)}.
\]
We use their best-threshold versions for score evaluation:
\[
\BA^*(s):=\sup_{\tau\in\mathbb R}\BA(s,\tau),
\qquad
\F_1^*(s):=\sup_{\tau\in\mathbb R}\F_1(s,\tau).
\]

\paragraph*{Likelihood-Ratio Benchmark.}

The likelihood ratio provides a population-level benchmark for score-based
classification. When emphasizing the data-generating parameter, we write the
class-conditional distributions $P_0$ and $P_1$ as $P_{0,\eta^*}$ and
$P_{1,\eta^*}$. Suppose the data are generated from a distribution indexed by
$\eta^*$ and that $P_{0,\eta^*}$ and $P_{1,\eta^*}$ admit densities
$p_{0,\eta^*}$ and $p_{1,\eta^*}$. The ordinary likelihood ratio is
\[
L_{\eta^*}(x)
:=
\frac{p_{1,\eta^*}(x)}{p_{0,\eta^*}(x)},
\]
where we assume $P_{1,\eta^*}\ll P_{0,\eta^*}$. We also define the normalized
likelihood-ratio score
\[
\Lambda_{\eta^*}(x)
:=
\frac{p_{1,\eta^*}(x)}
{p_{0,\eta^*}(x)+p_{1,\eta^*}(x)}
=
\frac{L_{\eta^*}(x)}{1+L_{\eta^*}(x)}.
\]
Since $t\mapsto t/(1+t)$ is strictly increasing on $[0,\infty)$,
$L_{\eta^*}$ and $\Lambda_{\eta^*}$ induce the same ranking and the same
threshold family up to a monotone reparameterization of the threshold.
Therefore, for ranking-based metrics and best-threshold metrics, either score
may be used as the oracle benchmark.

The normalized likelihood-ratio score $\Lambda_{\eta^*}$ should be distinguished
from the posterior probability. By Bayes' rule,
\[
\eta_{\eta^*}(x)
=
\mathbb P(Y=1\mid X=x)
=
\frac{\pi_1 p_{1,\eta^*}(x)}
{\pi_0 p_{0,\eta^*}(x)+\pi_1 p_{1,\eta^*}(x)}
=
\frac{\pi_1 L_{\eta^*}(x)}
{\pi_0+\pi_1 L_{\eta^*}(x)}.
\]
Thus $\eta_{\eta^*}$, $\Lambda_{\eta^*}$, and $L_{\eta^*}$ all induce the same
ranking and the same threshold family up to monotone reparameterization. When no
confusion arises, we write
\[
L:=L_{\eta^*},
\qquad
\Lambda:=\Lambda_{\eta^*}.
\]

The Neyman--Pearson lemma states that, for testing between two simple
hypotheses, the most powerful test at a fixed type-I error level is obtained by
thresholding the likelihood ratio. In the present classification setting, this
means that the likelihood-ratio ordering is the population-optimal ordering of
samples by their relative evidence for the two classes. A learned score
function $\hat s$ can therefore be viewed as an attempt to approximate this
oracle ordering from the training sample.

We use the following lemma to formalize the optimality of the likelihood-ratio
benchmark for the four metrics considered in this paper.

\begin{lemma}[Likelihood-Ratio Score Achieves Optimal Population Metrics]
\label{thm:likelihood-ratio-optimality}
For $X_0\sim P_0$ and $X_1\sim P_1$, let $s:\mathcal X\to\mathbb R$ be any
measurable score such that $\Lambda_{\eta^*}(X_0)$, $\Lambda_{\eta^*}(X_1)$,
$s(X_0)$, and $s(X_1)$ have continuous distributions. Then
\begin{align*}
\AUC(s)
&\leq
\AUC(\Lambda_{\eta^*})
=
\sup_{s'\in\mathcal S_c} \AUC(s'),\\
\AUPRC(s)
&\leq
\AUPRC(\Lambda_{\eta^*})
=
\sup_{s'\in\mathcal S_c} \AUPRC(s'),\\
\BA^*(s)
&\leq
\BA^*(\Lambda_{\eta^*})
=
\sup_{B\subseteq \mathcal{X}\text{ measurable}}\BA(B),\\
\F_1^*(s)
&\leq
\F_1^*(\Lambda_{\eta^*})=\sup_{B\subseteq \mathcal X \text{ measurable}}\F_1(B),
\end{align*}
where
\[
\mathcal S_c
=
\{\text{measurable }s:\ s(X_0),s(X_1)\text{ have continuous distributions}\}.
\]
\end{lemma}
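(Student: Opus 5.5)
The unifying tool for all four inequalities is the Neyman--Pearson lemma in the following form. For any measurable set $B$ there is a likelihood-ratio superlevel set $B^\Lambda=\{\Lambda_{\eta^*}\ge t\}$ with $P_0(B^\Lambda)=P_0(B)$. For that set, $P_1(B^\Lambda)\ge P_1(B)$. The threshold $t$ exists because $\Lambda_{\eta^*}(X_0)$ is continuous, so $t\mapsto P_0(\Lambda_{\eta^*}\ge t)$ is continuous and sweeps $[0,1]$. The inequality itself follows from the pointwise bound
\[
\bigl(\mathbf 1_{B^\Lambda}-\mathbf 1_B\bigr)\bigl(p_1-c\,p_0\bigr)\ge 0,
\qquad c=t/(1-t),
\]
integrated over $\mathcal X$. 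Equivalently, the ROC curve of $\Lambda_{\eta^*}$, written $r^\Lambda(\alpha)=P_1(\Lambda_{\eta^*}\ge t_\alpha)$ with $P_0(\Lambda_{\eta^*}\ge t_\alpha)=\alpha$, dominates every $s$: $r^s(\alpha)\le r^\Lambda(\alpha)$ for all $\alpha\in[0,1]$. The continuity assumptions on $s(X_0)$ and $s(X_1)$ let us parametrize both curves this way without ties or randomization.

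\textbf{AUROC and AUPRC.} With continuous distributions, $\AUC(s)=\int_0^1 r^s(\alpha)\,d\alpha$, obtained by conditioning on $s(X_0)$, whose $P_0$-upper-tail probability is uniform. Pointwise dominance of the ROC curve then gives $\AUC(s)\le\AUC(\Lambda_{\eta^*})$. For AUPRC, fix a recall level $r$. The set $\{s\ge t_r(s)\}$ has $P_1$-mass $r$, and the NP lemma in its dual form shows that the likelihood-ratio set with the same $P_1$-mass has no larger $P_0$-mass. Hence $\mathbb P_0(\Lambda_{\eta^*}\ge t_r(\Lambda))\le \mathbb P_0(s\ge t_r(s))$. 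Because precision is decreasing in the false-positive mass, $\Prec^*_s(r)\le\Prec^*_\Lambda(r)$, and integrating over $r$ gives the AUPRC inequality. The equalities with $\sup_{s'\in\mathcal S_c}$ hold since $\Lambda_{\eta^*}\in\mathcal S_c$ by hypothesis.

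\textbf{Best-threshold BA and $\F_1$.} For any set $B$, both $\BA(B)$ and $\F_1(B)$ are nondecreasing in $P_1(B)$ and nonincreasing in $P_0(B)$. This is immediate for BA, and for $\F_1$ it follows from differentiating $2\pi_1 a/(\pi_1(1+a)+\pi_0 b)$. Replacing an arbitrary $B$ by $B^\Lambda$ therefore cannot decrease either metric, so
\[
\sup_B \BA(B)=\sup_t \BA(\{\Lambda_{\eta^*}\ge t\})=\BA^*(\Lambda_{\eta^*}),
\]
and similarly for $\F_1$. Finally, $\BA^*(s)\le\sup_B\BA(B)$ trivially, since each threshold set $\{s\ge\tau\}$ is a measurable $B$, and likewise for $\F_1$.

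The main obstacle is the boundary behaviour where $p_0=0$ or $\Lambda_{\eta^*}=1$. The absolute-continuity assumption $P_1\ll P_0$ rules out most degeneracies. I would also check that the NP construction needs no randomization, which holds precisely because $\Lambda_{\eta^*}(X_0)$ and $\Lambda_{\eta^*}(X_1)$ are continuous, so exact levels are attained by deterministic thresholds. The AUPRC step additionally needs $\Lambda_{\eta^*}(X_1)$ to be continuous, so that $t_r(\Lambda)$ exists for every $r$.
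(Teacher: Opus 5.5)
Your proof is correct. For AUROC and AUPRC it follows the paper's argument: Neyman--Pearson at each false-positive level (respectively, the dual form at each recall level), then pointwise dominance of the ROC (respectively, precision--recall) curve, then integration. The only difference there is cosmetic. You obtain $\AUC(s)=\int_0^1 r^s(\alpha)\,d\alpha$ by conditioning on $X_0$ and using uniformity of $P_0(s\ge s(X_0))$, whereas the paper conditions on $X_1$ via the quantile function $Q_0$ and Tonelli.

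For the two best-threshold metrics your route is genuinely different.
\begin{itemize}
\item For BA, the paper writes $\BA(S)=\tfrac12+\tfrac12\int_S(p_1-p_0)$. This exhibits the optimal set $B^*=\{L_{\eta^*}\ge 1\}$ and the exact excess $\tfrac12\int_{B^*\triangle S}|p_1-p_0|$, both of which are reused later in the BA minimax and misspecification proofs.
\item For $\F_1$, the paper matches recall. That forces a separate limiting argument at recall $1$, because the dual inequality requires dividing by $c_r>0$.
\end{itemize}
Your single device handles BA and $\F_1$ uniformly: both metrics are nondecreasing in $P_1(B)$ and nonincreasing in $P_0(B)$, and the likelihood-ratio set with the same $P_0$-mass has at least as much $P_1$-mass. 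Because the primal inequality $P_1(B^\Lambda)\ge P_1(B)$ never divides by $c$, this also sidesteps the recall-$1$ edge case. The price is that your argument does not identify the BA-optimal threshold or the excess formula.

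Two small points should be made explicit.
\begin{itemize}
\item $r^s(\alpha)$ and $r^\Lambda(\alpha)$ are well defined. Two nested threshold sets with equal $P_0$-mass differ by a $P_0$-null set, hence by a $P_1$-null set since $P_1\ll P_0$.
\item The dual step for AUPRC needs $c\in(0,\infty)$. This holds because $r\in(0,1)$ forces $t\in(0,1)$: $t\le 0$ would give $P_1$-mass $1$, and $P_1(\Lambda_{\eta^*}\ge 1)=0$ by absolute continuity.
\end{itemize}
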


Lemma~\ref{thm:likelihood-ratio-optimality} shows that the likelihood-ratio
ordering is the population-optimal benchmark for the evaluation criteria used
in this paper. For AUROC and AUPRC, optimality follows from the fact that
likelihood-ratio thresholds are optimal at each fixed false-positive or recall
level. For best-threshold balanced accuracy and $\F_1$ score, the optimal decision
regions are also induced by thresholding the likelihood ratio, although the
maximizing threshold depends on the metric. 
Thus, throughout the paper, we use $\Lambda_{\eta^*}$ as the oracle score and study metric regrets that quantify the gap between a learned score and this
oracle benchmark.

\paragraph*{Learning Objectives with Synthetic Minority Augmentation.}

We now define the population and empirical objectives used to learn the score
function. The purpose of this formulation is to separate three objects: the risk
induced by the original imbalanced sample, the risk induced by augmenting the
minority class with synthetic samples, and an auxiliary ideal augmented risk
that uses the post-augmentation class weight but keeps the minority distribution
equal to the true distribution $P_1$.

Let $\ell(s;x,y)$ denote the loss incurred by score $s$ on an example $(x,y)$.
Here $\ell(s;x,y)$ is shorthand for a loss that depends on $s$ through $s(x)$,
for example $\ell(s(x),y)$. If the score is parameterized by $\theta$, we write
\[
\ell(\theta;x,y):=\ell(s_\theta(x),y).
\]

The raw population risk corresponding to training on the original imbalanced
data is
\[
    \cR_{\raw}(s)
    :=
    \frac{n_0}{n_0+n_1}\mathbb E_{P_0}\ell(s;X,0)
    +
    \frac{n_1}{n_0+n_1}\mathbb E_{P_1}\ell(s;X,1).
\]
When the empirical class proportions reflect the population priors, these
weights converge to $(\pi_0,\pi_1)$.

When synthetic minority samples are added, the effective class proportions in
the training objective change. The augmented population risk is
\[
    \cR_{\aug}(s)
    :=
    \frac{n_0}{n_0+n_1+\tilde n}\mathbb E_{P_0}\ell(s;X,0)
    +
    \frac{n_1}{n_0+n_1+\tilde n}\mathbb E_{P_1}\ell(s;X,1)
    +
    \frac{\tilde n}{n_0+n_1+\tilde n}
    \mathbb E_{P_{\mathrm{syn}}}\ell(s;X,1).
\]
Let $s_{\raw}^*$ and $s_{\aug}^*$ denote minimizers of $\cR_{\raw}$ and
$\cR_{\aug}$, respectively. If the score model is parameterized, we denote the
corresponding population minimizers by $\theta_{\raw}^*$ and
$\theta_{\aug}^*$.

The corresponding empirical objectives are constructed from the observed
samples. Let $\{x_i^{(0)}\}_{i=1}^{n_0}$ be the majority samples,
$\{x_i^{(1)}\}_{i=1}^{n_1}$ the real minority samples, and
$\{\tilde x_i\}_{i=1}^{\tilde n}$ the synthetic minority samples. The raw
empirical risk is
\[
    \hat{\cR}_{\raw}(s)
    :=
    \frac{1}{n_0+n_1}
    \sum_{i=1}^{n_0}\ell(s;x_i^{(0)},0)
    +
    \frac{1}{n_0+n_1}
    \sum_{i=1}^{n_1}\ell(s;x_i^{(1)},1).
\]
The augmented empirical risk is
\[
    \hat{\cR}_{\aug}(s)
    :=
    \frac{1}{n_0+n_1+\tilde n}
    \left\{
    \sum_{i=1}^{n_0}\ell(s;x_i^{(0)},0)
    +
    \sum_{i=1}^{n_1}\ell(s;x_i^{(1)},1)
    +
    \sum_{i=1}^{\tilde n}\ell(s;\tilde x_i,1)
    \right\}.
\]
We denote the corresponding empirical risk minimizers by
$\hat s_{\raw}$ and $\hat s_{\aug}$. If the score model is parameterized, we
denote the corresponding empirical minimizers by
$\hat\theta_{\raw}$ and $\hat\theta_{\aug}$.

It is useful to introduce a unified notation for class-weighted risks. For any
effective minority weight $\alpha\in(0,1)$, define
\[
\cR_\alpha(s)
:=
(1-\alpha)\mathbb E_{P_0}\ell(s;X,0)
+
\alpha\mathbb E_{P_1}\ell(s;X,1).
\]
In particular,
\[
\alpha_{\raw}
:=
\frac{n_1}{n_0+n_1},
\qquad
\alpha_{\aug}
:=
\frac{n_1+\tilde n}{n_0+n_1+\tilde n}.
\]
Then $\cR_{\raw}=\cR_{\alpha_{\raw}}$, while $\cR_{\alpha_{\aug}}$ represents
the idealized objective that would arise if the additional minority samples were
drawn from the true minority distribution $P_1$. Let $s_{\alpha_{\aug}}^*$
denote a minimizer of $\cR_{\alpha_{\aug}}$. Comparing $\cR_{\alpha_{\aug}}$
with $\cR_{\raw}$ isolates the effect of changing class weights, while comparing
$\cR_{\aug}$ with $\cR_{\alpha_{\aug}}$ isolates the distributional discrepancy
caused by using $P_{\mathrm{syn}}$ instead of $P_1$.

Unless otherwise stated, we assume that all minimizers are well defined and that
the loss function and model class satisfy the regularity conditions needed for
empirical risk minimization and the population-level comparisons developed
below.

Because synthetic data generators are often trained on the original data, the
resulting synthetic samples may depend on real samples. For
theoretical simplicity, we assume independence between the real and synthetic
samples. In practice, this can be enforced by sample splitting: one subset of
the original data is used to train the generator, and a disjoint subset is used
to train the classifier \citep{tian2025conditional}. Throughout the paper, we
assume that such preprocessing has been applied when necessary.

For a probability-like score $u\in[0,1]$, we say that the loss $\ell(u,y)$ is strictly proper if, for every $\eta\in[0,1]$, the conditional risk
\[
\eta \ell(u,1)+(1-\eta)\ell(u,0)
\]
is uniquely minimized at $u=\eta$. Equivalently, among all scores interpreted as class probabilities, the Bayes action is the true conditional probability $\eta$.

\paragraph*{Class-Weighted Bayes Scores.}

Assume that $P_1\ll P_0$ so that $L=dP_1/dP_0$ is well defined. If the loss
$\ell$ is strictly proper, then the unconstrained minimizer of $\cR_\alpha$ over
measurable scores $s:\mathcal X\to[0,1]$ is
\[
\eta_\alpha(x)
=
\frac{\alpha L(x)}
{(1-\alpha)+\alpha L(x)}
=
g_\alpha(\Lambda(x)),
\]
where
\[
g_\alpha(t)
=
\frac{\alpha t}
{(1-\alpha)(1-t)+\alpha t}.
\]
The map $g_\alpha$ is strictly increasing on $[0,1]$, and $g_{1/2}(t)=t$. Thus,
changing the class weight changes the calibration of the Bayes score but not
its ranking. For a restricted score class $\mathcal S$, we write
\[
s_\alpha^*
\in
\argmin_{s\in\mathcal S}\cR_\alpha(s)
\]
for the corresponding restricted population minimizer.

\section{Synthetic Augmentation under Well-Specified Models}
\label{sec:synthetic-augmentation}

In this section, we analyze synthetic minority augmentation under well-specified
score models. We begin with the perfectly specified maximum likelihood setting,
where the training loss coincides with the negative log-likelihood of the
data-generating model. This case illustrates a basic principle: when the raw
population objective already recovers a score with the likelihood-ratio
ordering, synthetic augmentation does not change the population-optimal ranking
target. It can only affect finite-sample estimation, and may be harmful when
$P_{\mathrm{syn}}\neq P_1$.

We then abstract this principle into a general well-specification condition,
requiring only that the learned score be close to a strictly increasing
transformation of the likelihood ratio. Under this condition, we show that
score-level closeness controls both threshold-independent and best-threshold
metrics. We then specialize the general results to parametric empirical risk
minimization and derive finite-sample bounds for both raw and augmented
training. Finally, we prove minimax lower bounds showing that, in the
well-specified regime, raw-data estimation already achieves the optimal
metric-regret rate.

\subsection{Perfect Specification: The MLE Benchmark}
\label{sec:mle}

We first consider the ideal case in which the population score model is
perfectly specified. In this setting, the functional form of the population
score is correctly specified up to an unknown data-generating parameter
$\eta^*$. That is, the relevant oracle score belongs to a parametric score
family
\[
\{s_\eta:\eta\in\Theta\}.
\]
The training loss is chosen to match this model, so empirical risk minimization
with the raw data is equivalent to maximum likelihood estimation.

Examples include correctly specified generalized linear score models. For
instance, in logistic regression,
\[
s_\eta(x)=(1+\exp(-\eta^\top x))^{-1},
\]
and perfect specification means that
\[
\mathbb P(Y=1\mid X=x)=s_{\eta^*}(x)
\]
for some $\eta^*\in\Theta$. Similarly, in a probit model,
$s_\eta(x)=\Phi(\eta^\top x)$, where $\Phi$ is the standard normal distribution
function. In these examples, the negative log-likelihood is the matching loss,
and the population minimizer recovers $s_{\eta^*}$, which induces the oracle
likelihood-ratio ordering.

In the perfectly specified MLE setting, the loss $\ell(\eta;x,y)$ is the
negative log-likelihood associated with the score model. Thus the raw empirical
minimizer is the MLE, and the learned score is parametrized by the MLE estimator:
\[
\hat\eta_{\raw}
\in
\argmin_{\eta\in\Theta}
\frac{1}{n_0+n_1}
\sum_{i=1}^{n_0+n_1}
\ell(\eta;X_i,Y_i),\qquad \hat s_{\raw}(x)=s_{\hat\eta_{\raw}}(x).
\]
At the population level, correct specification and identifiability imply that
the raw risk is minimized at the true parameter:
\[
\eta^*
\in
\argmin_{\eta\in\Theta}
\mathcal R_{\raw}(\eta).
\]
Consequently, the raw population objective targets a score that is a strictly
increasing transformation of the oracle likelihood-ratio score
$\Lambda_{\eta^*}$. Therefore, in the perfectly specified MLE setting,
synthetic minority augmentation cannot improve the population-optimal ranking
target. It can only affect the finite-sample estimator, and may be harmful when
$P_{\mathrm{syn}}\neq P_1$.

To connect this observation with class reweighting and ideal minority
augmentation, consider the unified class-weighted risk
\[
\mathcal R_\alpha(\eta)
=
(1-\alpha)\mathbb E_{P_0}\ell_{\mathrm{MLE}}(\eta;X,0)
+
\alpha\mathbb E_{P_1}\ell_{\mathrm{MLE}}(\eta;X,1),
\qquad
\alpha\in(0,1).
\]
Let $\eta_\alpha^*$ be a minimizer of $\mathcal R_\alpha$. For correctly
specified likelihood models in which changing class weights only changes the
effective class prior, the weighted population minimizer changes the
calibration of the score but not the likelihood-ratio ordering. Hence there
exists a strictly increasing function $g_\alpha$ such that
\[
s_{\eta_\alpha^*}(x)=g_\alpha(\Lambda_{\eta^*}(x)).
\]
Thus raw training, class reweighting, and ideal minority augmentation preserve
the same population ranking target.

The MLE case is a canonical example of a broader well-specified regime. The
essential property is not maximum likelihood estimation itself, but
likelihood-ratio alignment: the learned score should be close to a strictly
increasing transformation of the likelihood ratio. In the next subsections, we
formulate this property directly and study its implications.

\subsection{General Well-Specification and Metric Convergence}

We use the following assumption to formalize the notion that a learned score is well-specified for the population metrics considered in this paper.

\begin{assumption}[Well-Specified Score]
\label{assumption:well-specification}
There exists a strictly increasing function $g$ such that
\[
\left\|
\hat s_n-g\circ\Lambda_{\eta^*}
\right\|_{L_\infty(P_0+P_1)}
=
O_P(r_n),
\]
where $r_n\to0$. Equivalently, with probability tending to one, for both $X\sim P_{0,\eta^*}$ and $P_{1,\eta^*}$,
\[
\left|
\hat s_n(X)-g(\Lambda_{\eta^*}(X))
\right|
\lesssim r_n.
\]
\end{assumption}

Assumption~\ref{assumption:well-specification} says that the learned score need
not estimate a unique population score pointwise. Instead, it only needs to
approximate some strictly increasing transformation of the likelihood-ratio
score. This is the relevant notion of well-specification for the
threshold-independent and best-threshold metrics studied here, because these
metrics depend on the ordering and threshold sets induced by the score rather
than on its numerical calibration.

\begin{theorem}[AUROC and AUPRC Regrets]
\label{thm:AUC-invariant}
Suppose Assumption~\ref{assumption:well-specification} holds. Let
$X_0\sim P_0$ and $X_1\sim P_1$ be independent. Assume that
$g(\Lambda_{\eta^*}(X_0))$, $g(\Lambda_{\eta^*}(X_1))$,
$\hat s_n(X_0)$, and $\hat s_n(X_1)$ have continuous distributions and that
there exist constants $C,t_0>0$ such that, for all $t\in[0,t_0]$,
\[
\mathbb P\left(
\left|
g(\Lambda_{\eta^*}(X_1))
-
g(\Lambda_{\eta^*}(X_0))
\right|
\le t
\right)
\le Ct.
\]
Then
\[
\AUC(\Lambda_{\eta^*})-\AUC(\hat s_n)=O_P(r_n),\quad \AUPRC(\Lambda_{\eta^*})-\AUPRC(\hat s_n)
=
O_P\left(
\sqrt{\frac{\pi_0 r_n}{\pi_1}}
\right).
\]
\end{theorem}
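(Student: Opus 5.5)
Write $s^\circ:=g\circ\Lambda_{\eta^*}$. Since $g$ is strictly increasing, $s^\circ$ induces the same ordering and threshold family as $\Lambda_{\eta^*}$, so $\AUC(s^\circ)=\AUC(\Lambda_{\eta^*})$ and $\AUPRC(s^\circ)=\AUPRC(\Lambda_{\eta^*})$. It therefore suffices to compare $\hat s_n$ with $s^\circ$. We work on the event $E_n$ of Assumption~\ref{assumption:well-specification}, where $|\hat s_n-s^\circ|\le Kr_n$ holds $P_0$- and $P_1$-a.s. For any $\epsilon>0$ we can choose $K$ so that $\mathbb P(E_n)\ge 1-\epsilon$ for large $n$. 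On $E_n$ we prove deterministic bounds and then translate them into $O_P$ statements. Throughout, $\hat s_n$ is treated as fixed, with the population expectation taken over fresh, independent $X_0,X_1$.

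\textbf{AUROC.} By continuity, ties have probability zero, so
\[
\AUC(s^\circ)-\AUC(\hat s_n)=\mathbb P\big(s^\circ(X_1)>s^\circ(X_0)\big)-\mathbb P\big(\hat s_n(X_1)>\hat s_n(X_0)\big).
\]
This difference is bounded in absolute value by the probability that the two orderings disagree. On $E_n$, disagreement requires $|s^\circ(X_1)-s^\circ(X_0)|\le 2Kr_n$, because otherwise the perturbation of size at most $Kr_n$ at each point cannot flip the sign. By the margin condition, once $2Kr_n\le t_0$ this probability is at most $2CKr_n$. This gives $\AUC(\Lambda_{\eta^*})-\AUC(\hat s_n)=O_P(r_n)$.

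\textbf{AUPRC.} This is the main obstacle. Precision is a nonlinear functional of the ROC curve, the margin condition controls only the pairwise difference and not the marginal densities, and the stated rate $\sqrt{\pi_0 r_n/\pi_1}$ indicates a Cauchy--Schwarz-type step rather than a direct Lipschitz bound.

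Fix a recall level $r$ and let $F^\circ(r)=\mathbb P_0(s^\circ\ge t_r(s^\circ))$ and $\hat F(r)=\mathbb P_0(\hat s_n\ge t_r(\hat s_n))$ denote the false-positive rates at recall $r$. Since $x\mapsto \pi_1 r/(\pi_1 r+\pi_0 x)$ has derivative bounded in absolute value by $\pi_0/(\pi_1 r)$, we get
\[
\Prec^*_{s^\circ}(r)-\Prec^*_{\hat s_n}(r)\le \min\Big\{1,\ \tfrac{\pi_0}{\pi_1 r}\,|\hat F(r)-F^\circ(r)|\Big\}.
\]
By Neyman--Pearson (Lemma~\ref{thm:likelihood-ratio-optimality}), $\hat F(r)\ge F^\circ(r)$ for every $r$. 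Moreover, the AUROC gap equals $\int_0^1(\hat F(r)-F^\circ(r))\,dr$; this follows from writing $\AUC=1-\int_0^1 \mathrm{FPR}\,d(\text{recall})$, which holds under continuity. Combining these, $D:=\int_0^1 (\hat F-F^\circ)\,dr=O(r_n)$.

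We then split the recall integral at a level $\delta$. The region $r<\delta$ contributes at most $\delta$. The region $r\ge\delta$ contributes at most $\frac{\pi_0}{\pi_1\delta}D$. Choosing $\delta=\sqrt{\pi_0 D/\pi_1}$ balances the two terms and gives a total of $2\sqrt{\pi_0 D/\pi_1}=O_P\big(\sqrt{\pi_0 r_n/\pi_1}\big)$.

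The delicate points are (i) the identity $\AUC=1-\int\mathrm{FPR}\,dr$ and the recall parametrization for both scores, which is where continuity of $\hat s_n(X_1)$ and $s^\circ(X_1)$ is used, and (ii) the sign fact $\hat F\ge F^\circ$, which is what allows the pointwise precision gaps to be integrated against the AUROC gap without absolute values.
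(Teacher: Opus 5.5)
Your proof is correct. The AUROC part matches the paper's sign-flip argument, but the AUPRC part takes a different route.

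Both proofs use the bound $\min\{1,\frac{\pi_0}{\pi_1 r}|\cdot|\}$ on the precision gap at recall $r$, and both split the recall integral at a small level. They differ in how $\int_0^1|\hat F(r)-F^\circ(r)|\,dr$ is controlled.

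\textbf{The paper's route.} It bounds this integral for an arbitrary pair of scores. It identifies the integral with $W_1$ between the laws of the survival transforms $U_s(X_0)$ and $U_{s'}(X_0)$. It then bounds $W_1$ through the natural coupling by the pairwise sign-disagreement probability, and invokes the margin condition a second time.

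\textbf{Your route.} You use Neyman--Pearson dominance, $\hat F\ge F^\circ$ pointwise, to drop the absolute value. You then use the identity $\int_0^1 \mathrm{FPR}_s(r)\,dr = 1-\AUC(s)$, which holds under continuity, to recognize the integral as exactly the AUROC regret. Note that the pointwise dominance appears in the proof of Lemma~\ref{thm:likelihood-ratio-optimality}, not in its statement.

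\textbf{What each buys.} Your argument is shorter and gives a clean transferable inequality: for every score with continuous class-conditional distributions,
\[
\AUPRC(\Lambda_{\eta^*})-\AUPRC(s)\le \min\Big\{1,\,2\sqrt{\tfrac{\pi_0}{\pi_1}\big(\AUC(\Lambda_{\eta^*})-\AUC(s)\big)}\Big\}.
\]
So any AUROC regret bound immediately yields the AUPRC rate. For example, Theorem~\ref{thm:AUPRC-converge-ERM} would follow from Theorem~\ref{thm:ERM-well-specified}. The paper's comparison is symmetric and does not require either score to be likelihood-ratio optimal. That generality is reused in Theorem~\ref{thm:misspecified-auc}, where the paper must bound $|\AUPRC(s_{\hat\theta_{\rm raw}})-\AUPRC(s^*_{\rm raw})|$ for a misspecified, non-oracle $s^*_{\rm raw}$. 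Your one-sided, oracle-referenced inequality gives nothing there.
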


Theorem~\ref{thm:AUC-invariant} shows that, under well-specification, both
$\AUC(\hat s_n)$ and $\AUPRC(\hat s_n)$ converge to the corresponding metrics
induced by the likelihood-ratio benchmark. The convergence rate for AUPRC is
slower than that for AUROC, reflecting the fact that AUPRC depends on the full
precision--recall curve and on the class priors. The factor $\pi_0/\pi_1$
highlights the greater sensitivity of AUPRC in severely imbalanced settings.

\begin{theorem}[Best-Threshold BA and $\F_1$ Regrets]
\label{thm:doesnotimproveBA}
Suppose Assumption~\ref{assumption:well-specification} holds. Assume that there
exist constants $C,t_0>0$ such that, for every threshold $\tau$, every
$r\in[0,t_0]$, and each $y\in\{0,1\}$,
\[
\mathbb P_y\left(
\left|
g(\Lambda_{\eta^*}(X))-\tau
\right|
\le r
\right)
\le Cr.
\]
Then
\[
\BA^*(\Lambda_{\eta^*})
-
\BA^*(\hat s_n)
=
O_P(r_n), \qquad \F_1^*(\Lambda_{\eta^*})
-
\F_1^*(\hat s_n)
=
O_P\left(\frac{r_n}{\pi_1}\right).
\]
\end{theorem}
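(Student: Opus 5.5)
Write $s^\circ:=g\circ\Lambda_{\eta^*}$. Since $g$ is strictly increasing, $s^\circ$ and $\Lambda_{\eta^*}$ induce the same family of threshold sets, so $\BA^*(\Lambda_{\eta^*})=\BA^*(s^\circ)$ and $\F_1^*(\Lambda_{\eta^*})=\F_1^*(s^\circ)$. By Lemma~\ref{thm:likelihood-ratio-optimality}, both quantities are at least the corresponding values for $\hat s_n$. Hence the regrets are nonnegative, and it suffices to bound them from above.

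We work on the event $E_n$ of Assumption~\ref{assumption:well-specification}. On this event, $|\hat s_n(X)-s^\circ(X)|\le Kr_n$ holds $P_0$- and $P_1$-a.s., and $\mathbb P(E_n)\to1$ for a suitable $K$. Fix any threshold $\tau$. The pointwise closeness gives the sandwich
\[
\{s^\circ\ge \tau+Kr_n\}\subseteq\{\hat s_n\ge\tau\}\subseteq\{s^\circ\ge\tau-Kr_n\}
\]
up to null sets. Once $Kr_n\le t_0$, the margin condition gives, for each $y\in\{0,1\}$,
\[
\bigl|\mathbb P_y(\hat s_n(X)\ge\tau)-\mathbb P_y(s^\circ(X)\ge\tau)\bigr|\le \mathbb P_y(|s^\circ(X)-\tau|\le Kr_n)\le CKr_n.
\]
This bound is uniform in $\tau$. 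So $\sup_\tau|\mathrm{Rec}_{\hat s_n}(\tau)-\mathrm{Rec}_{s^\circ}(\tau)|$ and $\sup_\tau|\FPR_{\hat s_n}(\tau)-\FPR_{s^\circ}(\tau)|$ are both $\le CKr_n$ on $E_n$. This uniform-in-threshold transfer is the main step. It works because the margin condition is assumed for every $\tau$, so no continuity of the maximizing threshold is needed.

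For balanced accuracy, $\BA(s,\tau)=\tfrac12\{\mathrm{Rec}_s(\tau)+1-\FPR_s(\tau)\}$. The uniform bound therefore gives $\sup_\tau|\BA(\hat s_n,\tau)-\BA(s^\circ,\tau)|\le CKr_n$. Since $|\sup f-\sup h|\le\sup|f-h|$, we get $\BA^*(s^\circ)-\BA^*(\hat s_n)\le CKr_n$, which is $O_P(r_n)$.

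For $\F_1$, write $\F_1(s,\tau)=\Phi(R,F)$ with $\Phi(R,F)=2\pi_1R/\{\pi_1(1+R)+\pi_0F\}$, where $R=\mathrm{Rec}_s(\tau)$ and $F=\FPR_s(\tau)$, and bound its partial derivatives on $[0,1]^2$. The denominator satisfies $D\ge\pi_1$. Then
\[
\partial_R\Phi=\frac{2\pi_1(\pi_1+\pi_0F)}{D^2}\le\frac{2(\pi_1+\pi_0)}{\pi_1}=\frac{2}{\pi_1},
\qquad
|\partial_F\Phi|=\frac{2\pi_1\pi_0R}{D^2}\le\frac{2\pi_0 R}{\pi_1(1+R)}\le\frac{\pi_0}{\pi_1}.
\]
The mean value theorem then yields $\sup_\tau|\F_1(\hat s_n,\tau)-\F_1(s^\circ,\tau)|\le (2+\pi_0)CKr_n/\pi_1$. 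Taking suprema as in the balanced accuracy case gives $O_P(r_n/\pi_1)$.

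The only delicate point is converting the bound on $E_n$ into an $O_P$ statement, and it is routine. Given $\varepsilon>0$, choose $K$ with $\mathbb P(E_n^c)<\varepsilon$ for large $n$. For large $n$ we also have $Kr_n\le t_0$, and the deterministic bounds above then hold on $E_n$.
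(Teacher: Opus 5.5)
Your proposal is correct and follows essentially the same argument as the paper: invariance of the best-threshold metrics under $g$, the uniform-in-$\tau$ transfer of recall and false-positive rates via the margin condition on the band $\{|g(\Lambda_{\eta^*})-\tau|\le \delta_n\}$, the inequality $|\sup f-\sup h|\le\sup|f-h|$, and a mean-value bound on the $\F_1$ map with partial derivatives of order $1/\pi_1$. Your sharper bound $|\partial_F\Phi|\le \pi_0/\pi_1$ and your explicit $\varepsilon$--$K$ treatment of the $O_P$ statement only refine the paper's constants and bookkeeping.
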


Theorem~\ref{thm:doesnotimproveBA} shows that best-threshold decision
performance also converges to that of the likelihood-ratio benchmark. Although
$\BA(s,\tau)$ and $\F_1(s,\tau)$ depend on a fixed threshold, optimizing over
$\tau$ removes dependence on a particular threshold choice and evaluates the
best decision rule induced by the score. The additional factor $1/\pi_1$ in the
$\F_1$ bound reflects the stronger sensitivity of precision-based metrics to
class imbalance.

\subsection{Specification in Parametric Empirical Risk Minimization Models}

We now specialize the preceding score-level theory to parametric empirical risk
minimization. In this framework, the score is induced by a parameter
$\theta\in\Theta$, and different training procedures correspond to minimizing
class-weighted population or empirical risks. The resulting bounds separate two
effects of synthetic augmentation: the potential variance reduction from
increasing the effective sample size, and the bias introduced when
$P_{\mathrm{syn}}$ differs from the true minority distribution $P_1$.

Recall from Section~\ref{sec:notation} that, for any effective minority weight
$\alpha\in(0,1)$, the unconstrained minimizer of $\cR_\alpha$ is
$\eta_\alpha=g_\alpha\circ\Lambda$. Let
\[
\alpha_{\rm raw}
=
\frac{n_1}{n_0+n_1}
<
\frac12,
\qquad
\alpha_{\rm aug}
=
\frac{n_1+\tilde n}{n_0+n_1+\tilde n}.
\]

\begin{assumption}[Population Well-Specification]
\label{assumption:population-ERM-well-specify}
For every $\alpha\in(0,1)$, there exists a strictly increasing function
$g_\alpha$ such that
\[
s_{\theta_\alpha^*}(x)
=
g_\alpha(\Lambda_{\eta^*}(x)),\quad \text{where }\theta_\alpha^*
\in
\argmin_{\theta\in\Theta}\cR_\alpha(\theta).
\]
\end{assumption}

Assumption~\ref{assumption:population-ERM-well-specify} states that, for each
class-weighted population objective, the corresponding population minimizer
induces a score that is a strictly increasing transformation of the
likelihood-ratio benchmark. Therefore, changing class weights may change the
scale or calibration of the score, but it does not change the oracle ranking
implied by $\Lambda_{\eta^*}$. When an empirical estimator is close to its
corresponding population minimizer, this population-level well-specification
transfers to Assumption~\ref{assumption:well-specification}, with $r_n$
determined by the parameter estimation error.

\begin{assumption}[Regularity Conditions for Parametric ERM]
\label{assumption:ERM-regularity}
The following conditions hold.

\begin{enumerate}[label=(\alph*),leftmargin=*]
    \item \textbf{No ties and anti-concentration.}
    The relevant score distributions have no ties. Moreover, there exist
    constants $C,t_0>0$ such that, for all $t\in[0,t_0]$,
    \[
    \mathbb P\left(
    |s_\theta(X_1)-s_\theta(X_0)|\le t
    \right)
    \le Ct
    \]
    for the relevant parameters $\theta$.

    \item \textbf{Score Lipschitzness.}
    There exists $L_s>0$ such that, for all $\theta,\theta'\in\Theta$,
    \[
    |s_\theta(x)-s_{\theta'}(x)|
    \le
    L_s\|\theta-\theta'\|.
    \]

    \item \textbf{Local curvature.}
    The relevant population risks are locally strongly convex at their
    minimizers: for some $\lambda>0$,
    \[
    \nabla^2\mathcal R_{\raw}(\theta_{\raw}^*)\succeq \lambda I,
    \qquad
    \nabla^2\mathcal R_{\aug}(\theta_{\aug}^*)\succeq \lambda I.
    \]

    \item \textbf{Gradient concentration.}
    The coordinatewise gradients of the loss at the relevant population
    minimizers are uniformly bounded by $B>0$.

    \item \textbf{Synthetic distribution error.}
    The synthetic minority distribution satisfies
    \[
    W_1(P_{\mathrm{syn}},P_1)
    \le
    \epsilon_{\mathrm{syn}}.
    \]
    Moreover, $x\mapsto\nabla\ell(\theta_{\aug}^*;x,1)$ is
    $L_g$-Lipschitz.

    \item \textbf{Ideal augmented-risk curvature.}
    The ideal augmented risk $\mathcal R_{\alpha_{\rm aug}}$ satisfies the
    quadratic growth condition
    \[
    \mathcal R_{\alpha_{\rm aug}}(\theta)
    -
    \mathcal R_{\alpha_{\rm aug}}(\theta_{\alpha_{\rm aug}}^*)
    \ge
    L_{\alpha}\|\theta-\theta_{\alpha_{\rm aug}}^*\|^2
    \]
    in a local neighborhood of $\theta_{\alpha_{\rm aug}}^*$ containing
    $\theta_{\rm aug}^*$. In addition, the following local strong-convexity
    inequality holds in the same neighborhood:
    \[
    \cR_{\alpha_{\rm aug}}(\theta_{\alpha_{\rm aug}}^*)
    \geq
    \cR_{\alpha_{\rm aug}}(\theta)
    +
    \nabla \cR_{\alpha_{\rm aug}}(\theta)^T
    (\theta_{\alpha_{\rm aug}}^*-\theta)
    +
    \frac{\mu}{2}
    \left\|\theta_{\alpha_{\rm aug}}^*-\theta\right\|^2.
    \]

    \item \textbf{Local Hessian regularity and convexity.}
    The parameter space $\Theta$ is convex, and
    $\theta_{\rm raw}^*,\theta_{\rm aug}^*\in\operatorname{int}(\Theta)$.
    For every $(x,y)$, the map
    $\theta\mapsto \ell(\theta;x,y)$ is convex and twice continuously
    differentiable. There exist constants $r,L_H,B_H>0$ such that
    \[
    \{\theta:\|\theta-\theta_{\rm raw}^*\|\le r\}
    \cup
    \{\theta:\|\theta-\theta_{\rm aug}^*\|\le r\}
    \subseteq \operatorname{int}\Theta,
    \qquad
    L_Hr\le \frac{\lambda}{4}.
    \]
    Moreover, for all $\theta$ satisfying
    $\|\theta-\theta_{\rm pop}^*\|\le r$, where
    $\theta_{\rm pop}^*\in\{\theta_{\rm raw}^*,\theta_{\rm aug}^*\}$,
    \[
    \left\|
    \nabla^2\ell(\theta;x,y)
    -
    \nabla^2\ell(\theta_{\rm pop}^*;x,y)
    \right\|_{\rm op}
    \le
    L_H\|\theta-\theta_{\rm pop}^*\|,
    \]
    for $P_j$-almost every $x$ when $y=j\in\{0,1\}$, and for
    $P_{\rm syn}$-almost every $x$ when $y=1$.

    In addition, for $i\in\{0,1\}$ and $X\sim P_i$,
    \[
    \left\|
    \nabla^2\ell(\theta_{\rm pop}^*;X,i)
    -
    E_{P_i}\!\left[\nabla^2\ell(\theta_{\rm pop}^*;X,i)\right]
    \right\|_{\rm op}
    \le B_H
    \]
    almost surely. Furthermore, for $X\sim P_{\rm syn}$,
    \[
    \left\|
    \nabla^2\ell(\theta_{\rm aug}^*;X,1)
    -
    E_{P_{\rm syn}}\!\left[\nabla^2\ell(\theta_{\rm aug}^*;X,1)\right]
    \right\|_{\rm op}
    \le B_H
    \]
    almost surely.
\end{enumerate}
\end{assumption}

These regularity assumptions translate parameter estimation error into
metric-regret bounds. The anti-concentration conditions ensure that small score
perturbations do not change too many rankings or threshold decisions. Score
Lipschitzness converts parameter error into score error. Curvature and bounded
gradients yield concentration of empirical risk minimizers around their
population counterparts. Finally, the Wasserstein bound and gradient
Lipschitzness quantify the bias introduced by replacing true minority samples
with synthetic samples.

\begin{theorem}[AUROC Regret in Well-Specified ERM]
\label{thm:ERM-well-specified}
Under Assumptions~\ref{assumption:population-ERM-well-specify}
and~\ref{assumption:ERM-regularity}, for sufficiently large sample sizes
$n_0,n_1$ and for a constant $c>0$, each of the following bounds holds with
probability at least $1-\delta$:
\[
\AUC(\Lambda_{\eta^*})-\AUC(s_{\hat\theta_{\raw}})
\le
\frac{CL_s c}{\lambda}
\sqrt{
\frac{B^2d\log(6d/\delta)}
{n_0+n_1}
},
\]
and
\[
\AUC(\Lambda_{\eta^*})-\AUC(s_{\hat\theta_{\aug}})
\le
\frac{CL_s c}{\lambda}
\sqrt{
\frac{B^2d\log(8d/\delta)}
{n_0+n_1+\tilde n}
}
+
CL_gL_s\sqrt{\frac{2}{\mu L_{\alpha}}}
\frac{\tilde n}{n_0+n_1+\tilde n}
\epsilon_{\mathrm{syn}}.
\]
\end{theorem}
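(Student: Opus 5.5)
The plan is to reduce everything to a parameter-error bound and then convert it into an AUROC bound. The conversion goes through Lipschitzness of the score in $\theta$ and the anti-concentration argument already used for Theorem~\ref{thm:AUC-invariant}. By Assumption~\ref{assumption:population-ERM-well-specify}, for both $\alpha=\alpha_{\raw}$ and $\alpha=\alpha_{\aug}$ the population minimizer satisfies $s_{\theta_\alpha^*}=g_\alpha\circ\Lambda_{\eta^*}$, so $\AUC(s_{\theta_\alpha^*})=\AUC(\Lambda_{\eta^*})$. We then decompose
\[
\AUC(\Lambda_{\eta^*})-\AUC(s_{\hat\theta})=\AUC(s_{\theta^*_\alpha})-\AUC(s_{\hat\theta}).
\]
If $\|s_{\hat\theta}-s_{\theta^*_\alpha}\|_\infty\le\rho$, a pair $(X_1,X_0)$ can be misordered by $s_{\hat\theta}$ but correctly ordered by $s_{\theta_\alpha^*}$ only if $|s_{\theta^*_\alpha}(X_1)-s_{\theta^*_\alpha}(X_0)|\le 2\rho$. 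Assumption~\ref{assumption:ERM-regularity}(a) bounds the probability of this event by $2C\rho$. Assumption~\ref{assumption:ERM-regularity}(b) gives $\rho\le L_s\|\hat\theta-\theta^*_\alpha\|$. Hence the AUROC regret is at most $CL_s\|\hat\theta-\theta_\alpha^*\|$, and it remains to bound parameter errors.

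\emph{Raw estimator.} Here $\theta^*_{\alpha_{\raw}}=\theta^*_{\raw}$, and I would use a standard localized M-estimation argument. The empirical gradient $\nabla\hat\cR_{\raw}(\theta^*_{\raw})$ is an average of $n_0+n_1$ independent mean-zero vectors with coordinates bounded by $B$ (Assumption~\ref{assumption:ERM-regularity}(d)). Hoeffding's inequality plus a union bound over $d$ coordinates gives
\[
\|\nabla\hat\cR_{\raw}(\theta^*_{\raw})\|\lesssim\sqrt{B^2d\log(Cd/\delta)/(n_0+n_1)}.
\]
For the empirical Hessian on the ball of radius $r$, matrix Bernstein or Hoeffding with the bound $B_H$ from (g) shows $\nabla^2\hat\cR_{\raw}(\theta^*_{\raw})\succeq \tfrac34\lambda I$ for large $n$. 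Combining this with the Lipschitz-Hessian bound $L_Hr\le\lambda/4$ gives $\nabla^2\hat\cR_{\raw}\succeq\lambda I/2$ on the ball. Convexity of $\theta\mapsto\ell$ then forces the minimizer into the ball, and strong convexity yields $\|\hat\theta_{\raw}-\theta^*_{\raw}\|\le (c/\lambda)\|\nabla\hat\cR_{\raw}(\theta^*_{\raw})\|$. Splitting $\delta$ between the gradient and Hessian events accounts for the $6d/\delta$ in the logarithm.

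\emph{Augmented estimator.} Here the target for the ranking is $\theta^*_{\alpha_{\aug}}$, not $\theta^*_{\aug}$, so I would use the triangle inequality
\[
\|\hat\theta_{\aug}-\theta^*_{\alpha_{\aug}}\|\le\|\hat\theta_{\aug}-\theta^*_{\aug}\|+\|\theta^*_{\aug}-\theta^*_{\alpha_{\aug}}\|.
\]
The first term is handled exactly as in the raw case, now with three independent sample groups totalling $n_0+n_1+\tilde n$. The Hessian bound for $P_{\rm syn}$ comes from (g), and the extra group explains $8d/\delta$.

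The main obstacle is the second, bias term. The two risks differ by
\[
\cR_{\aug}-\cR_{\alpha_{\aug}}=w\,\{\bE_{P_{\rm syn}}\ell(\cdot;X,1)-\bE_{P_1}\ell(\cdot;X,1)\},\qquad w=\frac{\tilde n}{n_0+n_1+\tilde n}.
\]
Since $\nabla\cR_{\aug}(\theta^*_{\aug})=0$, it follows that $\nabla\cR_{\alpha_{\aug}}(\theta^*_{\aug})=-w\{\bE_{P_{\rm syn}}-\bE_{P_1}\}\nabla\ell(\theta^*_{\aug};X,1)$. By Kantorovich--Rubinstein duality and (e), this has norm at most $wL_g\epsilon_{\rm syn}$, up to a dimension constant absorbed into $C$. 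Next, apply the strong-convexity inequality of (f) at $\theta=\theta^*_{\aug}$ and combine it with quadratic growth:
\[
L_\alpha\|\Delta\|^2\le\cR_{\alpha_{\aug}}(\theta^*_{\aug})-\cR_{\alpha_{\aug}}(\theta^*_{\alpha_{\aug}})\le\frac{1}{2\mu}\|\nabla\cR_{\alpha_{\aug}}(\theta^*_{\aug})\|^2,
\]
where $\Delta=\theta^*_{\aug}-\theta^*_{\alpha_{\aug}}$. The right-hand inequality follows from minimizing the strong-convexity lower bound over the displacement (a Polyak--Łojasiewicz step). This gives
\[
\|\Delta\|\le\sqrt{\tfrac{1}{2\mu L_\alpha}}\,wL_g\epsilon_{\rm syn},
\]
which matches the stated factor $\sqrt{2/(\mu L_\alpha)}$ up to constants. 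Multiplying by $CL_s$ from the AUROC conversion completes the second bound. The delicate point is checking that anti-concentration (a) is applied at $\theta^*_{\alpha_{\aug}}$, whose score is the oracle ordering, so that the comparison is with $\AUC(\Lambda_{\eta^*})$.
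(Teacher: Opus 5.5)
Your proposal is correct and follows essentially the same route as the paper. Both arguments use AUROC invariance under the population well-specification, then a Lipschitz-plus-anti-concentration conversion from parameter error to AUROC regret, then the localized M-estimation lemma (matrix Bernstein for the Hessian, Hoeffding with a union bound for the gradient, giving the $6d$ and $8d$ factors), and finally the PL, quadratic-growth and Kantorovich--Rubinstein bias bound on $\|\theta^*_{\aug}-\theta^*_{\alpha_{\aug}}\|$.

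The differences are cosmetic. You apply the triangle inequality to parameters rather than to AUROC values, so anti-concentration is only needed at $\theta^*_{\alpha_{\aug}}$. No dimension constant is actually needed in the duality step for a Euclidean-Lipschitz vector map. You should, as the paper does, replace $C$ by $\max\{C,t_0^{-1}\}$: the synthetic-bias part of your perturbation $\rho$ does not shrink with the sample size, so $2\rho>t_0$ is possible, and the enlarged constant keeps the bound valid trivially in that case.
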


\begin{theorem}[AUPRC Regret in Well-Specified ERM]
\label{thm:AUPRC-converge-ERM}
Under Assumptions~\ref{assumption:population-ERM-well-specify}
and~\ref{assumption:ERM-regularity}, for sufficiently large sample sizes, and
assuming that $s_\theta(X_1)$ with $X_1\sim P_1$ has a continuous CDF, for a constant
$c>0$, each of the following bounds holds with probability at least $1-\delta$:
\begin{align}\label{eq:rawLambdadiffAUPRC}
\AUPRC(\Lambda_{\eta^*})-\AUPRC(s_{\hat\theta_{\raw}})
\le
\sqrt{
\frac{CL_s c}{\lambda}
\frac{\pi_0}{\pi_1}
}
\left(
\frac{B^2d\log(6d/\delta)}
{n_0+n_1}
\right)^{1/4},
\end{align}
and
\begin{align}\label{eq:augLambdadiffAUPRC}
\nonumber \AUPRC(\Lambda_{\eta^*})-\AUPRC(s_{\hat\theta_{\aug}})
\le\;&
\sqrt{
\frac{CL_s c}{\lambda}
\frac{\pi_0}{\pi_1}
}
\left(
\frac{B^2d\log(8d/\delta)}
{n_0+n_1+\tilde n}
\right)^{1/4}
\\
&+
2
\sqrt{
2CL_s\frac{\pi_0}{\pi_1}
}
\left[
\frac{L_g^2}{2\mu L_{\alpha}}
\left(
\frac{\tilde n}{n_0+n_1+\tilde n}
\epsilon_{\mathrm{syn}}
\right)^2
\right]^{1/4}.
\end{align}
\end{theorem}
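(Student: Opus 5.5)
The argument has two steps. First I bound the parameter error of each estimator, which gives a uniform score error via Assumption~\ref{assumption:ERM-regularity}(b). Then I feed that score error into the AUPRC part of Theorem~\ref{thm:AUC-invariant}, which turns an $L_\infty$ score error $r$ into an AUPRC regret of order $\sqrt{(\pi_0/\pi_1)\,r}$. This square-root map is why the rates here are the square roots of those in Theorem~\ref{thm:ERM-well-specified}. The parameter-level bounds are the same ones that drive Theorem~\ref{thm:ERM-well-specified}, so I reuse them and only make the AUPRC transfer quantitative, tracking constants instead of using $O_P$.

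\emph{Step 1: the AUPRC transfer lemma.} I work with the recall parametrization $\AUPRC(s)=\int_0^1\mathrm{Prec}^*_s(r)\,dr$ and take $s^\circ=s_{\theta^*_\alpha}=g_\alpha\circ\Lambda_{\eta^*}$. By Assumption~\ref{assumption:population-ERM-well-specify}, $s^\circ$ has the oracle ordering, so $\AUPRC(s^\circ)=\AUPRC(\Lambda_{\eta^*})$.

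Suppose $\|\hat s-s^\circ\|_\infty\le\varepsilon$. Thresholding $\hat s$ at its recall-$r$ quantile is sandwiched between thresholding $s^\circ$ at recall levels $r\pm F\varepsilon$, by the continuity of the $P_1$-CDF and the anti-concentration in Assumption~\ref{assumption:ERM-regularity}(a). Hence
\[
\big|\mathbb P_0(\hat s\ge t_r(\hat s))-\mathbb P_0(s^\circ\ge t_r(s^\circ))\big|\lesssim \varepsilon .
\]
For precision of the form $\pi_1 r/(\pi_1 r+\pi_0 u)$, the derivative with respect to $u$ is bounded by $\pi_0/(\pi_1 r)$. This bound blows up at small recall, so I split the integral at a level $r_0$:
\begin{itemize}
\item on $[0,r_0]$, I bound the precision difference by $1$, contributing at most $r_0$;
\item on $[r_0,1]$, the FPR difference of order $\varepsilon$ contributes at most $(\pi_0/\pi_1)\,\varepsilon\, r_0^{-1}$ after a crude bound.
\end{itemize}
Optimizing over $r_0$ gives $r_0=\sqrt{(\pi_0/\pi_1)\,C\varepsilon}$, and the total is $\le 2\sqrt{C(\pi_0/\pi_1)\varepsilon}$. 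This balancing is the main obstacle: I must check that the log factor from integrating $1/r$ can be absorbed into constants, or else use the uniform $1/r_0$ bound as above.

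\emph{Step 2: raw estimator.} From the proof of Theorem~\ref{thm:ERM-well-specified} (strong convexity, a vector Bernstein inequality on the gradient, and the Hessian perturbation control in (g)), with probability $1-\delta$,
\[
\|\hat\theta_{\raw}-\theta^*_{\raw}\|\le \frac{c}{\lambda}\sqrt{\frac{B^2 d\log(6d/\delta)}{n_0+n_1}} .
\]
Then $\varepsilon=L_s\|\hat\theta_{\raw}-\theta^*_{\raw}\|$, and plugging this into Step 1 gives \eqref{eq:rawLambdadiffAUPRC}.

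\emph{Step 3: augmented estimator.} I compare against $s_{\theta^*_{\alpha_{\aug}}}$, which has the oracle ordering, and use the triangle inequality
\[
\|\hat\theta_{\aug}-\theta^*_{\alpha_{\aug}}\|\le\|\hat\theta_{\aug}-\theta^*_{\aug}\|+\|\theta^*_{\aug}-\theta^*_{\alpha_{\aug}}\|.
\]
The first term is handled as in Step 2, with $n_0+n_1+\tilde n$ samples and $\log(8d/\delta)$.

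For the bias term, note that
\[
\nabla\cR_{\alpha_{\aug}}(\theta^*_{\aug})=\nabla\cR_{\alpha_{\aug}}(\theta^*_{\aug})-\nabla\cR_{\aug}(\theta^*_{\aug})=\frac{\tilde n}{n_0+n_1+\tilde n}\Big(\mathbb E_{P_1}-\mathbb E_{P_{\syn}}\Big)\nabla\ell(\theta^*_{\aug};X,1).
\]
By Kantorovich--Rubinstein duality and Assumption~\ref{assumption:ERM-regularity}(e), its norm is at most $L_g\frac{\tilde n}{n_0+n_1+\tilde n}\epsilon_{\syn}$. Combining the strong-convexity inequality and quadratic growth from (f) gives
\[
L_\alpha\|\Delta\|^2\le \|\nabla\cR_{\alpha_{\aug}}\|\|\Delta\|-\tfrac{\mu}{2}\|\Delta\|^2,
\]
and hence $\|\Delta\|^2\le \frac{L_g^2}{2\mu L_\alpha}\big(\frac{\tilde n}{n_0+n_1+\tilde n}\epsilon_{\syn}\big)^2$. (This matches the squared form inside the bracket in the statement, possibly via Young's inequality.)

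Finally, I apply Step 1 to the sum of the two score errors. The square root is subadditive, $\sqrt{a+b}\le\sqrt a+\sqrt b$, so the bound splits into the stated two terms. The factor $2\sqrt{2CL_s\pi_0/\pi_1}\,[\cdot]^{1/4}$ comes from the factor $2$ in the Step 1 balancing together with $\|\Delta\|=[\|\Delta\|^2]^{1/2}$.
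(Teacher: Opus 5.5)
Your architecture matches the paper's. The split of the recall integral at $r_0$ (bound by $1$ below, by $(\pi_0/\pi_1)r_0^{-1}$ times the FPR discrepancy above, then balance to get $2\sqrt{\cdot}$) is exactly \eqref{eq:9}. The parameter errors come from Lemma~\ref{lemma:empiricalestimation}, and your bias term is Lemma~\ref{lemma:sytheticerror} (first-order optimality of $\theta^*_{\aug}$, Kantorovich--Rubinstein, strong convexity plus Young, quadratic growth). Adding the two score errors before the transfer, rather than applying the triangle inequality to AUPRC values as the paper does, is immaterial.

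The gap is in Step~1. The sandwich gives $|t_r(\hat s)-t_r(s^\circ)|\le\varepsilon$ and hence $\Delta(r):=|\mathbb P_0(\hat s\ge t_r(\hat s))-\mathbb P_0(s^\circ\ge t_r(s^\circ))|\le \mathbb P_0(|s^\circ(X)-t_r(s^\circ)|\le 2\varepsilon)$. Concluding $\Delta(r)\lesssim\varepsilon$ for every $r$, and likewise your recall band $r\pm F\varepsilon$, needs per-class, per-threshold anti-concentration of $s^\circ(X)$. That is condition \eqref{eq:thm5anti-concentration}, which the paper imposes only in Theorems~\ref{thm:parametric-ERM-best-BA} and~\ref{thm:parametric-ERM-F1}. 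Assumption~\ref{assumption:ERM-regularity}(a) is pairwise: it bounds the $P_0$-mass near a threshold only on average over thresholds drawn from $s^\circ(X_1)$. It therefore allows $s^\circ(X_0)$ to have a sharp density spike at a level where the likelihood ratio is small. At the corresponding recall, $\Delta(r)$ can be much larger than $C\varepsilon$, so a uniform-in-$r$ bound of order $\varepsilon$ is not available under the stated hypotheses.

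You never actually need the pointwise bound. On $[r_0,1]$ you already replace $1/r$ by $1/r_0$, so only $\int_0^1\Delta(r)\,dr$ enters. By continuity of the $P_1$-CDF, $r\mapsto t_r(s^\circ)$ pushes $\mathrm{Unif}(0,1)$ forward to the law of $s^\circ(X_1)$. Integrating the sandwich bound then gives $\int_0^1\Delta(r)\,dr\le\mathbb P(|s^\circ(X_1)-s^\circ(X_0)|\le 2\varepsilon)\le 2C\varepsilon$ for independent $X_0,X_1$, which is exactly Assumption~\ref{assumption:ERM-regularity}(a).

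The paper reaches the same quantity by a different route. It identifies $\int_0^1\Delta$ with the $W_1$ distance between the laws of the survival transforms $U_s(X_0)$ and $U_{s'}(X_0)$. It then couples these through $X_0$ and bounds the result by the pairwise sign-disagreement probability, which (a) controls.

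With Step~1 rewritten in this integrated form, the rest of your argument goes through and gives both displayed bounds.
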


Theorems~\ref{thm:ERM-well-specified} and~\ref{thm:AUPRC-converge-ERM} show
that, in the well-specified ERM setting, both raw and augmented estimators
approach the likelihood-ratio benchmark for AUROC and AUPRC. For the raw
estimator, the error is driven by the usual finite-sample estimation term based
on the original sample size $n_0+n_1$. For the augmented estimator, the variance
term can improve because the effective sample size becomes
$n_0+n_1+\tilde n$, but an additional bias term appears due to the discrepancy
between $P_{\mathrm{syn}}$ and $P_1$, measured by $\epsilon_{\mathrm{syn}}$.
Thus, in the well-specified setting, synthetic data augmentation does not yield
a fundamental improvement over raw training unless the reduction in empirical
error offsets the synthetic distributional error.

Similarly, we obtain best-threshold bounds for balanced accuracy and $\F_1$ score.

\begin{theorem}[Best-Threshold BA Regret in Well-Specified ERM]
\label{thm:parametric-ERM-best-BA}
Under Assumptions~\ref{assumption:population-ERM-well-specify}
and~\ref{assumption:ERM-regularity} and for sufficiently large sample sizes,
assume further that there exist constants $C,t_0>0$ such that, for every
relevant parameter $\theta$, threshold $\tau$, $r\in[0,t_0]$, and
$y\in\{0,1\}$,
\begin{equation}\label{eq:thm5anti-concentration}
\mathbb P_y(|s_\theta(X)-\tau|\le r)\le Cr.
\end{equation}
Then, for a constant $c>0$, each of the following bounds holds with probability
at least $1-\delta$:
\begin{align}\label{eq:raw-BA-empirical}
\BA^*(\Lambda_{\eta^*})
-
\BA^*(s_{\hat\theta_{\raw}})
\le
\frac{CL_s c}{\lambda}
\sqrt{
\frac{B^2d\log(6d/\delta)}
{n_0+n_1}
},
\end{align}
and
\begin{align}\label{eq:aug-BA-empirical}
\BA^*(\Lambda_{\eta^*})
-
\BA^*(s_{\hat\theta_{\aug}})
\le
\frac{CL_s c}{\lambda}
\sqrt{
\frac{B^2d\log(8d/\delta)}
{n_0+n_1+\tilde n}
}
+
\frac{CL_sL_g}{\sqrt{2\mu L_\alpha}}
\frac{\tilde n}{n_0+n_1+\tilde n}
\epsilon_{\mathrm{syn}}.
\end{align}
\end{theorem}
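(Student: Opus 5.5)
The proof will reduce to the score-level result, Theorem~\ref{thm:doesnotimproveBA}, once a uniform score error $r_n$ is obtained from a parameter estimation bound. The parameter bounds are the same as those behind Theorem~\ref{thm:ERM-well-specified}, so most of that argument can be reused. The main difference is that BA needs a threshold-wise, single-sample anti-concentration condition, namely \eqref{eq:thm5anti-concentration}, rather than the pairwise one.

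\textbf{Step 1 (score-to-BA transfer).} Let $s,s'$ satisfy $\sup_x|s(x)-s'(x)|\le r$. For any threshold $\tau$, the events $\{s\ge\tau\}$ and $\{s'\ge\tau\}$ can differ only on $\{|s'-\tau|\le r\}$. Hence $|\BA(s,\tau)-\BA(s',\tau)|\le \frac12\{\mathbb P_1(|s'-\tau|\le r)+\mathbb P_0(|s'-\tau|\le r)\}\le Cr$ by \eqref{eq:thm5anti-concentration}. Taking suprema over $\tau$ gives $|\BA^*(s)-\BA^*(s')|\le Cr$. Apply this with $s'=s_{\theta^*_{\mathrm{pop}}}$ and $s=s_{\hat\theta}$, where Score Lipschitzness gives $r=L_s\|\hat\theta-\theta^*_{\mathrm{pop}}\|$. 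By Assumption~\ref{assumption:population-ERM-well-specify}, $s_{\theta^*_\alpha}=g_\alpha\circ\Lambda_{\eta^*}$ with $g_\alpha$ strictly increasing. Because $\BA^*$ depends only on the threshold family, $\BA^*(s_{\theta^*_\alpha})=\BA^*(\Lambda_{\eta^*})$ for every $\alpha$, with no continuity issues since $g_\alpha$ is strictly monotone.

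\textbf{Step 2 (raw bound).} Take $\theta^*_{\mathrm{pop}}=\theta^*_{\raw}=\theta^*_{\alpha_{\raw}}$. The standard M-estimation argument then applies. Bounded coordinatewise gradients and vector Hoeffding/Bernstein with a union bound over $d$ coordinates give $\|\nabla\hat\cR_{\raw}(\theta^*_{\raw})\|\le c\sqrt{B^2d\log(6d/\delta)/(n_0+n_1)}$. Matrix concentration of the Hessian using $B_H$ keeps the empirical Hessian at least $\lambda/2$ on the ball of radius $r$, using $L_Hr\le\lambda/4$. Convexity localizes $\hat\theta_{\raw}$ to that ball, and strong convexity yields $\|\hat\theta_{\raw}-\theta^*_{\raw}\|\le (c/\lambda)\sqrt{\cdots}$. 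Combining with Step 1 gives \eqref{eq:raw-BA-empirical}. The $6d$ inside the logarithm comes from splitting $\delta$ across the gradient and Hessian events.

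\textbf{Step 3 (augmented bound).} Decompose
\[
\|\hat\theta_{\aug}-\theta^*_{\alpha_{\aug}}\|\le\|\hat\theta_{\aug}-\theta^*_{\aug}\|+\|\theta^*_{\aug}-\theta^*_{\alpha_{\aug}}\|.
\]
The first term is handled exactly as in Step 2 with $n_0+n_1+\tilde n$ samples; the three sample sources account for the $8d$. For the bias term, $\cR_{\aug}-\cR_{\alpha_{\aug}}=\frac{\tilde n}{N}(\mathbb E_{P_{\syn}}-\mathbb E_{P_1})\ell(\cdot;X,1)$ with $N=n_0+n_1+\tilde n$. Using $\nabla\cR_{\aug}(\theta^*_{\aug})=0$ and Kantorovich duality with the $L_g$-Lipschitz gradient gives $\|\nabla\cR_{\alpha_{\aug}}(\theta^*_{\aug})\|\le L_g\frac{\tilde n}{N}\epsilon_{\syn}$.

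Plug this into the local strong-convexity inequality of Assumption~\ref{assumption:ERM-regularity}(f) at $\theta=\theta^*_{\aug}$ and apply Cauchy--Schwarz/Young. This yields $\cR_{\alpha_{\aug}}(\theta^*_{\aug})-\cR_{\alpha_{\aug}}(\theta^*_{\alpha_{\aug}})\le \|\nabla\cR_{\alpha_{\aug}}(\theta^*_{\aug})\|^2/(2\mu)$. Quadratic growth then gives $\|\theta^*_{\aug}-\theta^*_{\alpha_{\aug}}\|\le L_g\frac{\tilde n}{N}\epsilon_{\syn}/\sqrt{2\mu L_\alpha}$, which matches the constant in \eqref{eq:aug-BA-empirical}.

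Finally, apply Step 1 with $s'=s_{\theta^*_{\alpha_{\aug}}}$, whose $\BA^*$ equals $\BA^*(\Lambda_{\eta^*})$. Here the anti-concentration is applied at the ideal parameter, which is a ``relevant parameter''.

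The main obstacle is the bias step. One must make sure the curvature assumptions are invoked in the stated neighborhood, which requires $\theta^*_{\aug}$ to lie in it, as assumed. One must also check that the gradient-Lipschitz condition is stated at $\theta^*_{\aug}$, which is exactly where it is needed. A secondary point is that $\BA^*$ invariance under $g_\alpha$ needs no anti-concentration, only strict monotonicity.
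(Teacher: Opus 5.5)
Your proposal is correct and follows essentially the same route as the paper: a threshold-wise transfer $|\BA^*(s_\theta)-\BA^*(s_{\theta'})|\le CL_s\|\theta-\theta'\|$ from single-sample anti-concentration, invariance of $\BA^*$ under the strictly increasing $g_\alpha$, the paper's parameter-estimation lemma, and its synthetic-error lemma (gradient-norm bound from local strong convexity, then quadratic growth). The paper applies the transfer twice through $\theta^*_{\aug}$ instead of using your parameter-space triangle inequality, which is immaterial. One detail to add: the anti-concentration inequality holds only for $r\le t_0$, and the synthetic-bias part of $r$ need not shrink with the sample size, so you should (as the paper does) replace $C$ by $\max\{C,t_0^{-1}\}$; then the bound holds trivially whenever $r>t_0$, because $\BA$ differences are at most $1$.
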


\begin{theorem}[$\F_1$ Regret in Well-Specified ERM]
\label{thm:parametric-ERM-F1}
Under the conditions of Theorem~\ref{thm:parametric-ERM-best-BA} and for
sufficiently large sample sizes, for a constant $c>0$, each of the following
bounds holds with probability at least $1-\delta$:
\begin{align}\label{eq:F-1-to-oracle-raw}
\F_1^*(\Lambda_{\eta^*})
-
\F_1^*(s_{\hat\theta_{\raw}})
\le
\frac{CL_s c}{\pi_1\lambda}
\sqrt{
\frac{B^2d\log(6d/\delta)}
{n_0+n_1}
},
\end{align}
and
\begin{align}\label{eq:F-1-to-oracle-aug}
\F_1^*(\Lambda_{\eta^*})
-
\F_1^*(s_{\hat\theta_{\aug}})
\le
\frac{CL_s c}{\pi_1\lambda}
\sqrt{
\frac{B^2d\log(8d/\delta)}
{n_0+n_1+\tilde n}
}
+
\frac{4CL_sL_g}{\pi_1\sqrt{2\mu L_\alpha}}
\frac{\tilde n}{n_0+n_1+\tilde n}
\epsilon_{\rm syn}.
\end{align}
\end{theorem}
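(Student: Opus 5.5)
The plan mirrors the proof of Theorem~\ref{thm:parametric-ERM-best-BA}. We first reduce to a parameter error bound, then convert it to a score error via Lipschitzness, and finally convert score error into $\F_1^*$ regret using the anti-concentration condition \eqref{eq:thm5anti-concentration}. The only new ingredient is a perturbation bound for the $\F_1$ functional, which produces the factor $1/\pi_1$.

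\textbf{Step 1: parameter error.} For the raw estimator, Assumption~\ref{assumption:population-ERM-well-specify} gives $s_{\theta_{\raw}^*}=g_{\alpha_{\raw}}\circ\Lambda_{\eta^*}$. The same local strong convexity, bounded-gradient concentration and Hessian regularity arguments used for Theorems~\ref{thm:ERM-well-specified} and~\ref{thm:parametric-ERM-best-BA} give, with probability $1-\delta$,
\[
\|\hat\theta_{\raw}-\theta_{\raw}^*\|\le \frac{c}{\lambda}\sqrt{\frac{B^2d\log(6d/\delta)}{n_0+n_1}}.
\]
For the augmented estimator we use the triangle inequality:
\[
\|\hat\theta_{\aug}-\theta^*_{\alpha_{\aug}}\|\le \|\hat\theta_{\aug}-\theta^*_{\aug}\|+\|\theta^*_{\aug}-\theta^*_{\alpha_{\aug}}\|.
\]
The first term is the concentration bound with $n_0+n_1+\tilde n$ and $\log(8d/\delta)$. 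For the second term, $\nabla\cR_{\alpha_{\aug}}(\theta^*_{\aug})$ differs from $\nabla\cR_{\aug}(\theta^*_{\aug})=0$ only through the synthetic term. By Kantorovich duality and the $L_g$-Lipschitz gradient, that difference has norm at most $\frac{\tilde n}{n_0+n_1+\tilde n}L_g\epsilon_{\rm syn}$. Combining the quadratic growth and strong-convexity inequalities in Assumption~\ref{assumption:ERM-regularity}(f) then gives
\[
\|\theta^*_{\aug}-\theta^*_{\alpha_{\aug}}\|\le \frac{L_g}{\sqrt{2\mu L_\alpha}}\frac{\tilde n}{n_0+n_1+\tilde n}\epsilon_{\rm syn}.
\]
This is the same bias bound already obtained in the BA theorem. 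Score Lipschitzness then yields $\|s_{\hat\theta}-g\circ\Lambda_{\eta^*}\|_\infty\le L_s\|\hat\theta-\theta^*\|=:r$ in both cases.

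\textbf{Step 2: $\F_1^*$ perturbation.} Because $g$ is strictly increasing, $\F_1^*(g\circ\Lambda_{\eta^*})=\F_1^*(\Lambda_{\eta^*})$. Let $\tau^*$ be an (almost) optimal threshold for $s^*=g\circ\Lambda_{\eta^*}$. Since $|\hat s-s^*|\le r$, the sets $\{\hat s\ge\tau^*+r\}$ and $\{s^*\ge\tau^*\}$ differ only on $\{|s^*-\tau^*|\le 2r\}$. By anti-concentration, $|\Rec_{\hat s}(\tau^*+r)-\Rec_{s^*}(\tau^*)|\le 2Cr$, and likewise for $\FPR$. Write $\F_1=2\pi_1 R/D$ with $D=\pi_1(1+R)+\pi_0F\ge\pi_1$. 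Its partial derivatives satisfy
\[
|\partial_R\F_1|\le \frac{2\pi_1}{D}+\frac{2\pi_1^2R}{D^2}\le 4,\qquad |\partial_F\F_1|=\frac{2\pi_1\pi_0R}{D^2}\le\frac{2\pi_0}{\pi_1}.
\]
Hence $\F_1^*(s^*)-\F_1^*(\hat s)\le \F_1(s^*,\tau^*)-\F_1(\hat s,\tau^*+r)\le (C'/\pi_1)\,r$. This accounts for the $1/\pi_1$ factor and the constant $4$ in \eqref{eq:F-1-to-oracle-aug}.

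\textbf{Main obstacle.} The delicate step is making the $\F_1$ Lipschitz bound uniform. The denominator bound $D\ge\pi_1$ saves us, but the anti-concentration must be applied to $s^*=s_{\theta^*}$ rather than to $\hat s$, so it has to hold at the relevant population parameter; this is covered by ``every relevant parameter'' in \eqref{eq:thm5anti-concentration}. The supremum may not be attained, so we work with an $\varepsilon$-optimal $\tau^*$ and let $\varepsilon\to0$. Substituting the two values of $r$ from Step~1 gives \eqref{eq:F-1-to-oracle-raw} and \eqref{eq:F-1-to-oracle-aug}, after absorbing numerical constants into $C$.
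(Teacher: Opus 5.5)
Your proposal is correct and follows essentially the paper's route: you get the parameter error from the concentration and synthetic-bias lemmas, turn it into a score error via Lipschitzness, then use anti-concentration at the population score $g\circ\Lambda_{\eta^*}$ together with a $1/\pi_1$-Lipschitz bound for $\F_1$ in $(\Rec,\FPR)$. The differences are cosmetic. You compare at a shifted, $\varepsilon$-optimal threshold, whereas the paper compares at a common threshold and uses $|\sup_\tau f-\sup_\tau g|\le\sup_\tau|f-g|$. Also, when $2r>t_0$ (possible because the $\epsilon_{\rm syn}$ bias term need not vanish), you need the paper's device of replacing $C$ by $\max\{C,t_0^{-1}\}$.
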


Theorems~\ref{thm:parametric-ERM-best-BA} and~\ref{thm:parametric-ERM-F1} show
that the same variance--bias tradeoff persists for best-threshold decision
metrics. Threshold tuning extracts the best decision-level performance available
from the learned score, but it cannot remove the synthetic distributional error
introduced by inaccurate augmentation. The additional factor $1/\pi_1$ in the
$\F_1$ bound again reflects its sensitivity to class imbalance.

The synthetic distributional error $\epsilon_{\mathrm{syn}}$ should be
interpreted as the statistical price of learning the synthetic generator from
the available minority data. Since the generator is typically trained using
only the $n_1$ minority-class observations, the discrepancy
$W_1(P_{\mathrm{syn}},P_1)$ is limited by the accuracy with which the minority
distribution can be estimated. In nonparametric settings, this error may suffer
from the curse of dimensionality, whereas stronger structural assumptions may
yield parametric or near-parametric rates. Therefore, in the well-specified
regime, adding synthetic data does not necessarily improve metric-regret
performance: it may reduce finite-sample variance, but it also introduces a bias
term that can dominate when the synthetic distribution is inaccurate.

The correctly specified MLE benchmark discussed in Section~\ref{sec:mle} is a
special case of the preceding ERM theory, obtained by taking the loss to be the
negative log-likelihood.

\begin{corollary}[Correctly Specified MLE]
\label{cor:mle-well-specified-augmentation}
Suppose Assumption~\ref{assumption:ERM-regularity} and
Equation~(\ref{eq:thm5anti-concentration}) hold with $\ell$ equal to the negative
log-likelihood of a correctly specified and identifiable parametric likelihood
model with a continuous CDF for $s_\theta(X_1)$, where $X_1\sim P_1$. Suppose the
parametric model class is sufficiently rich so that the data-generating score
for any class priors $(\tilde\pi_0,\tilde\pi_1)$ is attainable; that is,
\[
x\mapsto\frac{\tilde\pi_1 p_{1,\eta^*}(x)}
{\tilde\pi_1 p_{1,\eta^*}(x)+\tilde\pi_0 p_{0,\eta^*}(x)}
\]
belongs to the model class. Then, for every $\alpha\in(0,1)$, the corresponding
population MLE score $s_{\eta_\alpha^*}$ is a strictly increasing transformation
of the likelihood-ratio score $\Lambda_{\eta^*}$. Furthermore, the raw MLE and
the synthetic augmented MLE satisfy the metric-regret bounds in
Theorems~\ref{thm:ERM-well-specified},
\ref{thm:AUPRC-converge-ERM},
\ref{thm:parametric-ERM-best-BA}, and
\ref{thm:parametric-ERM-F1}.
\end{corollary}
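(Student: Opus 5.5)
The corollary follows once we verify Assumption~\ref{assumption:population-ERM-well-specify} for the negative log-likelihood loss. The remaining hypotheses of Theorems~\ref{thm:ERM-well-specified}, \ref{thm:AUPRC-converge-ERM}, \ref{thm:parametric-ERM-best-BA} and \ref{thm:parametric-ERM-F1} are assumed directly: Assumption~\ref{assumption:ERM-regularity}, the anti-concentration condition (\ref{eq:thm5anti-concentration}), and the continuity of the CDF of $s_\theta(X_1)$. So the plan is to establish the population statement first and then quote the four theorems.

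\emph{Step 1: the weighted risk is a proper-scoring risk.} Fix $\alpha\in(0,1)$. For the Bernoulli negative log-likelihood $\ell_{\mathrm{MLE}}(u,y)=-y\log u-(1-y)\log(1-u)$, which is strictly proper, I rewrite $\cR_\alpha$ as an expectation under the reweighted mixture $Q_\alpha=(1-\alpha)P_0+\alpha P_1$:
\[
\cR_\alpha(\eta)=\mathbb E_{Q_\alpha}\bigl[\eta_\alpha(X)\,\ell(s_\eta(X),1)+\{1-\eta_\alpha(X)\}\,\ell(s_\eta(X),0)\bigr].
\]
Here $\eta_\alpha=\alpha p_1/\{(1-\alpha)p_0+\alpha p_1\}=g_\alpha\circ\Lambda_{\eta^*}$ is exactly the posterior under class priors $(\tilde\pi_0,\tilde\pi_1)=(1-\alpha,\alpha)$.

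\emph{Step 2: identify the minimizer.} Strict propriety gives, pointwise in $x$, that the integrand is at least its value at $u=\eta_\alpha(x)$. Therefore $\cR_\alpha(\eta)\ge \cR_\alpha$ evaluated at the score $\eta_\alpha$. The richness hypothesis says $\eta_\alpha$ belongs to the model class, so this lower bound is attained at some parameter, which I call $\eta_\alpha^*$. Strictness then shows that any minimizer satisfies $s_{\eta_\alpha^*}=\eta_\alpha$ $Q_\alpha$-a.s., and hence $(P_0+P_1)$-a.s. Identifiability makes the minimizer unique.

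\emph{Step 3: conclude.} Since $g_\alpha$ is strictly increasing on $[0,1]$, we get $s_{\eta_\alpha^*}=g_\alpha\circ\Lambda_{\eta^*}$, which is Assumption~\ref{assumption:population-ERM-well-specify}. In particular this covers $\alpha_{\raw}$ and $\alpha_{\aug}$, and it shows that $\theta^*_{\alpha_{\aug}}$ induces the oracle ordering, which the augmented bounds use. The four theorems then apply verbatim with $\theta=\eta$.

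\emph{Main obstacle.} The only delicate point is the a.s.\ qualification. The pointwise argument gives $s_{\eta_\alpha^*}=g_\alpha\circ\Lambda$ only almost everywhere, and one must check that this suffices. It does, because every metric depends on $s$ only through the laws of $s(X_0)$ and $s(X_1)$, so null sets do not matter.

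A second point needs care: the loss must be integrable, i.e.\ $\cR_\alpha$ must be finite at $\eta_\alpha$, so that the pointwise inequality integrates. I would cover this by the regularity assumptions, noting that $\cR_\alpha(\eta_\alpha)\le\log 2$ bounds the conditional entropy term.
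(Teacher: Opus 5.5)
Your proposal is correct and follows essentially the same route as the paper: you verify Assumption~\ref{assumption:population-ERM-well-specify} by minimizing the weighted log-loss pointwise in $x$, use the richness hypothesis to attain $g_\alpha\circ\Lambda_{\eta^*}$ within the model class, and then quote the four ERM theorems. The paper solves the first-order condition for $w_0p_0\log(1-s)+w_1p_1\log s$, while you invoke strict propriety after rewriting $\cR_\alpha$ under the mixture $Q_\alpha$, which is the same computation; your extra care about the almost-sure qualification and the finiteness $\cR_\alpha(\eta_\alpha)\le\log 2$ is correct and fills in details the paper leaves implicit.
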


In particular, the corollary shows that the raw MLE achieves the
likelihood-ratio benchmark at the usual parametric rate, whereas the augmented
MLE has the same variance--bias tradeoff as in the ERM bounds. 

\begin{remark}[Asymptotic versus Limited-Data Regimes]\label{remark-asymp-limitdata}
The upper bounds of
Theorems~\ref{thm:ERM-well-specified}--\ref{thm:parametric-ERM-F1} should be
interpreted differently in the asymptotic and non-asymptotic regimes.
Asymptotically, with $n=n_0+n_1\to\infty$ at fixed model complexity, the raw
estimator attains the parametric rate $O_P(n^{-1/2})$, whereas the augmented
estimator attains only $O_P(n^{-1/2}+\epsilon_{\syn})$. In nonparametric
settings, $\epsilon_{\syn}\asymp O_P(n_1^{-1/p})$, where $p$ is the feature
dimension, which is normally slower than $n^{-1/2}$. Therefore, augmentation cannot
improve the asymptotic rate unless $\epsilon_{\syn}=o_P(n^{-1/2})$, and may
even be harmful. This point is most relevant for low-complexity models, where
the raw estimator reaches its large-sample behavior at moderate $n$.

The limited-data regime common in modern applications is different. With highly
flexible models such as deep networks, the effective complexity can be large
relative to $n$, so the non-asymptotic bounds are the informative ones. Writing
\[
\operatorname{error}_{\raw}
\approx
C\sqrt{\frac{\operatorname{Complexity}(\mathcal S)}{n}},
\qquad
\operatorname{error}_{\aug}
\approx
C\sqrt{\frac{\operatorname{Complexity}(\mathcal S)}{n+\tilde n}}
+
C\epsilon_{\syn},
\]
where $\operatorname{Complexity}(\mathcal S)$ is any suitable capacity measure
such as parameter dimension, VC dimension, or Rademacher complexity, the raw
estimation term is large when complexity is large relative to $n$, and adding
$\tilde n$ synthetic samples can reduce it substantially. Since the resulting
bias $\epsilon_{\syn}$ depends on the feature-space complexity of the generator,
not the full classifier complexity, $\operatorname{error}_{\aug}\ll
\operatorname{error}_{\raw}$ is possible. Thus, even under well-specification,
augmentation can improve finite-sample performance. This is achieved not by changing the
population target, but by reducing estimation error enough to offset the
synthetic bias.
\end{remark}

\subsection{Minimax Lower Bounds for Metric Regret}

We complement the preceding upper bounds with minimax lower bounds for metric
regret. The goal is to show that the convergence rates above are not merely
artifacts of the analysis, but are unavoidable in a broad class of
well-specified models. We consider a local parametric family around a fixed
parameter value and construct nearby distributions whose likelihood-ratio
scores induce different rankings or decision regions. As a result, no estimator
can uniformly achieve a faster rate than the minimax lower bound. Here \(d\) denotes the parameter dimension, so local alternatives are constructed along directions \(u\in\mathbb S^{d-1}\) in parameter space.

\begin{assumption}[Regularity and Separation Conditions for AUROC]
\label{assumption:minimaxAUC}
The following conditions hold.

\begin{enumerate}[label=(\alph*),leftmargin=*]
    \item There exist $\eta_0\in\Theta$ and $r_0>0$ such that
    $B_2(\eta_0,r_0)\subseteq\Theta$.

    \item For each $y\in\{0,1\}$, $P_{y,\eta}$ has density $p_{y,\eta}$ with
    respect to a common dominating measure $\nu$, and $p_{y,\eta}$ is
    continuously differentiable in $\eta$ on $B_2(\eta_0,r_0)$ for every $x$.
    Also, $P_{1,\eta}\ll P_{0,\eta}$ for all
    $\eta\in B_2(\eta_0,r_0)$. Define
    \(
    S_y(x;\eta)=\nabla_\eta\log p_{y,\eta}(x)
    \).
    Assume there exist constants $I_y<\infty$ and $B_{\rm lr}<\infty$ such that
    \[
    \sup_{\eta\in B_2(\eta_0,r_0)}
    \mathbb E_{y,\eta}\|S_y(X;\eta)\|_2^2
    \le I_y,
    \]
    and for all $\eta,\eta'\in B_2(\eta_0,r_0)$, all $t\in[0,1]$, and
    $\eta_t=\eta'+t(\eta-\eta')$,
    \(
    p_{y,\eta_t}(x)/p_{y,\eta'}(x)
    \le B_{\rm lr}
    \)
    for almost every $x$. Let $I_{\max}:=\max\{I_0,I_1\}$.

    \item Define the pairwise likelihood-ratio contrast
    \[
    \Delta_\eta(x,z)
    :=
    \frac{\Lambda_\eta(x)}{1-\Lambda_\eta(x)}
    -
    \frac{\Lambda_\eta(z)}{1-\Lambda_\eta(z)}.
    \]
    Let $Q_\eta:=P_{0,\eta}\otimes P_{0,\eta}$ and $Q_0:=Q_{\eta_0}$. Assume
    that there exists $c_Q>0$ such that,
    for all $\eta\in B_2(\eta_0,r_0)$,
    $Q_\eta\ge c_Q Q_0$ as measures.

    For $h>0$ and $u,u'\in\mathbb S^{d-1}$, define
    \[
    \eta_{h,u}:=\eta_0+hu,
    \qquad
    \eta_{h,u'}:=\eta_0+hu',
    \]
    and define the rank-disagreement set
    \[
    \mathcal R_h(u,u')
    :=
    \left\{
    (x,z):
    \Delta_{\eta_{h,u}}(x,z)
    \Delta_{\eta_{h,u'}}(x,z)<0
    \right\}.
    \]
    Assume there exist constants $\alpha\in(0,1)$, $c_{\rm rk}>0$, and
    $h_0>0$ such that, for every $h\in(0,h_0]$ and every
    $u,u'\in\mathbb S^{d-1}$ satisfying $\|u-u'\|_2\ge\alpha$,
    \[
    \int_{\mathcal R_h(u,u')}
    \min\left\{
    |\Delta_{\eta_{h,u}}(x,z)|,
    |\Delta_{\eta_{h,u'}}(x,z)|
    \right\}
    \,dQ_0(x,z)
    \ge
    c_{\rm rk}h.
    \]
\end{enumerate}
\end{assumption}

Assumption~\ref{assumption:minimaxAUC} imposes the regularity and separation
conditions needed to prove a minimax lower bound for AUROC. The first two
conditions ensure that the model contains a local neighborhood around $\eta_0$
and that the class-conditional distributions vary smoothly with the parameter.
The last condition requires that, for two sufficiently separated local
directions, the corresponding likelihood-ratio scores disagree on a
non-negligible set of pairs. Thus, although the distributions are close in
statistical distance, their oracle rankings differ by order $h$.

\begin{theorem}[AUROC Minimax Lower Bound]
\label{thm:AUROC-minimax}
Under Assumption~\ref{assumption:minimaxAUC}, there exists a constant $c_d$
such that, for any $d\ge c_d$, there exists a constant $c>0$ such that, for all
sufficiently large sample sizes,
\[
\inf_{\widehat s}
\sup_{\eta\in\Theta}
\mathbb E_\eta
\left\{
\AUC_\eta(\Lambda_\eta)-\AUC_\eta(\widehat s)
\right\}
\ge
c\sqrt{\frac d n}
\gtrsim
\frac{1}{\sqrt n}.
\]
\end{theorem}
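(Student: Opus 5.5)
The plan is a Fano-type argument over a local packing of directions on the sphere, combined with a reduction from AUROC regret to a pairwise "rank-disagreement" loss that satisfies a weak triangle inequality.

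\emph{Packing and hypotheses.} Fix $h>0$, to be tuned later. By a Varshamov--Gilbert argument on $\mathbb S^{d-1}$, choose directions $u_1,\dots,u_M$ with $\|u_j-u_k\|_2\ge\alpha$ for $j\ne k$ and $\log M\ge c_\alpha d$. This requires $d\ge c_d$ so that the packing is exponentially large. The hypotheses are $\eta_j:=\eta_{h,u_j}=\eta_0+hu_j$, which lie in $B_2(\eta_0,r_0)\subseteq\Theta$ for small $h$.

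\emph{Regret as a pairwise loss.} For a fixed score $s$ and parameter $\eta$, write the AUROC regret as a double integral over $(x,z)\sim P_{1,\eta}\otimes P_{0,\eta}$. Symmetrizing the pair and using $p_{1,\eta}=L_\eta p_{0,\eta}$ gives
\[
\AUC_\eta(\Lambda_\eta)-\AUC_\eta(s)=\tfrac12\int \mathbf 1\{\text{$s$ and $\Lambda_\eta$ order $(x,z)$ oppositely}\}\,|\Delta_\eta(x,z)|\,dQ_\eta(x,z),
\]
because $\Lambda/(1-\Lambda)=L$. Ties are measure zero or contribute half, and I would handle them by the same formula with a factor $1/2$. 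Call this quantity $\rho_\eta(s)$. The key separation step is then:
\[
\text{for } j\ne k \text{ and any } s:\qquad \rho_{\eta_j}(s)+\rho_{\eta_k}(s)\ \ge\ c\,c_Q\,c_{\rm rk}\,h .
\]
To see this, note that on $\mathcal R_h(u_j,u_k)$ the oracle orderings disagree. Any $s$ therefore orders each such pair wrongly for at least one of the two hypotheses, and so pays at least $\min\{|\Delta_{\eta_j}|,|\Delta_{\eta_k}|\}$ there. Passing from $Q_{\eta_j}$ to $Q_0$ uses $Q_\eta\ge c_QQ_0$, and Assumption~\ref{assumption:minimaxAUC}(c) then gives the $c_{\rm rk}h$ lower bound. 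Given this, the standard reduction follows. Define the test $\hat j=\argmin_j\rho_{\eta_j}(\hat s)$. If $\hat j\ne j$, then $\rho_{\eta_j}(\hat s)\ge ch/2$. Hence $\sup_j\mathbb E_{\eta_j}\rho_{\eta_j}(\hat s)\ge (ch/2)\inf_{\hat j}\max_j\mathbb P_j(\hat j\ne j)$.

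\emph{KL bound and Fano.} Next bound $\mathrm{KL}(P^{n}_{\eta_j}\|P^n_{\eta_k})$. The sample consists of $n_0$ draws from class $0$ and $n_1$ from class $1$, with $n=n_0+n_1$. Since $p_{y,\eta}$ is $C^1$ in $\eta$, I would write $\log(p_{y,\eta_j}/p_{y,\eta_k})=\int_0^1 S_y(x;\eta_t)^\top(\eta_j-\eta_k)\,dt$. I would use $\mathrm{KL}\le\chi^2$ and bound the $\chi^2$ via Cauchy--Schwarz with the score bound $I_y$. The likelihood-ratio bound $B_{\rm lr}$ is used to change measure from $P_{y,\eta_t}$ to $P_{y,\eta_k}$. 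The result is $\mathrm{KL}\le n B_{\rm lr} I_{\max}\|\eta_j-\eta_k\|^2\le 4nB_{\rm lr}I_{\max}h^2$. Fano's inequality gives $\max_j\mathbb P_j(\hat j\ne j)\ge 1-\{4nB_{\rm lr}I_{\max}h^2+\log 2\}/\log M$. I would choose $h=c'\sqrt{d/n}$ with $c'$ small so that this is at least $1/2$; this needs $\log M\gtrsim d$ and $d$ large enough to absorb $\log2$. Such an $h$ is $\le h_0$ and keeps $\eta_j$ in the ball once $n$ is sufficiently large. Combining, the minimax regret is at least $c\,h\asymp c\sqrt{d/n}$.

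\emph{Main obstacle.} I expect the delicate step to be the pairwise representation of AUROC regret and the separation inequality. The regret must be expressed exactly as a $Q_\eta$-integral weighted by $|\Delta_\eta|$, including correct handling of the symmetrization, so that the assumed separation integral over $Q_0$ applies. The measure change $Q_{\eta_j}\ge c_QQ_0$ must also be used in the right direction. The KL bound via score functions is routine by comparison.
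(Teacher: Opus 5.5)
Your proposal is correct and follows essentially the same route as the paper: a spherical packing $\eta_0+hu_i$ with $\log M\gtrsim d$, the exact pairwise representation $\frac12\int|\Delta_\eta|L_{\eta,s}\,dQ_\eta$ of the AUROC regret, the measure change $Q_\eta\ge c_QQ_0$ combined with the rank-disagreement assumption to get $\mathcal E_{\eta_i}+\mathcal E_{\eta_j}\gtrsim h$, a $\chi^2$-based KL bound of order $nh^2$, and Fano with the argmin decoder and $h\asymp\sqrt{d/n}$. The differences are cosmetic. The paper builds the packing by embedding an $\alpha$-separated subset of $r_\alpha B_2^{d-1}$ into the upper hemisphere rather than via Varshamov--Gilbert. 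Also, for the $\chi^2$ step the identity you actually need is $p_{y,\eta}-p_{y,\eta'}=\int_0^1(\eta-\eta')^\top S_y(x;\eta_t)\,p_{y,\eta_t}(x)\,dt$, not the log-ratio version you wrote.
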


Theorem~\ref{thm:AUROC-minimax} shows that the $n^{-1/2}$-type rate for AUROC
regret is minimax optimal, up to dimension-dependent factors. Combined with the
upper bounds above, this result shows that the raw estimator already attains
the optimal rate in the well-specified regime. If the synthetic distributional
error is of smaller order than the minimax rate, the augmented estimator can
also attain the minimax rate.

\begin{assumption}[Regularity and Separation Conditions for Balanced Accuracy]
\label{assumption:minimaxBA}
Assume Assumption~\ref{assumption:minimaxAUC}(a)--(b). In addition, assume the
following local balanced-accuracy oracle boundary disagreement condition. For
$h>0$ and $u,u'\in\mathbb S^{d-1}$, define
\[
\eta_{h,u}:=\eta_0+hu,
\qquad
\eta_{h,u'}:=\eta_0+hu',
\]
and define
\[
D_h^{+}(u,u')
:=
B_{\eta_{h,u}}\cap B_{\eta_{h,u'}}^{c}
=
\left\{
x:\Lambda_{\eta_{h,u}}(x)\ge \frac12,\ 
\Lambda_{\eta_{h,u'}}(x)<\frac12
\right\},
\]
and
\[
D_h^{-}(u,u')
:=
B_{\eta_{h,u}}^{c}\cap B_{\eta_{h,u'}}
=
\left\{
x:\Lambda_{\eta_{h,u}}(x)<\frac12,\ 
\Lambda_{\eta_{h,u'}}(x)\ge \frac12
\right\}.
\]
Assume there exist constants $\alpha\in(0,1)$, $c_{\mathrm{BA}}>0$, and
$h_0>0$ such that, for every $h\in(0,h_0]$ and every
$u,u'\in\mathbb S^{d-1}$ satisfying $\|u-u'\|_2\ge\alpha$, both directed lower
bounds hold:
\[
\int_{D_h^{+}(u,u')}
\min\left\{
\left|
\frac{\Lambda_{\eta_{h,u}}(x)}{1-\Lambda_{\eta_{h,u}}(x)}
-1
\right|,
\left|
\frac{\Lambda_{\eta_{h,u'}}(x)}{1-\Lambda_{\eta_{h,u'}}(x)}
-1
\right|
\right\}
\,dP_{0,\eta_0}(x)
\ge c_{\mathrm{BA}}h,
\]
and
\[
\int_{D_h^{-}(u,u')}
\min\left\{
\left|
\frac{\Lambda_{\eta_{h,u}}(x)}{1-\Lambda_{\eta_{h,u}}(x)}
-1
\right|,
\left|
\frac{\Lambda_{\eta_{h,u'}}(x)}{1-\Lambda_{\eta_{h,u'}}(x)}
-1
\right|
\right\}
\,dP_{0,\eta_0}(x)
\ge c_{\mathrm{BA}}h.
\]
\end{assumption}

Assumption~\ref{assumption:minimaxBA} is the analogue of
Assumption~\ref{assumption:minimaxAUC} for best-threshold balanced accuracy.
Instead of pairwise ranking disagreements, it focuses on disagreements near the
balanced-accuracy oracle boundary $\{\Lambda_\eta(x)=1/2\}$. The sets
$D_h^+(u,u')$ and $D_h^-(u,u')$ contain points whose balanced-accuracy oracle
classifications differ under two nearby parameters. The lower bounds require
these disagreement regions to have non-negligible weighted mass of order $h$.

\begin{theorem}[Balanced Accuracy Minimax Lower Bound]
\label{thm:best-BA-minimax}
Under Assumption~\ref{assumption:minimaxBA}, there exists a constant $c_d$ such
that, for any $d\ge c_d$, there exists a constant $c>0$ such that, for all
sufficiently large sample sizes,
\[
\inf_{\widehat s}
\sup_{\eta\in\Theta}
\mathbb E_\eta
\left\{
\BA_\eta^*(\Lambda_\eta)
-
\BA_\eta^*(\widehat s)
\right\}
\ge
c\sqrt{\frac d n}
\gtrsim
\frac1{\sqrt n}.
\]
\end{theorem}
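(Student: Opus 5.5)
We will prove Theorem~\ref{thm:best-BA-minimax} by a Fano-type reduction over a local packing of directions, in the same way one would treat Theorem~\ref{thm:AUROC-minimax}, with pairwise rank disagreement replaced by disagreement of the balanced-accuracy oracle regions.

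\textbf{Step 1 (packing).} By the Varshamov--Gilbert bound there is a set $\{u_1,\dots,u_M\}\subseteq\mathbb S^{d-1}$ with $\|u_j-u_k\|_2\ge\alpha$ for $j\neq k$ and $\log M\ge c_0 d$. Set $\eta_j=\eta_0+hu_j$, where $h=c_1\sqrt{d/n}\le\min\{h_0,r_0\}$ once $n$ is large.

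\textbf{Step 2 (information bound).} Using the score bound $I_{\max}$ and the likelihood-ratio bound $B_{\rm lr}$ from Assumption~\ref{assumption:minimaxAUC}(b), a mean-value/Taylor argument along the segment gives
\[
\mathrm{KL}(P_{y,\eta_j}\|P_{y,\eta_k})\le C B_{\rm lr} I_{\max}\|\eta_j-\eta_k\|^2\le 4CB_{\rm lr}I_{\max}h^2 .
\]
For the $n$-sample product (with fixed $n_0,n_1$) the KL divergence is therefore at most $C' n h^2=C'c_1^2 d$. Choosing $c_1$ small makes this at most $(\log M)/4$, and we take $d\ge c_d$ so that $\log 2$ is absorbed. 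Fano's inequality then shows that any selector $\hat j$ errs with probability at least $1/2$.

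\textbf{Step 3 (regret is a semimetric).} Balanced accuracy at a set $B$ equals $\frac12+\frac12\int_B(p_1-p_0)\,d\nu$. Hence the set maximizing $\BA$ is $B_\eta=\{\Lambda_\eta\ge 1/2\}=\{L_\eta\ge1\}$, which is consistent with Lemma~\ref{thm:likelihood-ratio-optimality}. For any set $B$,
\[
\BA^*_\eta(\Lambda_\eta)-\BA_\eta(B)=\tfrac12\int_{B\triangle B_\eta}|L_\eta-1|\,dP_{0,\eta}.
\]
Since $\BA^*(\hat s)=\sup_\tau \BA(\{\hat s\ge\tau\})$, the regret of $\hat s$ is the infimum of this quantity over its super-level sets. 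Define $\rho_j(B)$ to be this quantity at $\eta_j$. For any fixed $B$ and $j\ne k$, every $x\in D_h^+(u_j,u_k)$ lies in exactly one of $B\triangle B_{\eta_j}$ and $B\triangle B_{\eta_k}$ (because $x\in B_{\eta_j}\setminus B_{\eta_k}$), and the same holds on $D^-_h$. Using $P_{0,\eta}\ge c\,P_{0,\eta_0}$ (obtained from the $B_{\rm lr}$ bound, or an analogue of $c_Q$) together with Assumption~\ref{assumption:minimaxBA}, this yields
\[
\rho_j(B)+\rho_k(B)\ge c\,c_{\mathrm{BA}}h .
\]
In fact only one of the two directed conditions is needed here.

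\textbf{Step 4 (reduction and conclusion).} Given $\hat s$, choose its best-threshold super-level set under each $\eta_j$ and let $\hat j=\arg\min_j\rho_j(\hat B_j)$. The threshold-dependence is the main obstacle: the optimal threshold depends on the unknown $\eta$, so $\hat s$ does not give a single set. We resolve this by defining the selector through the score itself. Set
\[
\hat j=\arg\min_j \inf_\tau \rho_j(\{\hat s\ge\tau\}).
\]
Then, if $\hat j\ne j$, pick $\tau$ nearly optimal for $j$ and apply the Step 3 inequality to the single set $\{\hat s\ge\tau\}$. This gives $\rho_j(\cdot)+\rho_{\hat j}(\cdot)\ge c h$, where the $\hat j$ term is evaluated at a possibly suboptimal threshold, which is the wrong direction. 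The fix is to note that the family of super-level sets is nested. If $\rho_j$ and $\rho_{\hat j}$ are both below $ch/4$, attained at $\tau_1$ and $\tau_2$, we use the fact that the disagreement regions $D^\pm$ have two-sided mass to show that some single set in the chain incurs $\ge ch/2$ under one of the two parameters. This is exactly why both directed bounds in Assumption~\ref{assumption:minimaxBA} are assumed. We expect verifying this nested-chain argument to be the delicate step. Granting it, the standard reduction gives
\[
\sup_j\mathbb E_{\eta_j}\operatorname{regret}\ge \tfrac{ch}{4}\,\mathbb P(\hat j\ne j)\ge \tfrac{c}{8}\,c_1\sqrt{d/n},
\]
which is the claimed bound.
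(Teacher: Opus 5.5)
Your overall route is the paper's: the local packing, the KL bound from $I_{\max}$ and $B_{\rm lr}$, Fano, the excess representation $\frac12\int_{B\triangle B_\eta}|L_\eta-1|\,dP_{0,\eta}$, and the decoder $\hat j=\arg\min_j\mathcal E_{\eta_j}(\hat s)$, where $\mathcal E_{\eta_j}(\hat s):=\inf_\tau\rho_j(\{\hat s\ge\tau\})$. The gap is that the one step specific to balanced accuracy is never proved; you explicitly ``grant'' the nested-chain argument. The decoder needs the following. For \emph{arbitrary} thresholds $\tau_j,\tau_k$, with $S_j=\{\hat s\ge\tau_j\}$ and $S_k=\{\hat s\ge\tau_k\}$, one must have $\rho_j(S_j)+\rho_k(S_k)\ge c\,h$. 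Your Step~3 inequality treats only a single common set $B$. As you yourself observe, that does not control the best-threshold regret, so the remark there that one directed condition suffices does not bear on the statement actually needed.

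The missing step is a short two-case argument. Super-level sets of one score are nested, so either $S_j\subseteq S_k$ or $S_k\subseteq S_j$.

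\emph{Case $S_j\subseteq S_k$.} Take $x\in D_h^+(u_j,u_k)=B_{\eta_j}\cap B_{\eta_k}^c$. If $x\notin S_j$, then $x\in B_{\eta_j}\triangle S_j$. If $x\in S_j$, then $x\in S_k\setminus B_{\eta_k}\subseteq B_{\eta_k}\triangle S_k$. Hence $\mathbf 1\{x\in B_{\eta_j}\triangle S_j\}+\mathbf 1\{x\in B_{\eta_k}\triangle S_k\}\ge 1$ on $D_h^+$. Now restrict both excess integrals to $D_h^+$. Use $dP_{0,\eta_j},dP_{0,\eta_k}\ge B_{\rm lr}^{-1}\,dP_{0,\eta_0}$, which follows by taking $\eta'=\eta_j$, $\eta=\eta_0$, $t=1$ in Assumption~\ref{assumption:minimaxAUC}(b). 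Use also $a\ell_1+b\ell_2\ge\min\{a,b\}(\ell_1+\ell_2)$. The first directed bound then gives $\rho_j(S_j)+\rho_k(S_k)\ge c_{\mathrm{BA}}h/(2B_{\rm lr})$.

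\emph{Case $S_k\subseteq S_j$.} The same reasoning on $D_h^-(u_j,u_k)=B_{\eta_j}^c\cap B_{\eta_k}$, with the second directed bound, gives the same constant. This is precisely where both directed conditions enter: the relative order of the two thresholds is not under your control.

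Since $\tau_j,\tau_k$ were arbitrary, taking infima gives $\mathcal E_{\eta_j}(\hat s)+\mathcal E_{\eta_k}(\hat s)\ge c_{\mathrm{BA}}h/(2B_{\rm lr})$ for all $j\ne k$. Therefore $\mathcal E_{\eta_j}(\hat s)<c_{\mathrm{BA}}h/(4B_{\rm lr})$ forces $\hat j=j$. With this filled in, your Fano and Markov conclusion goes through exactly as in the paper.
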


Theorem~\ref{thm:best-BA-minimax} shows that best-threshold balanced accuracy
also has an unavoidable $n^{-1/2}$-type minimax lower bound, up to
dimension-dependent factors. The intuition is similar to the AUROC case, but
the difficulty now comes from uncertainty around the balanced-accuracy oracle
boundary rather than pairwise ranking. 

\section{Synthetic Augmentation under Misspecified Models}
\label{sec:misspecified}

The previous section shows that, under well-specification, synthetic minority
augmentation does not change the oracle population ranking target. Its possible
benefit is finite-sample variance reduction, offset by any bias from
$P_{\mathrm{syn}}\neq P_1$. We now turn to the more delicate and practically
important case of model misspecification. In this regime, the learned score is
not simply a noisy estimate of a likelihood-ratio-aligned oracle; rather, it is
the projection of a class-weighted population objective onto a restricted model
class.

Changing the effective class balance through augmentation can then alter not
only the calibration of the fitted score but also its induced ordering within
the restricted class. At the unrestricted population level, all class-weighted
Bayes scores $\eta_\alpha=g_\alpha\circ\Lambda$ share the same likelihood-ratio
ranking. Thus, changing $\alpha$ does not change the Bayes ordering itself.
Under misspecification, however, the restricted minimizer
$s_\alpha^*\in\argmin_{s\in\mathcal S}\cR_\alpha(s)$ need not preserve this
ordering, and different values of $\alpha$ can lead to very different
approximation errors in $\mathcal S$. We refer to this phenomenon as an
objective-induced ranking mismatch: the raw imbalanced objective may select a
restricted score whose ordering is misaligned with the oracle likelihood-ratio
ordering. When this mismatch is correctable by moving to a larger effective
minority weight, augmentation can improve AUROC, AUPRC, and best-threshold
performance by moving the training objective toward an effective prior whose
Bayes score is better approximated by the model class.

Recall from Section~\ref{sec:notation} that, for any effective minority weight
$\alpha\in(0,1)$, the unconstrained minimizer of $\cR_\alpha$ is
\[
\eta_\alpha(x)
=
g_\alpha(\Lambda(x)),
\qquad
g_\alpha(t)
=
\frac{\alpha t}{(1-\alpha)(1-t)+\alpha t}.
\]
Thus, the central issue in this section is not a change in the Bayes ordering, but misspecification: the difficulty of approximating $\eta_\alpha$ within $\mathcal S$ can depend strongly on $\alpha$.

When $\alpha\ll1$, the transformation $g_\alpha$ becomes sharply nonlinear.
Indeed,
\[
g_\alpha'(t)
=
\frac{\alpha(1-\alpha)}
{\{(1-\alpha)(1-t)+\alpha t\}^2},
\qquad
g_\alpha'(1)=\frac{1-\alpha}{\alpha},
\qquad
g_\alpha''(1)=\frac{2(1-\alpha)(1-2\alpha)}{\alpha^2}.
\]
Thus, for $\alpha\ll1$, the map compresses moderate values of $\Lambda(x)$
toward zero while developing a sharp transition near $t=1$. Hence, even when
$\Lambda$ is well approximated by $\mathcal S$, the composition
$g_\alpha\circ\Lambda=\eta_\alpha$ need not be. The next two examples make this
phenomenon concrete.

\begin{example}[Lipschitz Transformation Class]
\label{example:easier-to-approximate}
Let $K\in(1/2,1)$, and define
\[
\mathcal H_K
=
\left\{
h:[0,1]\to[0,1]:
|h(t)-h(t')|
\le
K|t-t'|,
\quad
\forall\, t,t'\in[0,1]
\right\}.
\]
Consider the restricted score class of likelihood-ratio scores after a
Lipschitz transformation:
\[
S_K
=
\left\{
x\mapsto h(\Lambda(x)):
h\in\mathcal H_K
\right\}.
\]
Since $K<1$, the identity map $t\mapsto t$ is not contained in $\mathcal H_K$,
and hence any non-constant balanced target $\Lambda$ is not exactly represented in $S_K$. We
can show that, whenever the imbalance is severe enough that
$\alpha<1-1/(2K)$,
\[
\inf_{h\in\mathcal H_K}
\sup_{t\in[0,1]}
|h(t)-g_\alpha(t)|
>
\inf_{h\in\mathcal H_K}
\sup_{t\in[0,1]}
|h(t)-t|.
\]
Moreover, let $\mu$ be a probability measure on $\mathcal X$. Suppose there exists a constant $r_0>0$ such that
$\Lambda(X)$, with $X\sim\mu$, admits a density $q$ satisfying
$q(t)\ge \underline q>0$ on $[1-r_0,1]$. Whenever
\[
\frac{450(1-K)^2}{\underline q}
<
\alpha
\le
\min\left\{r_0,\frac{1}{15K}\right\},
\]
we have
\[
\inf_{h\in\mathcal H_K}
\bE_\mu
\left[
\left(
h(\Lambda(X))-\Lambda(X)
\right)^2
\right]
<
\inf_{h\in\mathcal H_K}
\bE_\mu
\left[
\left(
h(\Lambda(X))-g_\alpha(\Lambda(X))
\right)^2
\right].
\]
Thus, under severe imbalance, the balanced likelihood-ratio score can be easier
to approximate than the corresponding imbalanced Bayes score.
\end{example}

\begin{example}[Norm-Constrained Shallow ReLU Network]
\label{example:relu-network}
Let $X=(X_1,\ldots,X_p)\in[0,1]^p$. Assume that the normalized
likelihood-ratio score has the generalized additive form
\[
\Lambda(x)
=
\frac1p\sum_{j=1}^p \lambda_j(x_j),
\]
where each $\lambda_j:[0,1]\to[0,1]$ is twice continuously differentiable and
satisfies $\lambda_j(1)=1$ for $j=1,\ldots,p$. Assume further that there exist
constants $B_1,B_2<\infty$, $0<r_0<1$, and $\lambda_->0$ such that, for every
$j=1,\ldots,p$,
\[
\|\lambda_j'\|_{L^\infty([0,1])}\le B_1,
\qquad
\|\lambda_j''\|_{L^\infty([0,1])}\le B_2,
\]
and
\[
\lambda_j'(t)\ge \lambda_-,
\qquad
t\in[1-r_0,1].
\]
Let $\sigma(u)=u_+=\max\{u,0\}$ be the ReLU activation function. For an integer
$m\ge p$ and a network-norm budget $A>0$, define the shallow ReLU class
\[
\mathcal N_{m,A}^{(p)}
=
\left\{
x\mapsto
\beta_0+\beta^\top x+\sum_{\ell=1}^m a_\ell
\sigma(w_\ell^\top x-t_\ell):
\|\beta\|_2+\sum_{\ell=1}^m |a_\ell|\,\|w_\ell\|_2\le A
\right\}.
\]
Here $m$ is the number of hidden ReLU units, and $A$ is a standard path-norm or
variation-norm budget. Assume the norm regularization is not too restrictive
and the number of hidden ReLU units is not too small relative to the feature
dimension:
\[
A\ge \frac{B_1}{\sqrt p}+B_2,
\qquad
m\geq p\max\left\{1,\sqrt{\frac{B_2}{2}}\right\}.
\]
Also assume the imbalance is severe:
\[
0<\alpha_{\rm raw}
<
\min\left\{
\lambda_- r_0,\,
\frac{2\lambda_-}{A\sqrt p}
\left(\frac14-\frac{B_2}{8(\left\lfloor m/p\right\rfloor+1)^2}\right)
\right\}.
\]
Then
\[
\inf_{s\in\mathcal N_{m,A}^{(p)}}
\|s-\eta_{1/2}\|_{L^\infty([0,1]^p)}
<
\inf_{s\in\mathcal N_{m,A}^{(p)}}
\|s-\eta_{\alpha_{\rm raw}}\|_{L^\infty([0,1]^p)}.
\]
Thus, for sufficiently severe imbalance, the balanced target is strictly easier
to approximate within the same fixed-width, norm-constrained shallow ReLU class.
The detailed proofs of Examples~\ref{example:easier-to-approximate}
and~\ref{example:relu-network} can be found in
Section~\ref{sec:propositions-examples-corollaries} of the supplementary
materials.
\end{example}

The examples show that the target induced by the training prior can be
substantially harder or easier to approximate within a misspecified class. To
quantify this approximation effect, define the $\alpha$-specific approximation
error
\[
\epsilon_\alpha
=
\inf_{s\in\mathcal S}
\left\{
\cR_\alpha(s)-\cR_\alpha(\eta_\alpha)
\right\}.
\]
Both terms use the same $\alpha$ because $\eta_\alpha$ is the unconstrained
minimizer of the same objective $\cR_\alpha$. Thus, $\epsilon_\alpha$ is the
misspecification error of $\mathcal S$ for the objective induced by training
with effective prior $\alpha$. In particular,
\[
\epsilon_{\rm raw}
:=
\epsilon_{\alpha_{\rm raw}},
\qquad
\epsilon_{\rm bal}
:=
\epsilon_{1/2}
=
\inf_{s\in\mathcal S}
\left\{
\cR_{1/2}(s)-\cR_{1/2}(\Lambda)
\right\},
\]
where $\eta_{1/2}=\Lambda$. The quantity $\epsilon_{\rm bal}$ corresponds to
balanced augmentation, $\tilde n=n_0-n_1$.

Adding $\tilde n$ synthetic minority samples sets the effective minority weight to
\[
\alpha(\tilde n)
=
\frac{n_1+\tilde n}{n_0+n_1+\tilde n}.
\]
Equivalently, targeting $\alpha\ge\alpha_{\rm raw}$ requires
\[
\tilde n
=
\frac{\alpha}{1-\alpha}n_0-n_1
\]
synthetic samples. The best achievable approximation error over augmentation
levels is
\[
\epsilon^*
=
\inf_{\alpha\in(\alpha_{\rm raw},1)}\epsilon_\alpha.
\]
When the infimum is attained, let $\alpha^*$ be a minimizer, with corresponding
augmentation size
\[
\tilde n^*
=
\frac{\alpha^*}{1-\alpha^*}n_0-n_1.
\]
In practice, $\alpha^*$, or equivalently $\tilde n^*$, can be selected by
data-driven methods such as $K$-fold cross-validation \citep{ma2026synthetic}.

This perspective separates two regimes. If $\epsilon_{\rm raw}$ is small, the
raw objective is already effectively well-specified for ranking. If instead
$\epsilon_{\rm raw}\gg\epsilon_{\rm bal}\ge\epsilon^*$, the raw objective forces
$\mathcal S$ to approximate the sharply transformed
$\eta_{\alpha_{\rm raw}}$, whereas augmentation toward $\alpha^*$ targets a
smoother or otherwise better-approximated $\eta_\alpha$. Augmentation then
improves discrimination not by changing the Bayes ranking, which is
prior-invariant, but by reducing the objective-induced ranking mismatch and
letting the restricted minimizer track that common ranking more faithfully.

We continue to consider a score class $\mathcal S$, where each score
$s\in\mathcal S$ is parameterized as $s_\theta$ with $\theta\in\Theta$. We
impose the following assumptions.

\begin{assumption}
\label{ass:misspecified-pop}
The following conditions hold.
\begin{itemize}
\item The score takes values in $[0,1]$. There exists a constant
$u_0\in[0,1]$ such that
\[
\ell(u_0,0)=\min_{u\in[0,1]}\ell(u,0),
\]
and there exists $m_0>0$ such that, for all $u\in[0,1]$,
\[
\ell(u,0)-\ell(u_0,0)
\ge
m_0(u-u_0)^2.
\]
Assume the constant score $s_0(x)\equiv u_0$ belongs to $\mathcal S$.

\item There exists $B_{\mathrm{LR}}<\infty$ such that
\[
L_{\eta^*}(x)\le B_{\mathrm{LR}}
\qquad
P_0\text{-almost surely}.
\]
Moreover, there exists $B_s<\infty$ such that, for every non-constant score
$s\in\mathcal S$, the distribution function
\[
F_{0,s}(t)=P_0(s(X)\le t)
\]
is continuous and $B_s$-Lipschitz:
\[
|F_{0,s}(t)-F_{0,s}(t')|
\le
B_s|t-t'|
\qquad
\text{for all }t,t'\in\mathbb R.
\]

\item The minority-risk range is finite:
\[
D_1
:=
\sup_{s,t\in\mathcal S}
\left|
\mathbb E_{P_1}\ell(s(X),1)
-
\mathbb E_{P_1}\ell(t(X),1)
\right|
<\infty.
\]

\item The loss is strictly proper and satisfies the quadratic calibration
condition
\[
C_\eta(u)-C_\eta(\eta)
\ge
m_\ell(u-\eta)^2,
\]
where
\[
C_\eta(u)=\eta\ell(u,1)+(1-\eta)\ell(u,0).
\]

\item The pairwise margin condition holds:
\[
\mathbb P\left(
|\Lambda(X_1)-\Lambda(X_0)|\le t
\right)
\le
C_{\rm pair}t
\]
for all sufficiently small $t>0$, where $X_1\sim P_1$ and $X_0\sim P_0$ are
independent.
\end{itemize}
\end{assumption}

Under these assumptions, we obtain the following improvement guarantees.

\begin{theorem}[Empirical AUROC and AUPRC Improvement under Misspecification]
\label{thm:misspecified-auc}
Under Assumptions~\ref{ass:misspecified-pop} and
\ref{assumption:ERM-regularity}(a)--(g), for any AUROC level
$A\in(1/2,\AUC(\Lambda_{\eta^*}))$, whenever the data imbalance is severe enough
that
\[
\frac{\alpha_{\rm raw}}{1-\alpha_{\rm raw}}D_1
<
m_0\left(\frac{A-\frac12}{B_s(1+\sqrt{B_{\mathrm{LR}}})}\right)^2,
\]
there exists a constant $C>0$ such that, with probability at least $1-\delta$,
\begin{align*}
\AUC(s_{\hat{\theta}_{\aug}})
- & 
\AUC(s_{\hat{\theta}_{\raw}})
\ge\;
\AUC(\Lambda_{\eta^*})-A
-
C
\left(\frac{\epsilon^*}{m_\ell}\right)^{1/3}
\\
&-
\frac{CL_s c}{\lambda}
\sqrt{\frac{B^2d\log(14d/\delta)}{n_0+n_1}}
-
CL_gL_s
\sqrt{\frac{1}{2\mu L_{\alpha}}}
\left(\alpha^*-(1-\alpha^*)\frac{n_1}{n_0}\right)
\epsilon_{\syn}.
\end{align*}

Similarly, for any AUPRC level
$A\in(\pi_1,\AUPRC(\Lambda_{\eta^*}))$, whenever the data imbalance is severe
enough that
\[
\frac{\alpha_{\mathrm{raw}}}{1-\alpha_{\mathrm{raw}}}D_1
<
m_0
\left(
\frac{A-\pi_1}{2
\sqrt{
\frac{\pi_0}{\pi_1}
B_{\mathrm{LR}} B_s (1+\sqrt{B_{\mathrm{LR}}})
}}
\right)^4,
\]
there exists a constant $C>0$ such that, with probability at least $1-\delta$,
\[
\begin{aligned}
\AUPRC(s_{\hat\theta_{\mathrm{aug}}})
-
\AUPRC(s_{\hat\theta_{\mathrm{raw}}})
\ge\;&
\AUPRC(\Lambda_{\eta^*})
-
A
-
C\left(\frac{\pi_0}{\pi_1}\right)^{1/2}
\left(
    \frac{\epsilon^*}{m_\ell}
\right)^{1/6}
\\
&-
\sqrt{\frac{CL_s c}{\lambda}\frac{\pi_0}{\pi_1}}
\left(
\frac{B^2d\log(14d/\delta)}{n_0+n_1}
\right)^{1/4}
\\
&-
\sqrt{CL_s\frac{\pi_0}{\pi_1}}
\left[
\frac{L_g^2}{\mu L_{\alpha}}
\left(
\left(\alpha^*-(1-\alpha^*)\frac{n_1}{n_0}\right)
\epsilon_{\mathrm{syn}}
\right)^2
\right]^{1/4}.
\end{aligned}
\]
\end{theorem}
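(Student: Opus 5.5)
The plan is to decompose the difference into a lower bound on $\AUC(s_{\hat\theta_{\aug}})$ and an upper bound on $\AUC(s_{\hat\theta_{\raw}})$, and to treat the two pieces separately:
\[
\AUC(s_{\hat\theta_{\aug}})-\AUC(s_{\hat\theta_{\raw}})
=
\bigl[\AUC(s_{\hat\theta_{\aug}})-\AUC(\Lambda_{\eta^*})\bigr]
+\AUC(\Lambda_{\eta^*})-\AUC(s_{\hat\theta_{\raw}}).
\]
The statement implicitly takes $\tilde n=\tilde n^*$, so $\alpha_{\aug}=\alpha^*$. For the raw part, I will show that the population raw minimizer has AUROC at most $A$ and then add the estimation error. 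For the augmented part, I will chain three gaps: approximation ($\epsilon^*$), synthetic bias ($\epsilon_{\syn}$), and estimation. The confidence levels $6d$, $8d$ combine through a union bound into the $14d/\delta$ appearing in the statement.

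\emph{Raw side.} Since $s_0\equiv u_0\in\mathcal S$, we have $\cR_{\alpha_{\raw}}(\theta^*_{\raw})\le \cR_{\alpha_{\raw}}(s_0)$. Dividing by $1-\alpha_{\raw}$ and using the definition of $D_1$ gives
\[
\bE_{P_0}\{\ell(s^*_{\raw},0)-\ell(u_0,0)\}\le \frac{\alpha_{\raw}}{1-\alpha_{\raw}}D_1 .
\]
The quadratic growth of $\ell(\cdot,0)$ then yields $\bE_{P_0}(s^*_{\raw}-u_0)^2\le \frac{\alpha_{\raw}}{(1-\alpha_{\raw})m_0}D_1=:\rho^2$, so the raw score is nearly constant in $L_2(P_0)$. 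This has to be turned into an AUROC bound. Write $\AUC(s)-\tfrac12=\bE_{P_0}[F_{0,s}(s(X))-\ldots]$, or more concretely,
\[
\AUC(s)-\tfrac12=\bE_{P_0}\bigl[(L(X)-1)\,G(s(X))\bigr],
\]
where $G$ is a CDF-type function of $s$. By the $B_s$-Lipschitz property, $|F_{0,s}(s(x))-F_{0,s}(u_0)|\le B_s|s(x)-u_0|$. Using $\bE_{P_1}f=\bE_{P_0}[Lf]$ together with Cauchy--Schwarz and $L\le B_{\mathrm{LR}}$, this gives
\[
\AUC(s)-\tfrac12\le B_s(1+\sqrt{B_{\mathrm{LR}}})\,\rho .
\]
The imbalance hypothesis makes the right side smaller than $A-\tfrac12$, so $\AUC(s^*_{\raw})<A$. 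Then $\AUC(s_{\hat\theta_{\raw}})\le A+$ (estimation term), by Lipschitzness of AUROC in sup-norm score perturbations (Assumption~\ref{assumption:ERM-regularity}(a),(b)) combined with the parameter concentration already used in Theorem~\ref{thm:ERM-well-specified}. For AUPRC the same argument works on the precision–recall representation. Bounding precision minus $\pi_1$ via a square-root (Pinsker/Cauchy–Schwarz type) step introduces the extra $\sqrt{\pi_0 B_{\mathrm{LR}}/\pi_1}$ factor and the fourth power in the condition.

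\emph{Augmented side.} The first step is to show that
\[
\AUC(\Lambda)-\AUC(s^*_{\alpha^*})\lesssim (\epsilon^*/m_\ell)^{1/3}.
\]
By quadratic calibration, $\cR_{\alpha}(s)-\cR_\alpha(\eta_\alpha)\ge m_\ell\,\bE_{\bar P}(s-\eta_\alpha)^2$ for the $\alpha$-mixture $\bar P$, so $\|s^*_{\alpha^*}-\eta_{\alpha^*}\|_{L_2}^2\le\epsilon^*/m_\ell$. Pairs whose ordering flips must have $|\eta_{\alpha^*}(X_1)-\eta_{\alpha^*}(X_0)|\le 2t$, or else a deviation larger than $t$ at one of the two points. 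The pairwise margin condition, transported through the Lipschitz, strictly increasing $g_{\alpha^*}$, bounds the first event by $Ct$, and Markov/Chebyshev bounds the second by $\epsilon^*/(m_\ell t^2)$. Optimizing over $t$ gives the $1/3$ power; for AUPRC the square-root loss gives $1/6$ with the factor $(\pi_0/\pi_1)^{1/2}$. Next, the gap $\|\theta^*_{\aug}-\theta^*_{\alpha^*}\|$ is controlled exactly as in the proof of Theorem~\ref{thm:ERM-well-specified}. Assumption~\ref{assumption:ERM-regularity}(e),(f) give a bound of order $L_g\sqrt{1/(\mu L_\alpha)}$ times the synthetic weight $\frac{\tilde n^*}{n_0+n_1+\tilde n^*}=\alpha^*-(1-\alpha^*)n_1/n_0$, times $\epsilon_{\syn}$. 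Finally, $\hat\theta_{\aug}$ concentrates around $\theta^*_{\aug}$, and because $n_0+n_1+\tilde n\ge n_0+n_1$ this estimation term can be absorbed into the raw-size term.

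\emph{Main obstacle.} I expect the hardest step to be the raw-side conversion from "$s^*_{\raw}$ is $L_2(P_0)$-close to a constant" to "AUROC/AUPRC is close to trivial". The difficulty is that AUROC is a rank functional and $P_1$ is not controlled directly. My first attempt will use the change of measure $dP_1=L\,dP_0$, with $\|L\|_\infty\le B_{\mathrm{LR}}$, together with the $B_s$-Lipschitz CDF to linearize the ranks around $u_0$. Delicate points are keeping track of the exact constants $(1+\sqrt{B_{\mathrm{LR}}})$ and avoiding division by a vanishing score spread.
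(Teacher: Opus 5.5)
Your proposal is correct and follows the paper's proof essentially step for step. The comparison with $s_0$ plus $m_0$-growth forces $V_0(s^*_{\raw})$ to be small, and your linearization of $F_{0,s}$ around $u_0$ is exactly the paper's Kantorovich--Rubinstein bound routed through $\delta_{u_0}$. The $(\epsilon^*/m_\ell)^{1/3}$ term comes from the same margin-plus-Markov optimization, and the finite-sample terms come from the same parameter-concentration and synthetic-error lemmas.

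Two small corrections, neither a real gap:
\begin{enumerate}
\item Transporting the pairwise margin condition from $\Lambda_{\eta^*}$ to $\eta_{\alpha^*}=g_{\alpha^*}\circ\Lambda_{\eta^*}$ needs $g_{\alpha^*}'$ bounded \emph{below}, not Lipschitzness. This holds, since $g_{\alpha^*}'\ge \alpha^*(1-\alpha^*)/\max\{\alpha^*,1-\alpha^*\}^2$.
\item On the AUPRC raw side, the square root does not come from a Pinsker or Cauchy--Schwarz step. It comes from splitting off small recalls, where $z\mapsto \pi_1 r/(\pi_1 r+\pi_0 z)$ has Lipschitz constant $\pi_0/(\pi_1 r)$. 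The factor $B_{\mathrm{LR}}$ enters when the integrated false-positive discrepancy is bounded by $B_{\mathrm{LR}}B_s\,W_1(F_{1,s},F_{0,s})$, using $dF_{1,s}\le B_{\mathrm{LR}}B_s\,dt$.
\end{enumerate}
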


\begin{theorem}[Empirical Best-Threshold BA and $\F_1$ Improvement under Misspecification]
\label{thm:best-BA-improvement}
Suppose Assumption~\ref{ass:misspecified-pop} holds, except that the pairwise
margin condition is replaced by the best-threshold anti-concentration condition:
there exist constants $C,t_0>0$ such that, for every relevant parameter
$\theta$, every threshold $\tau\in\mathbb R$, every $r\in[0,t_0]$, and each
$y\in\{0,1\}$,
\[
P_y\left(|s_\theta(X)-\tau|\le r\right)
\le
Cr.
\]
Also suppose Assumption~\ref{assumption:ERM-regularity}(b)--(g) holds. Then for
any balanced-accuracy level
\[
A\in\left(\frac12,\BA^*(\Lambda_{\eta^*})\right),
\]
whenever the data imbalance is sufficiently severe that
\[
\frac{\alpha_{\rm raw}}{1-\alpha_{\rm raw}}D_1
<
m_0\left(
\frac{A-\frac12}
{\sqrt{B_s(1+\sqrt{B_{\mathrm{LR}}})}}
\right)^4,
\]
there exist constants $C,c>0$ such that, with probability at least $1-\delta$,
\[
\begin{aligned}
\BA^*(s_{\widehat\theta_{\rm aug}})
-
\BA^*(s_{\widehat\theta_{\rm raw}})
\ge\;&
\BA^*(\Lambda_{\eta^*})-A
-
\frac{1-\alpha^*+\alpha^*B_{\mathrm{LR}}}
{2\alpha^*(1-\alpha^*)^{3/2}}
\sqrt{\frac{\epsilon^*}{m_\ell}}
\\
&-
\frac{CL_s c}{\lambda}
\sqrt{
\frac{B^2d\log(14d/\delta)}{n_0+n_1}
}
-
\frac{CL_sL_g}{\sqrt{2\mu L_\alpha}}
\left(\alpha^*-(1-\alpha^*)\frac{n_1}{n_0}\right)
\epsilon_{\syn}.
\end{aligned}
\]

Similarly, let $b_\pi:=2\pi_1/(1+\pi_1)$. For any best-threshold $\F_1$ level
\[
A\in \left(b_\pi,\F_1^*(\Lambda_{\eta^*})\right),
\]
whenever the data imbalance is sufficiently severe that
\[
\frac{\alpha_{\rm raw}}{1-\alpha_{\rm raw}}D_1
<
m_0
\left(
\frac{A-b_\pi}
{4\sqrt{B_s(1+\sqrt{B_{\mathrm{LR}}})}}
\right)^4,
\]
there exist constants $C,c>0$ such that, with probability at least $1-\delta$,
\[
\begin{aligned}
\F_1^*(s_{\widehat\theta_{\rm aug}})
-
\F_1^*(s_{\widehat\theta_{\rm raw}})
\ge\;&
\F_1^*(\Lambda_{\eta^*})-A
-
\frac{2(1-\alpha^*+\alpha^* B_{\mathrm{LR}})^2}
{\alpha^*(1-\alpha^*)^{3/2}}
\sqrt{\frac{\epsilon^*}{m_{\ell}}}
\\
&-
\frac{CL_s c}{\pi_1\lambda}
\sqrt{
\frac{B^2d\log(14d/\delta)}
{n_0+n_1}
}
-
\frac{4CL_sL_g}{\pi_1\sqrt{2\mu L_\alpha}}
\left(\alpha^*-(1-\alpha^*)\frac{n_1}{n_0}\right)
\epsilon_{\rm syn}.
\end{aligned}
\]
\end{theorem}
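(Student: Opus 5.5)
We prove the balanced-accuracy statement in full; the $\F_1$ statement follows the same route with the constants changed. The plan is to split
\[
\BA^*(s_{\widehat\theta_{\rm aug}})-\BA^*(s_{\widehat\theta_{\rm raw}})
=
\bigl[\BA^*(s_{\widehat\theta_{\rm aug}})-\BA^*(\Lambda_{\eta^*})\bigr]
+
\bigl[\BA^*(\Lambda_{\eta^*})-\BA^*(s_{\widehat\theta_{\rm raw}})\bigr].
\]
We will show two things: (i) the raw empirical score has best-threshold BA at most $A$, up to an estimation error; (ii) the augmented empirical score is within an approximation error, an estimation error, and a synthetic error of the oracle $\BA^*(\Lambda_{\eta^*})$. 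Subtracting the two bounds gives the result. A union bound over the events used in both steps produces the $\log(14d/\delta)$ factor.

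\emph{Step 1 (the raw population minimizer is nearly constant).}
Since $s_0\equiv u_0\in\mathcal S$, optimality of $\theta_{\rm raw}^*$ for $\cR_{\alpha_{\rm raw}}$ gives
\[
(1-\alpha_{\rm raw})\,\mathbb E_{P_0}\bigl[\ell(s^*_{\rm raw},0)-\ell(u_0,0)\bigr]
\le
\alpha_{\rm raw}\bigl[\mathbb E_{P_1}\ell(u_0,1)-\mathbb E_{P_1}\ell(s^*_{\rm raw},1)\bigr]
\le \alpha_{\rm raw}D_1 .
\]
The quadratic growth of $\ell(\cdot,0)$ then yields $\mathbb E_{P_0}(s^*_{\rm raw}-u_0)^2\le \frac{\alpha_{\rm raw}}{(1-\alpha_{\rm raw})m_0}D_1=:\rho^2$.

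Next we convert this into a BA bound. By Cauchy--Schwarz with $dP_1=L\,dP_0$ and $L\le B_{\mathrm{LR}}$, we also get $\mathbb E_{P_1}|s^*_{\rm raw}-u_0|\le\sqrt{B_{\mathrm{LR}}}\,\rho$. For any threshold $\tau$, the quantity $\BA(s,\tau)-\tfrac12=\tfrac12[P_1(s\ge\tau)-P_0(s\ge\tau)]$ is controlled by how far $s$ moves mass across $\tau$. Markov's inequality at scale $t$, together with the $B_s$-Lipschitz CDF $F_{0,s}$, gives
\[
\BA(s,\tau)-\tfrac12 \lesssim B_s t+\frac{(1+\sqrt{B_{\mathrm{LR}}})\rho}{t}.
\]
Optimizing over $t$ gives $\BA^*(s^*_{\rm raw})-\tfrac12\lesssim\sqrt{B_s(1+\sqrt{B_{\mathrm{LR}}})\rho}$. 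The severity condition is exactly what makes this at most $A-\tfrac12$; this accounts for the fourth power in the hypothesis.

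We then pass to $\widehat\theta_{\rm raw}$. The ERM concentration used in Theorem~\ref{thm:parametric-ERM-best-BA} gives $\|\widehat\theta_{\rm raw}-\theta^*_{\rm raw}\|\lesssim \frac{c}{\lambda}\sqrt{B^2d\log(\cdot/\delta)/(n_0+n_1)}$. Combined with $L_s$-Lipschitzness and best-threshold anti-concentration, this changes $\BA^*$ by at most $L_s$ times that quantity. Hence $\BA^*(\Lambda_{\eta^*})-\BA^*(s_{\widehat\theta_{\rm raw}})\ge \BA^*(\Lambda_{\eta^*})-A-(\text{estimation term})$.

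\emph{Step 2 (augmentation at $\alpha^*$ approximates the Bayes score).}
Let $s^*_{\alpha^*}$ be the restricted minimizer. Since $\eta_{\alpha^*}$ minimizes pointwise, the quadratic calibration condition gives
\[
m_\ell\,\mathbb E_{\pi}\bigl(s^*_{\alpha^*}-\eta_{\alpha^*}\bigr)^2\le\epsilon^*,
\]
where $\mathbb E_\pi$ is taken under the $\alpha^*$-mixture. Thresholding $\eta_{\alpha^*}$ at $g_{\alpha^*}(1/2)$ is BA-optimal, because it is a monotone image of $\Lambda$. We compare the BA-optimal set $\{\eta_{\alpha^*}\ge g_{\alpha^*}(1/2)\}$ with the corresponding threshold set of $s^*_{\alpha^*}$. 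The standard plug-in inequality controls the BA loss by $\mathbb E|\Lambda-\tfrac12|$ over the disagreement region. In turn, this is bounded by $\mathbb E|s-\eta_{\alpha^*}|$ divided by the minimal slope of $g_{\alpha^*}$, which is of order $\alpha^*(1-\alpha^*)$. We then change measure from the $\alpha^*$-mixture to $P_0$ and $P_1$ using the density factor $(1-\alpha^*+\alpha^*B_{\mathrm{LR}})/(1-\alpha^*)$. Cauchy--Schwarz then gives the stated $\sqrt{\epsilon^*/m_\ell}$ term with its constant.

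Finally, we pass from $s^*_{\alpha^*}$ to $s_{\widehat\theta_{\rm aug}}$. This is done exactly as in Theorem~\ref{thm:parametric-ERM-best-BA}: the estimation error, plus the synthetic bias $\|\theta^*_{\rm aug}-\theta^*_{\alpha^*}\|\lesssim L_g\,\mathrm{weight}\cdot\epsilon_{\syn}/\sqrt{2\mu L_\alpha}$. The synthetic weight is $\tilde n^*/(n_0+n_1+\tilde n^*)=\alpha^*-(1-\alpha^*)n_1/n_0$.

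\emph{Step 3 ($\F_1$).} The argument repeats Steps 1 and 2 with the following changes.
\begin{itemize}
\item The trivial baseline is now the constant score, whose best-threshold $\F_1$ is $b_\pi$.
\item Perturbation bounds for $\F_1$ carry the factor $1/\pi_1$, as in Theorem~\ref{thm:parametric-ERM-F1}.
\item The oracle-set comparison uses the $\F_1$-optimal likelihood-ratio threshold. Its Lipschitz constant in $(P_1(B),P_0(B))$ introduces the squared factor $(1-\alpha^*+\alpha^*B_{\mathrm{LR}})^2$.
\end{itemize}

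\emph{Main obstacle.} The hardest step is Step 2's conversion of the $L_2$ closeness of $s^*_{\alpha^*}$ to $\eta_{\alpha^*}$ into a best-threshold regret with the explicit constant. This requires choosing the right threshold for the restricted score and a careful change of measure. I would first try the plug-in bound $\BA(B^*)-\BA(B)\le \frac12\int_{B\triangle B^*}|L-1|\,dP_0$. Then I would dominate $|L-1|$ on the disagreement set by $|s-\eta_{\alpha^*}|$, using the inverse Lipschitz constant of $g_{\alpha^*}$.
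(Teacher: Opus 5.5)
The gap is in Step~3. The $\F_1$ statement does not follow ``by the same route with the constants changed,'' and the mechanism you name cannot produce the stated approximation term.

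Unlike $\BA(B)$, the set functional $\F_1(B)=N(B)/D(B)$ is a ratio, where $N(B)=2\pi_1P_1(B)$ and $D(B)=\pi_1\{1+P_1(B)\}+\pi_0P_0(B)$. So the plug-in identity $\BA(B^*)-\BA(B)=\frac12\int_{B\triangle B^*}|L-1|\,dP_0$ has no direct analogue.

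The Lipschitz constant of $\F_1$ in $(P_1(B),P_0(B))$ is $2/\pi_1$ and contains no $B_{\mathrm{LR}}$. Using it gives only $\F_1^*(\Lambda_{\eta^*})-\F_1(B)\le\frac{2}{\pi_1}(P_0+P_1)(B\triangle B_F^*)$, which is an \emph{unweighted} disagreement mass. $L^2$-closeness of $s_{\alpha^*}$ to $\eta_{\alpha^*}$ does not control that mass at rate $\sqrt{\epsilon^*/m_\ell}$ without a margin condition at the $\F_1$ threshold. With the assumed anti-concentration of $s_\theta$, this route yields only $(\epsilon^*/m_\ell)^{1/3}$. It also leaves a factor $1/\pi_1$ that the stated bound does not have.

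The missing idea is to linearize at the optimal value $F^*:=\F_1^*(\Lambda_{\eta^*})$. Set $\psi=\pi_1(2-F^*)(L-\ell_F)$ with $\ell_F=\pi_0F^*/\{\pi_1(2-F^*)\}$. By Lemma~\ref{thm:likelihood-ratio-optimality}, $N(B)-F^*D(B)=\int_B\psi\,dP_0-\pi_1F^*$ is nonpositive with supremum $0$. Hence $B_F^*=\{L\ge\ell_F\}$ attains it, and
\[
F^*-\F_1(B)=\frac{1}{D(B)}\int_{B\triangle B_F^*}|\psi|\,dP_0\le 2\int_{B\triangle B_F^*}|L-\ell_F|\,dP_0,
\]
where the $\pi_1$ cancels against $D(B)\ge\pi_1$. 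Now take $B=\{s_{\alpha^*}\ge t_F^*\}$ with $t_F^*=g_{\alpha^*}(\ell_F/(1+\ell_F))$. Use $|L-\ell_F|=\frac{1-\alpha^*}{\alpha^*}\frac{|\eta_{\alpha^*}-t_F^*|}{(1-\eta_{\alpha^*})(1-t_F^*)}$ together with $|\eta_{\alpha^*}-t_F^*|\le|s_{\alpha^*}-\eta_{\alpha^*}|$ on the disagreement set. The square $(1-\alpha^*+\alpha^*B_{\mathrm{LR}})^2$ comes from bounding \emph{both} denominators. The second bound requires $\ell_F\le B_{\mathrm{LR}}$; otherwise $B_F^*$ is $P_0$-null and $\F_1(B_F^*)=0<b_\pi$, a contradiction.

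The raw side of the $\F_1$ claim needs a similar $\pi_1$-free inequality. Comparing with the constant score through the $1/\pi_1$ Lipschitz bound would give a much stronger, $\pi_1$-dependent severity condition than the stated one. Write $R=P_1(s\ge\tau)$ and $Q=P_0(s\ge\tau)$. Then $\F_1(s,\tau)-b_\pi=2\pi_1(R-\pi_1-\pi_0Q)/[\{\pi_1(1+R)+\pi_0Q\}(1+\pi_1)]$, and $R-\pi_1-\pi_0Q\le R-Q$, so $\F_1(s,\tau)-b_\pi\le 2(R-Q)_+$. Your Step~1 bound on $R-Q$ then gives $\F_1^*(s^*_{\rm raw})\le b_\pi+4\sqrt{B_s(1+\sqrt{B_{\mathrm{LR}}})\rho}$. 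This is exactly what the stated severity condition, with its factor $4$, makes smaller than $A$.

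Your balanced-accuracy argument is otherwise the paper's route. The one exception is that Step~2 must be done pointwise to obtain the stated constant. Dividing by a global minimal slope of $g_{\alpha^*}$ and then applying a separate global density bound does not reproduce it, because the two extremes sit at opposite ends of the range of $\Lambda$. Instead, use $|L-1|=|\eta_{\alpha^*}-\alpha^*|/\{\alpha^*(1-\eta_{\alpha^*})\}$ and $1/(1-\eta_{\alpha^*})\le(1-\alpha^*+\alpha^*B_{\mathrm{LR}})/(1-\alpha^*)$. Then apply $\mathbb E_{P_0}\le(1-\alpha^*)^{-1}\mathbb E_{M^*}$ with $M^*=(1-\alpha^*)P_0+\alpha^*P_1$ before Cauchy--Schwarz.
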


Theorems~\ref{thm:misspecified-auc} and~\ref{thm:best-BA-improvement} show that
synthetic augmentation can yield genuine improvements in ranking and
threshold-based performance under model misspecification. The intuition is as
follows. Fix a target performance level $A$. When the imbalance is sufficiently
severe, the raw objective becomes strongly majority-favoring, so its population
minimizer cannot pay enough majority-class risk cost to produce a score with
high discrimination. Consequently, the raw population score has AUROC, AUPRC,
balanced accuracy, or $\F_1$ score bounded by $A$. Augmentation instead moves
the effective prior from $\alpha_{\rm raw}$ to a larger $\alpha$, changing the
restricted target from the sharply transformed $\eta_{\alpha_{\rm raw}}$ to a
smoother or otherwise better-approximated $\eta_\alpha$. When $\alpha$ is
chosen so that $\epsilon_\alpha$ is small, optimally
$\epsilon_\alpha=\epsilon^*$, the augmented estimator stays close to a score
preserving the likelihood-ratio ordering. The improvement lower bounds reflect
exactly this tradeoff: the oracle performance gap is reduced only by the
approximation error, finite-sample estimation error, and synthetic
distributional error. As the sample size grows and the synthetic distribution
becomes accurate, the latter two vanish; if the approximation term is also
small enough, the bound remains positive, guaranteeing improvement over the raw
imbalanced estimator.

\begin{remark}[Model Expressiveness Explains Mixed AUROC Gains from Synthetic Augmentation]
The literature reviewed in the introduction reports both positive and negative
findings on whether synthetic augmentation improves AUROC. For example,
\cite{van2022harm} found no AUROC improvement for logistic regression. This is
consistent with the regime in which the score model is too restrictive to
represent an improved ranking: logistic regression has limited flexibility in
complex problems, so both $\epsilon_{\rm raw}$ and $\epsilon^*$ can be large,
and the improvement lower bound need not be positive. By contrast, studies
reporting gains typically use more expressive models such as neural networks or
boosting, matching the regime in which the model class is rich enough to exploit
the rebalanced objective while augmentation corrects the ranking errors induced
by the raw prior. Our theory thus explains these mixed findings: AUROC gains
from synthetic augmentation depend not only on the quality of the generated
data, but also on whether the model class can exploit the information
introduced by augmentation.
\end{remark}

\section{Simulation Studies}\label{sec:simulation}

We conduct simulation studies that illustrate the theoretical findings of Sections~\ref{sec:synthetic-augmentation} and~\ref{sec:misspecified}. The simulations are organized into three regimes: a well-specified setting in which the fitted score family matches the data-generating mechanism exactly (Section~\ref{sec:mle}), a restricted-model setting in which logistic regression is fitted to data-generating distributions with varying degrees of compatibility with a linear score, and a flexible-model setting in which a multilayer perceptron (MLP) is fitted to two of those same distributions. In each study, all reported metrics are evaluated on an independent test set of $n_{\rm test}$ observations drawn from the original population distribution $(P_0,P_1)$, and synthetic samples never enter the test set and are used only to modify the training objective. For a training set with $n_0$ majority and $n_1$ minority observations, we define the synthetic proportion
\[
    q \;=\; \frac{n_{\rm syn}}{n_0-n_1} \;\in\; [0,1],
\]
the fraction of the majority-minority size gap closed by adding $n_{\rm syn}$ synthetic minority samples to training. The grid of $q$ values is study-specific and is stated within each subsection, rather than fixed across all three studies. All results are averaged over $100$ independent replications. Unless noted otherwise, the lines represent the across-replication mean, and shaded bands show the mean $\pm$ one standard error. We compare raw training (no augmentation) with up to three augmentation strategies: an oracle benchmark that draws additional minority samples from the true $P_1$ serving as an ideal reference for distribution-matched augmentation, bootstrap resampling of the observed minority training data, and SMOTE \citep{chawla2002smote} which creates synthetic samples by linear interpolation between neighboring minority observations.

\noindent\textbf{Threshold tuning protocol.} When validation-tuned decision-level metrics are reported (Sections~\ref{sec:sim-well}, \ref{sec:sim-limited}, and~\ref{sec:sim-improve}), the training set is split into a fitting set and a validation set; oracle, bootstrap, and SMOTE samples are generated only from the minority observations in the fitting set, and each classifier is trained on the fitting set or its augmented version. AUROC and AUPRC are computed directly from test scores without threshold tuning, since they depend only on the score ordering. For balanced accuracy and $\F_1$ score, we use two separate thresholds maximizing the metrics on the validation set to estimate the best-threshold performance, and the metrics are evaluated at these thresholds on the independent test set. Section~\ref{sec:sim-limited} additionally reports balanced accuracy and $\F_1$ score at the fixed threshold $0.5$ alongside their validation-tuned threshold counterparts, so that the fixed-threshold values there serve as a diagnostic for threshold tuning rather than as the best-threshold metric studied in Section~\ref{sec:misspecified}.

\subsection{Well-Specified Case}\label{sec:sim-well}

We first consider the well-specified setting of Section~\ref{sec:synthetic-augmentation}, in which the fitted score family closely matches the data-generating mechanism. The predictor $X \in \mathbb{R}^{p}$ has $p = 10$ coordinates.
The first five coordinates of $X$ are Gaussian with a nonzero mean vector $0.5 \mathbf{1}_5$ and an AR(1) covariance structure, $\Sigma_{jk} = \rho^{|j-k|}$ with $\rho = 0.5$ (the same AR(1) coefficient used throughout this section), and the last five are multivariate Student-$t$ with $\nu = 5$ degrees of freedom, a nonzero location vector $0.5 \mathbf{1}_5$, and a comparable AR(1) scale matrix of the same form. The minority class prior is $\pi_1 = 0.05$, with $n_{\rm train} = 10{,}000$ training samples and $n_{\rm test} = 5{,}000$ test samples. We consider three well-specified models, each fitted with the estimator matching its generative mechanism:
\begin{itemize}
    \item Logistic: $X$ is drawn from the mixed marginal distribution above, and $Y \mid X = x$ is then generated as Bernoulli with success probability $s_{\eta^*}(x) = (1+\exp(-\eta^{*\top}x))^{-1}$. The resulting data are fitted by logistic regression.
    \item Probit: $X$ is drawn from the same marginal distribution, and $Y \mid X = x$ is then generated as Bernoulli with success probability $\Phi(\eta^{*\top}x)$, where $\Phi$ is the cumulative distribution function of a standard normal distribution. The resulting data are fitted by probit regression.
    \item Generative likelihood-ratio MLE (Section \ref{sec:mle}): a class-conditional likelihood-ratio MLE benchmark, distinct from the conditional MLE underlying the logistic and probit rows above. Here $Y$ is generated first with prior $\pi_1 = 0.05$, and $X \mid Y$ is then generated from class-conditional Gaussian and multivariate Student-$t$ components. For $Y=0$, both blocks have mean location vector $0.5 \mathbf{1}_5$. For $Y=1$, the Gaussian block has mean vector $0.85 \mathbf{1}_5$ and the Student-$t$ block has location vector $0.75 \mathbf{1}_5$. The classifier is the likelihood-ratio statistic obtained from class-conditional densities estimated by maximum likelihood, rather than a fitted linear index.
\end{itemize}
For the logistic and probit rows, the intercept component of $\eta^*$ is calibrated by Monte Carlo root-finding so that the marginal minority prevalence $\mathbb P(Y=1) \approx \pi_1 = 0.05$, matching the imbalance level used in the other studies.
We compare raw training with bootstrap and SMOTE augmentation as synthetic proportion $q$ varies between $0$ and $1$, reporting AUROC, AUPRC, and validation-tuned best-threshold balanced accuracy and $\F_1$ score (threshold tuning protocol above).

\begin{figure}[htbp]
    \centering
    \includegraphics[width=\linewidth]{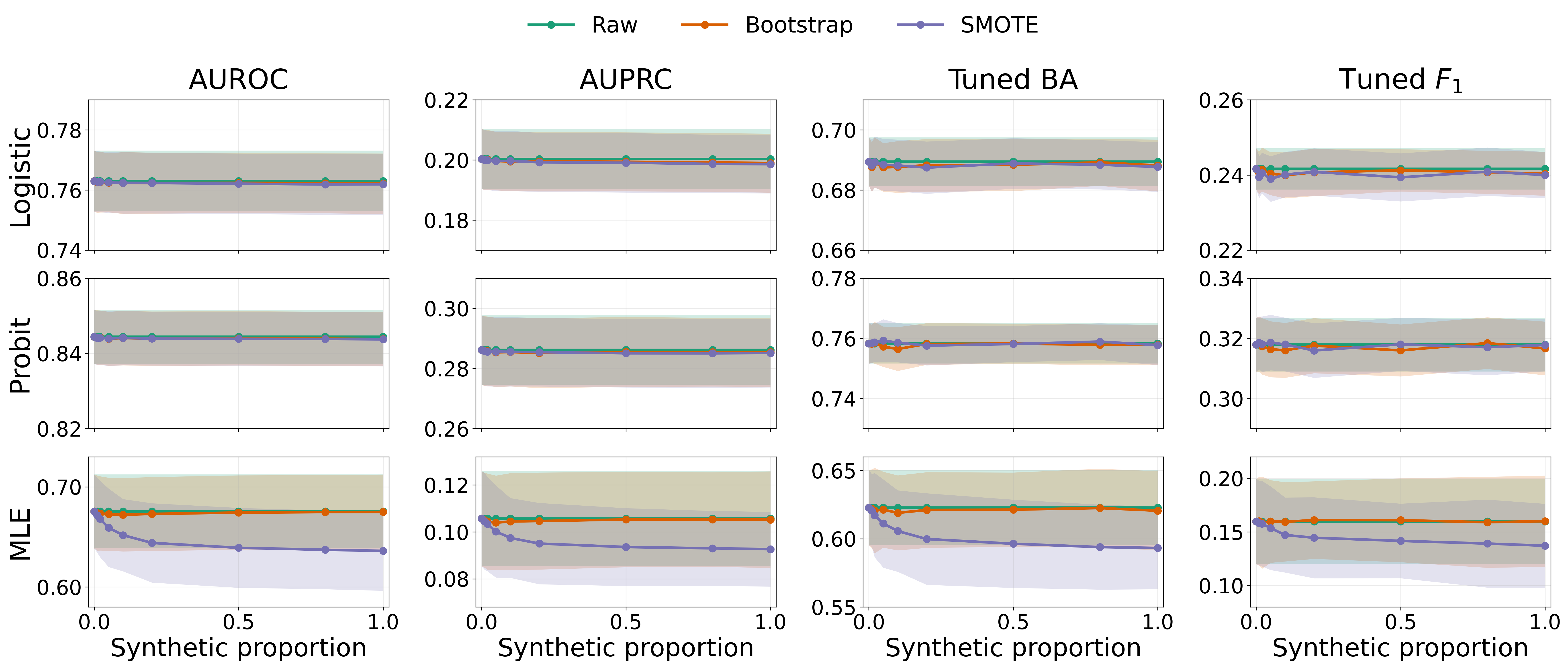}
    \caption{AUROC, AUPRC, validation-tuned best-threshold balanced accuracy and $\F_1$ score (columns), as a function of the synthetic proportion, for three well-specified models (rows): logistic, probit, and generative likelihood-ratio MLE.}
    \label{fig_well_specified_3by4}
\end{figure}

Figure~\ref{fig_well_specified_3by4} shows that, under this well-specified model setting, augmentation provides no meaningful benefit and may even be harmful in some cases. In the logistic and probit results, raw, bootstrap, and SMOTE are visually indistinguishable across the full range of synthetic proportions and across all four metrics, consistent with the prediction that augmentation does not alter the population-optimal ranking when the score family is well-specified. For the generative likelihood-ratio MLE, bootstrap remains comparable to raw, but SMOTE noticeably underperforms both raw training and bootstrap as the synthetic proportion increases, lowering AUROC, AUPRC, and both tuned metrics. Because SMOTE constructs synthetic points by linear interpolation rather than by sampling from the true class-conditional generative distributions, it introduces a synthetic distributional mismatch $P_{\rm syn} \neq P_1$ that adds bias without any compensating gain in ranking quality, exactly the failure mode anticipated in Remark \ref{remark-asymp-limitdata}. These results are consistent with the theoretical prediction that, under well-specified models, synthetic augmentation provides no systematic population-level improvement in score ordering, except for possible finite-sample variance effects and possible degradation from synthetic distributional mismatch. This degradation is better interpreted as evidence of synthetic distributional mismatch rather than as a consequence of model-class misspecification.

\subsection{Restricted Logistic Regression across Linear and Nonlinear Likelihood Ratios}\label{sec:sim-limited}

We next examine logistic regression as a restricted model class with linear score functions, fitted to four data-generating distributions chosen to span a range of true likelihood-ratio shapes. With shared covariance across classes, Gaussian and Gaussian AR(1) class-conditional distributions have a likelihood-ratio statistic that is exactly linear in $x$. Logistic regression is therefore effectively well specified for these two settings, and we include them as linear-likelihood-ratio controls. The $t_5$ setting has a true likelihood ratio that depends nonlinearly on $x$ through the quadratic form in the multivariate Student-$t$ density, so logistic regression is only mildly misspecified there. The fourth distribution, Mixture, serves as the main restricted-model misspecification example in this subsection. Its minority class is a mixture of a multivariate $t_5$ component and a Gaussian AR(1) component with distinct means and covariances, so its true likelihood ratio is genuinely nonlinear and not well approximated by any linear score.

For a misspecified model with a restricted score class, such as linear logistic regression, adding synthetic data does not alter the inherent linearity of the model class. Consequently, the approximation errors $\epsilon^*$ and $\epsilon_{\raw}$ defined in Section~\ref{sec:misspecified} may both be large. In this regime, the lower bounds in Section~\ref{sec:misspecified} need not be positive, and therefore no improvement guarantee can be established.

We use the same general configuration as above, with $p = 10$ predictors,
$n_{\rm train} = 10{,}000$, $n_{\rm test} = 5{,}000$, and $\pi_1 = 0.05$. The synthetic proportion is varied between $0$ and $1$. The four data-generating distributions are: Gaussian, with $P_0 = \mathcal{N}(\mathbf{0}, I_p)$ and $P_1 = \mathcal{N}(\mu_1 \mathbf{1}, I_p)$, $\mu_1 = 0.5$; $t_5$, with $P_0 = t_5(\mathbf{0}, I_p)$ and $P_1 = t_5(\mu_1 \mathbf{1}, I_p)$; Gaussian AR(1), with $P_0 = \mathcal{N}(\mathbf{0}, \Sigma)$ and $P_1 = \mathcal{N}(\mu_1 \mathbf{1}, \Sigma)$, where $\Sigma_{jk} = \rho^{|j-k|}$ with $\rho = 0.5$; and a mixture minority distribution,
\[
    P_1 = \pi_A \, P_A + (1-\pi_A)\, P_B, \qquad \pi_A = 0.5,
\]
where $P_A$ is multivariate $t_5$ with mean $\mu_A = (0.8,\ 0.6,\ -0.7,\ 0.5,\ 0.4,\ -0.5,\ 0.3,\ 0.4,\ -0.3,\ 0.2)^\top$ and diagonal scale $\Sigma_A = \mathrm{diag}(1.4,\ 1.2,\ 1.5,\ 1.0,\ 1.3,\ 1.1,\ 0.9,\ 1.2,\ 1.0,\ 0.8)$, and $P_B$ is Gaussian with mean $\mu_B = (-0.6,\ 0.7,\ 0.5,\ -0.8,\ 0.6,\ 0.4,\ -0.5,\ 0.3,\ 0.5,\ -0.4)^\top$ and AR(1)-type covariance $(\Sigma_B)_{jk} = 1.3 \times 0.6^{|j-k|}$. We compare raw training with oracle, bootstrap, and SMOTE augmentation.

\begin{figure}
    \centering
    \includegraphics[width=\linewidth]{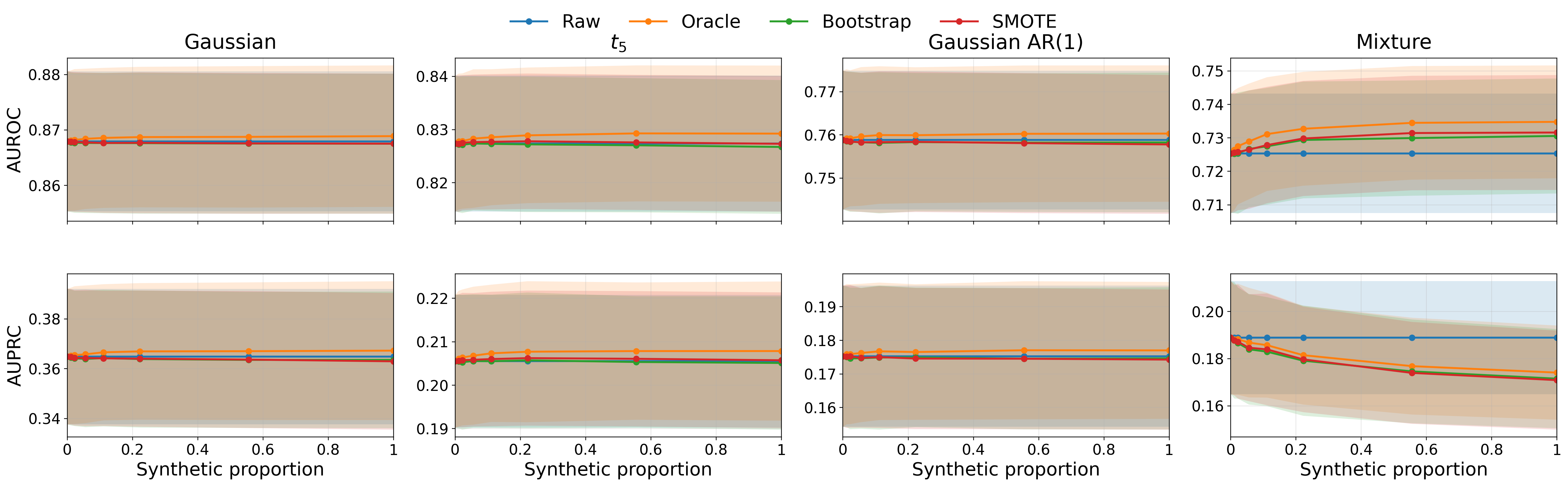}
    \caption{AUROC and AUPRC as functions of the synthetic proportion, for logistic regression applied to four data-generating distributions: Gaussian, $t_5$, Gaussian AR(1), and Mixture.}
    \label{fig:four_dist_auc}
\end{figure}

Figure~\ref{fig:four_dist_auc} confirms this distinction visually. For the two linear-likelihood-ratio controls, the AUROC and AUPRC curves for raw, oracle, bootstrap, and SMOTE are nearly indistinguishable across the full range of synthetic proportions, well within the shaded variability bands, and the mildly nonlinear $t_5$ case shows the same pattern. This is consistent with logistic regression already attaining close to the best ordering it can represent in these three settings, leaving augmentation essentially no room to help or hurt. However, in the Mixture setting, as the synthetic proportion grows, augmentation visibly improves AUROC while degrading AUPRC. This metric-specific, non-uniform pattern is consistent with the general tradeoff developed in Section~\ref{sec:misspecified}: when the model class is overly restricted, both the raw and augmented target score functions may be difficult to approximate, and no general improvement guarantee can be established.

Figure~\ref{fig_four_dist_ba_f1_tuned} complements the ranking-metric results by comparing validation-tuned best-threshold balanced accuracy and $\F_1$ score. For the validation-tuned best-threshold metrics, adding synthetic data provides little to no improvement across the four data-generating distributions. Performance at the fixed $0.5$ threshold is also reported for comparison. For the fixed threshold, synthetic augmentation can substantially improve performance because it effectively rebalances the training data relative to that fixed decision threshold. In addition, validation-threshold tuning of the raw score can achieve performance comparable to the best fixed-threshold synthetic-augmented classifier obtained by choosing an appropriate synthetic proportion. This comparison suggests that the large fixed-threshold gains from synthetic augmentation mainly reflect threshold recalibration rather than genuine improvements in score ordering.

\begin{figure}[htbp]
    \centering
    \includegraphics[width=\linewidth]{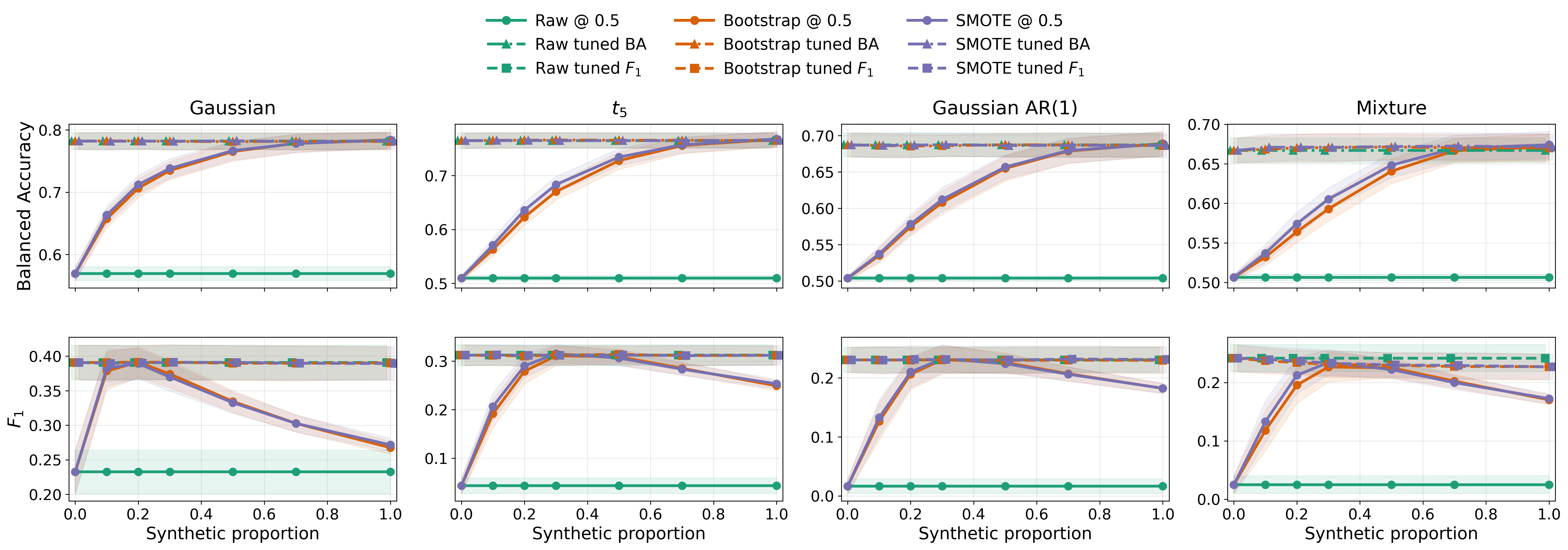}
    \caption{Balanced accuracy (top) and $\F_1$ score (bottom) at the fixed $0.5$ threshold and at validation-tuned thresholds, as a function of the synthetic proportion, for logistic regression applied to four data-generating distributions: Gaussian, $t_5$, Gaussian AR(1), and Mixture.}
    \label{fig_four_dist_ba_f1_tuned}
\end{figure}

\subsection{Misspecified Case with Complex Classification Model Class}\label{sec:sim-improve}

Finally, we consider a flexible model class, a multilayer perceptron (MLP) with two hidden layers of widths $(64, 32)$, ReLU activations, Adam optimization, early stopping on a validation split, and standardized inputs, applied to the $t_5$ distribution and Gaussian AR(1) distribution with $\rho = 0.5$, as defined above. Although the Gaussian AR(1) distribution has a linear likelihood-ratio structure, the finite-sample MLP trained under severe imbalance can still behave as an effectively restricted learner. Within this flexible MLP class, the raw imbalanced training objective can select a poor score ordering, even though the population MLP class could represent a better one. Augmentation changes the effective class balance seen during training and can correct this objective-induced ranking mismatch. We compare raw training with oracle, bootstrap, and SMOTE augmentation, reporting AUROC and AUPRC together with decision-level metrics evaluated both at a validation-tuned best-threshold and a fixed threshold of $0.5$. The synthetic proportion is varied between $0$ and $1$.

\begin{figure}[htbp]
    \centering
    \includegraphics[width=0.8\linewidth]{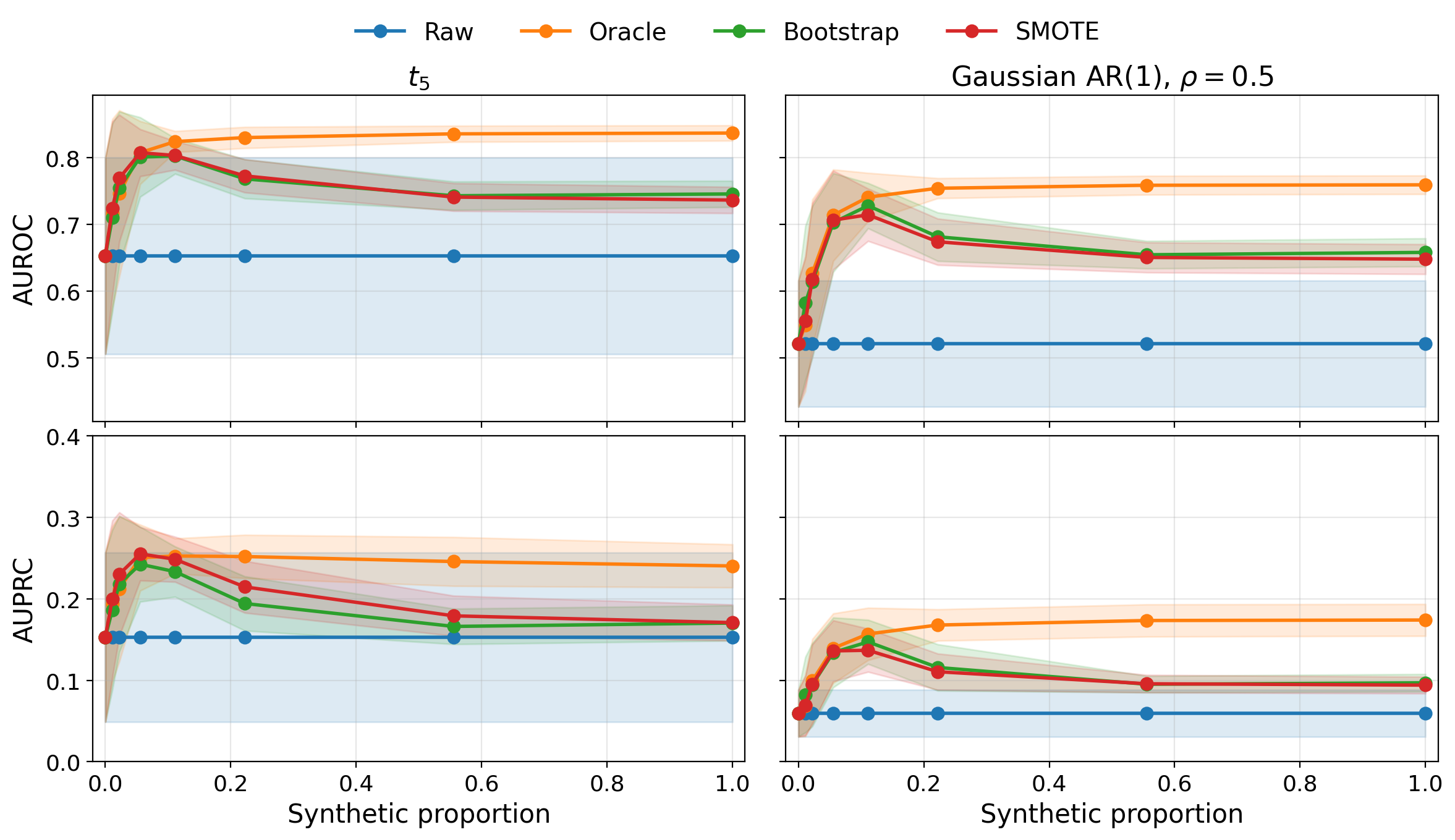}
    \caption{AUROC (top) and AUPRC (bottom) as a function of the synthetic proportion, for an MLP applied to the $t_5$ distribution (left) and Gaussian AR(1) distribution with $\rho = 0.5$ (right).}
    \label{fig_misspecified_improvement_auc}
\end{figure}

Figure~\ref{fig_misspecified_improvement_auc} addresses the score-ordering metrics AUROC and AUPRC. Oracle augmentation yields substantial and sustained improvements over raw training in AUROC and AUPRC for both distributions. Bootstrap and SMOTE also improve over raw, but nonmonotonically: their gains peak at a moderate synthetic proportion and decline as the proportion increases further. The decline is more pronounced for AUPRC, where the moderate-proportion gain largely fades at the largest synthetic proportions, leaving performance close to the raw baseline. This pattern is consistent with Theorem~\ref{thm:misspecified-auc}: moderate, high-quality augmentation corrects the objective-induced ranking mismatch, but as the proportion of imperfect synthetic data grows, the synthetic distributional error $\epsilon_{\rm syn}$ increasingly contaminates training and erodes the gain, whereas the oracle benchmark sampling from the true $P_1$ avoids this bias and maintains its advantage throughout.

\begin{figure}[htbp]
    \centering
    \includegraphics[width=0.8\linewidth]{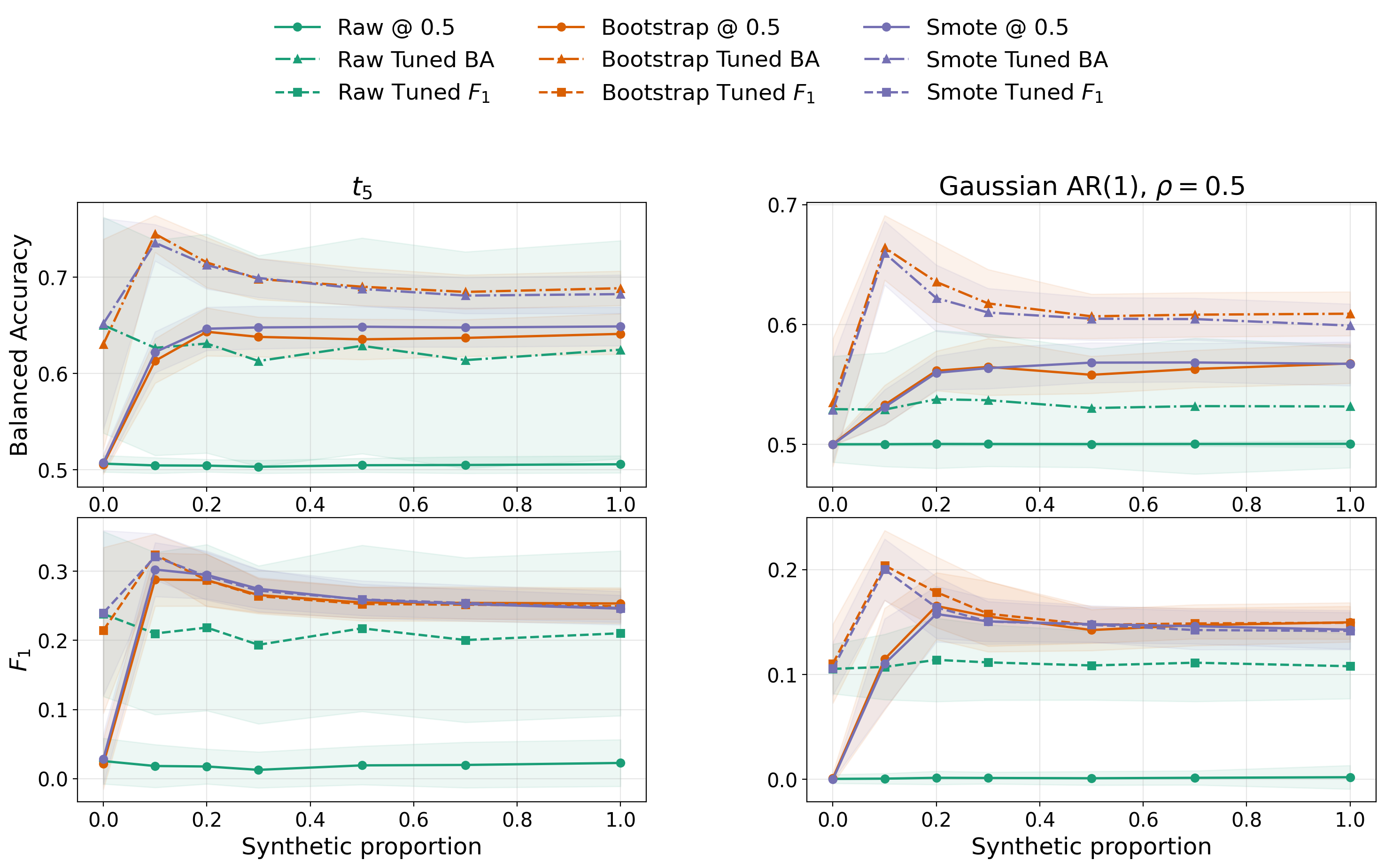}
    \caption{Balanced accuracy (top) and $\F_1$ score (bottom) at a fixed $0.5$ threshold and at a validation-tuned threshold, as a function of the synthetic proportion, for an MLP applied to the $t_5$ distribution (left) and the Gaussian AR(1) distribution with $\rho = 0.5$ (right). }
    \label{fig_misspecified_improvement_ba_f1}
\end{figure}

Figure~\ref{fig_misspecified_improvement_ba_f1} addresses the validation-tuned best-threshold metrics. It shows that augmentation can improve balanced accuracy and $\F_1$ score beyond what threshold tuning of the raw score alone achieves. At the fixed $0.5$ threshold, raw training performs poorly. Tuning the threshold of the raw score substantially improves its balanced accuracy and $\F_1$ score, and a visible gap to the augmented methods remains. Bootstrap and SMOTE generally outperform the tuned raw score over a broad range of synthetic proportions, with the clearest gains at moderate augmentation levels. These gains narrow somewhat at larger synthetic proportions but stay positive throughout, again reflecting the nonmonotone tradeoff between correcting the ranking mismatch at moderate augmentation and accumulating synthetic distributional error at high augmentation. This is consistent with Theorem~\ref{thm:best-BA-improvement}: when the effective model class is rich enough to represent an improved ordering and the imbalance is severe, augmentation corrects the objective-induced ranking mismatch itself, so that even after validation-based threshold tuning, the augmented score can yield better decision-level performance than the raw score, an improvement that threshold tuning of the raw score alone cannot fully replicate.

Together, the three studies trace out the regimes identified by the theory. Under well-specification, augmentation gives little or no systematic benefit and can hurt performance when $P_{\rm syn} \neq P_1$ (Section~\ref{sec:synthetic-augmentation}). Under a restricted linear-score model, augmentation has negligible effects on score ordering when the model is effectively well-specified, and has metric-specific, non-uniform effects when it is genuinely misspecified, while the large apparent gains at a fixed decision threshold are mainly threshold recalibration rather than improved ordering. Under a flexible learner that behaves as effectively restricted under severe imbalance, augmentation can improve both score ordering and validation-tuned decision metrics beyond what threshold tuning of the raw score alone achieves, with nonmonotone gains that are clearest at moderate augmentation levels and diminish as synthetic distributional error accumulates at larger synthetic proportions (Section~\ref{sec:misspecified}).

\section{Discussion}\label{sec:discussion}

This paper studies when synthetic minority augmentation can improve score-based imbalanced classification. Our results show that the effect of augmentation depends strongly on the relationship between the learned score, the likelihood-ratio ordering, and the synthetic minority distribution. Under well-specified score models, the raw estimator already targets the population-optimal likelihood-ratio ordering, so augmentation cannot provide a fundamental improvement for the score-based metrics considered here beyond possible finite-sample variance reduction. Under misspecification, however, high-quality synthetic data augmentation can play a different role: by changing the effective training objective, it can alter the learned score ordering and correct ranking errors that threshold tuning alone cannot repair.

Several limitations remain. First, our theory focuses on binary classification and score-based evaluation metrics such as AUROC, AUPRC, best-threshold balanced accuracy, and best-threshold \(\F_1\) score. Extending the framework to multiclass, multilabel, or structured-output classification would require new notions of oracle ordering and ranking mismatch. 

Second, our analysis treats the synthetic distribution \(P_{\rm syn}\) through distributional discrepancy measures such as Wasserstein distance. This abstraction is useful for general theory, but it does not fully capture the algorithm-specific behavior of modern generators such as diffusion models, GANs, or large language models. A sharper theory would connect generator training, minority sample size, and augmentation quality more directly.

Finally, the improvement guarantees under misspecification are sufficient conditions rather than necessary ones. The ranking-mismatch functionals identify regimes where augmentation can provably improve over the raw score or threshold tuning, but they may be difficult to estimate accurately in practice because the likelihood-ratio score is unknown. Developing data-driven diagnostics for detecting correctable ranking mismatch would make the theory more practically actionable. 

\bibliography{bib}
\bibliographystyle{abbrvnat}

\clearpage
%\appendix
\phantomsection
\pdfbookmark[1]{Supplementary Material}{supplement-title}

\begin{center}
{\Large\bfseries Supplementary Materials for ``When Does Synthetic Data Augmentation Improve Score-Based Imbalanced Classification?"}

\vspace{1em}

{\normalsize
Zhengchi Ma$^{1}$, Pengfei Lyu$^{2}$, and Anru R. Zhang$^{2,3}$\par}

\vspace{0.5em}

{\small
$^{1}$Department of Electrical \& Computer Engineering, Duke University\par
$^{2}$Department of Biostatistics \& Bioinformatics, Duke University\par
$^{3}$Department of Computer Science, Duke University\par}
\end{center}
\vspace{2em}

% switch to supplement style
\setcounter{section}{0}
\renewcommand{\thesection}{S\arabic{section}}
\setcounter{equation}{0}
\renewcommand{\theequation}{S\arabic{equation}}
\setcounter{theorem}{0}
\renewcommand{\thetheorem}{S\arabic{theorem}}
\setcounter{lemma}{0}
\renewcommand{\thelemma}{S\arabic{lemma}}
\setcounter{corollary}{0}
\renewcommand{\thecorollary}{S\arabic{corollary}}
\setcounter{proposition}{0}
\renewcommand{\theproposition}{S\arabic{proposition}}
\setcounter{definition}{0}
\renewcommand{\thedefinition}{S\arabic{definition}}
\setcounter{figure}{0}
\renewcommand{\thefigure}{S\arabic{figure}}

In the supplementary material, we provide the detailed proofs for the theoretical results in the paper.

\section{Proofs of Theorems}
\begin{proof}[Proof of Theorem \ref{thm:AUC-invariant}]
We prove the statements for AUROC and AUPRC separately.

\medskip
\noindent \textbf{AUROC:}

    First, we show that AUROC is invariant under strictly increasing transformations. If $h:\mathbb{R}\to\mathbb{R}$ is strictly increasing, then for any score $s$
    \begin{align*}
        \left\{s(x_1)>s(x_0)\right\} \Leftrightarrow\left\{h(s(x_1))>h(s(x_0))\right\},\quad \left\{s(x_1)=s(x_0)\right\}\Leftrightarrow \left\{h(s(x_1))=h(s(x_0))\right\}.
    \end{align*}
    Thus the events in the AUROC definition are identical under $s$ and $h\circ s$, and thus 
    \begin{equation*}
    \AUC(h\circ s)=\AUC(s).
    \end{equation*}
    That is, equivalently
    \begin{equation*}
    \AUC(g(\Lambda_{\eta^*}(X)))=\AUC(\Lambda_{\eta^*}(X)).
    \end{equation*}
    For ease of notation, let $\Delta_s:=s(X_1)-s(X_0)$. Then the AUROC can be written as
    \begin{align*}
        \AUC(s)=\mathbb{E}\left[1\{\Delta_s>0\}+\frac{1}{2}1\{\Delta_s=0\}\right].
    \end{align*}
    Then for two scores $s$ and $s'$, we have
    \begin{align}\label{eq:AUC-to-sign-diff}
        \nonumber\left|\AUC(s)-\AUC(s')\right|&\le \left|\mathbb{E}[1\{\Delta_s>0\}-1\{\Delta_{s'}>0\}]\right|+\frac12 \left|\mathbb{E}\left[1\{\Delta_s=0\}-1\{\Delta_{s'}=0\}\right]\right|\\
        \nonumber&\leq \mathbb{E}\left|1\{\Delta_s>0\}-1\{\Delta_{s'}>0\}\right|+\frac12 \mathbb{E}\left|1\{\Delta_s=0\}-1\{\Delta_{s'}=0\}\right|\\
         &\leq \mathbb P\left(\mathrm{Sign}(\Delta_s)\neq \mathrm{Sign}(\Delta_{s'})\right).
    \end{align}
    For the event, it holds that 
    \begin{align*}
        \left\{\mathrm{Sign}(\Delta_s)\neq \mathrm{Sign}(\Delta_{s'})\right\}\quad \Rightarrow \quad \left\{|\Delta_s|\leq\left|\Delta_s-\Delta_{s'}\right|\right\}.
    \end{align*}
    This holds because the magnitude of one of the two quantities must be no larger than their difference; otherwise, no sign flip can occur. Also,
    \begin{align*}
        \left|\Delta_s-\Delta_{s'}\right|&=\left|s(X_1)-s(X_0)-(s'(X_1)-s'(X_0))\right|\leq \left|s(X_1)-s'(X_1)\right|+\left|s(X_0)-s'(X_0)\right|.
    \end{align*}
    Thus we have the event
    \begin{align*}
        \left\{|\Delta_s|\leq\left|\Delta_s-\Delta_{s'}\right|\right\}\quad \Rightarrow \quad \left\{|\Delta_s|\leq\left|s(X_1)-s'(X_1)\right|+\left|s(X_0)-s'(X_0)\right|\right\}.
    \end{align*}
    Therefore by assumption, also condition on the training data such that $\delta_n$ is fixed, it holds that
    \begin{align}\label{eq:sign-diff-to-deltan}
        \nonumber\mathbb P\left(\mathrm{Sign}(\Delta_s)\neq \mathrm{Sign}(\Delta_{s'})\right)\leq \mathbb P\left(|\Delta_s|\leq\left|s(X_1)-s'(X_1)\right|+\left|s(X_0)-s'(X_0)\right|\right)\\
        \leq \mathbb P\left(|\Delta_s|\leq 2\delta_n\right)
        \leq 2C\delta_n,
    \end{align}
    where the last step holds by assumption when $s$ is taken as $g(\Lambda_{\eta^*}(\cdot))$ and $s'$ as $\hat{s}_n$. Here $$\delta_n:=\|\hat{s}-g\circ \Lambda_{\eta^*}\|_{L_\infty (P_0+P_1)}=O_P(r_n)=o_P(1).$$ Therefore $2\delta_n\leq t_0$ with probability tending to one.
    That is 
    \begin{align*}
        \left|\AUC(s)-\AUC(s')\right|\leq \mathbb P\left(\mathrm{Sign}(\Delta_s)\neq \mathrm{Sign}(\Delta_{s'})\right)\leq 2C\delta_n=O_P(r_n).
    \end{align*}
    Thus, by optimality and strictly increasing transformation, we finish the proof.

\medskip
\noindent \textbf{AUPRC:}

    We first show that AUPRC is invariant to strictly increasing transformations. For any strictly increasing function $h:\mathbb{R}\to\mathbb{R}$, for any threshold $u$, we have
    \begin{align*}
        \{h(s(x))\geq u\}\Leftrightarrow\{s(x)\geq h^{-1}(u)\}.
    \end{align*}
    Therefore the recall
    \begin{align*}
        \Rec_{h\circ s}(u)=\mathbb P(h(s(X_1))\geq u)=\mathbb P(s(X_1)\geq h^{-1}(u))=\Rec_s(h^{-1}(u)).
    \end{align*}
    Similarly for false positive rate and precision, we have
    \begin{align*}
        \mathrm{FPR}_{h\circ s}(u)=\mathrm{FPR}_s(h^{-1}(u)),\qquad \Prec_{h\circ s}(u)=\Prec_s(h^{-1}(u)).
    \end{align*}
    Because $h$ is strictly increasing, the map $u\to t:=h^{-1}(u)$ is thus a bijection. Therefore,
    \begin{align*}
        \AUPRC(h\circ s)=\int \Prec_{h\circ s}(u)d\Rec_{h\circ s}(u)=\int \Prec_{s}(t)d\Rec_{s}(t)=\AUPRC(s).
    \end{align*}
    Thus we have
    \begin{equation*}
    \AUPRC(g(\Lambda_{\eta^*}))=\AUPRC(\Lambda_{\eta^*}).
    \end{equation*}
    Next, we connect the AUPRC difference with score estimation. Define the positive survival transform,
    \[
    U_s(x):=\mathbb P(s(X_1)\geq s(x)\mid x)\in [0,1],
    \]
    and similarly we have $U_{s'}(x)$. Thus for $X_1'$, an independent copy of $X_1$, we have
    \begin{align*}
        U_s(X_1)=\mathbb P(s(X'_1)\geq s(X_1)\mid s(X_1))=1-F_1(s(X_1))=:\bar{F}_1(s(X_1)),
    \end{align*}
    where $F_1$ is the CDF of $s(X_1)$. We can then define the generalized inverse function 
    \begin{align*}
        \bar{F}_1^{-1}(u):=\inf \{t\in \mathbb{R}:\bar{F}_1(t)\leq u\}.
    \end{align*}
    By definition $\bar{F}_1$ and $\bar{F}_1^{-1}$ are non-increasing. 
    Thus
    \begin{align*}
        \mathbb P(\bar{F}_1(s(X_1))\leq u)=\mathbb P(s(X_1)\geq \bar{F}_1^{-1}(u))=\bar{F}_1(\bar{F}_1^{-1}(u))=u,
    \end{align*}
    where the last step is because of continuity. That is, we have
    \begin{align}\label{eq:U-uniform-Fbar}
        U_s(X_1)=\bar{F}_1(s(X_1))\sim\mathrm{Unif}[0,1].
    \end{align}
    For the transformed score, $U_s(x)$, we threshold at $r\in[0,1]$ using the rule: predict positive if and only if $U_s(x)\leq r$, which is equivalent to thresholding the score $s(x)$ at $\bar{F}_1^{-1}(r)$. Then we have the metrics
    \begin{align*}
        \widetilde{\Rec}_s(r):=\Rec_s(\bar{F}_1^{-1}(r))=\mathbb P(U_s(X_1)\leq r)=r,
    \end{align*}
     \begin{align*}
        \widetilde{\mathrm{FPR}}_s(r):=\FPR_s(\bar{F}_1^{-1}(r))=\mathbb P(U_s(X_0)\leq r),
    \end{align*}
    Thus the precision at each recall $r$ is
    \begin{align*}
        \widetilde{\Prec}_s(r):=\Prec_s(\bar{F}_1^{-1}(r))=\frac{\pi_1 r}{\pi_1 r+\pi_0\widetilde{\mathrm{FPR}}_s(r)}.
    \end{align*}
    Thus, we have expressed precision as a function of recall through the function $\bar{F}_1^{-1}$. Taking integral over recall, we have
    \begin{align}\label{eq:AUPRC-FPR-integral}
        \AUPRC(s)=\int_0^1 \frac{\pi_1 r}{\pi_1 r+\pi_0\widetilde{\mathrm{FPR}}_s(r)} dr.
    \end{align}
    The same arguments also apply to $s'$. Then for simplicity, for $r\in[0,1]$, define the function 
    \begin{align*}
        \phi_r(z):=\frac{\pi_1r}{\pi_1 r+\pi_0z},\quad z\in[0,1].
    \end{align*}
    Therefore,
    \begin{align*}
        |\AUPRC(s)-\AUPRC(s')|\leq \int_0^1 \left|\phi_r(\widetilde{\mathrm{FPR}}_s(r))-\phi_r(\widetilde{\mathrm{FPR}}_{s'}(r))\right|dr.
    \end{align*}
    For any $z$, and $z'$, the $\phi_r(\cdot)$ function satisfies
    \begin{align*}
        \left|\phi_r(z)-\phi_r(z')\right|=\left|\frac{\pi_1r}{\pi_1 r+\pi_0z}-\frac{\pi_1r}{\pi_1 r+\pi_0z'}\right|
        \leq \frac{\pi_0}{\pi_1}\frac{|z-z'|}{r}.
    \end{align*}
    Combining this with the trivial bound, we have
    \begin{align}\label{eq:phi-diff-upper-z}
         \left|\phi_r(z)-\phi_r(z')\right|\leq \min\left\{1,~\frac{\pi_0}{\pi_1}\frac{|z-z'|}{r}\right\}.
    \end{align}
    That is
    \begin{align*}
        |\AUPRC(s)-\AUPRC(s')|\leq \int_0^1 \min\left\{1,~\frac{\pi_0}{\pi_1}\frac{|\widetilde{\mathrm{FPR}}_s(r)-\widetilde{\mathrm{FPR}}_{s'}(r)|}{r}\right\}dr.
    \end{align*}
    Still for simplicity, denote $\Delta(r):=|\widetilde{\mathrm{FPR}}_s(r)-\widetilde{\mathrm{FPR}}_{s'}(r)|$. Then for any $\tau\in (0,1]$, we have the integral
    \begin{align*}
        \int_0^1 \min\left\{1,~\frac{\pi_0}{\pi_1}\frac{\Delta(r)}{r}\right\}dr\leq \int_0^\tau1dr+\int_\tau^1 \frac{\pi_0}{\pi_1}\frac{\Delta(r)}{r}dr\leq \tau +\frac{\pi_0}{\pi_1}\frac{1}{\tau}\int_0^1\Delta(r)dr.
    \end{align*}
    Then, when $1< \sqrt{\frac{\pi_0}{\pi_1}\int_0^1\Delta(r)dr}$, choose $\tau=1$, and we have
    \[
    \int_0^1 \min\left\{1,~\frac{\pi_0}{\pi_1}\frac{\Delta(r)}{r}\right\}dr\leq1.
    \]
    When $\sqrt{\frac{\pi_0}{\pi_1}\int_0^1\Delta(r)dr}\leq1$, choose $\tau=\sqrt{\frac{\pi_0}{\pi_1}\int_0^1\Delta(r)dr}$, and we have
    \[
    \int_0^1 \min\left\{1,~\frac{\pi_0}{\pi_1}\frac{\Delta(r)}{r}\right\}dr\leq 2\sqrt{\frac{\pi_0}{\pi_1}\int_0^1\Delta(r)dr}.
    \]
    Thus, combining above we have
    \begin{align}\label{eq:9}
        |\AUPRC(s)-\AUPRC(s')|\leq\min\left\{1,~2\sqrt{\frac{\pi_0}{\pi_1}\int_0^1\Delta(r)dr}\right\}.
    \end{align}
    Next, we only need to bound $\int_0^1\Delta(r)dr=\int_0^1|\widetilde{\mathrm{FPR}}_s(r)-\widetilde{\mathrm{FPR}}_{s'}(r)|dr$.
    Define 
    \[
    \mu_s=\mathrm{Law}(U_s(X_0)),\quad \mu_{s'}=\mathrm{Law}(U_{s'}(X_0)),
    \]
    Thus by definition and connection between Wasserstein distance and CDF (e.g. proposition 2.17 of \citep{Santambrogio2015OptimalTF}), we have
    \begin{align}\label{eq:6}
    W_1(\mu_s,\mu_{s'})=\int_0^1|\widetilde{\mathrm{FPR}}_s(r)-\widetilde{\mathrm{FPR}}_{s'}(r)|dr.
    \end{align}
    By the definition of Wasserstein distance, for any coupling $\Gamma$ of $(U_s(X_0),U_{s'}(X_0))$, we have
    \begin{align}\label{eq:7}
        W_1(\mu_s,\mu_{s'})\leq \bE_{\Gamma}\left|U_s(X_0)-U_{s'}(X_0)\right|.
    \end{align}
 Consider the natural coupling where $U_s(X_0)$ and $U_{s'}(X_0)$ are coupled through some $X_0$. Then for fixed $X_0$, 
 \begin{align*}
     U_s(X_0)=\bE\left[1\{s(X_1)\geq s(X_0)\}\mid X_0\right],\quad U_{s'}(X_0)=\bE\left[1\{s'(X_1)\geq s'(X_0)\}\mid X_0\right].
 \end{align*}
 Thus we have
    \begin{align*}
        \left|U_s(X_0)-U_{s'}(X_0)\right|\leq \bE \left[\left|1\{s(X_1)\geq s(X_0)\}-1\{s'(X_1)\geq s'(X_0)\}\right|\mid X_0\right]\\
        \le \mathbb P( 1\{s(X_1)\geq s(X_0)\}\neq 1\{s'(X_1)\geq s'(X_0)\} \mid X_0)
    \end{align*}
    Taking expectation over $X_0$ gives
    \begin{align}\label{eq:8}
        \bE\left|U_s(X_0)-U_{s'}(X_0)\right|
        \leq \mathbb  P( 1\{s(X_1)\geq s(X_0)\}\neq 1\{s'(X_1)\geq s'(X_0)\}).
    \end{align}
    Then, combining equations (\ref{eq:6}), (\ref{eq:7}) and (\ref{eq:8}), we have
    \begin{align}\label{eq:FPR-difference-diff-s}
        \nonumber \int_0^1|\widetilde{\mathrm{FPR}}_s(r)-\widetilde{\mathrm{FPR}}_{s'}(r)|dr\leq \mathbb P( 1\{s(X_1)\geq s(X_0)\}\neq 1\{s'(X_1)\geq s'(X_0)\})\\
        =\mathbb P(\mathrm{Sign}(s(X_1)-s(X_0))\neq\mathrm{Sign(s'(X_1)-s'(X_0))}).
    \end{align}
    Define $\Delta_s=s(X_1)-s(X_0)$. By an argument similar to the AUROC case, we have
    \begin{align*}
        &\left\{\mathrm{Sign}(\Delta_s)\neq \mathrm{Sign}(\Delta_{s'})\right\}\Rightarrow \left\{|\Delta_s|\leq\left|\Delta_s-\Delta_{s'}\right|\right\}\\&\quad\Rightarrow \left\{|\Delta_s|\leq\left|s(X_1)-s'(X_1)\right|+\left|s(X_0)-s'(X_0)\right|\right\} \Rightarrow \left\{|\Delta_s|\leq2\delta_n\right\}.
    \end{align*}
    Therefore by assumption, also condition on the training data such that $\delta_n$ is fixed, it holds that
    \begin{align}\label{eq:diff-sign-to-s-diff}
        \mathbb P\left(\mathrm{Sign}(\Delta_s)\neq \mathrm{Sign}(\Delta_{s'})\right)\leq \mathbb P\left(|\Delta_s|\leq2\delta_n\right)
        \leq 2C\delta_n,
    \end{align}
    where the last step holds by assumption when $s$ is taken as $g(\Lambda_{\eta^*}(\cdot))$ and $s'$ is taken as $\hat{s}_n$. Thus it holds that
    \begin{align*}
        \int_0^1|\widetilde{\mathrm{FPR}}_{\hat s_n}(r)-\widetilde{\mathrm{FPR}}_{g(\Lambda_{\eta^*})}(r)|dr\leq 2C\delta_n=O_P(r_n).
    \end{align*}
    Thus, by well-specification assumption, 
    combining equation (\ref{eq:9}) for the non-trivial bound, and combining the likelihood-ratio optimality, 
    \begin{align}\label{eq:AUPRC-to-parameter}
        \AUPRC(\Lambda_{\eta^*})-\AUPRC(\hat s_n)=|\AUPRC(\hat s_n)-\AUPRC(g(\Lambda_{\eta^*}))|=O_P\left(\sqrt{\frac{\pi_0 r_n}{\pi_1}}\right).
    \end{align}
\end{proof}

\begin{proof}[Proof of Theorem \ref{thm:doesnotimproveBA}]
We prove the results for balanced accuracy and $\F_1$ score separately.

\medskip
\noindent\textbf{Balanced Accuracy:}

    We first show that the best-threshold balanced accuracy is invariant under strictly increasing score transformations. Let $h$ be strictly increasing. Then the balanced accuracy satisfies
    \begin{align*}
        \BA(h\circ s,u)=\frac{1}{2}\left(\mathbb P(s(x_1)\geq h^{-1}(u))+\mathbb P(s(x_0)<h^{-1}(u))\right)=\BA(s,h^{-1}(u)).
    \end{align*}
    Taking supremum over all $u$ and noting that $u\mapsto h^{-1}(u)$ is a bijection, we have
    \begin{align*}
        \sup_u\BA(h\circ s,u)=\sup_\tau \BA(s,\tau).
    \end{align*}
    That is, we have for any strictly increasing transformation $g$,
    \begin{align*}
        \sup_\tau\BA(g(\Lambda_{\eta^*}),\tau)=\sup_\tau \BA(\Lambda_{\eta^*},\tau).
    \end{align*}
    Then, for two scores $s$ and $s'$, and for a fixed threshold $\tau$, 
    \begin{align}\label{eq:BA-decomp-absolute}
        \nonumber &\left|\BA(s,\tau)-\BA(s',\tau)\right|\\
        &\leq \frac{1}{2}\left|\mathbb P(s(X_1)\geq \tau)-\mathbb P(s'(X_1)\geq \tau)\right|+\frac{1}{2}\left|\mathbb P(s(X_0)< \tau)-\mathbb P(s'(X_0)< \tau)\right|.
    \end{align}
    We study the first term. It holds that
    \begin{align*}
        &\left|\mathbb P(s(X_1)\geq \tau)-\mathbb P(s'(X_1)\geq \tau)\right|=\left|\bE\left[1\{s(X_1)\geq \tau\}-1\{s'(X_1)\geq \tau\}\right]\right|\\
        &\leq \bE\left|1\{s(X_1)\geq \tau\}-1\{s'(X_1)\geq \tau\}\right|=\bE1\left\{1\{s(X_1)\geq \tau\}\neq 1\{s'(X_1)\geq \tau\}\right\}\\
        &=\mathbb P\left(1\{s(X_1)\geq \tau\}\neq 1\{s'(X_1)\geq \tau\}\right).
    \end{align*}
    We thus have
    \begin{align}\label{eq:BA-score-event}
       \left\{1\{s(X_1)\geq \tau\}\neq 1\{s'(X_1)\geq \tau\}\right\}\subseteq \left\{|s(X_1)-\tau|\leq |s(X_1)-s'(X_1)|\right\}.
    \end{align}
    Consider $$\delta_n:=\|\hat{s}-g\circ \Lambda_{\eta^*}\|_{L_\infty (P_0+P_1)}=O_P(r_n)=o_P(1).$$
Combining the above with the assumption, and conditioning on the training data so that $\delta_n$ is fixed, we have
    \begin{align}\label{eq:prob-to-parameter}
        \nonumber\left|\mathbb P(s(X_1)\geq \tau)-\mathbb P(s'(X_1)\geq \tau)\right|\leq \mathbb P \left(|s(X_1)-\tau|\leq |s(X_1)-s'(X_1)|\right)\\
        \leq \mathbb P \left(|s(X_1)-\tau|\leq \delta_n\right)\leq C\delta_n,
    \end{align}
    by taking $s$ as the increasingly transformed likelihood $g(\Lambda_{\eta^*})$ and $s'$ as $\hat{s}_n$. Also, $\delta_n\leq t_0$ with probability tending to $1$.
    The same argument for the negative class gives
    \begin{align*}
        \left|\mathbb P(s(X_0)< \tau)-\mathbb P(s'(X_0)< \tau)\right|\leq C\delta_n.
    \end{align*}
    Plugging back into equation (\ref{eq:BA-decomp-absolute}), and taking $s'$ as $\hat s_n$, we have
    \begin{align*}
        \left|\BA(\hat{s}_n,\tau)-\BA(g(\Lambda_{\eta^*}),\tau)\right|\leq  C\delta_n.
    \end{align*}
    Further, by the elementary inequality
    \(
        |\sup_\tau f(\tau)-\sup_\tau g(\tau)|\leq \sup_\tau |f(\tau)-g(\tau)|\),
    we have
    \begin{align*}
        \left|\sup_\tau\BA(\hat s_n,\tau)-\sup_\tau\BA(g(\Lambda_{\eta^*}),\tau)\right|\leq \sup_\tau\left|\BA(\hat s_n,\tau)-\BA(g(\Lambda_{\eta^*}),\tau)\right|\leq C\delta_n.
    \end{align*}
    Thus we have transformed the difference of balanced accuracy into the score distance. Finally, by the invariance in strictly increasing transformation and the well-specification assumption, combined with likelihood-ratio optimality, we finish the proof for BA.

\medskip
\noindent\textbf{$\F_1$ Score:}

    We still first show that the best-threshold $\F_1$ is invariant under strictly increasing transformations. Still let $h$ be strictly increasing, and for any threshold $u$,
    \begin{align*}
        \{h(s(x))\geq u\}\Leftrightarrow \{s(x)\geq h^{-1}(u)\}\qquad \{h(s(x))< u\}\Leftrightarrow \{s(x)< h^{-1}(u)\}.
    \end{align*}
    Hence by similar analysis on recall, precision, and FNR,
    \begin{align*}
        \sup_u \F_1(h\circ s, u)= \sup_\tau \F_1(s,\tau),\qquad \sup_\tau \F_1(g(\Lambda_{\eta^*}), \tau)= \sup_\tau \F_1(\Lambda_{\eta^*},\tau).
    \end{align*}
    Consider fixed scores $s$, $s'$ and fixed threshold $\tau$. For brevity, denote
    \begin{align*}
        R:=\Rec(s,\tau),\quad Q:=\FPR(s,\tau),\quad R':=\Rec(s',\tau), \quad Q':=\FPR(s',\tau).
    \end{align*}
    Then the $\F_1$ score can be denoted as
    \begin{align*}
        \F_1(s,\tau)=G(R,Q),\quad \text{where } G(r,q):=\frac{2\pi_1 r}{\pi_1(1+r)+\pi_0q}.
    \end{align*}
    Take derivative, we have
    \begin{align*}
        \left|\frac{\partial G}{\partial r}\right|=\left|\frac{2\pi_1(\pi_1+\pi_0q)}{(\pi_1(1+r)+\pi_0q)^2}\right|\leq \frac{2}{\pi_1}, \quad
        \left|\frac{\partial G}{\partial q}\right|=\left|-\frac{2\pi_0\pi_1r}{(\pi_1(1+r)+\pi_0q)^2}\right|\leq \frac{2}{\pi_1}.
    \end{align*}
    By the mean value theorem,
    \begin{align*}
        |\F_1(s,\tau)-\F_1(s',\tau)|=|G(R,Q)-G(R',Q')|\leq \frac{2}{\pi_1}\left(|R-R'|+|Q-Q'|\right).
    \end{align*}
    By similar procedure with equation (\ref{eq:prob-to-parameter}), condition on the training data, we have
    \begin{align}\label{eq:RRQQ-diff}
       \nonumber |R-R'|=\left|\mathbb P(s(X_1)\geq\tau)-\mathbb P(s'(X_1)\geq\tau)\right|\leq C\delta_n,\\
        |Q-Q'|=\left|\mathbb P(s(X_0)\geq\tau)-\mathbb P(s'(X_0)\geq\tau)\right|\leq C\delta_n,
    \end{align}
    where we take $s$ as $g(\Lambda_{\eta^*})$ and take $s'$ as $\hat{s}_n$. Also, $\delta_n\leq t_0$ with probability tending to $1$.
    Therefore
    \begin{align*}
        |\F_1(\hat s_n,\tau)-\F_1(g(\Lambda_{\eta^*}),\tau)|\leq \frac{4C\delta_n}{\pi_1}.
    \end{align*}
    Further, 
    we have
    \begin{align*}
        \left|\sup_\tau \F_1(\hat s_n,\tau)-\sup_\tau \F_1(g(\Lambda_{\eta^*}),\tau)\right|\leq \sup_\tau\left| \F_1(\hat s_n,\tau)- \F_1(g(\Lambda_{\eta^*}),\tau)\right|
        \leq \frac{4C\delta_n}{\pi_1}.
    \end{align*}
    Finally, by the well-specification assumption, and optimality of the likelihood ratio, we finish the proof.
\end{proof}

\begin{proof}[Proof of Theorem \ref{thm:ERM-well-specified}]
    In the proof of Theorem \ref{thm:AUC-invariant}, we have shown that AUROC is invariant under a strictly increasing transformation, that is 
    \begin{equation}\label{eq:increasingfunc-invariance}
    \AUC(h\circ s)=\AUC(s),
    \end{equation}
    where $h$ is strictly increasing.
    For any two pairs of weights $w=(w_0,w_1)$ and $w'=(w'_0,w'_1)$, by assumption, we have
    \(
    s_{\theta_w}(x)=g_{\theta_w}(\Lambda_{\eta^*}(x))\), \(s_{\theta_{w'}}(x)=g_{\theta_{w'}}(\Lambda_{\eta^*}(x))\) 
    with $g_{\theta_w}$ and $g_{\theta_{w'}}$ being strictly increasing functions. Also, because $g_{\theta_w}$ is increasing, it is invertible and $g_{\theta_w}^{-1}$ is also strictly increasing. Then the transformation $h:=g_{\theta_{w'}}\circ g_{\theta_w}^{-1}$ is strictly increasing. Then
    \begin{align*}
        s_{\theta_{w'}}(x)=g_{\theta_{w'}}(\Lambda_{\eta^*}(x))=(g_{\theta_{w'}}\circ g_{\theta_w}^{-1})(g_{\theta_w}(\Lambda_{\eta^*}(x)))=h(s_{\theta_{w}}(x)).
    \end{align*}
    Thus by equation (\ref{eq:increasingfunc-invariance}), we have
    \begin{equation}\label{eq:weight-invariance}
    \AUC(\theta_{w'})=\AUC(\theta_w),
    \end{equation}
    which means AUROC is invariant to weights in the population weighted objective. For ease of notation, let $\Delta_\theta:=s_\theta(X_1)-s_\theta(X_0)$. By Lipschitz score assumption,
    \begin{align*}
        \left|\Delta_\theta-\Delta_{\theta'}\right|&=\left|s_\theta(X_1)-s_\theta(X_0)-(s_{\theta'}(X_1)-s_{\theta'}(X_0))\right|\\
        &\leq \left|s_\theta(X_1)-s_{\theta'}(X_1)\right|+\left|s_{\theta}(X_0)-s_{\theta'}(X_0)\right|\leq 2L\|\theta-\theta'\|.
    \end{align*}
Then by similar steps as in equations (\ref{eq:AUC-to-sign-diff}) and (\ref{eq:sign-diff-to-deltan}), combined with assumption,
    \begin{align*}
        \left|\AUC(\theta)-\AUC(\theta')\right|\leq \mathbb P\left(|\Delta_\theta|\leq\left|\Delta_\theta-\Delta_{\theta'}\right|\right)\leq \mathbb P\left(|\Delta_\theta|\leq2L\|\theta-\theta'\|\right)\leq 2CL\|\theta-\theta'\|,
    \end{align*}
    whenever $2L\|\theta-\theta'\|\leq t_0$.
    Here we replace $C$ by $\max\{C,t_0^{-1}\}$ if necessary, then the bound holds for all $2L\left\|\theta-\theta'\right\|>0$.
    Therefore, consider the raw data estimator, condition on the training data, we have
    \begin{align*}
        \left|\AUC(\hat{\theta}_{\raw})-\AUC(\theta_{\raw}^*)\right|\leq 2CL\left\|\hat{\theta}_{\raw}-\theta^*_{\raw}\right\|,
    \end{align*}
    For parameter estimation, we have the following lemmas,
    \begin{lemma}[Empirical Estimation Error]\label{lemma:empiricalestimation}
    Under the assumptions of the theorem, for any $\delta>0$, and for sufficiently large sample sizes $n_0,n_1$, there exists a constant $c>0$ such that each of the following two statements holds separately with probability at least $1-\delta$ that
    \begin{align*}
        \left\|\hat{\theta}_{\raw}-\theta^*_{\raw}\right\|\leq \frac{c}{\lambda}\sqrt{\frac{B^2d\log(6d/\delta)}{n_0+n_1}}, \quad
        \left\|\theta^*_{\aug}-\hat{\theta}_{\aug}\right\|\leq \frac{c}{\lambda}\sqrt{\frac{B^2d\log(8d/\delta)}{n_0+n_1+\tilde{n}}}.    \end{align*}
    When the two bounds hold simultaneously, we have, with probability at least $1-\delta$, that
    \begin{align*}
        \left\|\hat{\theta}_{\raw}-\theta^*_{\raw}\right\|\leq \frac{c}{\lambda}\sqrt{\frac{B^2d\log(14d/\delta)}{n_0+n_1}}, \quad
        \left\|\theta^*_{\aug}-\hat{\theta}_{\aug}\right\|\leq \frac{c}{\lambda}\sqrt{\frac{B^2d\log(14d/\delta)}{n_0+n_1+\tilde{n}}}.    \end{align*}
    \end{lemma}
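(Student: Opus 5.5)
This is a standard M-estimation argument. The plan is to localize the empirical minimizer with a first-order optimality argument and use matrix concentration to keep the empirical Hessian curved. It is written for the raw estimator; the augmented case is identical with three independent sample blocks.

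\textbf{Step 1 (gradient concentration).} Since $\theta^*_{\raw}$ is an interior minimizer, $\nabla\cR_{\raw}(\theta^*_{\raw})=0$. The empirical gradient
\[
\nabla\hat\cR_{\raw}(\theta^*_{\raw})=\frac{1}{n_0+n_1}\Big\{\sum_i\nabla\ell(\theta^*_{\raw};x_i^{(0)},0)+\sum_i\nabla\ell(\theta^*_{\raw};x_i^{(1)},1)\Big\}
\]
is therefore a normalized sum of independent, centered (within each class) vectors. By Assumption~\ref{assumption:ERM-regularity}(d) their coordinates are bounded by $B$. Hoeffding's inequality per coordinate, with a union bound over the $d$ coordinates and the two class blocks, gives with probability $1-\delta/3$:
\[
\|\nabla\hat\cR_{\raw}(\theta^*_{\raw})\|\le cB\sqrt{d\log(6d/\delta)/(n_0+n_1)}.
\]
For the augmented risk there is a third block (synthetic samples) centered under $P_{\syn}$. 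The population gradient $\nabla\cR_{\aug}(\theta^*_{\aug})$ is still zero, because $\theta^*_{\aug}$ minimizes the risk that includes $P_{\syn}$. This accounts for the $8d$ inside the logarithm.

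\textbf{Step 2 (empirical curvature).} Let $H_0=\nabla^2\hat\cR_{\raw}(\theta^*_{\raw})$. By the bounded-deviation part of (g), this is a sum of independent matrices with operator-norm deviations bounded by $B_H$. Matrix Bernstein (or Hoeffding) gives, with probability $1-\delta/3$,
\[
\|H_0-\nabla^2\cR_{\raw}(\theta^*_{\raw})\|_{\rm op}\le CB_H\sqrt{\log(6d/\delta)/(n_0+n_1)},
\]
which is at most $\lambda/4$ for large $n$. Then (c) gives $H_0\succeq \tfrac34\lambda I$. The Hessian-Lipschitz condition in (g) gives, for all $\|\theta-\theta^*_{\raw}\|\le r$,
\[
\nabla^2\hat\cR_{\raw}(\theta)\succeq H_0-L_H r I\succeq \tfrac12\lambda I,
\]
using $L_Hr\le\lambda/4$.

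\textbf{Step 3 (localization via convexity).} The empirical risk is convex by (g). For any $\theta$ on the sphere $\|\theta-\theta^*_{\raw}\|=\rho\le r$, a Taylor expansion gives
\[
\hat\cR_{\raw}(\theta)-\hat\cR_{\raw}(\theta^*_{\raw})\ge -\|\nabla\hat\cR_{\raw}(\theta^*_{\raw})\|\rho+\tfrac{\lambda}{4}\rho^2.
\]
Take $\rho=4\|\nabla\hat\cR_{\raw}(\theta^*_{\raw})\|/\lambda+$ (an arbitrarily small margin). This is at most $r$ for sufficiently large $n$, and the right-hand side is then positive on the sphere. By convexity the minimizer $\hat\theta_{\raw}$ must lie inside the ball. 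Hence $\|\hat\theta_{\raw}-\theta^*_{\raw}\|\le (4/\lambda)\|\nabla\hat\cR_{\raw}(\theta^*_{\raw})\|$, and substituting Step 1 gives the claimed rate.

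\textbf{Step 4 (augmented case and the joint statement).} The augmented estimator is handled identically. The simultaneous version follows from a union bound splitting $\delta$ across the raw and augmented events. The logarithmic factors $6d$ and $8d$ combine to at most $14d$, and enlarging both logs to $\log(14d/\delta)$ only weakens the bounds.

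The main obstacle is Step 2. Lower-bounding the empirical Hessian uniformly on a neighborhood requires matrix concentration to interact correctly with the Hessian-Lipschitz condition. In the augmented case it also requires that the mixed $P_0/P_1/P_{\syn}$ Hessian concentrate around $\nabla^2\cR_{\aug}(\theta^*_{\aug})$, where (c) gives curvature. That is exactly why (g) includes a separate $B_H$ bound under $P_{\syn}$.
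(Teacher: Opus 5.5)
Your proposal is correct and follows the paper's proof in all essentials: coordinatewise Hoeffding with a union bound to control $\nabla\hat\cR(\theta^*)$, matrix Bernstein plus the Hessian-Lipschitz condition to get $\nabla^2\hat\cR\succeq\frac{\lambda}{2}I$ on the radius-$r$ ball, convexity to localize $\hat\theta$, and splitting $\delta$ into $3\delta/7$ and $4\delta/7$ to obtain the common $\log(14d/\delta)$. The only difference is cosmetic: the paper localizes with an outward-gradient condition on the sphere of radius $r$ and then integrates the Hessian along the segment using $\nabla\hat\cR(\hat\theta)=0$ to get the factor $2/\lambda$, whereas your function-value comparison on a sphere of radius just above $4\|\nabla\hat\cR(\theta^*)\|/\lambda$ yields $4/\lambda$ without using stationarity, and that constant is absorbed into $c$.
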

    Therefore, by Lemma \ref{lemma:empiricalestimation}, it holds that, with probability at least $1-\delta$,
    \begin{align}\label{eq:raw-auc-closeness}
        \left|\AUC(\hat{\theta}_{\raw})-\AUC(\theta_{\raw}^*)\right|\leq \frac{CLc}{\lambda}\sqrt{\frac{B^2d\log(6d/\delta)}{n_0+n_1}}.
    \end{align}
    Also, by the strictly increasing transformation invariance,  we finish proving the statement for the raw data estimator. 

    Similarly for the synthetic augmented estimator,
    \begin{align*}
        \left|\AUC(\hat{\theta}_{\aug})-\AUC(\theta_{\aug}^*)\right|\leq 2CL\left\|\hat{\theta}_{\aug}-\theta^*_{\aug}\right\|.
    \end{align*}
    In addition, define $\rho:=\frac{n_0}{n_0+n_1+\tilde{n}}$, and define the risk
\begin{align*}
    \cR_\rho(\theta)=\frac{n_0}{n_0+n_1+\tilde{n}}\mathbb{E}_{P_0}\ell(\theta;x,0)+\frac{n_1+\tilde{n}}{n_0+n_1+\tilde{n}}\mathbb{E}_{P_1}\ell(\theta;x,1),
\end{align*}
and let $\theta_\rho^*$ denote its minimizer. We then have the following lemma relating the synthetic augmented parameter and the corresponding rebalanced parameter.
\begin{lemma}[Synthetic Error]\label{lemma:sytheticerror}
        Under the assumptions of the theorem, the synthetic parameter error satisfies
        \begin{align*}
            \left\|\theta^*_\rho-\theta^*_{\aug}\right\|\leq \frac{L_g}{\sqrt{2\mu L_{\rho}}}\frac{\tilde{n}}{n_0+n_1+\tilde{n}}\epsilon_{\mathrm{syn}}.
        \end{align*}
    \end{lemma}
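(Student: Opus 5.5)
The plan is a perturbation argument around the ideal augmented risk $\cR_\rho=\cR_{\alpha_{\rm aug}}$, using the two curvature conditions of Assumption~\ref{assumption:ERM-regularity}(f). I read the constant $L_\rho$ in the statement as the quadratic-growth constant $L_\alpha$ from that assumption.

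\textbf{Step 1 (decomposition).} Write $w:=\tilde n/(n_0+n_1+\tilde n)$. Comparing the definitions of $\cR_{\aug}$ and $\cR_\rho$, the two risks differ only in the minority term on the synthetic samples:
\[
\cR_{\aug}(\theta)=\cR_\rho(\theta)+w\left\{\mathbb E_{P_{\rm syn}}\ell(\theta;X,1)-\mathbb E_{P_1}\ell(\theta;X,1)\right\}.
\]
By Assumption~\ref{assumption:ERM-regularity}(g), $\theta_{\aug}^*$ is an interior point and the loss is differentiable. The first-order condition $\nabla\cR_{\aug}(\theta_{\aug}^*)=0$ therefore gives
\[
\nabla\cR_\rho(\theta_{\aug}^*)=-w\left\{\mathbb E_{P_{\rm syn}}\nabla\ell(\theta_{\aug}^*;X,1)-\mathbb E_{P_1}\nabla\ell(\theta_{\aug}^*;X,1)\right\}.
\]
Interchanging gradient and expectation is justified by the bounded-gradient condition (d).

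\textbf{Step 2 (gradient bound via Wasserstein duality).} I need to bound the vector on the right-hand side. Fix a unit vector $v$. The map $x\mapsto v^\top\nabla\ell(\theta_{\aug}^*;x,1)$ is $L_g$-Lipschitz by Assumption~\ref{assumption:ERM-regularity}(e). Kantorovich--Rubinstein duality then bounds the difference of its expectations under $P_{\rm syn}$ and $P_1$ by $L_gW_1(P_{\rm syn},P_1)\le L_g\epsilon_{\rm syn}$. Taking the supremum over unit $v$ yields
\[
\|\nabla\cR_\rho(\theta_{\aug}^*)\|\le wL_g\epsilon_{\rm syn}.
\]
I expect this step to be the only real obstacle. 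One must make sure the Lipschitz assumption on the gradient is read as a Euclidean-norm (vector) Lipschitz condition. If it is only coordinatewise, a $\sqrt d$ factor would enter and would have to be absorbed into $L_g$.

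\textbf{Step 3 (curvature).} Apply the local strong-convexity inequality in (f) with $\theta=\theta_{\aug}^*$, which lies in the required neighborhood by assumption. Using Cauchy--Schwarz and then maximizing the resulting quadratic in $\|\theta_\rho^*-\theta_{\aug}^*\|$:
\[
\cR_\rho(\theta_{\aug}^*)-\cR_\rho(\theta_\rho^*)\le \|\nabla\cR_\rho(\theta_{\aug}^*)\|\,\|\theta_\rho^*-\theta_{\aug}^*\|-\frac\mu2\|\theta_\rho^*-\theta_{\aug}^*\|^2\le\frac{\|\nabla\cR_\rho(\theta_{\aug}^*)\|^2}{2\mu}.
\]
The quadratic-growth condition in (f) bounds the left-hand side from below by $L_\rho\|\theta_{\aug}^*-\theta_\rho^*\|^2$.

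\textbf{Step 4 (combine).} Chaining the two inequalities from Step 3 with the gradient bound from Step 2 gives
\[
\|\theta_\rho^*-\theta_{\aug}^*\|\le\frac{\|\nabla\cR_\rho(\theta_{\aug}^*)\|}{\sqrt{2\mu L_\rho}}\le\frac{L_g}{\sqrt{2\mu L_\rho}}\,\frac{\tilde n}{n_0+n_1+\tilde n}\,\epsilon_{\rm syn},
\]
which is the claimed bound.
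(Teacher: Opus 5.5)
Your proposal is correct and follows the paper's proof essentially step for step. It uses the same decomposition of $\cR_{\aug}$ as $\cR_\rho$ plus the weighted synthetic-versus-true minority discrepancy, the first-order condition at $\theta_{\aug}^*$, Kantorovich--Rubinstein duality to get $\|\nabla\cR_\rho(\theta_{\aug}^*)\|\le \frac{\tilde n}{n_0+n_1+\tilde n}L_g\epsilon_{\mathrm{syn}}$, the strong-convexity inequality to get $\cR_\rho(\theta_{\aug}^*)-\cR_\rho(\theta_\rho^*)\le \|\nabla\cR_\rho(\theta_{\aug}^*)\|^2/(2\mu)$, and quadratic growth (with $L_\rho$ being the constant $L_\alpha$, as you read it) to turn this into the parameter bound. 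The differences are cosmetic: you order the steps differently, and you scalarize over unit vectors $v$, which justifies the vector-valued duality step slightly more carefully than the paper's direct application of Kantorovich--Rubinstein to $\phi=\nabla\ell/L_g$.
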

Thus, by Lemma \ref{lemma:sytheticerror} and the parameter estimation for AUROC, we have
\begin{align}\label{eq:aug-auc-closenesss}
       \nonumber \left|\AUC(\hat{\theta}_{\aug})-\AUC(\theta_{\rho}^*)\right|&\leq \left|\AUC(\hat{\theta}_{\aug})-\AUC(\theta_{\aug}^*)\right|+\left|\AUC(\theta_{\aug}^*)-\AUC(\theta_{\rho}^*)\right|\\
        \nonumber&\leq
        2CL\left\|\hat{\theta}_{\aug}-\theta^*_{\aug}\right\|+2CL\left\|\theta_{\aug}^*-\theta^*_{\rho}\right\|\\
        &\leq \frac{CLc}{\lambda}\sqrt{\frac{B^2d\log(8d/\delta)}{n_0+n_1+\tilde{n}}}+CL_gL\sqrt{\frac{2}{\mu L_{\rho}}}\frac{\tilde{n}}{n_0+n_1+\tilde{n}}\epsilon_{\syn},
    \end{align}
    with probability at least $1-\delta$.
Finally by the invariance to strictly increasing function, we finish the proof.
\end{proof}

\begin{proof}[Proof of Theorem \ref{thm:AUPRC-converge-ERM}]
    By previous analysis, AUPRC is invariant under strictly increasing transformations of the score, and it is invariant to weights. Equation (\ref{eq:9}) gives that
    \begin{align*}
        |\AUPRC(s)-\AUPRC(s')|\leq\min\left\{1,~2\sqrt{\frac{\pi_0}{\pi_1}\int_0^1\Delta(r)dr}\right\}.
    \end{align*}
    By Lipschitz score, we have
    \begin{align*}
        \left\{\mathrm{Sign}(\Delta_\theta)\neq \mathrm{Sign}(\Delta_{\theta'})\right\}\Rightarrow \left\{|\Delta_\theta|\leq\left|\Delta_\theta-\Delta_{\theta'}\right|\right\}\Rightarrow \left\{|\Delta_\theta|\leq2L\|\theta-\theta'\|\right\}.
    \end{align*}
    Then by assumption and similar step with equations (\ref{eq:FPR-difference-diff-s}) and (\ref{eq:diff-sign-to-s-diff}), it holds that
    \begin{align*}
        \int_0^1|\widetilde{\mathrm{FPR}}_s(r)-\widetilde{\mathrm{FPR}}_{s'}(r)|dr\leq2CL\|\theta-\theta'\|,
    \end{align*}
    where we replace $C$ by $\max\{C,t_0^{-1}\}$ if necessary, then the bound holds for all $2L\left\|\theta-\theta'\right\|>0$.
   Thus,
    \begin{align*}
        |\AUPRC(s_\theta)-\AUPRC(s_{\theta'})|\leq2\sqrt{\frac{\pi_0}{\pi_1}2CL\|\theta-\theta'\|}.
    \end{align*}
    We have thus connected the AUPRC difference to the parameter error. Then, similar to the analysis for AUROC, it holds that for the raw data estimator,
    \begin{align*}
        |\AUPRC(\hat\theta_\raw)-\AUPRC(\theta_\raw^*)|\leq2\sqrt{\frac{\pi_0}{\pi_1}2CL\|\hat\theta_\raw-\theta_\raw^*\|}.
    \end{align*}
    and for the synthetic augmented estimator
\begin{align*}
        &|\AUPRC(\hat\theta_\aug)-\AUPRC(\theta_\rho^*)|\\
        &\leq |\AUPRC(\hat\theta_\aug)-\AUPRC(\theta_\aug^*)|+|\AUPRC(\theta_\aug^*)-\AUPRC(\theta_\rho^*)|\\
        &\leq2\sqrt{\frac{\pi_0}{\pi_1}2CL\|\hat\theta_\aug-\theta_\aug^*\|}+2\sqrt{\frac{\pi_0}{\pi_1}2CL\|\theta^*_\aug-\theta_\rho^*\|}.
    \end{align*}
    Finally, combining lemmas \ref{lemma:empiricalestimation} and \ref{lemma:sytheticerror}, along with the AUPRC invariance to increasing function, we have the results for both raw and augmented data estimators.
\end{proof}

\begin{proof}[Proof of Theorem \ref{thm:parametric-ERM-best-BA}]
    By the proof of Theorem \ref{thm:doesnotimproveBA}, we have the invariance to strictly increasing function of the best threshold balanced accuracy
    \begin{align*}
        \sup_u\BA(h\circ s,u)=\sup_\tau \BA(s,\tau).
    \end{align*}
    We proceed with the proof of Theorem \ref{thm:doesnotimproveBA} starting from equation (\ref{eq:BA-score-event}). Combining the Lipschitz score assumption, we have
    \begin{align*}
       \left\{1\{s_{\theta}(X_1)\geq \tau\}\neq 1\{s_{\theta'}(X_1)\geq \tau\}\right\}\subseteq \left\{|s_\theta(X_1)-\tau|\leq |s_\theta(X_1)-s_{\theta'}(X_1)|\right\}\\
       \subseteq \left\{|s_\theta(X_1)-\tau|\leq L\|\theta-\theta'\|\right\}.
    \end{align*}
    That is
    \begin{align*}
        \left|\mathbb  P(s_{\theta}(X_1)\geq \tau)-\mathbb  P(s_{\theta'}(X_1)\geq \tau)\right|\leq \mathbb  P \left(|s_\theta(X_1)-\tau|\leq L\|\theta-\theta'\|\right)\leq CL\|\theta-\theta'\|.
    \end{align*}
    Here we replace $C$ by $\max\{C,t_0^{-1}\}$ if necessary, then the bound holds for all $L\left\|\theta-\theta'\right\|>0$.
    The same argument for the negative class gives
    \begin{align*}
        \left|\mathbb  P(s_{\theta}(X_0)< \tau)-\mathbb  P(s_{\theta'}(X_0)< \tau)\right|\leq CL\|\theta-\theta'\|.
    \end{align*}
    Plugging back into equation (\ref{eq:BA-decomp-absolute}), we have
    \begin{align*}
        \left|\BA(s_\theta,\tau)-\BA(s_{\theta'},\tau)\right|\leq CL\|\theta-\theta'\|.
    \end{align*}
    Further, taking supremum gives
    \begin{align*}
        \left|\sup_\tau\BA(s_\theta,\tau)-\sup_\tau\BA(s_{\theta'},\tau)\right|\leq \sup_\tau\left|\BA(s_\theta,\tau)-\BA(s_{\theta'},\tau)\right|\leq CL\|\theta-\theta'\|.
    \end{align*}
    Thus we have transformed the difference of balanced accuracy into the parameter distance. The parameter distance for $\hat{\theta}_\aug$ and $\hat{\theta}_\raw$ has been well studied in Lemmas \ref{lemma:empiricalestimation} and \ref{lemma:sytheticerror}. Therefore, conditioning on the training data, combining the above and by similar procedure with the analysis of AUROC and AUPRC, we have the results for both raw and augmented estimators.
\end{proof}

\begin{proof}[Proof of Theorem \ref{thm:parametric-ERM-F1}]
    Proceeding with equation (\ref{eq:RRQQ-diff}) in the proof of Theorem \ref{thm:doesnotimproveBA}, along with the Lipschitz score condition, it holds that
    \begin{align*}
        |R-R'|=\left|\mathbb  P_1(s_\theta\geq\tau)-\mathbb  P_1(s_{\theta'}\geq\tau)\right|\leq CL\|\theta-\theta'\|,\\
        |Q-Q'|=\left|\mathbb  P_0(s_\theta\geq\tau)-\mathbb  P_0(s_{\theta'}\geq\tau)\right|\leq CL\|\theta-\theta'\|.
    \end{align*}
    As before, we enlarge $C$ when necessary so the bounds always hold.
    That is we have
    \begin{align*}
        |\F_1(s_\theta,\tau)-\F_1(s_{\theta'},\tau)|\leq \frac{4CL}{\pi_1}\|\theta-\theta'\|.
    \end{align*}
    Further, taking supremum,
    \begin{align*}
        \left|\sup_\tau \F_1(s_\theta,\tau)-\sup_\tau \F_1(s_{\theta'},\tau)\right|\leq \sup_\tau\left| \F_1(s_\theta,\tau)- \F_1(s_{\theta'},\tau)\right|\leq \frac{4CL}{\pi_1}\|\theta-\theta'\|.
    \end{align*}
    Thus we have transformed the difference of $\F_1$ score into the parameter distance. The parameter distance for $\hat{\theta}_\aug$ and $\hat{\theta}_\raw$ has been well studied in Lemmas \ref{lemma:empiricalestimation} and \ref{lemma:sytheticerror}. Combining the above with invariance under strictly increasing transformations,  we have the results for both raw and augmented estimators.
\end{proof}

\begin{proof}[Proof of Theorem \ref{thm:AUROC-minimax}]
We first construct a Euclidean packing.
For constant $\alpha\in (0,1)$ in Assumption \ref{assumption:minimaxAUC}, define
\(
r_\alpha:=\frac{1+\alpha}{2}\), 
then
\(
\alpha<r_\alpha<1\).
Consider the Euclidean ball
\[
r_\alpha B_2^{d-1}
=
\{x\in\mathbb R^{d-1}:\|x\|_2\le r_\alpha\}.
\]
By Corollary 4.2.11 in \cite{vershynin2018high}, the covering number of the Euclidean unit ball satisfies
\[
\mathcal N(B_2^{d-1},\epsilon)
\ge
\left(\frac1\epsilon\right)^{d-1}.
\]
By scaling,
\[
\mathcal N(r_\alpha B_2^{d-1},\alpha)
=
\mathcal N\left(B_2^{d-1},\frac{\alpha}{r_\alpha}\right)
\ge
\left(\frac{r_\alpha}{\alpha}\right)^{d-1}.
\]
Since
\(
\frac{r_\alpha}{\alpha}
=
\frac{1+\alpha}{2\alpha}
>1\),
this covering number is exponential in $d$.
Then let
\[
\{x_1,\ldots,x_M\}
\subset r_\alpha B_2^{d-1}
\]
be a maximal $\alpha$-separated set. Since every maximal $\alpha$-separated set is an $\alpha$-net, its cardinality is at least the $\alpha$-covering number. Hence
\[
M
\ge
\mathcal N(r_\alpha B_2^{d-1},\alpha)
\ge
\left(\frac{r_\alpha}{\alpha}\right)^{d-1}
=
\left(\frac{1+\alpha}{2\alpha}\right)^{d-1}.
\]
Next, embed $r_\alpha B_2^{d-1}$ into the upper hemisphere of $\mathbb S^{d-1}$ by
\[
\Phi(x)
=
\left(x,\sqrt{1-\|x\|_2^2}\right).
\]
This map is well-defined because $r_\alpha<1$. Define
\(
u_i:=\Phi(x_i)\), \(i=1,\ldots,M\).
Then
\(
u_i\in \mathbb S^{d-1}\).
For any $i\ne j$,
\[
\|u_i-u_j\|_2^2
=
\|x_i-x_j\|_2^2
+
\left(
\sqrt{1-\|x_i\|_2^2}
-
\sqrt{1-\|x_j\|_2^2}
\right)^2
\ge
\|x_i-x_j\|_2^2.
\]
Since the $x_i$'s are $\alpha$-separated,
\(
\|x_i-x_j\|_2\ge \alpha\).
Therefore,
\[
\|u_i-u_j\|_2\ge \alpha,
\qquad i\ne j.
\]
Moreover, since $\alpha\in(0,1)$ is fixed, for $d\geq 2$,
\[
\log M
\ge
(d-1)\log\left(\frac{1+\alpha}{2\alpha}\right)\geq c_1 d.
\]
where one may take
\(
c_1
=
\frac12
\log\left(\frac{1+\alpha}{2\alpha}\right)\).
Now define the local parameter points
\[
\eta_i
:=
\eta_0+h u_i,
\qquad i=1,\ldots,M.
\]
If $h\le r_0$, then,
\(
\eta_i\in B_2(\eta_0,r_0)\subseteq\Theta\).
Furthermore,
\(
\|\eta_i-\eta_j\|_2
=
h\|u_i-u_j\|_2
\ge
\alpha h\), for \(i\ne j\).
Therefore,
\(\{\eta_1,\ldots,\eta_M\}\)
is a Euclidean packing of the local parameter space with separation $\alpha h$, and its cardinality satisfies
\(
\log M\ge c_1 d\).

Next, we give a pairwise AUC excess representation.
For fixed parameter \(\eta\), write
\[
Q_\eta=P_{0,\eta}\otimes P_{0,\eta}.
\]
We first rewrite the AUC excess in a pairwise form. For any measurable score \(s\),
\[
\operatorname{AUC}_{\eta}(s)
=
\iint
H(s(x)-s(z))\,dP_{1,\eta}(x)\,dP_{0,\eta}(z),\quad \text{where } H(t):=\mathbf 1\{t>0\}+\frac12\mathbf 1\{t=0\}.
\]
Since
\(
dP_{1,\eta}(x)
=
L_{\eta}(x)dP_{0,\eta}(x)\),
and since \(Q_{\eta}=P_{0,\eta}\otimes P_{0,\eta}\), we may write
\[
\operatorname{AUC}_{\eta}(s)
=
\iint
L_{\eta}(x)
H(s(x)-s(z))\,dQ_{\eta}(x,z).
\]
Using the symmetry of \(Q_{\eta}\) in \((x,z)\), the same quantity also equals
\[
\iint
L_{\eta}(z)
H(s(z)-s(x))\,dQ_{\eta}(x,z).
\]
Averaging the two displays gives the symmetrized representation
\[
\operatorname{AUC}_{\eta}(s)
=
\frac12
\iint
\left[
L_{\eta}(x)H(s(x)-s(z))
+
L_{\eta}(z)H(s(z)-s(x))
\right]
\,dQ_{\eta}(x,z).
\]
For a score \(s\), define the pairwise oracle-disagreement loss
\[
L_{\eta,s}(x,z)
:=
\begin{cases}
0,
& \text{whenever } L_\eta(x)=L_\eta(z) ,\\[1mm]
0,
& \text{if } s \text{ ranks } x,z \text{ in the same strict order as } L_{\eta},\\[1mm]
1/2,
& \text{if } s(x)=s(z),\\[1mm]
1,
& \text{if } s \text{ ranks } x,z \text{ in the opposite strict order from } L_{\eta}.
\end{cases}
\]
For the pairwise likelihood-ratio contrast
\(
\Delta_{\eta}(x,z)
=
L_{\eta}(x)-L_{\eta}(z)\), we claim that
\[
\operatorname{AUC}_{\eta}(\Lambda_{\eta})
-
\operatorname{AUC}_{\eta}(s)=\operatorname{AUC}_{\eta}(L_{\eta})
-
\operatorname{AUC}_{\eta}(s)
=
\frac12
\iint
|\Delta_{\eta}(x,z)|
L_{\eta,s}(x,z)
\,dQ_{\eta}(x,z).
\]
To verify this identity, fix a pair \((x,z)\) and write
\(
a:=L_{\eta}(x)\) and \(
b:=L_{\eta}(z)\).
The symmetrized pairwise contribution of \(s\) is
\[
C_s(x,z)
:=
\frac12
\left[
aH(s(x)-s(z))
+
bH(s(z)-s(x))
\right],
\]
while the corresponding oracle contribution is
\[
C_{L}(x,z)
:=
\frac12
\left[
aH(a-b)+bH(b-a)
\right].
\]
If \(a>b\), then the oracle ranks \(x\) above \(z\), and
\(
C_{L}(x,z)=\frac a2\).
In this case, if \(s(x)>s(z)\), then \(C_s(x,z)=a/2\), so the excess is zero. If
\(s(x)=s(z)\), then
\[
C_s(x,z)=\frac12\left(\frac a2+\frac b2\right)=\frac{a+b}{4},
\]
and hence
\[
C_{L}(x,z)-C_s(x,z)
=
\frac a2-\frac{a+b}{4}
=
\frac{a-b}{4}
=
\frac12 |a-b|\cdot \frac12.
\]
If \(s(x)<s(z)\), then \(C_s(x,z)=b/2\), and hence
\[
C_{L}(x,z)-C_s(x,z)
=
\frac a2-\frac b2
=
\frac{a-b}{2}
=
\frac12 |a-b|\cdot 1.
\]
Thus, when \(a>b\),
\[
C_{L}(x,z)-C_s(x,z)
=
\frac12 |\Delta_{\eta}(x,z)|
L_{\eta,s}(x,z).
\]
The case \(a<b\) is symmetric. If \(a=b\), then \(|\Delta_{\eta}(x,z)|=0\), and the pair contributes no excess regardless of how \(s\) ranks \(x\) and \(z\). Therefore, for every pair \((x,z)\),
\[
C_{L}(x,z)-C_s(x,z)
=
\frac12
|\Delta_{\eta}(x,z)|
L_{\eta,s}(x,z).
\]
Integrating this pointwise identity with respect to \(Q_{\eta}\) yields
\[
\operatorname{AUC}_{\eta}(\Lambda_{\eta})
-
\operatorname{AUC}_{\eta}(s)
=\operatorname{AUC}_{\eta}(L_{\eta})
-
\operatorname{AUC}_{\eta}(s)=
\frac12
\iint
|\Delta_{\eta}(x,z)|
L_{\eta,s}(x,z)
\,dQ_{\eta}(x,z).
\]
This is the desired pairwise representation of the AUC excess risk.
Then the AUC excess has the exact representation
\(
\mathcal E_\eta(s)
=
\frac12
\int
|\Delta_\eta(x,z)|
L_{\eta,s}(x,z)
\,dQ_\eta(x,z)\).
Using \(Q_\eta\ge c_Q Q_0\), we get
\[
\mathcal E_\eta(s)
\ge
\frac{c_Q}{2}
\int
|\Delta_\eta(x,z)|
L_{\eta,s}(x,z)
\,dQ_0(x,z).
\]

Then we transfer the Euclidean packing into AUC-space separation.
Fix \(i\ne j\). Let
\(
\eta_i=\eta_0+hu_i\), and 
\(\eta_j=\eta_0+hu_j\).
By previous construction,
\(
\|u_i-u_j\|_2\ge \alpha\).
Define
\[
\mathcal R_{ij,h}
:=
\mathcal R_h(u_i,u_j)
=
\left\{
(x,z):
\Delta_{\eta_i}(x,z)
\Delta_{\eta_j}(x,z)<0
\right\}.
\]
That is, on \(\mathcal R_{ij,h}\), the two oracle likelihood-ratio rankings are
opposite.
For any score \(s\), one score cannot agree with both opposite rankings.
Therefore,
\[
L_{\eta_i,s}(x,z)
+
L_{\eta_j,s}(x,z)
\ge 1
\qquad
\text{on } \mathcal R_{ij,h}.
\]
Using the AUC excess representation,
\[
\mathcal E_{\eta_i}(s)
+
\mathcal E_{\eta_j}(s)
\ge
\frac{c_Q}{2}
\int_{\mathcal R_{ij,h}}
\left[
|\Delta_{\eta_i}|L_{\eta_i,s}
+
|\Delta_{\eta_j}|L_{\eta_j,s}
\right]
\,dQ_0.
\]
For nonnegative \(a,b,\ell_1,\ell_2\),
\(
a\ell_1+b\ell_2
\ge
\min\{a,b\}(\ell_1+\ell_2)\).
Therefore, on \(\mathcal R_{ij,h}\),
\[
|\Delta_{\eta_i}|L_{\eta_i,s}
+
|\Delta_{\eta_j}|L_{\eta_j,s}
\ge
\min\{
|\Delta_{\eta_i}|,
|\Delta_{\eta_j}|
\}.
\]
Hence
\[
\mathcal E_{\eta_i}(s)
+
\mathcal E_{\eta_j}(s)
\ge
\frac{c_Q}{2}
\int_{\mathcal R_{ij,h}}
\min\{
|\Delta_{\eta_i}|,
|\Delta_{\eta_j}|
\}
\,dQ_0.
\]
By assumption,
\(
\int_{\mathcal R_{ij,h}}
\min\{
|\Delta_{\eta_i}|,
|\Delta_{\eta_j}|
\}
\,dQ_0
\ge
c_{\rm rk}h\).
Therefore,
\[
\mathcal E_{\eta_i}(s)
+
\mathcal E_{\eta_j}(s)
\ge
\frac{c_Qc_{\rm rk}}{2}h\quad\Rightarrow \quad \max\{
\mathcal E_{\eta_i}(s),
\mathcal E_{\eta_j}(s)
\}
\ge
\frac{c_Qc_{\rm rk}}{4}h.
\]
Define
\(
\epsilon_h
:=
\frac{c_Qc_{\rm rk}}{8}h\).
Then for all \(i\ne j\),
\[
\inf_s
\max\{
\mathcal E_{\eta_i}(s),
\mathcal E_{\eta_j}(s)
\}
\ge
2\epsilon_h.
\]
So the AUC-space packing has been derived from the Euclidean packing.

Next, we prove the KL divergence control.
Let parameters \(\eta,\eta'\in B_2(\eta_0,r_0)\). Fix
\(y\in\{0,1\}\). Set
\[
\delta:=\eta-\eta',
\qquad
\eta_t:=\eta'+t\delta.
\]
Thus \(\eta_0=\eta'\) and \(\eta_1=\eta\). For fixed \(x\), define the one-dimensional function
\(
g_x(t):=p_{y,\eta_t}(x)\).
By differentiability of \(p_{y,\eta}(x)\) with respect to \(\eta\), the chain rule gives
\[
g_x'(t)
=
\nabla_\eta p_{y,\eta_t}(x)^\top
\frac{d\eta_t}{dt}
=
\nabla_\eta p_{y,\eta_t}(x)^\top \delta.
\]
Therefore, by the fundamental theorem of calculus,
\[
p_{y,\eta}(x)-p_{y,\eta'}(x)
=
g_x(1)-g_x(0)
=
\int_0^1 g_x'(t)\,dt.
\]
Substituting the expression for \(g_x'(t)\), we obtain
\[
p_{y,\eta}(x)-p_{y,\eta'}(x)
=
\int_0^1
\delta^\top \nabla_\eta p_{y,\eta_t}(x)
\,dt.
\]
Since
\(
\nabla_\eta p_{y,\eta_t}(x)
=
p_{y,\eta_t}(x)S_y(x;\eta_t)\),
we have
\[
p_{y,\eta}(x)-p_{y,\eta'}(x)
=
\int_0^1
\delta^\top S_y(x;\eta_t)
p_{y,\eta_t}(x)
\,dt.
\]
By Cauchy--Schwarz,
\[
\left(p_{y,\eta}(x)-p_{y,\eta'}(x)\right)^2
\le
\|\delta\|_2^2
\int_0^1
\|S_y(x;\eta_t)\|_2^2
p_{y,\eta_t}(x)^2
\,dt.
\]
Recall the definition of $\chi^2$ divergence,
\(
\chi^2(P_{y,\eta},P_{y,\eta'})
=
\int
\frac{
(p_{y,\eta}-p_{y,\eta'})^2
}{
p_{y,\eta'}
}
\,d\nu
\).
Therefore
\[
\chi^2(P_{y,\eta},P_{y,\eta'})
\le
\|\delta\|_2^2
\int_0^1
\int
\|S_y(x;\eta_t)\|_2^2
\frac{p_{y,\eta_t}(x)^2}{p_{y,\eta'}(x)}
\,d\nu(x)\,dt.
\]
By local likelihood-ratio comparability that
\(
\frac{p_{y,\eta_t}(x)}
{p_{y,\eta'}(x)}
\le B_{\rm lr}\),
we have
\(
\frac{p_{y,\eta_t}(x)^2}
{p_{y,\eta'}(x)}
\le
B_{\rm lr}p_{y,\eta_t}(x)\).
Thus
\[
\chi^2(P_{y,\eta},P_{y,\eta'})
\le
B_{\rm lr}\|\delta\|_2^2
\int_0^1
\mathbb E_{y,\eta_t}
\|S_y(X;\eta_t)\|_2^2
\,dt.
\]
By assumption,
\[
\chi^2(P_{y,\eta},P_{y,\eta'})
\le
B_{\rm lr}I_y
\|\eta-\eta'\|_2^2.
\]
Since
\(
D_{\rm KL}(P\|Q)\le \chi^2(P,Q)\),
we get
\[
D_{\rm KL}(P_{y,\eta}\|P_{y,\eta'})
\le
B_{\rm lr}I_y
\|\eta-\eta'\|_2^2.
\]
Further, for the class-stratified sample,
\(
\mathbb P_\eta^{(n)}
=
P_{0,\eta}^{\otimes n_0}
\otimes
P_{1,\eta}^{\otimes n_1}\).
Therefore,
\[
\begin{aligned}
D_{\rm KL}
\left(
\mathbb P_\eta^{(n)}
\middle\|
\mathbb P_{\eta'}^{(n)}
\right)
&=
n_0D_{\rm KL}(P_{0,\eta}\|P_{0,\eta'})
+
n_1D_{\rm KL}(P_{1,\eta}\|P_{1,\eta'})
\\
&\le
B_{\rm lr}(n_0I_0+n_1I_1)
\|\eta-\eta'\|_2^2\le
B_{\rm lr}I_{\max}n
\|\eta-\eta'\|_2^2.
\end{aligned}
\]
For packing points,
\(
\|\eta_i-\eta_j\|_2
=
h\|u_i-u_j\|_2
\le 2h\).
Therefore,
\begin{align}\label{eq:KL-upper-minimax}
D_{\rm KL}
\left(
\mathbb P_{\eta_i}^{(n)}
\middle\|
\mathbb P_{\eta_j}^{(n)}
\right)
\le
4B_{\rm lr}I_{\max}nh^2.
\end{align}

Let
\(
h_n
:=
a\sqrt{\frac d n}\),
where \(a>0\) is sufficiently small. Since
\(
\log M\ge c_1d\),
choose \(a\) such that
\[
4B_{\rm lr}I_{\max}nh_n^2
=
4B_{\rm lr}I_{\max}a^2d
\le
\beta\log M
\]
for some fixed \(\beta\in(0,1)\). It is enough to take
\(
a^2
\le
\frac{\beta c_1}{4B_{\rm lr}I_{\max}}\).
For sufficiently large \(n\), we also have
\(
h_n\le r_0\), and 
\(
h_n\le h_0\).
Thus all assumptions apply. Therefore, the AUC packing radius we choose is
\[
\epsilon_n
=
\epsilon_{h_n}
=
\frac{c_Qc_{\rm rk}}{8}h_n
=
c\sqrt{\frac d n}.
\]

To apply Fano's inequality,
let
\(
V\sim \operatorname{Unif}\{1,\ldots,M\}\).
Conditional on \(V=i\), draw the training data from
\(
\mathbb P_{\eta_i}^{(n)}\).
Let \(\hat s\) be any score estimator. Define the decoder
\[
\hat V
\in
\arg\min_{1\le i\le M}
\mathcal E_{\eta_i}(\hat s).
\]
Suppose the true index is \(i\), and consider the event
\(
\{\mathcal E_{\eta_i}(\hat s)<\epsilon_n\}\).
For every \(j\ne i\), the AUC-space separation gives
\(
\max\{
\mathcal E_{\eta_i}(\hat s),
\mathcal E_{\eta_j}(\hat s)
\}
\ge
2\epsilon_n\).
Therefore,
\(
\mathcal E_{\eta_j}(\hat s)\ge 2\epsilon_n\)
for every \(j\ne i\), so the decoder must select \(i\). Hence
\(
\{\hat V\ne i\}
\subseteq
\left\{
\mathcal E_{\eta_i}(\hat s)\ge \epsilon_n
\right\}\).
Averaging over \(i\),
\[
\mathbb P(\hat V\ne V)=\sum_{i=1}^M \mathbb P(V=i)\mathbb P(\hat V\ne V\mid V=i)
\le
\frac1M
\sum_{i=1}^M
\mathbb P_{\eta_i}
\left(
\mathcal E_{\eta_i}(\hat s)\ge \epsilon_n
\right).
\]
Therefore,
\[
\sup_{\eta\in\Theta}
\mathbb P_\eta
\left(
\mathcal E_\eta(\hat s)\ge \epsilon_n
\right)\ge \frac1M
\sum_{i=1}^M
\mathbb P_{\eta_i}
\left(
\mathcal E_{\eta_i}(\hat s)\ge \epsilon_n
\right)
\ge
\mathbb P(\hat V\ne V).
\]
By Fano's inequality (e.g. equation (15.31) in \cite{wainwright2019high}),
\[
\inf_{\hat V}
\mathbb P(\hat V\ne V)
\ge
1-
\frac{I(V;D_n)+\log 2}{\log M}.
\]
For bounding the mutual information,
let
\(
P_i:=\mathbb P_{\eta_i}^{(n)}\), for \( i=1,\ldots,M\).
Conditional on \(V=i\), the training sample \(D_n\) has distribution \(P_i\). Hence the marginal distribution of \(D_n\) is the mixture
\(
\overline P
:=
\frac1M\sum_{j=1}^M P_j\).
By the definition of mutual information,
\(
I(V;D_n)
=
\frac1M\sum_{i=1}^M
D_{\mathrm{KL}}(P_i\|\overline P)\).
Let \(p_i\) and \(\overline p\) denote the corresponding densities with respect to a common dominating measure. Since
\(
\overline p
=
\frac1M\sum_{j=1}^M p_j\),
we have
\[
D_{\mathrm{KL}}(P_i\|\overline P)
=
\int
p_i
\log
\frac{p_i}{\overline p}
\,d\nu
=
\int
p_i
\log
\frac{p_i}{M^{-1}\sum_{j=1}^M p_j}
\,d\nu .
\]
By concavity of the logarithm,
\[
\log\left(\frac1M\sum_{j=1}^M p_j\right)
\ge
\frac1M\sum_{j=1}^M \log p_j .
\]
Therefore,
\[
\log
\frac{p_i}{M^{-1}\sum_{j=1}^M p_j}
\le
\frac1M\sum_{j=1}^M
\log\frac{p_i}{p_j}.
\]
Integrating both sides with respect to \(p_i\,d\nu\), we obtain
\[
D_{\mathrm{KL}}(P_i\|\overline P)
\le
\frac1M\sum_{j=1}^M
D_{\mathrm{KL}}(P_i\|P_j).
\]
Averaging this inequality over \(i=1,\ldots,M\) gives
\[
I(V;D_n)
=
\frac1M\sum_{i=1}^M
D_{\mathrm{KL}}(P_i\|\overline P)
\le
\frac1{M^2}
\sum_{i,j=1}^M
D_{\mathrm{KL}}(P_i\|P_j)=\frac1{M^2}
\sum_{i,j=1}^M
D_{\mathrm{KL}}
\left(
\mathbb P_{\eta_i}^{(n)}
\middle\|
\mathbb P_{\eta_j}^{(n)}
\right).
\]
By the KL control and the choice of \(h_n\),
\begin{align}\label{eq:mutual-info-bound}
I(V;D_n)
\le
\beta\log M.
\end{align}
Thus
\[
\inf_{\hat V}
\mathbb P(\hat V\ne V)
\ge
1-\beta-\frac{\log 2}{\log M}.
\]
We can choose a small $\beta$ such that $\beta\leq 1/4$ and recall that the dimension $d$ is larger than a constant such that we can choose $d\geq 2\log2/c_1$. Then after absorbing fixed-dimensional constants, 
\[
1-\beta-\frac{\log 2}{\log M}\geq 1-\beta-\frac{\log 2}{c_1d}
\ge c_0
\]
for some constant \(c_0>0\). Therefore,
\[
\inf_{\hat s}
\sup_{\eta\in\Theta}
\mathbb P_\eta
\left(
\mathcal E_\eta(\hat s)\ge \epsilon_n
\right)
\ge c_0.
\]
Since
\(
\epsilon_n
=
c\sqrt{\frac d n}\),
we have
\[
\inf_{\hat s}
\sup_{\eta\in\Theta}
\mathbb P_\eta
\left(
\mathcal E_\eta(\hat s)
\ge
c\sqrt{\frac d n}
\right)
\ge c_0.
\]

Finally we convert probability to expectation.
For any nonnegative random variable \(Z\), by Markov's inequality,
\(
\mathbb E Z\ge t\mathbb P(Z\ge t)
\).
Apply this with
\(
Z=\mathcal E_\eta(\hat s)\), and \(
t=\epsilon_n
\), we have
\[
\inf_{\hat s}
\sup_{\eta\in\Theta}
\mathbb E_\eta
\mathcal E_\eta(\hat s)
\ge
c_0\epsilon_n\ge
c\sqrt{\frac d n}.
\]
Finally, we finish the proof by noting the definition of $\mathcal E_\eta(\hat s)$ and the optimality of the likelihood-ratio score.
\end{proof}

\begin{proof}[Proof of Theorem \ref{thm:best-BA-minimax}]
Define the best-threshold balanced-accuracy excess risk
\[
\mathcal E_{\eta}^{\mathrm{BA}}(s)
:=
\sup_{\tau}BA_{\eta}(\Lambda_{\eta},\tau)
-
\sup_{\tau}BA_{\eta}(s,\tau).
\]
By the population optimality of the likelihood-ratio threshold rule for balanced accuracy (equation (\ref{eq:best-BA-event})), proved in Lemma \ref{thm:likelihood-ratio-optimality},
\(
\sup_{\tau}BA_{\eta}(\Lambda_{\eta},\tau)
=
BA_{\eta}(B_{\eta})
\),
where
\(
B_{\eta}=\{x:L_{\eta}(x)\ge 1\}\).
Also, by optimality of the likelihood ratio,
\[
\mathcal E_{\eta}^{\mathrm{BA}}(s)\ge 0, \quad \text{and } \left|
\sup_{\tau}BA_{\eta}(s,\tau)
-
\sup_{\tau}BA_{\eta}(\Lambda_{\eta},\tau)
\right|
=
\mathcal E_{\eta}^{\mathrm{BA}}(s).
\]

We first derive the balanced-accuracy excess representation. For any measurable set
\(S\subseteq\mathcal X\), write
\(
BA_{\eta}(S)
=
\frac12 \mathbb  P_{1,\eta}(S)
+
\frac12 \mathbb  P_{0,\eta}(S^c)\).
Since \(dP_{1,\eta}=L_{\eta}\,dP_{0,\eta}\), we have
\[
BA_{\eta}(S)
=
\frac12
+
\frac12
\int_S
(L_{\eta}(x)-1)
\,dP_{0,\eta}(x),
\]
with the maximizer being
\(
B_{\eta}=\{x:L_{\eta}(x)\ge 1\}\).
Hence, for any measurable set \(S\),
\[
BA_{\eta}(B_{\eta})-BA_{\eta}(S)
=
\frac12
\int_{B_{\eta}\triangle S}
|L_{\eta}(x)-1|
\,dP_{0,\eta}(x).
\]
Therefore, for a score \(s\), writing
\(
S_{\tau}(s):=\{x:s(x)\ge \tau\}\),
we have
\[
\mathcal E_{\eta}^{\mathrm{BA}}(s)
=
\inf_{\tau}
\frac12
\int_{B_{\eta}\triangle S_{\tau}(s)}
|L_{\eta}(x)-1|
\,dP_{0,\eta}(x).
\]

Next, according to the proof of Theorem \ref{thm:AUROC-minimax}, we can construct a Euclidean packing. Namely, for \(\alpha\in(0,1)\) in assumption, there exist points
\(
u_1,\dots,u_M\in \mathbb{S}^{d-1}
\)
such that
\(
\|u_i-u_j\|_2\ge \alpha\), for \( i\ne j
\),
and
\(
\log M\ge c_1d
\)
for some constant \(c_1>0\). For a radius \(h>0\), define
\(
\eta_i:=\eta_0+hu_i\),
\(i=1,\dots,M\).
For \(h\le r_0\), all \(\eta_i\in\Theta\).
We now prove that this parameter packing induces separation in best-threshold balanced accuracy. Fix \(i\ne j\). Write
\[
B_i:=B_{\eta_i},
\qquad
B_j:=B_{\eta_j},
\qquad
L_i:=L_{\eta_i},
\qquad
L_j:=L_{\eta_j}.
\]
Define the directed disagreement regions
\(
D_{ij}^{+}:=B_i\cap B_j^c\),
\(D_{ij}^{-}:=B_i^c\cap B_j\).
By assumption,
\[
\int_{D_{ij}^{+}}
\min\{|L_i-1|,|L_j-1|\}
\,dP_{0,\eta_0}
\ge c_{\mathrm{BA}}h,\quad 
\int_{D_{ij}^{-}}
\min\{|L_i-1|,|L_j-1|\}
\,dP_{0,\eta_0}
\ge c_{\mathrm{BA}}h.
\]

Let \(s\) be any measurable score. Because best-threshold balanced accuracy allows parameter-specific thresholds, take arbitrary thresholds \(\tau_i,\tau_j\) and define
\[
S_i:=S_{\tau_i}(s)=\{x:s(x)\ge \tau_i\},
\qquad
S_j:=S_{\tau_j}(s)=\{x:s(x)\ge \tau_j\}.
\]
Since these are upper level sets of the same score \(s\), they are nested. Thus either
\(
S_i\subseteq S_j
\)
or
\(
S_j\subseteq S_i\).
First suppose \(S_i\subseteq S_j\). On the region
\(
D_{ij}^{+}=B_i\cap B_j^c\),
the oracle decision under \(\eta_i\) is positive, while the oracle decision under \(\eta_j\) is negative. Because \(S_i\subseteq S_j\), the two threshold sets \(S_i,S_j\) cannot match these opposite decisions simultaneously. Indeed, for every \(x\in D_{ij}^{+}\),
\[
\mathbf 1\{x\in B_i\triangle S_i\}
+
\mathbf 1\{x\in B_j\triangle S_j\}
\ge 1.
\]
For the integral, we have
\[
\begin{aligned}
&\frac12 \int_{B_i\triangle S_i} |L_i-1|\,dP_{0,\eta_i}
+
\frac12 \int_{B_j\triangle S_j} |L_j-1|\,dP_{0,\eta_j}
\\
&\qquad =
\frac12 \int |L_i-1|
\mathbf 1\{x\in B_i\triangle S_i\}
\,dP_{0,\eta_i}
+
\frac12 \int |L_j-1|
\mathbf 1\{x\in B_j\triangle S_j\}
\,dP_{0,\eta_j}.
\end{aligned}
\]
The two integrands are nonnegative. Hence restricting both integrals to the
smaller set \(D_{ij}^{+}\) can only decrease their values, and therefore
\[
\begin{aligned}
&\frac12 \int_{B_i\triangle S_i} |L_i-1|\,dP_{0,\eta_i}
+
\frac12 \int_{B_j\triangle S_j} |L_j-1|\,dP_{0,\eta_j}
\\
&\qquad \ge
\frac12 \int_{D_{ij}^{+}}
|L_i-1|
\mathbf 1\{x\in B_i\triangle S_i\}
\,dP_{0,\eta_i} +
\frac12 \int_{D_{ij}^{+}}
|L_j-1|
\mathbf 1\{x\in B_j\triangle S_j\}
\,dP_{0,\eta_j}.
\end{aligned}
\]
We now use the local likelihood-ratio comparability condition. Since
\(\eta_i\) and \(\eta_j\) lie in the local neighborhood of
\(\eta_0\), assumption implies
\(
\frac{dP_{0,\eta_i}}{dP_{0,\eta_0}}(x)\ge B_{\mathrm{lr}}^{-1}\),
\(\frac{dP_{0,\eta_j}}{dP_{0,\eta_0}}(x)\ge B_{\mathrm{lr}}^{-1}\),
or equivalently,
\(
dP_{0,\eta_i}\ge B_{\mathrm{lr}}^{-1}dP_{0,\eta_0}\),
\(dP_{0,\eta_j}\ge B_{\mathrm{lr}}^{-1}dP_{0,\eta_0}\).
Therefore, 
\[
\begin{aligned}
&\frac12 \int_{D_{ij}^{+}}
|L_i-1|
\mathbf 1\{x\in B_i\triangle S_i\}
\,dP_{0,\eta_i}\ge
\frac{1}{2B_{\mathrm{lr}}}
\int_{D_{ij}^{+}}
|L_i-1|
\mathbf 1\{x\in B_i\triangle S_i\}
\,dP_{0,\eta_0},
\end{aligned}
\]
and similarly
\[
\begin{aligned}
&\frac12 \int_{D_{ij}^{+}}
|L_j-1|
\mathbf 1\{x\in B_j\triangle S_j\}
\,dP_{0,\eta_j}\ge
\frac{1}{2B_{\mathrm{lr}}}
\int_{D_{ij}^{+}}
|L_j-1|
\mathbf 1\{x\in B_j\triangle S_j\}
\,dP_{0,\eta_0}.
\end{aligned}
\]
Combining the two bounds yields
\[
\begin{aligned}
&\frac12 \int_{B_i\triangle S_i} |L_i-1|\,dP_{0,\eta_i}
+
\frac12 \int_{B_j\triangle S_j} |L_j-1|\,dP_{0,\eta_j}
\\
&\qquad\ge
\frac{1}{2B_{\mathrm{lr}}}
\int_{D_{ij}^{+}}
\left[
|L_i-1|\mathbf 1\{x\in B_i\triangle S_i\}
+
|L_j-1|\mathbf 1\{x\in B_j\triangle S_j\}
\right]
\,dP_{0,\eta_0}.
\end{aligned}
\]
Since at least one of the two disagreement indicators is equal to one on \(D_{ij}^{+}\), by assumption,
\[
\begin{aligned}
&\frac12\int_{B_i\triangle S_i}|L_i-1|\,dP_{0,\eta_i}
+
\frac12\int_{B_j\triangle S_j}|L_j-1|\,dP_{0,\eta_j}
\\
&\qquad\ge
\frac{1}{2B_{\mathrm{lr}}}
\int_{D_{ij}^{+}}
\min\{|L_i-1|,|L_j-1|\}
\,dP_{0,\eta_0}\ge
\frac{c_{\mathrm{BA}}}{2B_{\mathrm{lr}}}h.
\end{aligned}
\]

The other case is symmetric. If \(S_j\subseteq S_i\), then on
\(
D_{ij}^{-}=B_i^c\cap B_j\),
the oracle decision under \(\eta_i\) is negative, while the oracle decision under \(\eta_j\) is positive. Again, the nested sets \(S_i,S_j\) cannot match both oracle decisions simultaneously, so the same argument gives
\[
\frac12\int_{B_i\triangle S_i}|L_i-1|\,dP_{0,\eta_i}
+
\frac12\int_{B_j\triangle S_j}|L_j-1|\,dP_{0,\eta_j}
\ge
\frac{c_{\mathrm{BA}}}{2B_{\mathrm{lr}}}h.
\]
Because \(\tau_i,\tau_j\) were arbitrary, taking the infimum over thresholds gives
\[
\mathcal E_{\eta_i}^{\mathrm{BA}}(s)
+
\mathcal E_{\eta_j}^{\mathrm{BA}}(s)
\ge
\frac{c_{\mathrm{BA}}}{2B_{\mathrm{lr}}}h\quad \Rightarrow \quad \max\left\{
\mathcal E_{\eta_i}^{\mathrm{BA}}(s),
\mathcal E_{\eta_j}^{\mathrm{BA}}(s)
\right\}
\ge
\frac{c_{\mathrm{BA}}}{4B_{\mathrm{lr}}}h.
\]
Define
\(
\epsilon_h
:=
\frac{c_{\mathrm{BA}}}{8B_{\mathrm{lr}}}h\).
Then for every \(i\ne j\),
\[
\inf_s
\max\left\{
\mathcal E_{\eta_i}^{\mathrm{BA}}(s),
\mathcal E_{\eta_j}^{\mathrm{BA}}(s)
\right\}
\ge
2\epsilon_h.
\]

Now we use the same KL and Fano argument as in the proof of Theorem \ref{thm:AUROC-minimax}. By the KL-control part of Theorem \ref{thm:AUROC-minimax} in equation (\ref{eq:KL-upper-minimax}), for every \(i,j\),
\[
D_{\mathrm{KL}}
\left(
P_{\eta_i}^{(n)}
\,\middle\|\,
P_{\eta_j}^{(n)}
\right)
\le
4B_{\mathrm{lr}}I_{\max}nh^2.
\]
Choose
\(
h_n:=a\sqrt{\frac dn}\),
where \(a>0\) is sufficiently small so that
\[
4B_{\mathrm{lr}}I_{\max}nh_n^2
\le
\beta\log M
\]
for some fixed \(\beta\in(0,1)\). Since \(\log M\ge c_1d\), it is enough to choose
\(
a^2\le \frac{\beta c_1}{4B_{\mathrm{lr}}I_{\max}}\).
For all sufficiently large \(n\), we also have
\(
h_n\le r_0\),
\(h_n\le h_0
\).
Thus all local assumptions apply.
Set
\(
\epsilon_n
:=
\epsilon_{h_n}
=
\frac{c_{\mathrm{BA}}}{8B_{\mathrm{lr}}}h_n
=
c\sqrt{\frac dn}\).
Let \(V\sim \mathrm{Unif}\{1,\dots,M\}\). Conditional on \(V=i\), draw the training data from
\(
P_{\eta_i}^{(n)}\).
Given any score estimator \(\hat s\), define the decoder
\[
\hat V
\in
\arg\min_{1\le i\le M}
\mathcal E_{\eta_i}^{\mathrm{BA}}(\hat s).
\]
Suppose \(V=i\) and consider when
\(
\mathcal E_{\eta_i}^{\mathrm{BA}}(\hat s)<\epsilon_n\).
For every \(j\ne i\), the balanced-accuracy separation gives
\(
\max\left\{
\mathcal E_{\eta_i}^{\mathrm{BA}}(\hat s),
\mathcal E_{\eta_j}^{\mathrm{BA}}(\hat s)
\right\}
\ge
2\epsilon_n\).
Therefore,
\[
\mathcal E_{\eta_j}^{\mathrm{BA}}(\hat s)
\ge
2\epsilon_n
>
\mathcal E_{\eta_i}^{\mathrm{BA}}(\hat s),
\]
so the decoder must select \(i\). Hence
\(
\{\hat V\ne V\}
\subseteq
\left\{
\mathcal E_{\eta_V}^{\mathrm{BA}}(\hat s)\ge \epsilon_n
\right\}\).
Averaging over \(V\),
\[
\sup_{\eta\in\Theta}
\mathbb  P_{\eta}
\left(
\mathcal E_{\eta}^{\mathrm{BA}}(\hat s)\ge \epsilon_n
\right)
\ge
\mathbb  P(\hat V\ne V).
\]

By the mutual-information bound in equation (\ref{eq:mutual-info-bound}),
\(
I(V;D_n)
\le
\beta\log M\).
Therefore, by Fano's inequality (e.g. equation (15.31) in \cite{wainwright2019high}),
\[
\inf_{\hat V}P(\hat V\ne V)
\ge
1-\frac{I(V;D_n)+\log 2}{\log M}
\ge
1-\beta-\frac{\log 2}{\log M}.
\]
Since \(\log M\ge c_1d\), for sufficiently large \(d\) and sufficiently small fixed \(\beta\), there exists a constant \(c_0>0\) such that
\(
1-\beta-\frac{\log 2}{\log M}
\ge c_0\).
Thus
\[
\inf_{\hat s}
\sup_{\eta\in\Theta}
\mathbb  P_{\eta}
\left(
\mathcal E_{\eta}^{\mathrm{BA}}(\hat s)\ge \epsilon_n
\right)
\ge c_0.
\]

Finally, converting this probability lower bound into an expectation lower bound, for any nonnegative random variable \(Z\), by Markov inequality,
\(
\mathbb E Z\ge t\mathbb  P(Z\ge t)\).
Applying this with
\(
Z=\mathcal E_{\eta}^{\mathrm{BA}}(\hat s)\) and
\(
t=\epsilon_n\),
we obtain
\[
\inf_{\hat s}
\sup_{\eta\in\Theta}
\mathbb E_{\eta}
\mathcal E_{\eta}^{\mathrm{BA}}(\hat s)
\ge
c_0\epsilon_n
\ge
c\sqrt{\frac dn}.
\]
Combining with the definition of $\mathcal E_{\eta}^{\mathrm{BA}}(\hat s)$ and the optimality of the likelihood ratio score, we finish the proof.
\end{proof}

\begin{proof}[Proof of Theorem \ref{thm:misspecified-auc}] We prove the results for AUROC and AUPRC separately.

\medskip
\noindent\textbf{Proof for AUROC:}

We first show that high AUROC requires majority score variation. For any score \(s\in\mathcal S\), define
\(
V_0(s)=\mathbb E_{P_0}\{s(X)-u_0\}^2\).
We first prove that
\[
\operatorname{AUC}(s)
\le
\frac12+B_s(1+\sqrt{B_L})\sqrt{V_0(s)},
\]
If \(s\) is constant, then all points are tied and thus
\(
\operatorname{AUC}(s_0)=\frac12
\). Thus the inequality holds.
Now consider any non-constant score \(s\). Let \(F_{0,s}\) and \(F_{1,s}\) denote the distributions of \(s(X)\) under \(X\sim P_0\) and \(X\sim P_1\), respectively. Let
\[
U_0=s(X_0)\sim F_{0,s},
\qquad
U_1=s(X_1)\sim F_{1,s}.
\]
If \(F_{0,s}\) is continuous, then ties occur with probability zero. Hence
\(
\operatorname{AUC}(s)
=
\mathbb P\{U_1>U_0\}\).
Conditioning on \(U_1\), we obtain
\[
\mathbb P\{U_1>U_0\mid U_1\}
=
\mathbb P\{U_0<U_1\mid U_1\}
=
F_{0,s}(U_1),
\]
where the last equality uses the continuity of \(F_{0,s}\). Therefore,
\[
\operatorname{AUC}(s)
=
\mathbb E\{F_{0,s}(U_1)\}.
\]
Similarly, since \(U_0\) has CDF $F_{0,s}$, the probability integral transform implies that
\(
F_{0,s}(U_0)\sim \operatorname{Unif}(0,1)\).
Consequently,
\(
\mathbb E\{F_{0,s}(U_0)\}=\frac12\).
Therefore, since \(F_{0,s}\) is \(B_s\)-Lipschitz, the Kantorovich-Rubinstein inequality gives
\begin{align*}
\operatorname{AUC}(s)-\frac12
=
\mathbb E_{F_{1,s}}F_{0,s}(U)
-
\mathbb E_{F_{0,s}}F_{0,s}(U)\le
B_s W_1(F_{1,s},F_{0,s})\\\le
B_s(W_1(F_{1,s},\delta_{u_0})
+
W_1(F_{0,s},\delta_{u_0})),
\end{align*}
where \(\delta_{u_0}\) is the point mass at \(u_0\). 
Recall the definition of Wasserstein distance,  for probability measures \(\mu\) and \(\nu\) on \(\mathbb R\),
\(
W_1(\mu,\nu)
=
\inf_{\gamma\in\Pi(\mu,\nu)}
\int |u-v|\,d\gamma(u,v)\),
where \(\Pi(\mu,\nu)\) denotes the set of couplings of \(\mu\) and \(\nu\). If \(\nu=\delta_{u_0}\) is the point mass at \(u_0\), then any coupling between \(\mu\) and \(\delta_{u_0}\) must have second coordinate equal to \(u_0\) almost surely. Hence
\[
W_1(\mu,\delta_{u_0})
=
\int |u-u_0|\,d\mu(u).
\]
Applying this identity with \(\mu=F_{1,s}\), the distribution of \(s(X)\) under \(P_1\), gives
\[
W_1(F_{1,s},\delta_{u_0})
=
\int |u-u_0|\,dF_{1,s}(u)
=
\mathbb E_{P_1}|s(X)-u_0|,\quad W_1(F_{0,s},\delta_{u_0})
=
\mathbb E_{P_0}|s(X)-u_0|.
\]
Therefore, by the triangle inequality for \(W_1\),
\[
W_1(F_{1,s},F_{0,s})
\le
W_1(F_{1,s},\delta_{u_0})
+
W_1(F_{0,s},\delta_{u_0})
\le
\mathbb E_{P_1}|s(X)-u_0|
+
\mathbb E_{P_0}|s(X)-u_0|.
\]
By Cauchy-Schwarz,
\[
\mathbb E_{P_0}|s(X)-u_0|
\le
\sqrt{V_0(s)}.
\]
Since \(dP_1=L\,dP_0\) and \(L\le B_L\),
\[
\mathbb E_{P_1}\{s(X)-u_0\}^2
=
\mathbb E_{P_0}\left[
L(X)\{s(X)-u_0\}^2
\right]
\le
B_L V_0(s).
\]
Therefore,
\[
\mathbb E_{P_1}|s(X)-u_0|
\le
\sqrt{B_LV_0(s)}.
\]
Combining the previous bounds,
\begin{equation}\label{eq:wasserstein-two-dist}
W_1(F_{1,s},F_{0,s})
\le
(1+\sqrt{B_L})\sqrt{V_0(s)}.
\end{equation}
Thus
\[
\operatorname{AUC}(s)
\le
\frac12+B_s(1+\sqrt{B_L})\sqrt{V_0(s)}.
\]
Consequently, for any specified AUROC level $A$, if
\(
\operatorname{AUC}(s)\ge A\),
then
\(
A-\frac12
\le
B_s(1+\sqrt{B_L})\sqrt{V_0(s)}
\).
Hence
\[
V_0(s)
\ge
\left(\frac{A-\frac12}{B_s(1+\sqrt{B_L})}\right)^2
=:
\kappa_A.
\]

Next, we show that high AUROC requires a majority risk cost. By assumption,
\[
\ell(s(x),0)-\ell(u_0,0)
\ge
m_0\{s(x)-u_0\}^2.
\]
Taking expectation under \(P_0\),
\begin{equation}\label{eq:R0-R0-const-geqV0}
\cR_0(s)-\cR_0(s_0)
\ge
m_0V_0(s).
\end{equation}
Since \(s_0(x)\equiv u_0\) minimizes the pointwise majority-class loss, it also minimizes \(\cR_0\) over \(\mathcal S\). 
Then, for any AUROC level $A\in\left(\frac{1}{2},\AUC(\Lambda_{\eta^*})\right)$, by previous analysis, if
\(
\operatorname{AUC}(s)\ge A\),
it holds that
\(
V_0(s)\ge \kappa_A\).
Thus
\[
\operatorname{AUC}(s)\ge A
\quad\Longrightarrow\quad
\cR_0(s)-\cR_0(s_0)\ge m_0\kappa_A.
\]
So any score with AUROC at least \(A\) must pay a majority-class risk cost of at least \(m_0\kappa_A\).

Next, we show that the raw imbalanced population minimizer cannot pay this majority risk cost. Since \(s_{\rm raw}^*\) minimizes \(\cR_{\alpha_{\rm raw}}\), that is
\(
\cR_{\alpha_{\rm raw}}(s_{\rm raw}^*)
\le
\cR_{\alpha_{\rm raw}}(s_0)\).
Expanding and rearranging,
\[
(1-\alpha_{\rm raw})
\{\cR_0(s_{\rm raw}^*)-\cR_0(s_0)\}
\le
\alpha_{\rm raw}
\{\cR_1(s_0)-\cR_1(s_{\rm raw}^*)\}.
\]
By the definition of \(D_1\),
\(
\cR_1(s_0)-\cR_1(s_{\rm raw}^*)\le D_1\).
Therefore,
\begin{equation}\label{eq:R-0-alpha-raw-D-1}
\cR_0(s_{\rm raw}^*)-\cR_0(s_0)
\le
\frac{\alpha_{\rm raw}}{1-\alpha_{\rm raw}}D_1.
\end{equation}
By the severe-imbalance assumption,
\(
\frac{\alpha_{\rm raw}}{1-\alpha_{\rm raw}}D_1
<
m_0\kappa_A\), we have
\(
\cR_0(s_{\rm raw}^*)-\cR_0^*
<
m_0\kappa_A\).
If, contrary to the desired conclusion,
\(
\operatorname{AUC}(s_{\rm raw}^*)\ge A\),
then previous analysis would imply
\(
\cR_0(s_{\rm raw}^*)-\cR_0^*
\ge
m_0\kappa_A\),
which contradicts the previous strict inequality.
Therefore,
\[
\operatorname{AUC}(s_{\rm raw}^*)<A.
\]

Let
\(
M^*=(1-\alpha^*)P_0+\alpha^* P_1\) 
denote the augmented feature distribution. Under the augmented population, the conditional probability of class \(1\) given \(X=x\) is
\[
\eta_{\rm \alpha^*}(x)
=
\frac{\alpha^* p_1(x)}{(1-\alpha^*)p_0(x)+\alpha^* p_1(x)}
=
g_{\alpha^*}(\Lambda_{\eta^*}(x)),
\]
where $g_{\alpha^*}(\cdot)$ is a strictly increasing function.
For a fixed conditional class probability \(\eta\), write
\[
C_\eta(u)=\eta \ell(u,1)+(1-\eta)\ell(u,0).
\]
By strict propriety and quadratic calibration, for all \(u,\eta\in[0,1]\),
\(
C_\eta(u)-C_\eta(\eta)
\ge
m_\ell (u-\eta)^2\).
Applying this inequality pointwise with \(\eta=\eta_{\alpha^*}(x)\) and \(u=s(x)\), we obtain
\[
C_{\eta_{\alpha^*}(x)}(s(x))-C_{\eta_{\alpha^*}(x)}(\eta_{\alpha^*}(x))
\ge
m_\ell\{s(x)-\eta_{\alpha^*}(x)\}^2.
\]
Taking expectation with respect to \(X\sim M^*\) gives
\[
\mathbb E_{M^*}\!\left[
C_{\eta_{\alpha^*}(X)}(s(X))-C_{\eta_{\alpha^*}(X)}(\eta_{\alpha^*}(X))
\right]
\ge
m_\ell \mathbb E_{M^*}\{s(X)-\eta_{\alpha^*}(X)\}^2.
\]
By direct calculation, we can write in expectation that
\(
R_{\alpha^*}(s)=\mathbb E_{M^*} C_{\eta_{\alpha^*}(X)}(s(X))\), and 
\(R_{\alpha^*}(\eta_{\alpha^*})=\mathbb E_{M^*} C_{\eta_{\alpha^*}(X)}(\eta_{\alpha^*}(X))\).
We then conclude that
\[
R_{\alpha^*}(s)-R_{\alpha^*}(\eta_{\alpha^*})
\ge
m_\ell \mathbb E_{M^*}\{s(X)-\eta_{\alpha^*}(X)\}^2.
\]
Since \(s_{\alpha^*}\) minimizes \(R_{\alpha^*}\) over \(\mathcal S\), by assumption
\(
R_{\alpha^*}(s_{\alpha^*})-R_{\alpha^*}(\eta_{\alpha^*})
\le
\epsilon^*\).
Therefore,
\begin{equation}\label{eq:sbal-Lambda-diff}
\mathbb E_{M^*}\{s_{\alpha^*}(X)-\eta_{\alpha^*}(X)\}^2
\le
\frac{\epsilon^*}{m_\ell}.
\end{equation}
Define
\(
e(x)=s_{\alpha^*}(x)-\eta_{\alpha^*}(x)\).
Recall the definition of AUROC,
\[
\operatorname{AUC}(s)
=
\mathbb E\left[
\mathbf 1\{s(X_1)>s(X_0)\}
+
\frac12\mathbf 1\{s(X_1)=s(X_0)\}
\right].
\]
Hence
\[
\begin{aligned}
\operatorname{AUC}(\eta_{\alpha^*})-\operatorname{AUC}(s_{\alpha^*})\le
\mathbb P\left(
\eta_{\alpha^*}(X_1)\ge \eta_{\alpha^*}(X_0),\,
s_{\alpha^*}(X_1)\le s_{\alpha^*}(X_0)
\right).
\end{aligned}
\]
On the event
\(
\left\{
\eta_{\alpha^*}(X_1)\ge \eta_{\alpha^*}(X_0),\,
s_{\alpha^*}(X_1)\le s_{\alpha^*}(X_0)
\right\}\),
we have
\[
\eta_{\alpha^*}(X_1)+e(X_1)
=
s_{\alpha^*}(X_1)
\le
s_{\alpha^*}(X_0)
=
\eta_{\alpha^*}(X_0)+e(X_0).
\]
Therefore,
\[
\eta_{\alpha^*}(X_1)-\eta_{\alpha^*}(X_0)
\le
e(X_0)-e(X_1)
\le
|e(X_1)|+|e(X_0)|.
\]
Consequently, for any \(t>0\),
\[
\begin{aligned}
&\left\{
\eta_{\alpha^*}(X_1)\ge \eta_{\alpha^*}(X_0),\,
s_{\alpha^*}(X_1)\le s_{\alpha^*}(X_0)
\right\} \\
&\subseteq
\left\{
|\eta_{\alpha^*}(X_1)-\eta_{\alpha^*}(X_0)|\le t
\right\}
\cup
\left\{
|e(X_1)|+|e(X_0)|>t
\right\}.
\end{aligned}
\]
Taking probabilities and applying the union bound gives
\begin{equation}\label{eq:AUC-two-prob-upper}
\begin{aligned}
\operatorname{AUC}(\eta_{\alpha^*})-\operatorname{AUC}(s_{\alpha^*})
\le\;&
\mathbb P\left(
|\eta_{\alpha^*}(X_1)-\eta_{\alpha^*}(X_0)|\le t
\right) +
\mathbb P\left(
|e(X_1)|+|e(X_0)|>t
\right).
\end{aligned}
\end{equation}
By the pairwise margin condition,
\(
\mathbb P\left(
|\Lambda_{\eta^*}(X_1)-\Lambda_{\eta^*}(X_0)|\le t
\right)
\le
C_{\rm pair}t\).
Since $g_{\alpha^*}$ is strictly increasing and has derivative bounded below by some constant $c_{\alpha^*}>0$, it holds that
\begin{align*}
    |\eta_{\alpha^*}(X_1)-\eta_{\alpha^*}(X_0)|\geq c_{\alpha^*}|\Lambda_{\eta^*}(X_1)-\Lambda_{\eta^*}(X_0)|.
\end{align*}
Thus
\[
\mathbb P\left(
|\eta_{\alpha^*}(X_1)-\eta_{\alpha^*}(X_0)|\le t
\right)
\le
Ct.
\]
By Markov's inequality,
\[
\mathbb P\left(
|e(X_1)|+|e(X_0)|>t
\right)
\le
\frac{\mathbb E(|e(X_1)|+|e(X_0)|)^2}{t^2}\leq \frac{2\mathbb E_{P_1}e(X)^2
+
2\mathbb E_{P_0}e(X)^2}{t^2}.
\]
Since
\(
\mathbb E_{M^*}e(X)^2
=
(1-\alpha^*)\mathbb E_{P_0}e(X)^2
+
\alpha^*\mathbb E_{P_1}e(X)^2\),
we get
\[
2\mathbb E_{P_1}e(X)^2
+
2\mathbb E_{P_0}e(X)^2
\leq
\max\left\{\frac{2}{\alpha^*},\frac{2}{1-\alpha^*}\right\}\mathbb E_{M^*}e(X)^2
\le
\frac{2}{\alpha^*(1-\alpha^*)}\frac{\epsilon^*}{m_\ell},
\]
where $C_{\alpha^*}$ is constant determined by $\alpha^*$.
Therefore,
\[
\operatorname{AUC}(\eta_{\alpha^*})-\operatorname{AUC}(s_{\alpha^*})
\le
Ct
+
\frac{2}{\alpha^*(1-\alpha^*)}\frac{\epsilon^*}{m_\ell t^2}.
\]
Optimizing over \(t\) yields
\begin{equation}\label{eq:AUC-epbal-upper}
\operatorname{AUC}(\eta_{\alpha^*})-\operatorname{AUC}(s_{\alpha^*})
\le
C
\left(\frac{\epsilon^*}{m_\ell}\right)^{1/3}.
\end{equation}
Combining that $g_{\alpha^*}$ is strictly increasing, and
combining this with previous analysis at the AUROC level $A$, we have
\[
\operatorname{AUC}(s_{\rm raw}^*)<A,\quad \operatorname{AUC}(s_{\alpha^*})
\ge
\operatorname{AUC}(\Lambda_{\eta^*})
-
C
\left(\frac{\epsilon^*}{m_\ell}\right)^{1/3}.
\]
That is, 
\[
\operatorname{AUC}(s_{\alpha^*})-\operatorname{AUC}(s_{\rm raw}^*)
\ge
\operatorname{AUC}(\Lambda_{\eta^*})-A
-
C
\left(\frac{\epsilon^*}{m_\ell}\right)^{1/3}.
\]
We thus have derived the result at the population level.

Next, for finite-sample estimators, by essentially the same procedure with equations (\ref{eq:raw-auc-closeness}) and (\ref{eq:aug-auc-closenesss}), but with the simultaneous statement in Lemma \ref{lemma:empiricalestimation}, the following inequalities hold simultaneously with probability at least $1-\delta$ that
\begin{align*}
    \left|\AUC(s_{\hat{\theta}_{\raw}})-\AUC(s_{\raw}^*)\right|\leq \frac{CLc}{\lambda}\sqrt{\frac{B^2d\log(14d/\delta)}{n_0+n_1}},
\end{align*}
and 
\begin{align*}\left|\AUC(s_{\hat{\theta}_{\aug}})-\AUC(s_{\alpha^*})\right|\leq \frac{CLc}{\lambda}\sqrt{\frac{B^2d\log(14d/\delta)}{n_0+n_1+\tilde{n}}}+CL_gL\sqrt{\frac{2}{\mu L_{\rho}}}\frac{\tilde{n}}{n_0+n_1+\tilde{n}}\epsilon_{\syn},
    \end{align*}
The synthetic size is taken as $\tilde n=\frac{\alpha^*}{1-\alpha^*}n_0-n_1$. If this is not an integer, we can simply add an $O(1/n_0)$ term, which is small. This gives
\begin{align*}\left|\AUC(s_{\hat{\theta}_{\aug}})-\AUC(s_{\alpha^*})\right|\leq \frac{CLc}{\lambda}\sqrt{\frac{B^2d\log(14d/\delta)}{n_0/(1-\alpha^*)}}+CL_gL\sqrt{\frac{1}{2\mu L_{\rho}}}\left(\alpha^*-(1-\alpha^*)\frac{n_1}{n_0}\right)\epsilon_{\syn},
    \end{align*}
Combining and absorbing constants into $c$ gives the final bound for AUROC.

\medskip
\noindent\textbf{Proof for AUPRC:}

We first prove that high AUPRC forces nontrivial variation under $P_0$. For constant scores, we adopt the standard no-ranking convention under which $\AUPRC(s)=\pi_1$. Since the AUPRC part considers $A>\pi_1$, constant scores are naturally excluded from the subsequent high-AUPRC argument.
Write
\[
    G_{0,s}(t)=P_0(s(X)\ge t),
    \qquad
    G_{1,s}(t)=P_1(s(X)\ge t).
\]
For a recall level $r\in[0,1]$, let $t_r$ be a threshold satisfying
\(
    G_{1,s}(t_r)=r\).
Since \(F_{0,s}\) is \(B_s\)-Lipschitz, it follows that \(F_{1,s}\) is \(B_LB_s\)-Lipschitz and hence continuous. Therefore the recall parametrization is well defined.
Define the false-positive rate at recall $r$ by
\(
    q_s(r)=G_{0,s}(t_r)\).
Then the AUPRC of $s$ can be written as
\[
    \operatorname{AUPRC}(s)
    =
    \int_0^1
    \frac{\pi_1 r}
         {\pi_1 r+\pi_0 q_s(r)}
    \,dr .
\]
If $F_{0,s}=F_{1,s}$, then $q_s(r)=r$, and hence the precision is equal to
$\pi_1$ for every $r>0$. Thus $\pi_1$ is the no-ranking information baseline for AUPRC.
For $r\in[0,1]$, 
\[
    \operatorname{AUPRC}(s)-\pi_1
    =
    \int_0^1
    \{\phi_r(q_s(r))-\phi_r(r)\}
    \,dr,
\]
where $    \phi_r(z)
    =
    \frac{\pi_1 r}{\pi_1 r+\pi_0 z}$.
By equation (\ref{eq:phi-diff-upper-z}), for $z,z'\in[0,1]$,
\(
    |\phi_r(z)-\phi_r(z')|
    \le
    \min\left\{
        1,
        \frac{\pi_0}{\pi_1}
        \frac{|z-z'|}{r}
    \right\}\).
Using the same standard splitting argument for the singularity at $r=0$ as in equation (\ref{eq:9}), we get
\[
    |\operatorname{AUPRC}(s)-\pi_1|
    \le
    2
    \sqrt{
        \frac{\pi_0}{\pi_1}
        \int_0^1 |q_s(r)-r|\,dr}.
\]

We now bound the integrated false-positive-rate discrepancy. 
Since
$q_s(r)=G_{0,s}(t_r)$ and $r=G_{1,s}(t_r)$, by a change-of-variable statement,
\[
    \int_0^1 |q_s(r)-r|\,dr
    =
    \int |G_{0,s}(t)-G_{1,s}(t)|\,dF_{1,s}(t).
\]
Since
$L_{\eta^*}\le B_L$ $P_0$-a.s., the score distribution under $P_1$ is
dominated by the score distribution under $P_0$ such that
\(
    dF_{1,s} \le B_L\,dF_{0,s}\).
Moreover, since $F_{0,s}$ is $B_s$-Lipschitz,
\(
    dF_{0,s}(t)\le B_s\,dt\).
Therefore
\(
    dF_{1,s}(t)\le B_LB_s\,dt\).
It then follows that
\[
\begin{aligned}
    \int_0^1 |q_s(r)-r|\,dr
    &\le
    B_LB_s
    \int |F_{1,s}(t)-F_{0,s}(t)|\,dt  =
    B_LB_s\, W_1(F_{1,s},F_{0,s}) .
\end{aligned}
\]
Here the last equality uses the one-dimensional representation of the
Wasserstein-$1$ distance.
Combining equation (\ref{eq:wasserstein-two-dist}),
\[
    \int_0^1 |q_s(r)-r|\,dr
    \le
    B_LB_s(1+\sqrt{B_L})\sqrt{V_0(s)}.
\]
Plugging this into the AUPRC stability bound yields
\[
    \operatorname{AUPRC}(s)
    \le
    \pi_1
    +
    2
    \left\{
        \frac{\pi_0}{\pi_1}
        B_LB_s(1+\sqrt{B_L})
    \right\}^{1/2}
    V_0(s)^{1/4}.
\]
Therefore, if $\operatorname{AUPRC}(s)\ge A>\pi_1$, then
\[
    V_0(s)
    \ge
    \left(
        \frac{A-\pi_1}{2
    \left\{
        \frac{\pi_0}{\pi_1}
        B_LB_s(1+\sqrt{B_L})
    \right\}^{1/2}}
    \right)^4
    =:
    \kappa_A^{\mathrm{PR}} .
\]
Next, by equation (\ref{eq:R0-R0-const-geqV0}),
\(
    \cR_0(s)-\cR_0(s_0)
\ge
m_0V_0(s)\),
where $s_0(x)\equiv u_0$. Therefore,
\[
\operatorname{AUPRC}(s)\ge A
\quad\Longrightarrow\quad
\cR_0(s)-\cR_0^*\ge m_0\kappa_A^{\mathrm{PR}}.
\]

Now consider the raw population minimizer $s_{\mathrm{raw}}^*$.
We have
\(
    \cR_{\alpha_{\mathrm{raw}}}(s_{\mathrm{raw}}^*)
    \le
    \cR_{\alpha_{\mathrm{raw}}}(s_0)\).
Equivalently,
\[
\begin{aligned}
(1-\alpha_{\mathrm{raw}})\cR_0(s_{\mathrm{raw}}^*)
+
\alpha_{\mathrm{raw}}\cR_1(s_{\mathrm{raw}}^*)\le
(1-\alpha_{\mathrm{raw}})\cR_0(s_0)
+
\alpha_{\mathrm{raw}}\cR_1(s_0).
\end{aligned}
\]
Rearranging and by the definition of $D_1$,
\[
    \cR_0(s_{\mathrm{raw}}^*)-\cR_0^*
    \le
    \frac{\alpha_{\mathrm{raw}}}{1-\alpha_{\mathrm{raw}}}D_1.
\]
Under the stated severe-imbalance condition,
\(
    \frac{\alpha_{\mathrm{raw}}}{1-\alpha_{\mathrm{raw}}}D_1
    <
    m_0\kappa_A^{\mathrm{PR}}\).
Thus
\(
    \cR_0(s_{\mathrm{raw}}^*)-\cR_0^*
    <
    m_0\kappa_A^{\mathrm{PR}}\).
If $\operatorname{AUPRC}(s_{\mathrm{raw}}^*)\ge A$, then the previous
analysis would force
\(
    \cR_0(s_{\mathrm{raw}}^*)-\cR_0^*
    \ge
    m_0\kappa_A^{\mathrm{PR}}\),
which is a contradiction. Hence
\[
    \operatorname{AUPRC}(s_{\mathrm{raw}}^*)<A .
\]

We next lower bound the AUPRC of the augmented population minimizer. By the AUPRC
stability inequality in equation (\ref{eq:9}),
\[
\begin{aligned}
\operatorname{AUPRC}(\eta_{\alpha^*})
-
\operatorname{AUPRC}(s_{\alpha^*})\le
2
\left\{
    \frac{\pi_0}{\pi_1}
    \int_0^1
    \left|
        \widetilde{\operatorname{FPR}}_{\eta_{\alpha^*}}(r)
        -
        \widetilde{\operatorname{FPR}}_{s_{\alpha^*}}(r)
    \right|
    dr
\right\}^{1/2}.
\end{aligned}
\]
By equation (\ref{eq:FPR-difference-diff-s}), the integrated FPR difference is bounded by the probability of a pairwise
ranking disagreement:
\[
\begin{aligned}
\int_0^1
\left|
    \widetilde{\operatorname{FPR}}_{\eta_{\alpha^*}}(r)
    -
    \widetilde{\operatorname{FPR}}_{s_{\alpha^*}}(r)
\right|
dr\le
\mathbb P\left(
    \operatorname{sign}\{\eta_{\alpha^*}(X_1)-\eta_{\alpha^*}(X_0)\}
    \neq
    \operatorname{sign}
    \{s_{\alpha^*}(X_1)-s_{\alpha^*}(X_0)\}
\right).
\end{aligned}
\]
By a standard set argument, for any $t>0$,
\[
\begin{aligned}
&
\mathbb P\left(
    \operatorname{sign}\{\eta_{\alpha^*}(X_1)-\eta_{\alpha^*}(X_0)\}
    \neq
    \operatorname{sign}
    \{s_{\alpha^*}(X_1)-s_{\alpha^*}(X_0)\}
\right)
\\
&\qquad\le
\mathbb P\bigl(
    |\eta_{\alpha^*}(X_1)-\eta_{\alpha^*}(X_0)|\le t
\bigr)
+
\mathbb P\bigl(
    |e(X_1)|+|e(X_0)|>t
\bigr).
\end{aligned}
\]
By the same analysis used between equations (\ref{eq:AUC-two-prob-upper}) and (\ref{eq:AUC-epbal-upper}), it then holds that
\[
    \int_0^1
    \left|
        \widetilde{\operatorname{FPR}}_{\eta_{\alpha^*}}(r)
        -
        \widetilde{\operatorname{FPR}}_{s_{\alpha^*}}(r)
    \right|
    dr
    \le
    C
    \left(
        \frac{\epsilon^*}{m_\ell}
    \right)^{1/3}.
\]
Consequently, combining the AUPRC invariance for strictly increasing functions,
\[
    \operatorname{AUPRC}(s_{\alpha^*})
    \ge
    \operatorname{AUPRC}(\Lambda_{\eta^*})
    -
    C
    \left(\frac{\pi_0}{\pi_1}\right)^{1/2}\left(
        \frac{\epsilon^*}{m_\ell}
    \right)^{1/6}.
\]

Combining this inequality with
$\operatorname{AUPRC}(s_{\mathrm{raw}}^*)<A$, we obtain the population
improvement bound
\[
\begin{aligned}
\operatorname{AUPRC}(s_{\alpha^*})
-
\operatorname{AUPRC}(s_{\mathrm{raw}}^*)\ge
\operatorname{AUPRC}(\Lambda_{\eta^*})
-
A
-
C\left(\frac{\pi_0}{\pi_1}\right)^{1/2}
\left(
    \frac{\epsilon^*}{m_\ell}
\right)^{1/6}.
\end{aligned}
\]

Finally, it remains to pass from population minimizers to empirical estimators. By the same procedure with equations (\ref{eq:rawLambdadiffAUPRC}) and (\ref{eq:augLambdadiffAUPRC}), but with the simultaneous statement in Lemma \ref{lemma:empiricalestimation}, the following inequalities hold
simultaneously with probability at least $1-\delta$:
\[
\begin{aligned}
&
\left|
    \operatorname{AUPRC}(s_{\widehat\theta_{\mathrm{raw}}})
    -
    \operatorname{AUPRC}(s_{\mathrm{raw}}^*)
\right|\le 
\sqrt{\frac{CLc}{\lambda}\frac{\pi_0}{\pi_1}}\left(\frac{B^2d\log(14d/\delta)}{n_0+n_1}\right)^{1/4},
\end{aligned}
\]
and
\[
\begin{aligned}
&\left|
    \operatorname{AUPRC}(s_{\widehat\theta_{\mathrm{aug}}})
    -
    \operatorname{AUPRC}(s_{\alpha^*})
\right|
\\
&\leq \sqrt{\frac{CLc}{\lambda}\frac{\pi_0}{\pi_1}}\left(\frac{B^2d\log(14d/\delta)}{n_0+n_1+\tilde{n}}\right)^{1/4}+2\sqrt{2CL\frac{\pi_0}{\pi_1}}\left(\frac{L_g^2}{2\mu L_{\rho}}\left(\frac{\tilde{n}}{n_0+n_1+\tilde{n}}\epsilon_{\mathrm{syn}}\right)^2\right)^{1/4}.
\end{aligned}
\]
Taking $\widetilde n=\frac{\alpha^*}{1-\alpha^*}n_0-n_1$, and absorbing constants, we finish the proof.
\end{proof}

\begin{proof}[Proof of Theorem \ref{thm:best-BA-improvement}]
We prove the theorem for two metrics separately.

\medskip
\noindent\textbf{Proof for BA:}

First, we show that a high best-threshold balanced accuracy requires nontrivial majority
score variation. For any score \(s\in\mathcal S\), define
\(
V_0(s)
:=
\mathbb E_{P_0}\{s(X)-u_0\}^2 \).
Let \(F_{0,s}\) and \(F_{1,s}\) denote the distributions of \(s(X)\) under
\(X\sim P_0\) and \(X\sim P_1\), respectively. For a threshold \(\tau\), write
\(
B_s(\tau)=\{x:s(x)\ge \tau\}\).
Then
\[
\begin{aligned}
\mathrm{BA}(s,\tau)-\frac12
=
\frac12\left\{
P_1(B_s(\tau))+P_0(B_s(\tau)^c)-1
\right\}
=
\frac12\left\{
P_1(s(X)\ge \tau)-P_0(s(X)\ge \tau)
\right\}.
\end{aligned}
\]
Therefore,
\begin{equation}\label{eq:supBA-12-upper}
\sup_{\tau}\BA(s,\tau)-\frac12
\le
\frac12
\sup_{\tau\in\mathbb R}
\left|
F_{1,s}([\tau,\infty))-F_{0,s}([\tau,\infty))
\right|.
\end{equation}

Next, let
\(U_1\sim F_{1,s}\) and \(U_0\sim F_{0,s}\). For any coupling of \((U_1,U_0)\) and any
\(h>0\),
\[
\mathbf 1\{U_1\ge \tau\}-\mathbf 1\{U_0\ge \tau\}
\le
\mathbf 1\{|U_1-U_0|>h\}
+
\mathbf 1\{U_0\in[\tau-h,\tau)\}.
\]
Taking expectations, using Markov inequality and the \(B_s\)-Lipschitz property of \(F_{0,s}\), and optimizing over couplings gives
\[
F_{1,s}([\tau,\infty))-F_{0,s}([\tau,\infty))
\le
\frac{\mathbb E|U_1-U_0|}{h}
+
B_s h\le
\frac{W_1(F_{1,s},F_{0,s})}{h}+B_s h.
\]
The inequality in the other direction can be derived similarly.
Therefore taking absolute value and taking supremum gives
\[
\sup_{\tau\in\mathbb R}
\left|
F_{1,s}([\tau,\infty))-F_{0,s}([\tau,\infty))
\right|
\le
\frac{W_1(F_{1,s},F_{0,s})}{h}+B_s h.
\]
Choosing \(h=\{W_1(F_{1,s},F_{0,s})/B_s\}^{1/2}\), we get
\begin{equation}\label{eq:F10s-wass}
\sup_{\tau\in\mathbb R}
\left|
F_{1,s}([\tau,\infty))-F_{0,s}([\tau,\infty))
\right|
\le
2\sqrt{B_sW_1(F_{1,s},F_{0,s})}.
\end{equation}
Combining with equation (\ref{eq:supBA-12-upper}), we have
\[
\sup_{\tau}\BA(s,\tau)-\frac12
\le
\sqrt{B_sW_1(F_{1,s},F_{0,s})}.
\]
Combining with equation (\ref{eq:wasserstein-two-dist}) gives
\[
\sup_{\tau}\BA(s,\tau)
\le
\frac12
+
\sqrt{B_s(1+\sqrt{B_L})}\,
V_0(s)^{1/4}.
\]
Consequently, 
\[
\sup_{\tau}\BA(s,\tau)\ge A\quad\Rightarrow\quad V_0(s)
\ge
\left(
\frac{A-\frac12}
{\sqrt{B_s(1+\sqrt{B_L})}}
\right)^4=:\kappa_A^{\rm BA}.
\]

Next, we show that high best-threshold balanced accuracy requires a majority-risk cost.
Combining the analysis with
equation (\ref{eq:R0-R0-const-geqV0}),
\[
\sup_\tau\BA(s,\tau)\ge A
\quad\Longrightarrow\quad
\cR_0(s)-\cR_0^*
\ge
m_0\kappa_A^{\rm BA}.
\]

We now show that the raw imbalanced population minimizer cannot pay this majority-risk
cost under the stated imbalance condition. 
By the severe imbalance condition,
\(
\frac{\alpha_{\rm raw}}{1-\alpha_{\rm raw}}D_1
<
m_0\kappa_A^{\rm BA}\), combined with equation (\ref{eq:R-0-alpha-raw-D-1}),
\[
\cR_0(s^*_{\rm raw})-\cR_0^*
<
m_0\kappa_A^{\rm BA}.
\]
If, contrary to the desired conclusion,
\(
\sup_\tau\BA(s^*_{\rm raw},\tau)\ge A\),
then the previous analysis would give
\(
\cR_0(s^*_{\rm raw})-\cR_0^*
\ge
m_0\kappa_A^{\rm BA}\),
which is a contradiction. Hence
\[
\sup_\tau\BA(s^*_{\rm raw},\tau)<A.
\]

We next control the augmented population score.
Let
\[
B^*
=
\{x:\Lambda_{\eta^*}(x)\ge 1/2\}
=
\{x:L_{\eta^*}(x)\ge 1\}=\{x:\eta_{\alpha^*}(x)\ge \alpha^*\}.
\]
For any measurable set \(B\subseteq\mathcal X\),
\(
\mathrm{BA}(B)
=
\frac12\{P_1(B)+P_0(B^c)\}\).
Since \(dP_1=L_{\eta^*}\,dP_0\),
\[
\mathrm{BA}(B)
=
\frac12
\left\{
1+\int_B(L_{\eta^*}(x)-1)\,dP_0(x)
\right\}.
\]
Thus \(B^*\) maximizes balanced accuracy, and by strictly increasing transformation,
\[
\sup_\tau\BA(\eta_{\alpha^*},\tau)=\sup_\tau\BA(\Lambda_{\eta^*},\tau)=\mathrm{BA}(B^*).
\]
Since
\(
\sup_\tau\BA(s_{\alpha^*},\tau)
\ge
\mathrm{BA}\left(\{x:s_{\alpha^*}(x)\ge \alpha^*\}\right)\),
we have
\[
\begin{aligned}
\sup_\tau\BA(\Lambda_{\eta^*},\tau)-\sup_\tau\BA(s_{\alpha^*},\tau)
&\le
\mathrm{BA}(B^*)
-
\mathrm{BA}\left(\{x:s_{\alpha^*}(x)\ge \alpha^*\}\right)
\\
&=
\frac12
\int_{\{s_{\alpha^*}\ge \alpha^*\}\triangle B^*}
|L(x)-1|\,dP_0(x).
\end{aligned}
\]
On the disagreement set
\(
\{s_{\alpha^*}\ge \alpha^*\}\triangle \{\eta_{\alpha^*}\ge \alpha^*\}\),
we must have
\(
|\eta_{\alpha^*}(x)-\alpha^*|
\le
|s_{\alpha^*}(x)-\eta_{\alpha^*}(x)|\).
Moreover, inverse the relationship, we have
\[
L_{\eta^*}=\frac{1-\alpha^*}{\alpha^*}\frac{\eta_{\alpha^*}}{1-\eta_{\alpha^*}},\qquad |L_{\eta^*}-1|=\frac{|\eta_{\alpha^*}-\alpha^*|}{\alpha^*(1-\eta_{\alpha^*})}.
\]
Since $L_{\eta^*}\leq B_L$,
\(
\frac{1}{1-\eta_{\alpha^*}}=1+\frac{\alpha^*}{1-\alpha^*}L_{\eta^*}\leq 1+\frac{\alpha^*}{1-\alpha^*}B_L\).
Thus
\[
|L_{\eta^*}-1|\leq \left(\frac{1}{\alpha^*}+\frac{B_L}{1-\alpha^*}\right)|\eta_{\alpha^*}-\alpha^*|\leq \left(\frac{1}{\alpha^*}+\frac{B_L}{1-\alpha^*}\right)|s_{\alpha^*}(x)-\eta_{\alpha^*}(x)|.
\]
Combining the preceding displays with equation (\ref{eq:sbal-Lambda-diff}) gives
\[
\begin{aligned}
&\sup_\tau\BA(\Lambda_{\eta^*},\tau)-\sup_\tau\BA(s_{\alpha^*},\tau)
\le
\frac{1}{2}\left(\frac{1}{\alpha^*}+\frac{B_L}{1-\alpha^*}\right)\bE_{P_0}|s_{\alpha^*}(X)-\eta_{\alpha^*}(X)|
\\
&\quad \le
\frac{1}{2}\left(\frac{1}{\alpha^*}+\frac{B_L}{1-\alpha^*}\right)
\left[
\mathbb E_{P_0}\{s_{\alpha^*}(X)-\eta_{\alpha^*}(X)\}^2
\right]^{1/2}
\le
\frac{1-\alpha^*+\alpha^*B_L}{2\alpha^*(1-\alpha^*)^{3/2}}
\sqrt{\frac{\epsilon^*}{m_\ell}}.
\end{aligned}
\]
Hence
\[
\sup_\tau\BA(s_{\alpha^*},\tau)
\ge
\sup_\tau\BA(\Lambda_{\eta^*},\tau)
-
\frac{1-\alpha^*+\alpha^*B_L}{2\alpha^*(1-\alpha^*)^{3/2}}
\sqrt{\frac{\epsilon^*}{m_\ell}}.
\]
Combining this with \(\sup_{\tau}\BA(s^*_{\rm raw},\tau)<A\), we obtain the population-level gap
\[
\sup_\tau\BA(s_{\alpha^*},\tau)
-
\sup_{\tau}\BA(s^*_{\rm raw},\tau)
\ge
\sup_{\tau}\BA(\Lambda_{\eta^*},\tau)-A
-
\frac{1-\alpha^*+\alpha^*B_L}{2\alpha^*(1-\alpha^*)^{3/2}}
\sqrt{\frac{\epsilon^*}{m_\ell}}.
\]

It remains to pass from population minimizers to empirical estimators. By the same procedure with equations (\ref{eq:raw-BA-empirical}) and (\ref{eq:aug-BA-empirical}), but with the simultaneous statement in Lemma \ref{lemma:empiricalestimation}, we have the following holds simultaneously with probability at least $1-\delta$ that
\[
\left|\sup_\tau \BA(s_{\hat{\theta}_\raw},\tau)-\sup_\tau \BA(s_{\rm raw}^*,\tau)\right|\leq \frac{CLc}{\lambda}\sqrt{\frac{B^2d\log(14d/\delta)}{n_0+n_1}},
\]
and
\[
\left|\sup_\tau \BA(s_{\hat{\theta}_\aug},\tau)-\sup_\tau \BA(s_{\alpha^*},\tau)\right|\leq \frac{CLc}{\lambda}\sqrt{\frac{B^2d\log(14d/\delta)}{n_0+n_1+\tilde{n}}}+\frac{CLL_g}{\sqrt{2\mu L_\rho}}\frac{\tilde{n}}{n_0+n_1+\tilde{n}}\epsilon_{\syn}.
\]
Taking $\tilde n=\frac{\alpha^*}{1-\alpha^*}n_0-n_1$ and absorbing constants gives the final result for BA.

\medskip
\noindent\textbf{Proof for $\F_1$ score:}

We first show that a high best-threshold $\F_1$ score implies nontrivial majority-score variation.
For a fixed threshold $\tau$, define
\(
R_\tau:=P_1(s(X)\ge \tau)\), and
\(Q_\tau:=P_0(s(X)\ge \tau)\).
Then
\[
\F_1(s,\tau)
=
\frac{2\pi_1R_\tau}
{\pi_1(1+R_\tau)+\pi_0Q_\tau}.
\]
The constant-score best-threshold $\F_1$ baseline is
\(
b_\pi=\frac{2\pi_1}{1+\pi_1}\).
By direct calculation,
\[
\begin{aligned}
\F_1(s,\tau)-b_\pi
=
\frac{2\pi_1R_\tau}
{\pi_1(1+R_\tau)+\pi_0Q_\tau}
-
\frac{2\pi_1}{1+\pi_1}
=
\frac{
2\pi_1\{R_\tau-\pi_1-\pi_0Q_\tau\}
}
{
\{\pi_1(1+R_\tau)+\pi_0Q_\tau\}(1+\pi_1)
}.
\end{aligned}
\]
Since
\(
R_\tau-\pi_1-\pi_0Q_\tau
\le R_\tau-Q_\tau\)
and 
\(
\pi_1(1+R_\tau)+\pi_0Q_\tau\ge \pi_1\),
we have
\[
\F_1(s,\tau)-b_\pi
\le
2\left|R_\tau-Q_\tau\right|.
\]
Taking the supremum over $\tau$ gives
\[
\sup_\tau \F_1(s,\tau)-b_\pi
\le
2\sup_{\tau}
\left|
P_1(s(X)\ge \tau)-P_0(s(X)\ge \tau)
\right|.
\]
By equation (\ref{eq:F10s-wass}), it holds that
\[
\sup_{\tau}
\left|
P_1(s(X)\ge \tau)-P_0(s(X)\ge \tau)
\right|
\le
2\sqrt{B_s W_1(F_{1,s},F_{0,s})}.
\]
Moreover, combining with equation (\ref{eq:wasserstein-two-dist}),
\[
\sup_\tau \F_1(s,\tau)
\le
b_\pi
+
4\sqrt{B_s(1+\sqrt{B_L})}\,V_0(s)^{1/4}.
\]
Therefore, whenever $\sup_\tau \F_1(s,\tau)\ge A>b_\pi$, we have
\[
V_0(s)
\ge
\left(
\frac{A-b_\pi}
{4\sqrt{B_s(1+\sqrt{B_L})}}
\right)^4
=:\kappa_A^{\F_1}.
\]
Hence by equation (\ref{eq:R0-R0-const-geqV0}),
\[
\sup_\tau \F_1(s,\tau)\ge A
\quad\Longrightarrow\quad
R_0(s)-R_0(s_0)
\ge
m_0\kappa_A^{\F_1}.
\]

Next we show that the raw population minimizer cannot achieve $\F_1$ level $A$.
By the severe imbalance condition,
\(
\frac{\alpha_{\rm raw}}{1-\alpha_{\rm raw}}D_1
<
m_0\kappa_A^{\F_1}\), combined with equation (\ref{eq:R-0-alpha-raw-D-1}),
\[
R_0(s_{\rm raw}^*)-R_0(s_0)
<
m_0\kappa_A^{\F_1}.
\]
If $\sup_\tau \F_1(s_{\rm raw}^*,\tau)\ge A$, previous analysis would imply
\(
R_0(s_{\rm raw}^*)-R_0(s_0)
\ge
m_0\kappa_A^{\F_1}\),
which is a contradiction. Therefore
\[
\sup_\tau \F_1(s_{\rm raw}^*,\tau)<A.
\]

Next we show that the augmented target can approximate the oracle $\F_1$ score up to approximation error term. 
There exists a likelihood-ratio threshold set
\[
B_F^*
=
\{x:\Lambda_{\eta^*}(x)\ge t_F\}=\{x:\eta_{\alpha^*}(x)\ge t_F^*\},
\]
where $t_F^*=g_{\alpha^*}(t_F)$ and we select $t_F$ such that
\(
\F_1(B_F^*)=\sup_{\tau}\F_1(\Lambda,\tau)\).
For a measurable set $B$, write
\(
r_B:=P_1(B)\),
\(
q_B:=P_0(B)\),
so that
\[
\F_1(B)
=
\frac{2\pi_1r_B}
{\pi_1(1+r_B)+\pi_0q_B}.
\]
Equivalently, writing
\(
N(B):=2\pi_1P_1(B)\),
\(
D(B):=\pi_1(1+P_1(B))+\pi_0P_0(B)\),
we have
\(
\F_1(B)=\frac{N(B)}{D(B)}\).
By Lemma \ref{thm:likelihood-ratio-optimality}, for every measurable set $B$,
\(
\F_1(B)\le \sup_{\tau}\F_1(\Lambda,\tau)=\F_1(B_F^*)
\).
Since $D(B)>0$, this is equivalent to
\[
N(B)
-
\sup_{\tau}\F_1(\Lambda,\tau)D(B)
\le 0.
\]
Moreover, equality holds at $B=B_F^*$:
\(
N(B_F^*)
-
\sup_{\tau}\F_1(\Lambda,\tau)D(B_F^*)
=0\).
Therefore, $B_F^*$ is a maximizer, over measurable sets $B$, of the functional
\[
B\mapsto
N(B)
-
\sup_{\tau}\F_1(\Lambda,\tau)D(B).
\]
Now use $dP_1=L_{\eta^*}\,dP_0$. Then
\[
N(B)
=
\int_B 2\pi_1L_{\eta^*}(x)\,dP_0(x),\quad D(B)
=
\pi_1
+
\int_B\{\pi_1L_{\eta^*}(x)+\pi_0\}\,dP_0(x).
\]
Hence
\[
\begin{aligned}
&N(B)
-
\sup_{\tau}\F_1(\Lambda,\tau)D(B)
\\
&\quad =
\int_B
\left[
2\pi_1L_{\eta^*}(x)
-
\sup_{\tau}\F_1(\Lambda,\tau)
\{\pi_1L_{\eta^*}(x)+\pi_0\}
\right]dP_0(x)
-
\pi_1\sup_{\tau}\F_1(\Lambda,\tau).
\end{aligned}
\]
The last term does not depend on $B$. Thus maximizing the display above over
$B$ is equivalent to maximizing
\(
\int_B \psi(x)\,dP_0(x)\),
where
\[
\psi(x)
:=
2\pi_1L_{\eta^*}(x)
-
\sup_{\tau}\F_1(\Lambda,\tau)
\{\pi_1L_{\eta^*}(x)+\pi_0\}.
\]
For any integrable function $\psi$, the set maximizing
$\int_B\psi(x)\,dP_0(x)$ over all measurable $B$ is simply
\(
\{x:\psi(x)\ge 0\}\)
up to $P_0$-null sets. 
Therefore the oracle best-$\F_1$ decision region can be written, up to null sets, as
\(
B_F^*=\{x:\psi(x)\ge 0\}.
\)
The $\F_1$ difference can be written as
\[
\sup_{\tau}\F_1(\Lambda,\tau)-\F_1(B)
=
\frac{
\sup_{\tau}\F_1(\Lambda,\tau)D(B)-N(B)
}{D(B)}.
\]
The previous analysis gives
\[
\sup_{\tau}\F_1(\Lambda,\tau)D(B)-N(B)
=
\pi_1\sup_{\tau}\F_1(\Lambda,\tau)
-
\int_B\psi(x)\,dP_0(x).
\]
Since $B_F^*=\{x:\psi(x)\ge 0\}$ attains the oracle value, we have
\[
0
=
\sup_{\tau}\F_1(\Lambda,\tau)D(B_F^*)-N(B_F^*)
=
\pi_1\sup_{\tau}\F_1(\Lambda,\tau)
-
\int_{B_F^*}\psi(x)\,dP_0(x).
\]
Thus
\[
\begin{aligned}
\sup_{\tau}\F_1(\Lambda,\tau)D(B)-N(B)
=
\int_{B_F^*}\psi(x)\,dP_0(x)
-
\int_B\psi(x)\,dP_0(x)
=
\int_{B_F^*\triangle B}|\psi(x)|\,dP_0(x),
\end{aligned}
\]
where the last equality uses $\psi\ge 0$ on $B_F^*$ and $\psi<0$ on
$(B_F^*)^c$. Then it follows that
\[
\sup_{\tau}\F_1(\Lambda,\tau)-\F_1(B)
=
\frac{
\int_{B_F^*\triangle B}|\psi(x)|\,dP_0(x)
}{
\pi_1(1+r_B)+\pi_0q_B
}\le
\frac{1}{\pi_1}
\int_{B_F^*\triangle B}|\psi(x)|\,dP_0(x).
\]
Now take
\(
B_{\rm aug}:=\{x:s_{\alpha^*}(x)\ge t_F^*\}\).
On $B_F^*\triangle B_{\rm aug}$, we have
\(
|\eta_{\alpha^*}(x)-t_F^*|
\le
|s_{\alpha^*}(x)-\eta_{\alpha^*}(x)|\).
We now bound $|\psi(x)|$. By definition, it can be written as
\[
\psi(x)
=
\pi_1\{2-\sup_{\tau}\F_1(\Lambda,\tau)\}L_{\eta^*}(x)
-
\pi_0\sup_{\tau}\F_1(\Lambda,\tau).
\]
Since the oracle best-$\F_1$ set is
\[
B_F^*=\{x:\Lambda_{\eta^*}(x)\ge t_F\}=\{x:\eta_{\alpha^*}(x)\ge t_F^*\}
      =\{x:L_{\eta^*}(x)\ge \ell_F\},
\]
where
\[
\ell_F=\frac{t_F}{1-t_F}=\frac{1-\alpha^*}{\alpha^*}\frac{t_F^*}{1-t_F^*},
\]
the boundary condition $\psi(x)=0$ at $L_{\eta^*}(x)=\ell_F$ gives
\[
\pi_1\{2-\sup_{\tau}\F_1(\Lambda,\tau)\}\ell_F
=
\pi_0\sup_{\tau}\F_1(\Lambda,\tau).
\]
Therefore,
\[
\psi(x)
=
\pi_1\{2-\sup_{\tau}\F_1(\Lambda,\tau)\}
\{L_{\eta^*}(x)-\ell_F\}.
\]
Consequently,
\[
|\psi(x)|
\le
2\pi_1 |L_{\eta^*}(x)-\ell_F|,
\]
because $\sup_{\tau}\F_1(\Lambda,\tau)\in[0,1]$. Plugging in $L_{\eta^*}$ and $\ell_F$ with respect to $\eta_{\alpha^*}$ and $t_F^*$ gives
\begin{align*}
    |L_{\eta^*}(x)-\ell_F|=\left|\frac{1-\alpha^*}{\alpha^*}\frac{\eta_{\alpha^*}}{1-\eta_{\alpha^*}}-\frac{1-\alpha^*}{\alpha^*}\frac{t_F^*}{1-t_F^*}\right|=\frac{1-\alpha^*}{\alpha^*}\frac{|\eta_{\alpha^*}-t_F^*|}{(1-\eta_{\alpha^*})(1-t_F^*)}.
\end{align*}
Since $L_{\eta^*}\leq B_L$,
\(
\frac{1}{1-\eta_{\alpha^*}}=1+\frac{\alpha^*}{1-\alpha^*}L_{\eta^*}\leq 1+\frac{\alpha^*}{1-\alpha^*}B_L\).
Also, the oracle $\F_1$ threshold satisfies $\ell_F\leq B_L$. Indeed, if $\ell_F>B_L$, then $\{L_{\eta^*}\geq \ell_F\}$ is empty, so the resulting $\F_1$ is $0$, contradicting the fact that predicting everything positive already gives a strictly positive $\F_1$ score. Therefore,
\(
\frac{1}{1-t_F^*}=1+\frac{\alpha^*}{1-\alpha^*}\ell_F\leq 1+\frac{\alpha^*}{1-\alpha^*}B_L\).
Thus,
\[
|L_{\eta^*}(x)-\ell_F|\leq \frac{1-\alpha^*}{\alpha^*}\left(1+\frac{\alpha^*}{1-\alpha^*}B_L\right)^2|\eta_{\alpha^*}-t_F^*|=\frac{(1-\alpha^*+\alpha^* B_L)^2}{\alpha^*(1-\alpha^*)}|\eta_{\alpha^*}-t_F^*|.
\]
Combining the preceding displays yields
\[
|\psi(x)|
\le
2\pi_1\frac{(1-\alpha^*+\alpha^* B_L)^2}{\alpha^*(1-\alpha^*)}|\eta_{\alpha^*}-t_F^*|.
\]
Combining the preceding inequalities gives
\[
\begin{aligned}
\sup_\tau \F_1(\Lambda,\tau)-\F_1(B_{\rm aug})
&\le
\frac{2(1-\alpha^*+\alpha^* B_L)^2}{\alpha^*(1-\alpha^*)}
\bE_{P_0}|s_{\alpha^*}(X)-\eta_{\alpha^*}(X)|
\\
&\le
\frac{2(1-\alpha^*+\alpha^* B_L)^2}{\alpha^*(1-\alpha^*)}
\left[
\bE_{P_0}\{s_{\alpha^*}(X)-\eta_{\alpha^*}(X)\}^2
\right]^{1/2}.
\end{aligned}
\]
By $M^*=(1-\alpha^*)P_0+\alpha^* P_1$,
\[
\bE_{P_0}\{s_{\alpha^*}(X)-\eta_{\alpha^*}(X)\}^2\leq \frac{1}{1-\alpha^*}\bE_{M^*}\{s_{\alpha^*}(X)-\eta_{\alpha^*}(X)\}^2,
\]
Thus, by equation (\ref{eq:sbal-Lambda-diff}),
\[
\sup_\tau \F_1(\Lambda,\tau)-\F_1(B_{\rm aug})\le \frac{2(1-\alpha^*+\alpha^* B_L)^2}{\alpha^*(1-\alpha^*)^{3/2}}
\sqrt{\frac{\epsilon^*}{m_{\ell}}}.
\]
Since $\sup_\tau \F_1(s_{\alpha^*},\tau)\ge \F_1(B_{\rm aug})$, we obtain
\[
\sup_\tau \F_1(s_{\alpha^*},\tau)
\ge
\sup_\tau \F_1(\Lambda,\tau)
-
\frac{2(1-\alpha^*+\alpha^* B_L)^2}{\alpha^*(1-\alpha^*)^{3/2}}
\sqrt{\frac{\epsilon^*}{m_{\ell}}}.
\]
Together with $\sup_\tau \F_1(s_{\rm raw}^*,\tau)<A$, this gives the population-level gap
\[
\sup_\tau \F_1(s_{\alpha^*},\tau)-\sup_\tau \F_1(s_{\rm raw}^*,\tau)
\ge
\sup_\tau \F_1(\Lambda,\tau)-A
-
\frac{2(1-\alpha^*+\alpha^* B_L)^2}{\alpha^*(1-\alpha^*)^{3/2}}
\sqrt{\frac{\epsilon^*}{m_{\ell}}}.
\]

Finally, we pass from population minimizers to empirical estimators.
By the same finite-sample argument as in equations (\ref{eq:F-1-to-oracle-raw}) and (\ref{eq:F-1-to-oracle-aug}), but with the simultaneous statement in Lemma \ref{lemma:empiricalestimation}, the following inequalities hold
simultaneously with probability at least $1-\delta$:
\[
\left|
\sup_\tau \F_1(s_{\widehat\theta_{\rm raw}},\tau)
-
\sup_\tau \F_1(s_{\rm raw}^*,\tau)
\right|
\le
\frac{CLc}{\pi_1\lambda}
\sqrt{
\frac{B^2d\log(14d/\delta)}
{n_0+n_1}
},
\]
and
\[
\left|
\sup_\tau \F_1(s_{\widehat\theta_{\rm aug}},\tau)
-
\sup_\tau \F_1(s_{\alpha^*},\tau)
\right|
\le
\frac{CLc}{\pi_1\lambda}
\sqrt{
\frac{B^2d\log(14d/\delta)}
{n_0+n_1+\widetilde n}
}
+
\frac{4CLL_g}{\pi_1\sqrt{2\mu L_\rho}}
\frac{\widetilde n}{n_0+n_1+\widetilde n}
\epsilon_{\rm syn}.
\]
Taking $\widetilde n=\frac{\alpha^*}{1-\alpha^*}n_0-n_1$, and absorbing constants, we finish the proof.
\end{proof}

\section{Proofs of Examples and Corollaries}\label{sec:propositions-examples-corollaries}

\begin{proof}[Proof of Example \ref{example:easier-to-approximate}]
Define the uniform approximation errors
\[
a_{\rm bal}(K)
=
\inf_{h\in\mathcal H_K}
\sup_{t\in[0,1]}
|h(t)-t|, \quad a_\alpha(K)
=
\inf_{h\in\mathcal H_K}
\sup_{t\in[0,1]}
|h(t)-g_\alpha(t)|.
\]
We first study the uniform approximation error of the balanced target.
For any \(h\in\mathcal H_K\), define
\(
e(t)=h(t)-t\).
Since \(h\) is \(K\)-Lipschitz,
\(
h(1)-h(0)\le K\).
Therefore,
\[
e(0)-e(1)
=
h(0)-\bigl(h(1)-1\bigr)
=
1+h(0)-h(1)
\ge
1-K.
\]
Hence
\[
|e(0)|+|e(1)|
\ge
1-K,
\]
which implies
\[
\sup_{t\in[0,1]}|h(t)-t|
\ge
\frac{1-K}{2}.
\]
Taking the infimum over \(h\in\mathcal H_K\), we obtain
\[
a_{\rm bal}(K)
\ge
\frac{1-K}{2}.
\]
This lower bound is sharp. Indeed, define
\(
h_K(t)
=
Kt+\frac{1-K}{2}\).
Then \(h_K\in\mathcal H_K\), and \(h_K:[0,1]\to[0,1]\). Moreover,
\[
h_K(t)-t
=
(1-K)
\left(
\frac12-t
\right),
\]
so
\[
\sup_{t\in[0,1]}
|h_K(t)-t|
=
\frac{1-K}{2}.
\]
Therefore,
\(
a_{\rm bal}(K)
=
\frac{1-K}{2}\).
Because \(K<1\), this quantity is strictly positive. Hence the balanced target is not exactly representable uniformly as a transformation on $[0,1]$. In particular, if $\Lambda$ is nonconstant, then $x\mapsto \Lambda(x)$ is not exactly represented in $\mathcal S_K$.

Next we study the imbalanced target. A direct calculation gives
\(
g_\alpha(1)-g_\alpha(1-\alpha)
=
\frac12\).
Thus \(g_\alpha\) changes by \(1/2\) over the interval
\([1-\alpha,1]\), whose length is \(\alpha\).
Let \(h\in\mathcal H_K\), and define
\(
e_\alpha(t)=h(t)-g_\alpha(t)\).
Since \(h\) is \(K\)-Lipschitz,
\(
|h(1)-h(1-\alpha)|
\le
K\alpha\).
Therefore,
\[
\begin{aligned}
\left|
e_\alpha(1)-e_\alpha(1-\alpha)
\right|
=
\left|
\bigl(h(1)-h(1-\alpha)\bigr)
-
\bigl(g_\alpha(1)-g_\alpha(1-\alpha)\bigr)
\right|
\ge
\frac12-K\alpha.
\end{aligned}
\]
Consequently,
\(
|e_\alpha(1)|+|e_\alpha(1-\alpha)|
\ge
\frac12-K\alpha\),
and hence
\[
\sup_{t\in[0,1]}
|h(t)-g_\alpha(t)|
\ge
\frac12
\left(
\frac12-K\alpha
\right).
\]
Taking the infimum over \(h\in\mathcal H_K\), we obtain
\[
a_\alpha(K)
\ge
\frac12
\left(
\frac12-K\alpha
\right).
\]
Combining the previous bounds,
we get
\[
\begin{aligned}
a_\alpha(K)-a_{\rm bal}(K)
&\ge
\frac12
\left(
\frac12-K\alpha
\right)
-
\frac{1-K}{2}
=
\frac12
\left(
K(1-\alpha)-\frac12
\right).
\end{aligned}
\]
Therefore, if
\(
K(1-\alpha)>\frac12\),
then
\(
a_\alpha(K)>a_{\rm bal}(K)\).
This proves the uniform approximation separation.

We now prove the squared \(L^2\) lower bound for the imbalanced target. Define the squared \(L^2(\mu)\) approximation
errors
\[
A_{\rm bal}(K)
=
\inf_{h\in\mathcal H_K}
\bE_\mu
\left[
\left(
h(\Lambda_{\eta^*}(X))-\Lambda_{\eta^*}(X)
\right)^2
\right],
\]
and
\[
A_\alpha(K)
=
\inf_{h\in\mathcal H_K}
\bE_\mu
\left[
\left(
h(\Lambda_{\eta^*}(X))-g_\alpha(\Lambda_{\eta^*}(X))
\right)^2
\right].
\]
Let
\[
I_L=[1-\alpha,1-3\alpha/4],
\qquad
I_H=[1-\alpha/4,1].
\]
For \(t\in I_L\) and for \(t\in I_H\), respectively, by the range of $t$ and $\alpha$,
\[
g_\alpha(t)\le \frac23,
\quad t\in I_L;\qquad g_\alpha(t)\ge \frac45,
\quad t\in I_H.
\]
It follows that, for every \(t_L\in I_L\) and \(t_H\in I_H\),
\[
g_\alpha(t_H)-g_\alpha(t_L)
\ge
\frac45-\frac23
=
\frac{2}{15}.
\]
Since \(h\in\mathcal H_K\) and \(|t_H-t_L|\le \alpha\),
\(
|h(t_H)-h(t_L)|
\le
K\alpha\).
Therefore,
\[
\begin{aligned}
|e_\alpha(t_H)|+|e_\alpha(t_L)|
&\ge
|e_\alpha(t_H)-e_\alpha(t_L)|
\\
&=
\left|
\bigl(h(t_H)-h(t_L)\bigr)
-
\bigl(g_\alpha(t_H)-g_\alpha(t_L)\bigr)
\right|
\ge
\frac{2}{15}-K\alpha.
\end{aligned}
\]
Using
\(
(a+b)^2\le 2(a^2+b^2)
\),
we obtain
\[
e_\alpha(t_H)^2+e_\alpha(t_L)^2
\ge
\frac12
\left(
\frac{2}{15}-K\alpha
\right)^2.
\]
Averaging this inequality over \(t_L\in I_L\) and \(t_H\in I_H\), and using
\(
|I_L|=|I_H|=\frac{\alpha}{4}\)
gives
\[
\int_{I_L\cup I_H}
e_\alpha(t)^2\,dt
\ge
\frac{\alpha}{8}
\left(
\frac{2}{15}-K\alpha
\right)^2.
\]
Since the density \(q\) of \(T=\Lambda_{\eta^*}(X)\) under \(\mu\) satisfies
\(q(t)\ge \underline q\) on \(I_L\cup I_H\), we have
\[
\bE_\mu
\left[
\left(
h(\Lambda_{\eta^*}(X))-g_\alpha(\Lambda_{\eta^*}(X))
\right)^2
\right]
\ge
\frac{\underline q\,\alpha}{8}
\left(
\frac{2}{15}-K\alpha
\right)^2.
\]
Taking the infimum over \(h\in\mathcal H_K\), we obtain
\[
A_\alpha(K)
\ge
\frac{\underline q\,\alpha}{8}
\left(
\frac{2}{15}-K\alpha
\right)^2.
\]
If \(K\alpha\le 1/15\), then
\(
\frac{2}{15}-K\alpha
\ge
\frac{1}{15}\),
and hence
\[
A_\alpha(K)
\ge
\frac{\underline q\,\alpha}{8}
\cdot
\frac{1}{225}
=
\frac{\underline q}{1800}\alpha.
\]

For the balanced target, use again
\(
h_K(t)
=
Kt+\frac{1-K}{2}\),
then
\[
|h_K(t)-t|
\le
\frac{1-K}{2},
\qquad
\forall\, t\in[0,1].
\]
Therefore,
\[
A_{\rm bal}(K)
\le
\bE_\mu
\left[
\left(
h_K(\Lambda_{\eta^*}(X))-\Lambda_{\eta^*}(X)
\right)^2
\right]
\le
\frac{(1-K)^2}{4}.
\]
Combining the two bounds, if
\(
\frac{(1-K)^2}{4}
<
\frac{\underline q}{1800}\alpha
\),
then
\(
A_{\rm bal}(K)<A_\alpha(K)\).
Solving gives the range of $\alpha$ for which this inequality holds.
This finishes the proof.
\end{proof}

\begin{proof}[Proof of Example \ref{example:relu-network}]
We first prove the approximation upper bound for the balanced target. For each
coordinate \(j=1,\ldots,p\), partition \([0,1]\) into \(K+1\) equal intervals, where
\(
K:=\left\lfloor \frac{m}{p}\right\rfloor\).
We show that a piecewise affine interpolant has a ReLU representation.
Let
\[
0=u_0<u_1<\cdots<u_K<u_{K+1}=1
\]
be the grid points of the partition, with \(u_k=k/(K+1)\). Let
\(\widetilde\lambda_{j,K}\) be the continuous piecewise affine interpolant of
\(\lambda_j\) on this grid. That is,
\(\widetilde\lambda_{j,K}\) agrees with \(\lambda_j\) at the grid points:
\[
\widetilde\lambda_{j,K}(u_k)=\lambda_j(u_k),
\qquad
k=0,\ldots,K+1,
\]
and is affine on each interval \([u_{k-1},u_k]\). Denote the slope of
\(\widetilde\lambda_{j,K}\) on \([u_{k-1},u_k]\) by
\[
r_{j,k}
=
\frac{\lambda_j(u_k)-\lambda_j(u_{k-1})}{u_k-u_{k-1}},
\qquad
k=1,\ldots,K+1.
\]
Thus, on the first interval \([u_0,u_1]\),
\(
\widetilde\lambda_{j,K}(t)
=
\lambda_j(0)+r_{j,1}t\).
At the interior grid point \(u_k\), the slope changes from \(r_{j,k}\) to
\(r_{j,k+1}\). Hence the slope jump at \(u_k\) is
\(
c_{j,k}=r_{j,k+1}-r_{j,k}\),
 \(
k=1,\ldots,K\).
The shifted ReLU \((t-u_k)_+\) is zero for \(t\le u_k\) and has derivative one
for \(t>u_k\). Therefore, adding
\(
c_{j,k}(t-u_k)_+\)
changes the slope by exactly \(c_{j,k}\) after the point \(u_k\), while leaving the
function unchanged before \(u_k\). Applying this correction at every interior grid
point gives
\[
\widetilde\lambda_{j,K}(t)
=
\lambda_j(0)
+
r_{j,1}t
+
\sum_{k=1}^K
(r_{j,k+1}-r_{j,k})(t-u_k)_+.
\]
Indeed, for \(t\in[u_{\ell-1},u_\ell]\), only the terms with \(k\le \ell-1\) are
active, so the derivative of the right-hand side is
\[
r_{j,1}
+
\sum_{k=1}^{\ell-1}(r_{j,k+1}-r_{j,k})
=
r_{j,\ell},
\]
which is exactly the slope of the interpolant on \([u_{\ell-1},u_\ell]\). The two
functions also agree at \(t=0\), since both equal \(\lambda_j(0)\). Therefore the
displayed formula is exactly the piecewise affine interpolant.
Consequently, \(\widetilde\lambda_{j,K}\) admits the univariate ReLU representation
\[
\widetilde\lambda_{j,K}(t)
=
b_{j,0}+b_{j,1}t+\sum_{k=1}^K c_{j,k}(t-\tau_{j,k})_+,
\]
with
\(
b_{j,0}=\lambda_j(0),
~
b_{j,1}=r_{j,1},
~
\tau_{j,k}=u_k,
~
c_{j,k}=r_{j,k+1}-r_{j,k}
\).
Thus the interpolant has at most \(K\) breakpoints and can be represented by a
univariate ReLU network with \(K\) hidden units.
Define
\[
s_m(x)
=
\frac1p\sum_{j=1}^p \widetilde\lambda_{j,K}(x_j).
\]
Then
\[
s_m(x)
=
\frac1p\sum_{j=1}^p b_{j,0}
+
\sum_{j=1}^p \frac{b_{j,1}}{p}x_j
+
\frac1p
\sum_{j=1}^p\sum_{k=1}^K c_{j,k}(x_j-\tau_{j,k})_+.
\]
This is a \(p\)-dimensional shallow ReLU network. Each term
\((x_j-\tau_{j,k})_+\) is a ReLU unit with weight vector equal to the \(j\)-th
coordinate vector. The total number of hidden units is \(pK\le m\).

We now verify that this approximant belongs to the norm-constrained class
\(\mathcal N_{m,A}^{(p)}\). The linear coefficient vector of \(s_m\) is
\(
\beta_m=
\left(
\frac{b_{1,1}}{p},\ldots,\frac{b_{p,1}}{p}
\right)\).
Since each \(b_{j,1}\) is a slope of \(\lambda_j\), the mean-value theorem gives
\(
|b_{j,1}|\le \|\lambda_j'\|_{L^\infty([0,1])}\le B_1
\).
Therefore
\[
\|\beta_m\|_2
\le
\left\{
\sum_{j=1}^p \left(\frac{B_1}{p}\right)^2
\right\}^{1/2}
=
\frac{B_1}{\sqrt p}.
\]

Next, the ReLU coefficient attached to \((x_j-\tau_{j,k})_+\) is \(c_{j,k}/p\),
and the corresponding weight vector has Euclidean norm one. Hence the ReLU part of
the path norm is
\[
\sum_{j=1}^p\sum_{k=1}^K
\left|\frac{c_{j,k}}{p}\right|
=
\frac1p\sum_{j=1}^p\sum_{k=1}^K |c_{j,k}|.
\]
For the piecewise affine interpolant, \(c_{j,k}\) is the jump in slope at the \(k\)-th
breakpoint. Let \(h=1/(K+1)\), and let \(r_{j,k}\) be the slope of
\(\widetilde\lambda_{j,K}\) on the \(k\)-th interpolation interval. Then
\(
c_{j,k}=r_{j,k+1}-r_{j,k}\).
We have
\[
r_{j,k+1}-r_{j,k}
=
\frac{1}{h}\int_{0}^{h}
\left\{
\lambda_j'(u_k+t)-\lambda_j'(u_{k-1}+t)
\right\}\,dt .
\]
Because \(\|\lambda_j''\|_\infty\le B_2\), the derivative \(\lambda_j'\) is \(B_2\)-Lipschitz. Hence, for every \(t\in[0,h]\),
\[
\left|
\lambda_j'(u_k+t)-\lambda_j'(u_{k-1}+t)
\right|
\le B_2h.
\] 
Therefore
\[
\sum_{k=1}^K |c_{j,k}|
=
\sum_{k=1}^K |r_{j,k+1}-r_{j,k}|
\le
K B_2h
\le
B_2.
\]
It follows that
\(
\sum_{j=1}^p\sum_{k=1}^K
\left|\frac{c_{j,k}}{p}\right|
\le
B_2\).
Combining the linear and ReLU contributions, we obtain
\[
\|\beta_m\|_2
+
\sum_{j=1}^p\sum_{k=1}^K
\left|\frac{c_{j,k}}{p}\right|
\le
\frac{B_1}{\sqrt p}+B_2
\le A.
\]
Thus
\(
s_m\in\mathcal N_{m,A}^{(p)}\).
To bound the approximation error, we use the standard interpolation error bound.
Fix \(t\in[u_{k-1},u_k]\). By the classical error formula for linear interpolation,
there exists a point \(\xi_t\in(u_{k-1},u_k)\) such that
\[
\lambda_j(t)-\widetilde\lambda_{j,K}(t)
=
\frac{\lambda_j''(\xi_t)}{2}
(t-u_{k-1})(t-u_k).
\]
Hence
\[
|\lambda_j(t)-\widetilde\lambda_{j,K}(t)|
\le
\frac{\|\lambda_j''\|_{L^\infty([u_{k-1},u_k])}}{2}
(t-u_{k-1})(u_k-t).
\]
Since
\(
(t-u_{k-1})(u_k-t)
\le
\frac{(u_k-u_{k-1})^2}{4}
=
\frac{h^2}{4}\),
we get
\[
|\lambda_j(t)-\widetilde\lambda_{j,K}(t)|
\le
\frac{B_2}{2}\cdot \frac{h^2}{4}
=
\frac{B_2h^2}{8}.
\]
Because \(h=1/(K+1)\), this becomes
\[
|\lambda_j(t)-\widetilde\lambda_{j,K}(t)|
\le
\frac{B_2}{8(K+1)^2}.
\]
Taking the supremum over all \(t\in[0,1]\) yields
\[
\|\widetilde\lambda_{j,K}-\lambda_j\|_{L^\infty([0,1])}
\le
\frac{B_2}{8(K+1)^2}.
\]
Therefore,
\[
\begin{aligned}
\|s_m-\Lambda\|_{L^\infty([0,1]^p)}
&=
\sup_{x\in[0,1]^p}
\left|
\frac1p\sum_{j=1}^p
\{\widetilde\lambda_{j,K}(x_j)-\lambda_j(x_j)\}
\right| \\
&\le
\frac1p\sum_{j=1}^p
\|\widetilde\lambda_{j,K}-\lambda_j\|_{L^\infty([0,1])} \le
\frac{B_2}{8(K+1)^2}.
\end{aligned}
\]
Since \(\eta_{1/2}=\Lambda\), this proves
\[
\inf_{s\in\mathcal N_{m,A}^{(p)}}
\|s-\eta_{1/2}\|_{L^\infty([0,1]^p)}
\le
\frac{B_2}{8(K+1)^2}.
\]

We next derive the Lipschitz property from the network norm constraint. Let
\[
s(x)
=
\beta_0+\beta^\top x+\sum_{\ell=1}^m a_\ell
\sigma(w_\ell^\top x-t_\ell)
\in \mathcal N_{m,A}^{(p)}.
\]
Since \(\sigma\) is $1$-Lipschitz,
\(
|\sigma(w_\ell^\top x-t_\ell)-\sigma(w_\ell^\top z-t_\ell)|
\le
\|w_\ell\|_2\|x-z\|_2\).
Therefore,
\[
\begin{aligned}
|s(x)-s(z)|
&\le
|\beta^\top(x-z)|
+
\sum_{\ell=1}^m |a_\ell|
|\sigma(w_\ell^\top x-t_\ell)-\sigma(w_\ell^\top z-t_\ell)| \\
&\le
\left(
\|\beta\|_2+\sum_{\ell=1}^m |a_\ell|\,\|w_\ell\|_2
\right)
\|x-z\|_2 \le
A\|x-z\|_2.
\end{aligned}
\]
Hence every \(s\in\mathcal N_{m,A}^{(p)}\) is \(A\)-Lipschitz:
\(
\operatorname{Lip}(s)\le A.
\)

We now prove the lower bound for the imbalanced target. Fix
\(
0<\alpha\le \lambda_- r_0
\).
Define two points in \([0,1]^p\):
\[
x^+=(1,\ldots,1),
\qquad
x^-=
\left(1-\frac{\alpha}{\lambda_-},\ldots,
1-\frac{\alpha}{\lambda_-}\right).
\]
Since \(\alpha\le \lambda_-r_0\), each coordinate of \(x^-\) lies in
\([1-r_0,1]\). For each \(j\), using \(\lambda_j(1)=1\) and
\(\lambda_j'(t)\ge\lambda_-\) on \([1-r_0,1]\), we obtain
\[
1-\lambda_j\left(1-\frac{\alpha}{\lambda_-}\right)
=
\lambda_j(1)
-
\lambda_j\left(1-\frac{\alpha}{\lambda_-}\right)
=
\int_{1-\alpha/\lambda_-}^{1}\lambda_j'(u)\,du
\ge
\alpha.
\]
Therefore
\(
\lambda_j\left(1-\frac{\alpha}{\lambda_-}\right)
\le
1-\alpha\).
Averaging over \(j=1,\ldots,p\), we get
\[
\Lambda(x^-)
=
\frac1p
\sum_{j=1}^p
\lambda_j\left(1-\frac{\alpha}{\lambda_-}\right)
\le
1-\alpha.
\]
On the other hand,
\[
\Lambda(x^+)
=
\frac1p\sum_{j=1}^p\lambda_j(1)
=
1.
\]
Since \(g_\alpha\) is increasing,
\(
\eta_\alpha(x^-)
=
g_\alpha(\Lambda(x^-))
\le
g_\alpha(1-\alpha)\).
A direct calculation gives
\(
g_\alpha(1-\alpha)
=
\frac12\).
Also,
\(
\eta_\alpha(x^+)
=
g_\alpha(\Lambda(x^+))
=
g_\alpha(1)
=
1
\).
Thus
\[
\eta_\alpha(x^+)-\eta_\alpha(x^-)\ge \frac12.
\]

Now take any \(s\in\mathcal N_{m,A}^{(p)}\), and define
\(
e(x)=\eta_\alpha(x)-s(x)\).
Since \(\operatorname{Lip}(s)\le A\), as proved above,
\[
|s(x^+)-s(x^-)|
\le
A\|x^+-x^-\|_2.
\]
Moreover,
\[
\|x^+-x^-\|_2
=
\left\{
p\left(\frac{\alpha}{\lambda_-}\right)^2
\right\}^{1/2}
=
\frac{\sqrt p\,\alpha}{\lambda_-}.
\]
Hence
\[
|s(x^+)-s(x^-)|
\le
\frac{A\sqrt p\,\alpha}{\lambda_-}.
\]
Using the decomposition
\[
\eta_\alpha(x^+)-\eta_\alpha(x^-)
=
\{s(x^+)-s(x^-)\}
+
\{e(x^+)-e(x^-)\},
\]
we obtain
\[
\begin{aligned}
\frac12
\le
\eta_\alpha(x^+)-\eta_\alpha(x^-) \le
|s(x^+)-s(x^-)|
+
|e(x^+)|+|e(x^-)| 
\le
\frac{A\sqrt p\,\alpha}{\lambda_-}
+
2\|e\|_{L^\infty([0,1]^p)}.
\end{aligned}
\]
Therefore,
\[
\|s-\eta_\alpha\|_{L^\infty([0,1]^p)}
=
\|e\|_{L^\infty([0,1]^p)}
\ge
\frac14-\frac{A\sqrt p\,\alpha}{2\lambda_-}.
\]
Since the left-hand side is nonnegative, we may write
\[
\|s-\eta_\alpha\|_{L^\infty([0,1]^p)}
\ge
\left(
\frac14-\frac{A\sqrt p\,\alpha}{2\lambda_-}
\right)_+.
\]
Taking the infimum over \(s\in\mathcal N_{m,A}^{(p)}\), we obtain
\[
\inf_{s\in\mathcal N_{m,A}^{(p)}}
\|s-\eta_\alpha\|_{L^\infty([0,1]^p)}
\ge
\left(
\frac14-\frac{A\sqrt p\,\alpha}{2\lambda_-}
\right)_+.
\]
Finally, combine the two bounds. We have shown that
\[
\inf_{s\in\mathcal N_{m,A}^{(p)}}
\|s-\eta_{1/2}\|_{L^\infty([0,1]^p)}
\le
\frac{B_2}{8(K+1)^2},
\quad
\inf_{s\in\mathcal N_{m,A}^{(p)}}
\|s-\eta_\alpha\|_{L^\infty([0,1]^p)}
\ge
\frac14-\frac{A\sqrt p\,\alpha}{2\lambda_-},
\]
whenever the right-hand side is positive. Therefore, if
\[
0<\alpha_{\rm raw}
<
\min\left\{
\lambda_- r_0,\,
\frac{2\lambda_-}{A\sqrt p}
\left(\frac14-\frac{B_2}{8(K+1)^2}\right)
\right\},
\]
then
\[
\frac14-\frac{A\sqrt p\,\alpha_{\rm raw}}{2\lambda_-}
>
\frac{B_2}{8(K+1)^2}.
\]
Consequently,
\[
\inf_{s\in\mathcal N_{m,A}^{(p)}}
\|s-\eta_{1/2}\|_{L^\infty([0,1]^p)}
<
\inf_{s\in\mathcal N_{m,A}^{(p)}}
\|s-\eta_{\alpha_{\rm raw}}\|_{L^\infty([0,1]^p)}.
\]
This completes the proof.
\end{proof}

\begin{proof}[Proof of Corollary \ref{cor:mle-well-specified-augmentation}]
We first consider this model at the population level for general class weights. For brevity, denote the log-likelihood as $\ell^{(\mathrm{MLE})}$. Define the weights between class as $w=(w_0,w_1)$. Then the population estimator is defined as
\begin{align*}
    \theta_w^*\in\argmax_\theta ~\left\{w_0\bE_{P_0}\ell^{(\mathrm{MLE})}(s_\theta(X),0)+w_1\bE_{P_1}\ell^{(\mathrm{MLE})}(s_\theta(X),1)\right\}.
\end{align*}
We then show that the induced scores $s_{\theta_w^*}$ for general weights $w$ are indeed increasingly transformed from the likelihood ratio. By the Bernoulli model, we can explicitly write the population objective as
\begin{align*}
    \theta_w^* &\in\argmax_\theta ~\left\{w_0\bE_{P_0}\ell^{(\mathrm{MLE})}(s_\theta(X),0)+w_1\bE_{P_1}\ell^{(\mathrm{MLE})}(s_\theta(X),1)\right\}\\
    &=\argmax_\theta ~\left\{w_0\bE_{P_0}\log(1-s_\theta(X))+w_1\bE_{P_1}\log(s_\theta(X))\right\}\\
    &=\argmax_\theta w_0\int p_0\log(1-s_\theta(x))dx+w_1\int p_1\log(s_\theta(x))dx.
\end{align*}
For each $x$, denote
\begin{align*}
    \phi(x):=w_0p_0(x)\log(1-s_\theta(x))+w_1 p_1(x)\log(s_\theta(x)).
\end{align*}
Taking the derivative with respect to $t=s_\theta(x)$,
\begin{align*}
    \frac{\partial \phi}{\partial t}=-\frac{w_0p_0(x)}{1-t}+\frac{w_1p_1(x)}{t}=0, \qquad \frac{\partial^2 \phi}{\partial t^2}<0.
\end{align*}
Because the loss depends on the parameter $\theta$ only through the score $s_\theta$, and the score maximizer is attainable by the model class by assumption, the score maximizer is attained by a corresponding parameter maximizer. We derive the maximizer
\begin{align*}
    s_{\theta_w^*}(x)=\frac{w_1p_1(x)}{w_1p_1(x)+w_0p_0(x)}=\frac{w_1L_{\eta^*}(x)}{w_1L_{\eta^*}(x)+w_0}.
\end{align*}
    Define the function 
    \[
    g(t)=\frac{w_1t}{w_1t+w_0},
    \]
    which is a strictly increasing transformation. Thus the population score
    \begin{align*}
    s_{\theta_w^*}(x)=g(L_{\eta^*}(x)),
\end{align*}
is a strictly increasing transformation of $\Lambda_{\eta^*}$, because $L_{\eta^*}$ is a strictly increasing transformation of $\Lambda_{\eta^*}$.

Therefore, we have verified that Assumption \ref{assumption:population-ERM-well-specify} holds for the MLE model. Thus, the bounds follow by essentially the same argument as in the proofs of Theorems~\ref{thm:ERM-well-specified}, \ref{thm:AUPRC-converge-ERM}, \ref{thm:parametric-ERM-best-BA} and \ref{thm:parametric-ERM-F1}. This completes the proof.
\end{proof}

\section{Proofs of Lemmas}
\begin{proof}[Proof of Lemma \ref{thm:likelihood-ratio-optimality}]
We prove the results for the four metrics separately. For the four metrics considered, the proofs of Theorems \ref{thm:AUC-invariant} and \ref{thm:doesnotimproveBA} show that these metrics are invariant to strictly increasing transformations.
Note the relationship, 
\[
\Lambda_{\eta^*}(x)=\frac{L_{\eta^*}(x)}{1+L_{\eta^*}(x)}=g_1(L_{\eta^*}(x)),\quad L_{\eta^*}(x)=\frac{\Lambda_{\eta^*}(x)}{1-\Lambda_{\eta^*}(x)}=g_2(\Lambda_{\eta^*}(x)),
\]
where $g_1(t)=\frac{t}{1+t}$ and $g_2(t)=\frac{t}{1-t}$ are strictly increasing transformations. Therefore, the values of the four metrics for $\Lambda_{\eta^*}(x)$ are equal to those for $L_{\eta^*}(x)$. Thus, for brevity, we can also use $L_{\eta^*}(x)$ as the optimal score.

\medskip
\noindent\textbf{Optimality for AUROC:}

We first show that the likelihood-ratio threshold is ROC-optimal at every false-positive level.
For any measurable set \(S \subseteq \mathcal{X}\), define
\(
\operatorname{FPR}(S) := P_0(S)\),
\(\operatorname{TPR}(S) := P_1(S)\).
For \(\alpha \in (0,1)\), define the best achievable true-positive rate at false-positive level \(\alpha\) by
\[
\beta^*(\alpha)
:=
\sup\Bigl\{
\mathbb P_1(S)
:
S \subseteq \mathcal{X}\ \text{measurable},\;
P_0(S) \le \alpha
\Bigr\}.
\]
Define the set
\(
C_c := \{x : L_{\eta^*}(x) \ge c\}\).
Because \(L(X_0)\) is continuous under \(P_0\), for every \(\alpha \in (0,1)\) there exists \(c_\alpha\) such that
\(
\mathbb P_0(C_{c_\alpha}) = \alpha\).
We claim that \(C_{c_\alpha}\) achieves \(\beta^*(\alpha)\), that is,
\[
\mathbb P_1(S) \le \mathbb P_1(C_{c_\alpha})
\qquad
\text{for every measurable } S \text{ with } \mathbb P_0(S) \le \alpha.
\]

To prove this, first consider any measurable \(S\) with \(\mathbb P_0(S)=\alpha\). Since
\[
L(x) \ge c_\alpha \quad \text{on } C_{c_\alpha},
\qquad
L(x) \le c_\alpha \quad \text{on } C_{c_\alpha}^c,
\]
we have
\[
p_1(x) - c_\alpha p_0(x) \ge 0 \quad \text{on } C_{c_\alpha},
\qquad
p_1(x) - c_\alpha p_0(x) \le 0 \quad \text{on } C_{c_\alpha}^c.
\]
Therefore,
\[
\int_{C_{c_\alpha}} \bigl(p_1 - c_\alpha p_0\bigr)\,dx
\ge
\int_S \bigl(p_1 - c_\alpha p_0\bigr)\,dx.
\]
Rearranging gives
\[
\mathbb P_1(C_{c_\alpha}) - c_\alpha \mathbb P_0(C_{c_\alpha})
\ge
\mathbb P_1(S) - c_\alpha \mathbb P_0(S).
\]
Since \(\mathbb P_0(C_{c_\alpha}) = \mathbb P_0(S) = \alpha\), the \(c_\alpha \alpha\) terms cancel, yielding
\(
\mathbb P_1(C_{c_\alpha}) \ge \mathbb P_1(S)\).
If instead \(\mathbb P_0(S) < \alpha\), then \(S\) is still feasible for the constraint \(\mathbb P_0(S)\le \alpha\), and 
\[
\mathbb P_1(C_{c_\alpha}) - c_\alpha \mathbb P_0(C_{c_\alpha})
\ge
\mathbb P_1(S) - c_\alpha \mathbb P_0(S)\geq \mathbb P_1(S) - c_\alpha \alpha.
\]
We thus still have
\(
\mathbb P_1(C_{c_\alpha}) \ge \mathbb P_1(S)\).
Hence by definition of the supremum $\beta^*(\alpha)$,
\(
\beta^*(\alpha) = \mathbb P_1(C_{c_\alpha})\).
So we conclude that the likelihood-ratio threshold rule is pointwise optimal on the ROC plane.

Next we show that the ROC curve of any score is dominated by that of \(L_{\eta^*}\).
For any measurable score \(s\), define its threshold sets
\(
S_t(s) := \{x : s(x) \ge t\}\).
Its ROC curve can be written as
\[
\operatorname{ROC}_s(\alpha)
:=
\sup\Bigl\{
\mathbb P_1(S_t(s))
:
\mathbb P_0(S_t(s)) \le \alpha,\;
t \in \mathbb{R}
\Bigr\}.
\]
Every threshold set \(S_t(s)\) is just a measurable set, so by previous analysis,
\[
\mathbb  P_1(S_t(s)) \le \beta^*(\alpha) = \operatorname{ROC}_{L_{\eta^*}}(\alpha)
\qquad
\text{whenever } \mathbb  P_0(S_t(s)) \le \alpha.
\]
Taking the supremum over all such thresholds \(t\), we obtain
\[
\operatorname{ROC}_s(\alpha) \le \operatorname{ROC}_{L_{\eta^*}}(\alpha)
\qquad
\text{for every } \alpha \in (0,1).
\]

Assume that \(s(X_0)\) has a continuous distribution under \(P_0\). For each threshold \(t\in\mathbb R\), define
\(
S_t(s):=\{x:s(x)\ge t\}\).
Recall that
\[
\operatorname{ROC}_s(\alpha)
:=
\sup\Bigl\{
\mathbb  P_1(S_t(s)):\; \mathbb  P_0(S_t(s))\le \alpha,\ t\in\mathbb R
\Bigr\},
\qquad \alpha\in(0,1).
\]
Recall that the AUROC is defined as
\(
\AUC(s)=\mathbb P\left(s(X_1)>s(X_0)\right)+\frac12 \mathbb P\left(s(X_1)=s(X_0)\right)
\).
We next prove that this definition is equivalent to
\(
\operatorname{AUC}(s)=\int_0^1 \operatorname{ROC}_s(\alpha)\,d\alpha\),
based on the definition of the ROC curve.
To achieve this, first define the survival functions of the score under the two classes by
\[
\bar F_0(t):=\mathbb P\bigl(s(X_0)\ge t\bigr)=\mathbb  P_0(S_t(s)),
\qquad
\bar F_1(t):=\mathbb P\bigl(s(X_1)\ge t\bigr)=\mathbb  P_1(S_t(s)).
\]
Because \(s(X_0)\) has a continuous distribution, the map \(t\mapsto \bar F_0(t)\) is continuous and nonincreasing, with limits decreasing from $1$ to $0$.
Therefore, for every \(\alpha\in(0,1)\), there exists a threshold \(t_\alpha\) such that
\(
\mathbb  P_0(S_{t_\alpha}(s))=\bar F_0(t_\alpha)=\alpha\).
We now show that this threshold attains the supremum in the definition of \(\operatorname{ROC}_s(\alpha)\). The threshold sets are nested:
\[
t_1\le t_2
\quad\Longrightarrow\quad
S_{t_2}(s)\subseteq S_{t_1}(s).
\]
Hence both \(\mathbb  P_0(S_t(s))\) and \(\mathbb  P_1(S_t(s))\) are nonincreasing functions of \(t\). Now let \(t \in \mathbb{R}\) be any threshold satisfying
\[
    \mathbb P_0\bigl(S_t(s)\bigr)
    \leq
    \alpha
    =
    \mathbb P_0\bigl(S_{t_\alpha}(s)\bigr).
\]
If \(t \geq t_\alpha\), then
\(
    S_t(s) \subseteq S_{t_\alpha}(s)\),
and hence
\(
    \mathbb P_1\bigl(S_t(s)\bigr)
    \leq
    \mathbb P_1\bigl(S_{t_\alpha}(s)\bigr)\).
If \(t < t_\alpha\), then
\(
    S_{t_\alpha}(s) \subseteq S_t(s)\).
Together with
\(
    \mathbb P_0\bigl(S_t(s)\bigr)
    \leq
    \mathbb P_0\bigl(S_{t_\alpha}(s)\bigr)\),
this implies
\(
    \mathbb P_0\bigl(S_t(s) \setminus S_{t_\alpha}(s)\bigr)=0\).
Since \(P_1 \ll P_0\), it follows that
\(
    \mathbb P_1\bigl(S_t(s) \setminus S_{t_\alpha}(s)\bigr)=0\),
and therefore
\[
    \mathbb P_1\bigl(S_t(s)\bigr)
    =
    \mathbb P_1\bigl(S_{t_\alpha}(s)\bigr).
\]
Thus among all thresholds with false-positive rate at most \(\alpha\), the maximal true-positive rate is attained at a threshold with equality, and so
\[
\operatorname{ROC}_s(\alpha)=\mathbb  P_1(S_{t_\alpha}(s))
=\mathbb P\bigl(s(X_1)\ge t_\alpha\bigr).
\]
Next let
\(
F_0(t):=\mathbb P\bigl(s(X_0)\le t\bigr)\)
be the distribution function of \(s(X_0)\), and define its generalized inverse by
\[
Q_0(u):=\inf\{t:F_0(t)\ge u\},\qquad u\in(0,1).
\]
Because \(s(X_0)\) is continuous, we may take
\(
t_\alpha=Q_0(1-\alpha)\),
and then
\[
\mathbb  P_0(S_{t_\alpha}(s))
=
\mathbb P\bigl(s(X_0)\ge Q_0(1-\alpha)\bigr)
=
\alpha.
\]
Hence
\[
\operatorname{ROC}_s(\alpha)
=
\mathbb P\bigl(s(X_1)\ge Q_0(1-\alpha)\bigr).
\]
Therefore,
\[
\int_0^1 \operatorname{ROC}_s(\alpha)\,d\alpha
=
\int_0^1 \mathbb P\bigl(s(X_1)\ge Q_0(1-\alpha)\bigr)\,d\alpha.
\]
Substituting \(u=1-\alpha\), so that \(d\alpha=-du\), gives
\[
\int_0^1 \operatorname{ROC}_s(\alpha)\,d\alpha
=
\int_0^1 \mathbb P\bigl(s(X_1)\ge Q_0(u)\bigr)\,du.
\]
Now apply Tonelli's theorem:
\[
\int_0^1 \mathbb P\bigl(s(X_1)\ge Q_0(u)\bigr)\,du
=
\int_0^1 \mathbb E\bigl[\mathbf 1\{s(X_1)\ge Q_0(u)\}\bigr]\,du
=
\mathbb E\left[\int_0^1 \mathbf 1\{Q_0(u)\le s(X_1)\}\,du\right].
\]
So it remains to identify the quantity
\(
\int_0^1 \mathbf 1\{Q_0(u)\le z\}\,du\)
for a fixed \(z\in\mathbb R\).
It is easy to see that
\(
\{u\in[0,1]:Q_0(u)\le z\}=\{u\in[0,1]:u\le F_0(z)\}\).
Thus we obtain
\[
\int_0^1 \mathbf 1\{Q_0(u)\le z\}\,du
=\int_0^1 \mathbf 1\{u\le F_0(z)\}\,du=
F_0(z).
\]
Applying this with \(z=s(X_1)\), we get
\[
\int_0^1 \operatorname{ROC}_s(\alpha)\,d\alpha
=
\mathbb E\bigl[F_0(s(X_1))\bigr].
\]
We now identify this expectation as a comparison probability. By definition of \(F_0\),
\[
F_0(s(X_1))
=
\mathbb P\bigl(s(X_0)\le s(X_1)\mid X_1\bigr).
\]
Taking expectations and using independence of \(X_0\) and \(X_1\), we obtain
\[
\mathbb E\bigl[F_0(s(X_1))\bigr]
=
\mathbb P\bigl(s(X_0)\le s(X_1)\bigr).
\]
Hence
\[
\int_0^1 \operatorname{ROC}_s(\alpha)\,d\alpha
=
\mathbb P\bigl(s(X_0)\le s(X_1)\bigr).
\]
Finally, because \(s(X_0)\) has a continuous distribution, for every fixed \(a\in\mathbb R\),
\(
\mathbb P\bigl(s(X_0)=a\bigr)=0\).
Therefore
\[
\mathbb P\bigl(s(X_0)=s(X_1)\bigr)
=
\mathbb E\!\left[\mathbb P\bigl(s(X_0)=s(X_1)\mid X_1\bigr)\right]
=
0.
\]
We then conclude that
\(
\operatorname{AUC}(s)
=
\mathbb P\bigl(s(X_1)>s(X_0)\bigr)\).
Combining the previous displays yields
\(
\int_0^1 \operatorname{ROC}_s(\alpha)\,d\alpha
=
\operatorname{AUC}(s)\).
Thus we have proved the integral representation for AUC that
\[
\operatorname{AUC}(s)=\int_0^1 \operatorname{ROC}_s(\alpha)\,d\alpha.
\]
Now integrate over \(\alpha\). It follows that,
\[
\operatorname{AUC}(s)
=
\int_0^1 \operatorname{ROC}_s(\alpha)\,d\alpha
\le
\int_0^1 \operatorname{ROC}_{L_{\eta^*}}(\alpha)\,d\alpha
=
\operatorname{AUC}(L_{\eta^*})=\operatorname{AUC}(\Lambda_{\eta^*}).
\]
This completes the proof of AUROC optimality of the likelihood ratio.

\medskip
\noindent\textbf{Optimality for AUPRC:}

We first show that for each fixed recall level, a likelihood-ratio threshold minimizes the false-positive rate.
Fix \(r\in(0,1)\). Since \(L_{\eta^*}(X_1)\) has a continuous CDF under \(P_1\),
there exists a threshold \(c_r\) such that
\[
\mathbb  P_1(L_{\eta^*}(X)\ge c_r)=r.
\]
Define the likelihood-ratio threshold set
\(
B_r := \{x:L_{\eta^*}(x)\ge c_r\}\).
Then
\(
\mathbb  P_1(B_r)=r\).

We claim that among all measurable sets \(S\subseteq\mathcal X\) satisfying
\(
\mathbb  P_1(S)=r\),
the set \(B_r\) minimizes \(\mathbb  P_0(S)\).

Let \(S\) be any measurable set with \(\mathbb  P_1(S)=r\). Since
\(L_{\eta^*}(x)=p_1(x)/p_0(x)\), we have
\[
p_1(x)-c_r p_0(x)\ge 0
\quad \text{on } B_r;\qquad 
p_1(x)-c_r p_0(x)\le 0
\quad \text{on } B_r^c.
\]
Therefore,
\[
\int_{B_r\setminus S} (p_1-c_r p_0)\,dx \ge 0,
\qquad
\int_{S\setminus B_r} (p_1-c_r p_0)\,dx \le 0.
\]
Subtracting the second inequality from the first gives
\[
\int_{B_r} (p_1-c_r p_0)\,dx
\ge
\int_S (p_1-c_r p_0)\,dx.
\]
Expanding both sides,
\[
\mathbb  P_1(B_r)-c_r\mathbb  P_0(B_r)
\ge
\mathbb  P_1(S)-c_r\mathbb  P_0(S).
\]
Since \(\mathbb  P_1(B_r)=\mathbb  P_1(S)=r\), it holds that
\(
c_r\mathbb  P_0(B_r)
\le
c_r\mathbb  P_0(S)\).
Because $r\in(0,1)$ and $L_{\eta^*}(X)>0$, we have \(c_r>0\). Therefore we conclude
\[
\mathbb  P_0(B_r)\le \mathbb  P_0(S).
\]
So at each fixed recall level \(r\), the likelihood-ratio threshold set \(B_r\)
has the smallest false-positive rate among all measurable decision sets with
that recall.

We can then show that, at each recall level, the precision of any score is bounded by the precision of \(L_{\eta^*}\).
Now fix any score \(s\). Since \(s(X_1)\) has a continuous CDF under
\(P_1\), for each \(r\in(0,1)\) there exists a threshold \(t_r(s)\) such that
\(
\mathbb  P_1\bigl(s(X)\ge t_r(s)\bigr)=r\).
Define
\(
S_r(s) := \{x:s(x)\ge t_r(s)\}\).
Then
\(
\mathbb  P_1(S_r(s))=r\).
By the preceding argument, among all measurable sets with \(\mathbb  P_1(\cdot)=r\), the set \(B_r\)
minimizes the false-positive rate. Hence
\[
\mathbb  P_0\bigl(S_r(s)\bigr)
\ge
\mathbb  P_0(B_r).
\]
At recall \(r\), the precision of \(s\) and the precision of the likelihood-ratio score are
\[
\operatorname{Prec}_s^*(r)
=
\frac{\pi_1 r}
{\pi_1 r+\pi_0 \mathbb  P_0(S_r(s))},\qquad \operatorname{Prec}_{L_{\eta^*}}^*(r)
=
\frac{\pi_1 r}
{\pi_1 r+\pi_0 \mathbb  P_0(B_r)}.
\]
Since \(\mathbb  P_0(S_r(s))\ge \mathbb  P_0(B_r)\), we have
\[
\operatorname{Prec}_s^*(r)
\le
\operatorname{Prec}_{L_{\eta^*}}^*(r)
\qquad
\text{for every } r\in(0,1).
\]
Thus, the precision--recall curve of any admissible score is pointwise dominated,
as a function of recall, by that of the likelihood ratio. Finally, integrating the pointwise bound over \(r\in[0,1]\), we obtain
\[
\operatorname{AUPRC}(s)
=
\int_0^1 \operatorname{Prec}_s^*(r)\,dr
\le
\int_0^1 \operatorname{Prec}_{L_{\eta^*}}^*(r)\,dr
=
\operatorname{AUPRC}(L_{\eta^*})=\operatorname{AUPRC}(\Lambda_{\eta^*}).
\]

\medskip
    \noindent\textbf{Optimality for Best-Threshold Balanced Accuracy:}

First, we rewrite balanced accuracy in set form.
Let \(S\subseteq\mathcal X\) be any measurable set. By definition,
\(
\BA(S)=\frac12\bigl(\mathbb  P_1(S)+\mathbb  P_0(S^c)\bigr)\).
Using densities,
\(
\mathbb  P_1(S)=\int_S p_1(x)\,dx\),
\(\mathbb  P_0(S^c)=1-\int_S p_0(x)\,dx.
\)
Hence
\[
\BA(S)
=
\frac12
\left(
\int_S p_1(x)\,dx
+
1
-
\int_S p_0(x)\,dx
\right)
=
\frac12
+
\frac12
\int_S\bigl(p_1(x)-p_0(x)\bigr)\,dx.
\]
Therefore, for a thresholded score,
\[
\BA(s,\tau)
=
\BA(S_\tau(s))
=
\frac12
+
\frac12
\int_{S_\tau(s)}
\bigl(p_1(x)-p_0(x)\bigr)\,dx,
\]
where $S_\tau(s):=\{x:s(x)\ge \tau\}$.

Next we show that the Bayes set maximizes balanced accuracy over all measurable sets.
We have
\(
\BA(S)
=
\frac12
+
\frac12
\int_S(p_1-p_0)\).
To maximize this expression over measurable sets \(S\), the maximizing set includes exactly those \(x\) for which
\(
p_1(x)-p_0(x)\ge 0\).
Thus the optimal set is
\[
B^*
:=
\{x:p_1(x)\ge p_0(x)\}.
\]
Equivalently,
\[
p_1(x)\ge p_0(x)
\iff
\frac{p_1(x)}{p_0(x)}\ge 1
\iff
L_{\eta^*}(x)\ge 1,
\]
so
\(
B^*=\{x:L_{\eta^*}(x)\ge 1\}\).
Now let \(S\) be any measurable set. Then
\[
\BA(B^*)-\BA(S)
=
\frac12\int_{B^*}(p_1-p_0)\,dx
-
\frac12\int_S(p_1-p_0)\,dx.
\]
Splitting the difference into the two set differences gives
\[
\BA(B^*)-\BA(S)
=
\frac12\int_{B^*\setminus S}(p_1-p_0)\,dx
-
\frac12\int_{S\setminus B^*}(p_1-p_0)\,dx.
\]
On \(B^*\), we have \(p_1-p_0\ge 0\). On \(S\setminus B^*\subseteq (B^*)^c\), we have \(p_1-p_0<0\). Therefore,
\[
\BA(B^*)-\BA(S)
=
\frac12\int_{B^*\setminus S}|p_1-p_0|\,dx
+
\frac12\int_{S\setminus B^*}|p_1-p_0|\,dx.
\]
Hence
\[
\BA(B^*)-\BA(S)
=
\frac12
\int_{B^*\triangle S}
|p_1(x)-p_0(x)|\,dx
\ge 0.
\]
Thus \(B^*\) maximizes balanced accuracy over all measurable decision sets.

Let \(s\) be any measurable score. For every threshold \(\tau\),
\(
\BA(s,\tau)=\BA(S_\tau(s))\).
Since \(B^*\) maximizes balanced accuracy over all measurable sets,
\(
\BA(s,\tau)\le \BA(B^*)\),
for every $\tau$.
Taking the supremum over \(\tau\) yields
\[
\sup_{\tau\in\mathbb R}\BA(s,\tau)
\le
\BA(B^*).
\]
Also, taking the likelihood ratio itself as the score,
\[
\BA(B^*)
=
BA\bigl(L_{\eta^*},1)
\le
\sup_{\tau\in\mathbb R}\BA(L_{\eta^*},\tau).
\]
On the other hand, every threshold set of \(L_{\eta^*}\) is a measurable set, so it also gives
\[
\BA(L_{\eta^*},\tau)\le \BA(B^*)
\qquad
\text{for every }\tau.
\]
Taking the supremum over \(\tau\),
\[
\sup_{\tau\in\mathbb R}\BA(L_{\eta^*},\tau)
\le
\BA(B^*).
\]
Combining the two inequalities,
\begin{equation}\label{eq:best-BA-event}
\sup_{\tau\in\mathbb R}\BA(L_{\eta^*},\tau)
=
\BA(B^*).
\end{equation}
Therefore, this proves the claim for balanced accuracy.

\medskip
    \noindent\textbf{Optimality for Best-Threshold $\F_1$ Score:}

We first rewrite $\F_1$ as a function of recall and false-positive rate. For any measurable set \(S\), define
\(
r:=\mathbb  P_1(S)\),
\(q:=\mathbb  P_0(S)\).
Then
\(
\F_1(S)
=
\frac{2\pi_1 r}
{\pi_1(1+r)+\pi_0 q}\).
Define
\[
G(r,q)
:=
\frac{2\pi_1 r}
{\pi_1(1+r)+\pi_0 q},
\qquad
(r,q)\in[0,1]^2.
\]
Then
\(
\F_1(S)=G(\mathbb  P_1(S),\mathbb  P_0(S))\).
The partial derivatives of \(G\) are
\[
\frac{\partial G}{\partial r}
=
\frac{2\pi_1(\pi_1+\pi_0 q)}
{\bigl(\pi_1(1+r)+\pi_0 q\bigr)^2}
>0,\qquad \frac{\partial G}{\partial q}
=
-\frac{2\pi_0\pi_1 r}
{\bigl(\pi_1(1+r)+\pi_0 q\bigr)^2}
\le 0.
\]
Thus $\F_1$ is increasing in recall \(r\) and decreasing in false-positive rate \(q\). Therefore, among sets with the same recall, the one with smaller false-positive rate has at least as large $\F_1$.

Next we show that, for each fixed recall level, a likelihood-ratio threshold minimizes false-positive rate.
Fix \(r\in(0,1)\). Since \(L_{\eta^*}(X_1)\) has a continuous distribution under \(P_1\), there exists a threshold \(c_r\) such that
\(
\mathbb  P_1(L_{\eta^*}(X)\ge c_r)=r\).
Define
\(
B_r:=\{x:L_{\eta^*}(x)\ge c_r\}\).
Then
\(
\mathbb  P_1(B_r)=r\).
We claim that \(B_r\) minimizes \(\mathbb  P_0(S)\) among all measurable sets \(S\) satisfying \(\mathbb  P_1(S)=r\). Let \(S\) be any measurable set with \(\mathbb  P_1(S)=r\). On \(B_r\), we have \(L_{\eta^*}(x)\ge c_r\), and hence
\[
p_1(x)-c_rp_0(x)\ge 0.
\]
On \(B_r^c\), we have \(L_{\eta^*}(x)<c_r\), and hence
\[
p_1(x)-c_rp_0(x)\le 0.
\]
Therefore,
\[
\int_{B_r\setminus S}(p_1-c_rp_0)\,dx\ge 0,
\qquad
\int_{S\setminus B_r}(p_1-c_rp_0)\,dx\le 0.
\]
It follows by subtraction that
\[
\int_{B_r}(p_1-c_rp_0)\,dx
\ge
\int_S(p_1-c_rp_0)\,dx.
\]
Expanding both sides gives
\[
\mathbb  P_1(B_r)-c_r\mathbb  P_0(B_r)
\ge
\mathbb  P_1(S)-c_r\mathbb  P_0(S).
\]
Since \(\mathbb  P_1(B_r)=\mathbb  P_1(S)=r\), this becomes
\[
r-c_r\mathbb  P_0(B_r)
\ge
r-c_r\mathbb  P_0(S).
\]
Because \(r\in(0,1)\), the corresponding threshold satisfies \(c_r>0\). Therefore,
\[
\mathbb  P_0(B_r)\le \mathbb  P_0(S).
\]
Thus, among all measurable sets with recall \(r\), the likelihood-ratio threshold set \(B_r\) has the smallest false-positive rate.

Next, we show that every measurable set is $\F_1$-dominated by likelihood-ratio threshold sets.
Let \(S\subseteq\mathcal X\) be any measurable set and set
\(
r:=\mathbb  P_1(S)\).
If \(r=0\), then
\(
\F_1(S)=0\),
so \(S\) cannot improve over the supremum of the likelihood-ratio threshold family.
Now suppose \(r\in(0,1)\). There exists a likelihood-ratio threshold set \(B_r\) such that
\[
\mathbb  P_1(B_r)=r,
\qquad
\mathbb  P_0(B_r)\le \mathbb  P_0(S).
\]
Using the monotonicity of \(G\),
\[
\F_1(B_r)
=
G(\mathbb  P_1(B_r),\mathbb  P_0(B_r))
=
G(r,\mathbb  P_0(B_r))
\ge
G(r,\mathbb  P_0(S))
=
\F_1(S).
\]
It remains to consider the case \(r=1\). Let \(r_n\uparrow 1\) with \(r_n\in(0,1)\). For each \(n\), let
\[
B_{r_n}:=\{x:L_{\eta^*}(x)\ge c_{r_n}\}
\]
satisfy
\(
\mathbb  P_1(B_{r_n})=r_n\).
Since \(\mathbb  P_1(S)=1\), we have
\[
\mathbb  P_1(S\cap B_{r_n})=\mathbb  P_1(B_{r_n})=r_n.
\]
Also, we know that \(B_{r_n}\) minimizes \(P_0\) among all measurable sets with \(P_1\)-mass \(r_n\), so
\[
\mathbb  P_0(B_{r_n})
\le
\mathbb  P_0(S\cap B_{r_n})
\le
\mathbb  P_0(S).
\]
Therefore,
\[
\F_1(B_{r_n})
=
G(r_n,\mathbb  P_0(B_{r_n}))
\ge
G(r_n,\mathbb  P_0(S)).
\]
Letting \(n\to\infty\), continuity of \(G\) gives
\[
\sup_c \F_1(\{L_{\eta^*}\ge c\})
\ge
\limsup_{n\to\infty}\F_1(B_{r_n})
\ge
G(1,\mathbb  P_0(S))
=
\F_1(S).
\]
Hence every measurable set \(S\) is dominated, in $\F_1$ value, by the likelihood-ratio threshold family in the sense that
\[
\F_1(S)
\le
\sup_c \F_1(\{L_{\eta^*}\ge c\}).
\]
Taking the supremum over all measurable $B\subseteq \mathcal X$ gives
\[
\sup_{B\subseteq \mathcal X}\F_1(B)
\le
\sup_c \F_1(\{L_{\eta^*}\ge c\}).
\]
The reverse inequality is immediate since every set $\{L_{\eta^*}\geq c\}$ is measurable. Hence,
\[
\sup_{B\subseteq \mathcal X}\F_1(B)
=
\sup_c \F_1(\{L_{\eta^*}\ge c\})=\sup_{\tau\in\mathbb R}\F_1(\Lambda_{\eta^*},\tau).
\]

Now let \(s\) be any measurable score and let \(\tau\in\mathbb R\). Its threshold set
\(
S_\tau(s)=\{x:s(x)\ge \tau\}\)
is measurable, so
\[
\F_1(s,\tau)
=
\F_1(S_\tau(s))
\le
\sup_c \F_1(\{L_{\eta^*}\ge c\})
=
\sup_{\tau\in\mathbb R}\F_1(L_{\eta^*},\tau).
\]
Taking the supremum over \(\tau\) gives
\[
\sup_{\tau\in\mathbb R}\F_1(s,\tau)
\le\sup_{\tau\in\mathbb R}\F_1(L_{\eta^*},\tau)=\sup_{\tau\in\mathbb R}\F_1(\Lambda_{\eta^*},\tau)=\sup_{B\subseteq \mathcal X}\F_1(B).
\]
This proves the lemma.
\end{proof}

\begin{proof}[Proof of Lemma \ref{lemma:empiricalestimation}]
For the empirical estimation error
$\|\hat{\theta}_{\rm raw}-\theta_{\rm raw}^*\|$, first note that
by the boundedness condition on the centered individual Hessians and the
matrix Bernstein inequality, for sufficiently large $n_0+n_1$, with
probability at least $1-\delta/7$,
\[
\left\|
\nabla^2\hat R_{\rm raw}(\theta_{\rm raw}^*)
-
\nabla^2R_{\rm raw}(\theta_{\rm raw}^*)
\right\|_{\rm op}
\le \frac{\lambda}{4}.
\]
Indeed, by the definition of $\hat R_{\rm raw}$ and
$R_{\rm raw}$,
\begin{align*}
&\nabla^2\hat R_{\rm raw}(\theta_{\rm raw}^*)
-
\nabla^2R_{\rm raw}(\theta_{\rm raw}^*) \\
&=
\frac{1}{n_0+n_1}
\sum_{i=1}^{n_0}
\left\{
\nabla^2\ell(\theta_{\rm raw}^*;x_i^{(0)},0)
-
E_{P_0}\nabla^2\ell(\theta_{\rm raw}^*;X,0)
\right\} \\
&\quad+
\frac{1}{n_0+n_1}
\sum_{i=1}^{n_1}
\left\{
\nabla^2\ell(\theta_{\rm raw}^*;x_i^{(1)},1)
-
E_{P_1}\nabla^2\ell(\theta_{\rm raw}^*;X,1)
\right\}.
\end{align*}
The matrices inside the two sums are independent, centered, and
self-adjoint. By the bounded centered-Hessian condition, each summand
after multiplication by $1/(n_0+n_1)$ has operator norm at most
$B_H/(n_0+n_1)$ almost surely. Furthermore, their matrix variance
proxy is bounded by
\[
\frac{(n_0+n_1)B_H^2}{(n_0+n_1)^2}
=
\frac{B_H^2}{n_0+n_1}.
\]
Therefore, the self-adjoint matrix Bernstein inequality (e.g. Theorem 1.4 of \cite{tropp2012user}) implies that,
for every $t>0$,
\[
P\left(
\left\|
\nabla^2\hat R_{\rm raw}(\theta_{\rm raw}^*)
-
\nabla^2R_{\rm raw}(\theta_{\rm raw}^*)
\right\|_{\rm op}
\ge t
\right)
\le
2d\exp\left\{
-
\frac{(n_0+n_1)t^2}
{2B_H^2+\frac{2}{3}B_Ht}
\right\}.
\]
Taking $t=\lambda/4$ gives
\[
P\left(
\left\|
\nabla^2\hat R_{\rm raw}(\theta_{\rm raw}^*)
-
\nabla^2R_{\rm raw}(\theta_{\rm raw}^*)
\right)
\ge \frac{\lambda}{4}
\right\|_{\rm op}
\le
2d\exp\left\{
-
\frac{(n_0+n_1)\lambda^2}
{32B_H^2+\frac{8}{3}B_H\lambda}
\right\}.
\]
Consequently, if
\[
n_0+n_1
\ge
\left(
\frac{32B_H^2}{\lambda^2}
+
\frac{8B_H}{3\lambda}
\right)
\log\left(\frac{14d}{\delta}\right),
\]
then, with probability at least $1-\delta/7$,
\[
\left\|
\nabla^2\hat R_{\rm raw}(\theta_{\rm raw}^*)
-
\nabla^2R_{\rm raw}(\theta_{\rm raw}^*)
\right\|_{\rm op}
\le
\frac{\lambda}{4}.
\]
On this event, for every $\theta$ satisfying
$\|\theta-\theta_{\rm raw}^*\|\le r$, the local Lipschitz condition on
the individual Hessians gives
\[
\left\|
\nabla^2\hat R_{\rm raw}(\theta)
-
\nabla^2\hat R_{\rm raw}(\theta_{\rm raw}^*)
\right\|_{\rm op}
\le
L_H\|\theta-\theta_{\rm raw}^*\|
\le
L_Hr
\le
\frac{\lambda}{4}.
\]
Together with
$\nabla^2R_{\rm raw}(\theta_{\rm raw}^*)\succeq \lambda I$, this yields
\begin{align*}
\nabla^2\hat R_{\rm raw}(\theta)
&\succeq
\nabla^2R_{\rm raw}(\theta_{\rm raw}^*)
-
\left\|
\nabla^2\hat R_{\rm raw}(\theta_{\rm raw}^*)
-
\nabla^2R_{\rm raw}(\theta_{\rm raw}^*)
\right\|_{\rm op} I \\
&\qquad
-
\left\|
\nabla^2\hat R_{\rm raw}(\theta)
-
\nabla^2\hat R_{\rm raw}(\theta_{\rm raw}^*)
\right\|_{\rm op} I \\
&\succeq
\left(\lambda-\frac{\lambda}{4}-\frac{\lambda}{4}\right)I
=
\frac{\lambda}{2}I
\end{align*}
for all $\theta$ satisfying
$\|\theta-\theta_{\rm raw}^*\|\le r$.

We next show that $\hat\theta_{\rm raw}$ lies in this local
neighborhood. Assume temporarily that
\begin{equation}\label{eq:temporo-assume-small}
\left\|
\nabla\hat R_{\rm raw}(\theta_{\rm raw}^*)
\right\|
\le
\frac{\lambda r}{4};
\end{equation}
this condition will be verified below by the gradient concentration
bound. For every $\theta$ satisfying
$\|\theta-\theta_{\rm raw}^*\|=r$, the fundamental theorem of calculus
gives
\begin{align*}
&(\theta-\theta_{\rm raw}^*)^\top
\nabla\hat R_{\rm raw}(\theta) \\
&=
(\theta-\theta_{\rm raw}^*)^\top
\nabla\hat R_{\rm raw}(\theta_{\rm raw}^*)
+
\int_0^1
(\theta-\theta_{\rm raw}^*)^\top
\nabla^2\hat R_{\rm raw}
\bigl(
\theta_{\rm raw}^*
+t(\theta-\theta_{\rm raw}^*)
\bigr)
(\theta-\theta_{\rm raw}^*)
\,dt \\
&\ge
-r
\left\|
\nabla\hat R_{\rm raw}(\theta_{\rm raw}^*)
\right\|
+
\frac{\lambda r^2}{2}
\ge
\frac{\lambda r^2}{4}
>0.
\end{align*}
Since $\hat R_{\rm raw}$ is convex, the preceding outward-gradient
condition implies that
\[
\|\hat\theta_{\rm raw}-\theta_{\rm raw}^*\|\le r.
\]
Indeed, suppose to the contrary that
$\|\hat\theta_{\rm raw}-\theta_{\rm raw}^*\|>r$, and let
\[
\theta
=
\theta_{\rm raw}^*
+
\frac{r}{\|\hat\theta_{\rm raw}-\theta_{\rm raw}^*\|}
\bigl(\hat\theta_{\rm raw}-\theta_{\rm raw}^*\bigr).
\]
Then $\|\theta-\theta_{\rm raw}^*\|=r$ and
\[
\hat\theta_{\rm raw}-\theta
=
\left(
\frac{\|\hat\theta_{\rm raw}-\theta_{\rm raw}^*\|}{r}-1
\right)
(\theta-\theta_{\rm raw}^*),
\]
where the coefficient is strictly positive. Since
$\hat\theta_{\rm raw}$ minimizes the convex function
$\hat R_{\rm raw}$,
\[
0
\ge
\hat R_{\rm raw}(\hat\theta_{\rm raw})
-
\hat R_{\rm raw}(\theta)
\ge
\nabla\hat R_{\rm raw}(\theta)^\top
(\hat\theta_{\rm raw}-\theta).
\]
On the other hand, the outward-gradient condition gives
\begin{align*}
\nabla\hat R_{\rm raw}(\theta)^\top
(\hat\theta_{\rm raw}-\theta)
&=
\left(
\frac{\|\hat\theta_{\rm raw}-\theta_{\rm raw}^*\|}{r}-1
\right)
\nabla\hat R_{\rm raw}(\theta)^\top
(\theta-\theta_{\rm raw}^*)>0,
\end{align*}
which is a contradiction. Therefore,
\[
\|\hat\theta_{\rm raw}-\theta_{\rm raw}^*\|\le r.
\]
Consequently, the entire line segment joining
$\theta_{\rm raw}^*$ and $\hat\theta_{\rm raw}$ is contained in the
region on which
\[
\nabla^2\hat R_{\rm raw}(\theta)\succeq \frac{\lambda}{2}I.
\]

Since
$\nabla\hat R_{\rm raw}(\hat\theta_{\rm raw})=0$, another
application of the fundamental theorem of calculus yields
\[
-\nabla\hat R_{\rm raw}(\theta_{\rm raw}^*)
=
\int_0^1
\nabla^2\hat R_{\rm raw}
\bigl(
\theta_{\rm raw}^*
+t(\hat\theta_{\rm raw}-\theta_{\rm raw}^*)
\bigr)
(\hat\theta_{\rm raw}-\theta_{\rm raw}^*)
\,dt.
\]
Taking the inner product with
$\hat\theta_{\rm raw}-\theta_{\rm raw}^*$, we obtain
\begin{align*}
&-
(\hat\theta_{\rm raw}-\theta_{\rm raw}^*)^\top
\nabla\hat R_{\rm raw}(\theta_{\rm raw}^*) \\
&\qquad =
\int_0^1
(\hat\theta_{\rm raw}-\theta_{\rm raw}^*)^\top
\nabla^2\hat R_{\rm raw}
\bigl(
\theta_{\rm raw}^*
+t(\hat\theta_{\rm raw}-\theta_{\rm raw}^*)
\bigr)
(\hat\theta_{\rm raw}-\theta_{\rm raw}^*)
\,dt \\
&\qquad \ge
\frac{\lambda}{2}
\left\|
\hat\theta_{\rm raw}-\theta_{\rm raw}^*
\right\|^2.
\end{align*}
On the other hand, by the Cauchy--Schwarz inequality,
\[
-
(\hat\theta_{\rm raw}-\theta_{\rm raw}^*)^\top
\nabla\hat R_{\rm raw}(\theta_{\rm raw}^*)
\le
\left\|
\hat\theta_{\rm raw}-\theta_{\rm raw}^*
\right\|
\left\|
\nabla\hat R_{\rm raw}(\theta_{\rm raw}^*)
\right\|.
\]
Therefore,
\[
\left\|
\hat\theta_{\rm raw}-\theta_{\rm raw}^*
\right\|
\le
\frac{2}{\lambda}
\left\|
\nabla\hat R_{\rm raw}(\theta_{\rm raw}^*)
\right\|.
\]

Next we give a bound for $\left\|
\nabla\hat R_{\rm raw}(\theta_{\rm raw}^*)
\right\|$ to verify the previous assumption and obtain a convergence bound for the parameter.
    
    By optimality, $\bE\nabla\hat{\cR}_\raw (\theta_\raw^*)=\nabla \cR_\raw (\theta_\raw^*)=0$. Therefore,
\begin{align}\label{eq:1}
    \nonumber&\left\|\nabla\hat{\cR}_\raw(\theta_\raw^*)\right\|=\left\|\frac{1}{n_0+n_1}\sum_{i=1}^{n_0}\nabla\ell(\theta_\raw^*;x_i^{(0)},0)+\frac{1}{n_0+n_1}\sum_{i=1}^{n_1}\nabla\ell(\theta_\raw^*,x_i^{(1)},1)-\bE\nabla\hat{\cR}_\raw (\theta_\raw^*)\right\|\\
    \nonumber&\leq\frac{n_0}{n_0+n_1}\left\|\frac{1}{n_0}\sum_{i=1}^{n_0}\nabla\ell(\theta_\raw^*;x_i^{(0)},0)-\bE_{\cP_0}\left[\nabla\ell(\theta_\raw^*;x_i^{(0)},0)\right]\right\|\\
    &\qquad\qquad+\frac{n_1}{n_0+n_1}\left\|\frac{1}{n_1}\sum_{i=1}^{n_1}\nabla\ell(\theta_\raw^*,x_i^{(1)},1)-\bE_{\cP_1}\left[\nabla\ell(\theta_\raw^*,x_i^{(1)},1)\right]\right\|
\end{align}
By union bound and Hoeffding's inequality, for fixed $\theta$,
\begin{align*}
&\mathbb{P}\left(\left\|\frac{1}{n_0}\sum_{i=1}^{n_0}\nabla\ell(\theta_\raw^*;x_i^{(0)},0)-\bE_{\cP_0}\left[\nabla\ell(\theta_\raw^*;x_i^{(0)},0)\right]\right\|>t\right)\\
&\leq 
\mathbb{P}\left(\exists i\in|d|,s.t.\left|\left[\frac{1}{n_0}\sum_{i=1}^{n_0}\nabla\ell(\theta_\raw^*;x_i^{(0)},0)\right]_i-\left[\bE_{\cP_0}\left[\nabla\ell(\theta_\raw^*;x_i^{(0)},0)\right]\right]_i\right|>\frac{t}{\sqrt{d}}\right)\\
&\leq \sum_{i=1}^d \mathbb{P}\left(\left|\left[\frac{1}{n_0}\sum_{i=1}^{n_0}\nabla\ell(\theta_\raw^*;x_i^{(0)},0)\right]_i-\left[\bE_{\cP_0}\left[\nabla\ell(\theta_\raw^*;x_i^{(0)},0)\right]\right]_i\right|>\frac{t}{\sqrt{d}}\right)\leq 2d\exp\left\{-\frac{t^2n_0}{2dB^2}\right\},
\end{align*}
where $B>0$ is the constant from the bounded-support assumption. Taking \[
t=\sqrt{\frac{2B^2d\log(2d/\delta)}{n_0}},
\]
we have with probability at least $1-\delta$, 
\begin{align}\label{eq:2}
\left\|\frac{1}{n_0}\sum_{i=1}^{n_0}\nabla\ell(\theta_\raw^*;x_i^{(0)},0)-\bE_{\cP_0}\left[\nabla\ell(\theta_\raw^*;x_i^{(0)},0)\right]\right\|\leq \sqrt{\frac{2B^2d\log(2d/\delta)}{n_0}}.
\end{align}
By essentially the same procedure, we have with probability at least $1-\delta$,
\begin{align}\label{eq:3}
\left\|\frac{1}{n_1}\sum_{i=1}^{n_1}\nabla\ell(\theta_\raw^*;x_i^{(1)},1)-\bE_{\cP_1}\left[\nabla\ell(\theta_\raw^*;x_i^{(1)},1)\right]\right\|\leq \sqrt{\frac{2B^2d\log(2d/\delta)}{n_1}}.
\end{align}
Combining the bounded-Hessian event and equations (\ref{eq:1}), (\ref{eq:2}) and (\ref{eq:3}) yields that with probability at least $1-\frac{3}{7}\delta$,
\begin{align*}
    \left\|\nabla\hat{\cR}_\raw(\theta_\raw^*)\right\|\leq \frac{\sqrt{n_0}+\sqrt{n_1}}{n_0+n_1}\sqrt{2B^2d\log(14d/\delta)}\leq \sqrt{\frac{4B^2d\log(14d/\delta)}{n_0+n_1}}.
\end{align*}
Therefore, equation (\ref{eq:temporo-assume-small}) is verified for large enough sample size.
Furthermore, with probability at least $1-\frac{3}{7}\delta$, it holds by absorbing constants into some constant $c>0$ that
    \begin{align}\label{eq:4}
        \left\|\hat{\theta}_{\raw}-\theta^*_{\raw}\right\|\leq \frac{c}{\lambda}\sqrt{\frac{B^2d\log(14d/\delta)}{n_0+n_1}}.    \end{align}
The same argument gives
\begin{align*}
        \left\|\hat{\theta}_{\aug}-\theta^*_{\aug}\right\|\leq \frac{2}{\lambda}\left\|\nabla\hat{\cR}_\aug (\theta_\aug^*)\right\|.
    \end{align*}
    Also,
\begin{align*}
    \nonumber&\left\|\nabla\hat{\cR}_\aug(\theta_\aug^*)\right\|\leq\frac{n_0}{n_0+n_1+\tilde{n}}\left\|\frac{1}{n_0}\sum_{i=1}^{n_0}\nabla\ell(\theta_\aug^*;x_i^{(0)},0)-\bE_{\cP_0}\left[\nabla\ell(\theta_\aug^*;x_i^{(0)},0)\right]\right\|\\
    &\qquad\qquad+\frac{n_1}{n_0+n_1+\tilde{n}}\left\|\frac{1}{n_1}\sum_{i=1}^{n_1}\nabla\ell(\theta_\aug^*,x_i^{(1)},1)-\bE_{\cP_1}\left[\nabla\ell(\theta_\aug^*,x_i^{(1)},1)\right]\right\|\\
    &\qquad\qquad+\frac{\tilde{n}}{n_0+n_1+\tilde{n}}\left\|\frac{1}{\tilde{n}}\sum_{i=1}^{\tilde{n}}\nabla\ell(\theta_\aug^*;\tilde{x}_i,1)-\bE_{\cP_{\rm syn}}\left[\nabla\ell(\theta_\aug^*;\tilde{x}_i,1)\right]\right\|.
\end{align*}
Similarly, by a union bound and Hoeffding’s inequality, and combining with the bounded-Hessian event, with probability at least $1-\frac{4}{7} \delta$,
\begin{align*}
    \left\|\nabla\hat{\cR}_\aug(\theta_\aug^*)\right\|\leq \frac{\sqrt{n_0}+\sqrt{n_1}+\sqrt{\tilde{n}}}{n_0+n_1+\tilde{n}}\sqrt{2B^2d\log(14d/\delta)}\leq \sqrt{\frac{6B^2d\log(14d/\delta)}{n_0+n_1+\tilde{n}}}.
\end{align*}
That is, after absorbing the constants,
\begin{align}\label{eq:5}
    \left\|\hat{\theta}_{\aug}-\theta^*_{\aug}\right\|\leq \frac{c}{\lambda}\sqrt{\frac{B^2d\log(14d/\delta)}{n_0+n_1+\tilde{n}}}.
\end{align}
Finally, combining equations (\ref{eq:4}) and (\ref{eq:5}), we have for some constant $c>0$, with probability at least $1-\delta$, the following holds simultaneously:
\begin{align*}
        \left\|\hat{\theta}_{\raw}-\theta^*_{\raw}\right\|\leq \frac{c}{\lambda}\sqrt{\frac{B^2d\log(14d/\delta)}{n_0+n_1}}, \quad
        \left\|\theta^*_{\aug}-\hat{\theta}_{\aug}\right\|\leq \frac{c}{\lambda}\sqrt{\frac{B^2d\log(14d/\delta)}{n_0+n_1+\tilde{n}}}.    \end{align*}
The individual bounds hold by adjusting the probability parameter $\delta$.
\end{proof}

\begin{proof}[Proof of Lemma \ref{lemma:sytheticerror}]
    For a differentiable function $\cR_\rho(\theta)$ with minimizer $\theta_\rho^*$ satisfying
    \[
    \cR_\rho(\theta_\rho^*)\geq \cR_\rho(\theta)+\nabla \cR_\rho(\theta)^T(\theta_\rho^*-\theta)+\frac{\mu}{2}\left\|\theta_\rho^*-\theta\right\|^2,
    \]
    by assumption, we have
    \begin{align*}
    \nonumber \cR_\rho(\theta)-\cR_\rho(\theta_{\rho}^*)&\leq \nabla \cR_\rho(\theta)^\top (\theta-\theta_\rho^*)-\frac{\mu}{2}\|\theta-\theta_\rho^*\|^2\\
    &\leq \frac{1}{2\mu}\|\nabla \cR_\rho(\theta)\|^2+\frac{\mu}{2}\|\theta-\theta_\rho^*\|^2-\frac{\mu}{2}\|\theta-\theta_\rho^*\|^2=\frac{1}{2\mu}\|\nabla \cR_\rho(\theta)\|^2,
    \end{align*}
    where the second inequality follows from the fact that $0\leq \|\frac{1}{\sqrt{\mu}}\nabla \cR_{\rho}(\theta)-\sqrt{\mu}(\theta-\theta_\rho^*)\|^2$.
    Taking $\theta_{\aug}^*$, we have
    \begin{align*}
       \nonumber \cR_\rho(\theta_\aug^*)-\cR_\rho(\theta_{\rho}^*)\leq \frac{1}{2\mu}\|\nabla \cR_\rho(\theta_\aug^*)\|^2. 
    \end{align*}
    Recalling the definitions of $\cR_\rho(\theta)$ and $\cR_{\aug}(\theta)$, we have
    \begin{align*}
        \cR_{\aug}(\theta)=\cR_\rho(\theta)+\frac{\tilde{n}}{n_0+n_1+\tilde{n}}\left[\mathbb{E}_{P_{\mathrm{syn}}}\ell(\theta;x,1)-\mathbb{E}_{P_{1}}\ell(\theta;x,1)\right].
    \end{align*}
    Taking the gradient with respect to the parameter and evaluating at $\theta_\aug^*$, by the first order optimality, we have
    \begin{align*}
        0=\nabla\cR_{\aug}(\theta_\aug^*)=\nabla\cR_\rho(\theta_\aug^*)+\frac{\tilde{n}}{n_0+n_1+\tilde{n}}\left[\mathbb{E}_{P_{\mathrm{syn}}}\nabla\ell(\theta_\aug^*;x,1)-\mathbb{E}_{P_{1}}\nabla\ell(\theta_\aug^*;x,1)\right].
    \end{align*}
    That is
    \begin{align*}
        \|\nabla\cR_\rho(\theta_\aug^*)\|=\frac{\tilde{n}}{n_0+n_1+\tilde{n}}\left\|\mathbb{E}_{P_{\mathrm{syn}}}\nabla\ell(\theta_\aug^*;x,1)-\mathbb{E}_{P_{1}}\nabla\ell(\theta_\aug^*;x,1)\right\|
    \end{align*}
    By Kantorovich-Rubinstein duality, for two distributions $P$ and $Q$,
    \begin{align*}
      W_1(P,Q)  =\sup_{\phi:Lip(\phi)\leq 1}\left\|\bE_P[\phi]-\bE_Q[\phi]\right\|.
    \end{align*}
    Define $\phi:=\nabla\ell/L_g$. Then $Lip(\phi)\leq 1$, hence,
    \begin{align*}
        \left\|\mathbb{E}_{P_{\mathrm{syn}}}\nabla\ell(\theta_\aug^*;x,1)-\mathbb{E}_{P_{1}}\nabla\ell(\theta_\aug^*;x,1)\right\|\leq L_g W_1(P_{\mathrm{syn}},P_1)\leq L_g\epsilon_{\mathrm{syn}}.
    \end{align*}
Combining this with the quadratic-growth assumption near the minimizer,
    \begin{align*}
        \left\|\theta_\aug^*-\theta_\rho^*\right\|^2\leq \frac{1}{L_{\rho}}\left(\cR_\rho(\theta_\aug^*)-\cR_\rho(\theta_\rho^*)\right)\leq \frac{1}{2\mu L_{\rho}}\|\nabla\cR_\rho(\theta_\aug^*)\|^2.
    \end{align*}
    Thus, we have
    \begin{align*}
        \left\|\theta_\aug^*-\theta_\rho^*\right\|\leq \sqrt{\frac{1}{2\mu L_{\rho}}}\frac{\tilde{n}}{n_0+n_1+\tilde{n}}\left\|\mathbb{E}_{P_{\mathrm{syn}}}\nabla\ell(\theta_\aug^*;x,1)-\mathbb{E}_{P_{1}}\nabla\ell(\theta_\aug^*;x,1)\right\|\\
        \leq \frac{L_g}{\sqrt{2\mu L_{\rho}}}\frac{\tilde{n}}{n_0+n_1+\tilde{n}}\epsilon_{\mathrm{syn}}.
    \end{align*}
\end{proof}

\end{document}